\pdfoutput=1
\documentclass[11pt, letterpaper]{article}

\usepackage[english]{babel}
\usepackage[T1]{fontenc}
\usepackage{parskip}
\usepackage[margin=1in]{geometry}
\usepackage{bbm,bm}
\usepackage{bbold}
\usepackage{parskip}
\usepackage{amsmath}
\usepackage{amssymb}
\usepackage{amsthm}
\usepackage{thmtools, thm-restate}
\usepackage{booktabs}
\usepackage{array}
\usepackage[lined,boxed,ruled,norelsize,linesnumbered]{algorithm2e}
\usepackage{graphicx} 
\usepackage{enumitem}
\usepackage{float}

\usepackage{hyperref}
\hypersetup{
	colorlinks=true,
	linkcolor=blue!70!black,
	citecolor=blue!70!black,
	urlcolor=blue!70!black
}
\usepackage{graphicx}
\usepackage[english]{babel}
\usepackage[utf8x]{inputenc}
\usepackage[T1]{fontenc}
\usepackage{mathtools}
\usepackage{cleveref}
\usepackage{parskip}
\usepackage{xcolor}
\newtheorem{theorem}{Theorem}
\newtheorem{lemma}{Lemma}
\newtheorem{fact}{Fact}

\newtheorem{corollary}{Corollary}
\newtheorem{proposition}{Proposition}
\newtheorem{model}{Model}
\newtheorem{assumption}{Assumption}
\newtheorem{problem}{Problem}

\newtheorem{remark}{Remark}

\newcommand{\defeq}{:=}

\newcommand{\norm}[1]{\left\lVert#1\right\rVert}
\newcommand{\norms}[1]{\lVert#1\rVert}
\newcommand{\normop}[1]{\left\lVert#1\right\rVert_{\textup{op}}}
\newcommand{\normf}[1]{\left\lVert#1\right\rVert_{\textup{F}}}

\newcommand{\normsop}[1]{\lVert#1\rVert_{\textup{op}}}
\newcommand{\normsf}[1]{\lVert#1\rVert_{\textup{F}}}

\newcommand{\inprod}[2]{\left\langle#1, #2\right\rangle}

\newcommand{\eps}{\epsilon}

\newcommand{\R}{\mathbb{R}}

\newcommand{\N}{\mathbb{N}}
\newcommand{\diag}[1]{\textbf{\textup{diag}}\left(#1\right)}
\newcommand{\half}{\frac{1}{2}}

\newcommand{\E}{\mathbb{E}}

\newcommand{\Var}{\textup{Var}}

\newcommand{\Tr}{\textup{Tr}}

\newcommand{\id}{\mathbf{I}}

\newcommand{\dd}{\textup{d}}

\newcommand{\Par}[1]{\left(#1\right)}
\newcommand{\Brack}[1]{\left[#1\right]}
\newcommand{\Brace}[1]{\left\{#1\right\}}
\newcommand{\Abs}[1]{\left|#1\right|}

\newcommand{\ind}{\mathbb{I}}
\newcommand{\mzero}{\mathbf{0}}
\newcommand{\Sym}{\mathbb{S}}
\newcommand{\PSD}{\Sym_{\succeq \mzero}}

\newcommand{\range}{\mathrm{range}}

\newcommand{\vth}{\boldsymbol{\theta}}

\definecolor{burntorange}{rgb}{0.8, 0.33, 0.0}

\newcommand{\twonorm}[1]{\norm{#1}_2}

\newcommand{\msig}{\mathbf{\Sigma}}
\newcommand{\vmu}{\boldsymbol{\mu}}
\newcommand{\vlam}{\boldsymbol{\lambda}}
\newcommand{\vzero}{\mathbf{0}}

\newcommand{\TV}{\mathrm{TV}}
\newcommand{\KL}{\mathrm{KL}}

\newcommand{\simiid}{\sim_{\textup{i.i.d.}}}

\newcommand{\Prob}{\mathbb{P}}
\renewcommand{\Pr}{\Prob}
\newcommand{\prob}[1]{\mathbb{P}\left[#1\right]}

\newcommand{\0}{\mathbf{0}}

\newcommand{\ma}{{\mathbf{A}}}
\newcommand{\mb}{{\mathbf{B}}}
\newcommand{\mc}{{\mathbf{C}}}
\newcommand{\md}{{\mathbf{D}}}
\newcommand{\me}{{\mathbf{E}}}
\newcommand{\mf}{{\mathbf{F}}}
\newcommand{\mg}{{\mathbf{G}}}
\newcommand{\mh}{{\mathbf{H}}}

\newcommand{\mk}{{\mathbf{K}}}

\newcommand{\mm}{{\mathbf{M}}}
\newcommand{\mn}{{\mathbf{N}}}

\newcommand{\mr}{{\mathbf{R}}}
\newcommand{\ms}{{\mathbf{S}}}
\newcommand{\mt}{{\mathbf{T}}}
\newcommand{\muu}{{\mathbf{U}}}
\newcommand{\mv}{{\mathbf{V}}}
\newcommand{\mw}{{\mathbf{W}}}
\newcommand{\mx}{{\mathbf{X}}}
\newcommand{\my}{{\mathbf{Y}}}
\newcommand{\mz}{{\mathbf{Z}}}

\newcommand{\mlam}{\mathbf{\Lambda}}

\newcommand{\va}{{\mathbf{a}}}

\newcommand{\vc}{{\mathbf{c}}}

\newcommand{\ve}{{\mathbf{e}}}

\newcommand{\vg}{{\mathbf{g}}}

\newcommand{\vn}{{\mathbf{n}}}

\newcommand{\vr}{{\mathbf{r}}}
\newcommand{\vs}{{\mathbf{s}}}
\newcommand{\vt}{{\mathbf{t}}}

\newcommand{\vv}{{\mathbf{v}}}
\newcommand{\vw}{{\mathbf{w}}}
\newcommand{\vx}{{\mathbf{x}}}
\newcommand{\vy}{{\mathbf{y}}}
\newcommand{\vz}{{\mathbf{z}}}

\newcommand{\calA}{{\mathcal{A}}}

\newcommand{\calE}{{\mathcal{E}}}

\newcommand{\calH}{{\mathcal{H}}}
\newcommand{\calI}{{\mathcal{I}}}

\newcommand{\calM}{{\mathcal{M}}}
\newcommand{\calN}{{\mathcal{N}}}

\newcommand{\calP}{{\mathcal{P}}}
\newcommand{\calQ}{{\mathcal{Q}}}

\newcommand{\calW}{{\mathcal{W}}}

\newcommand{\bbE}{{\mathbb{E}}}

\newcommand{\paren}[1]{\left(#1\right)}
\newcommand{\bra}[1]{\left[#1\right]}
\newcommand{\cbra}[1]{\left\{#1\right\}}

\crefname{assumption}{assumption}{assumptions}

\newcommand{\innp}[1]{\left \langle #1 \right\rangle}

\newcommand{\rank}{\mathrm{rank}}

\definecolor{burntorange}{rgb}{0.8, 0.33, 0.0}
\definecolor{skyblue}{HTML}{75aadb}

\newcommand{\BW}{\dd_{\mathrm{BW}}}
\newcommand{\FID}{\dd_{\mathrm{FID}}}
\newcommand{\VA}{FID-VALE}
\newcommand{\vX}{\boldsymbol{X}}
\newcommand{\vY}{\boldsymbol{Y}}
\newcommand{\vnu}{\boldsymbol{\nu}}
\newcommand{\hvth}{\widehat{\vth}}
\newcommand{\sfG}{\mathsf{G}}
\newcommand{\tmh}{\widetilde{\mh}}
\newcommand{\hvnu}{\widehat{\vnu}}

\newcommand{\hmb}{\widehat{\mb}}

\newcommand{\Wish}{\textup{Wish}}
\newcommand{\hdd}{\widehat{\dd}}
\newcommand{\hvv}{\widehat{\vv}}
\newcommand{\fidinf}{\mathrm{FID}_{\infty}}
\newcommand{\hD}{\widehat{D}}
\newcommand{\tmb}{\widetilde{\mb}}
\newcommand{\tmc}{\widetilde{\mc}}
\newcommand{\bmy}{\overline{\my}}
\newcommand{\hQ}{\widehat{Q}}
\newcommand{\cyc}{\mathrm{cyc}}
\newcommand{\mxi}{\boldsymbol{\Xi}}
\newcommand{\tmv}{\widetilde{\mv}}
\newcommand{\tmx}{\widetilde{\mx}}
\newcommand{\tmr}{\widetilde{\mr}}

\title{Sample-Optimal Estimation of the Fr\'echet Inception Distance}
\date{}

\author{Ziyun Chen\thanks{
University of Washington,
\texttt{ziyuncc@cs.washington.edu}
} \and Jerry Li\thanks{
University of Washington,
\texttt{jerryzli@cs.washington.edu}
} \and Kevin Tian\thanks{
University of Texas at Austin,
\texttt{kjtian@cs.utexas.edu}
}\and Yusong Zhu\thanks{
University of Texas at Austin,
\texttt{zhuys@utexas.edu}
}}
\begin{document}

\maketitle

\begin{abstract}
The Fréchet Inception Distance (FID) is widely used to evaluate generative models, but its empirical plug-in estimator suffers from finite-sample bias \cite{bińkowski2018demystifying, chong2019effectively}. We study the sample complexity $n$ of estimating FID to error $\epsilon$ between $d$-dimensional Gaussians with bounded mean distance and covariances, when one distribution is known. Our contributions are threefold.\vspace{1mm}
\begin{enumerate}
    \item We establish tight finite-sample $\Theta(\frac{d^2}{n})$ bias and $\Theta(\frac{d}{n} + \frac {d^2} {n^2})$ variance bounds for the empirical plug-in estimator, establishing a $\gtrsim d^2$ sample complexity.
    \item To debias the empirical plug-in estimator, we generalize the $\fidinf$ estimator of \cite{chong2019effectively} to extrapolation methods of arbitrary order $k$. We further prove tight bias and variance bounds of $\Theta(\frac{d^{k + 2}}{n^{k + 1}})$ and $\Theta(\frac d n + \frac{d^2}{n^2})$ for any order-$k$ extrapolation under our framework.
    \item We introduce Relative Taylor Debiasing (RTD), a new, computationally efficient FID estimation algorithm using debiasing techniques inspired by U-statistics. We show that RTD achieves an $O(\frac d {\eps^2})$ sample complexity, and prove that this is optimal.
\end{enumerate}\vspace{1mm}
We provide a complementary empirical evaluation of our new estimators.
Our experiments on synthetic Gaussians validate the predicted residual bias and support the tightness of our bounds. On ImageNet with Inception embeddings, RTD achieves the lowest mean estimation error at the standard 50K sample budget, while our second-order variance-aware extrapolation estimator (VALE$_2$) uses only 10K samples to achieve accuracy comparable to FID$_\infty$ at 50K samples.\footnote{Authors are ordered alphabetically.}\footnote{Our code is available at {\urlstyle{same}\url{https://github.com/zys996/sample-optimal-fid-estimation}}.}
\end{abstract}
\thispagestyle{empty}
\newpage
\begingroup
\setlength{\parskip}{0pt}
\tableofcontents
\endgroup
\newpage

\pagenumbering{arabic}
\section{Introduction}
\label{sec:intro}

A central challenge in generative modeling scholarship is \emph{evaluation}: that is, how can we quantitatively judge the quality of a model's generations?
Without evaluation, it is difficult to make concrete comparisons between different methods, algorithms, and architectures.
Downstream performance can sometimes be used as a proxy for quality, but many settings, notably image generation, where the ultimate goal is to produce high-quality natural images, require metrics that serve as direct proxies for generative quality. While there is no perfect metric for this task, the \emph{Fr\'{e}chet inception distance} (FID), introduced in~\cite{heusel2017gans}, has become a standard tool for this purpose.

The FID is a measure of distance between distributions $P, Q$ over image embeddings.
Formally, let $P$, $Q$ be distributions over images passed through a fixed embedding (often the Inception network \cite{szegedy2016rethinking}). When $P = \calN(\vmu, \ma)$ and $Q = \calN(\vnu, \mb)$ are multivariate Gaussians, their FID is exactly the squared \emph{$2$-Wasserstein} distance $W_2^2(P, Q)$, and enjoys the parametric form
\begin{equation}
\label{eq:frechet_dist}
\FID(P,Q)
\defeq
\twonorm{\vmu-\vnu}^2
+\Tr(\ma)+\Tr(\mb)
-2\Tr\paren{\paren{\ma^{1/2}\mb\ma^{1/2}}^{1/2}},
\end{equation}
see \cite{gelbrich1990formula}. For non-Gaussian $P, Q$, it is typical to take $\FID(P, Q)$ to be the FID between the multivariate Gaussians with matched means and covariances to $P$ and $Q$.

In practice, the FID is estimated from a finite dataset of samples. The \emph{empirical plug-in estimator}, which simply evaluates \eqref{eq:frechet_dist} on empirical means and covariances, is perhaps the most commonly used estimator. Unfortunately, this estimator exhibits a substantial finite-sample bias, 
particularly in high-dimensional spaces. For example, \cite{bińkowski2018demystifying} construct a $2048$-dimensional, exactly Gaussian example in which FID estimates constructed using 50,000 samples consistently give the wrong ordering between distributions. Recent studies of modern generative models likewise report that FID-based conclusions depend substantially on the sample size \cite{chong2019effectively, jayasumana2024rethinking}.


Thus, finite-sample bias is a well-documented issue when benchmarking with the FID. 
It can materially affect comparisons between generative models, even at practical scales, and removing it often requires the user to generate many images, which becomes extremely costly, especially if evaluation is repeated throughout model development. This motivates our central question.
\begin{center}
    {\it Can we characterize the sample complexity of estimating the FID?}
\end{center}

A precursor to our work, \cite{chong2019effectively}, also investigated this question, and proposed an alternative estimator called $\fidinf$. They further empirically demonstrated that $\fidinf$ attains an improved sample complexity compared to the empirical plug-in estimator, at a fixed error level. However, \cite{chong2019effectively} does not prove non-asymptotic sample complexity bounds for either the plug-in estimator or $\fidinf$, leaving open the possibility of improved estimators in both theory and practice.

\subsection{Our results}

In this paper, we develop new estimators for FID estimation, alongside a complementary non-asymptotic sample complexity theory for the problem. 
We focus on the setting where image embedding distributions $P$ and $Q$ are exactly Gaussian, a canonical setting for high-dimensional statistical inference. 
This setting is particularly natural for FID, which by its very definition as a squared Wasserstein distance between moment-matched Gaussians, assumes the image embeddings are close to a Gaussian (else measuring this distance is ill-posed). Under this assumption, we abstract FID estimation into the following natural parametric statistical estimation problem.

\begin{problem}
\label{prob:main}
Let $\vmu, \vnu \in \R^d$ and let $\ma, \mb \in \R^{d \times d}$ be positive semidefinite.
Let $P = \calN(\vmu, \ma)$ be a reference distribution, where $\vmu$, $\ma$ are known, and let $\eps > 0$.
    Given $n$ i.i.d.\ samples $\{\vX_i\}_{i \in [n]} \sim \calN (\vnu, \mb)$, where $\vnu$, $\mb$ are unknown, estimate~\eqref{eq:frechet_dist} with high probability up to additive error $\eps$.
\end{problem}

We give a formal statement of Problem~\ref{prob:main} in Section~\ref{ssec:prob_def}. Throughout the paper, we also impose a problem scale (Assumption~\ref{assum:normalization}), an assumption which is necessary because the FID is not scale invariant (i.e., $\eps$ grows with the size of $\ma$, $\mb$, $\norm{\vmu - \vnu}_2$).

We now overview our main theoretical findings, with a detailed side-by-side comparison in~\Cref{tab:estimator-comparison}.
\begin{enumerate}
    \item We give the first tight minimax convergence rates for the empirical plug-in estimator (Theorem~\ref{thm:first-order-plugin-bias-variance}) and $\fidinf$ \cite{chong2019effectively} (Corollary~\ref{cor:fidinf}).
    Our rates demonstrate that the empirical plug-in estimator requires $O(d^2)$ samples to achieve constant error, and that $\mathrm{FID}_\infty$ decreases this to $O(d^{3/2})$. However, this is not entirely ``for free'': in particular, for fixed $N$, the error of $\mathrm{FID}_\infty$ scales as $\Theta(d^3)$ with the dimension as opposed to $\Theta(d^2)$ for the empirical plug-in estimator. All of these rates are accompanied by matching lower bounds (Proposition~\ref{prop:extrapolation-bias-lower-bound}).
    \item We give a new generalization of $\mathrm{FID}_\infty$, which we call \VA~(FID with Variance-Aware Linear Extrapolation, see~\Cref{sec:extrapolation}), that uses variance-aware higher-order extrapolation to obtain a sample complexity $\Theta (d^{(k + 2) / (k + 1)})$ at constant error, for any integer $k \ge 1$.
    \item Finally, we introduce a novel estimator based on explicit higher-order debiasing, which we call \emph{Relative Taylor Debiasing} (FID-RTD, see~\Cref{sec:optimal-estimation}). We show FID-RTD achieves the minimax optimal sample complexity of $n = \Theta(d\eps^{-2})$ for FID estimation (Theorem~\ref{thm:rtd-estimation}, Proposition~\ref{prop:gaussian-w2-lower-bound}).
\end{enumerate}

\begin{table*}[t]
    \centering
    \small
    \caption{
        Sample complexity and runtime comparison of FID estimators. We report the sample complexity necessary to achieve constant failure probability for Gaussian data, and suppress constant factors;
        $n$ is the sample size, $\omega < 2.373$ is the matrix multiplication exponent, and $\gamma > 0$ is an arbitrarily small constant. All results come with matching lower bounds (Propositions~\ref{prop:gaussian-w2-lower-bound},~\ref{prop:extrapolation-bias-lower-bound}).
    }
    \label{tab:estimator-comparison}
    \begin{tabular}{@{}lcc@{}}
        \toprule
        Method
        & Error at sample size $n$
        & Runtime at sample size $n$ \\
        \midrule

        Empirical plug-in~\cite{heusel2017gans}
        &
        $\displaystyle
        
            \sqrt{\frac dn}+\frac{d^2}{n}$
        (Theorem~\ref{thm:first-order-plugin-bias-variance})
        &
        $\displaystyle
        
            nd^{\omega-1}
            +d^\omega$
        \\[2ex]

        $\mathrm{FID}_\infty$~\cite{chong2019effectively}
        &
        $\displaystyle
        
            \sqrt{\frac dn}+\frac{d^3}{n^2}
       $
        (Corollary~\ref{cor:fidinf})
        &
        $\displaystyle
        
            nd^{\omega-1}
            +d^\omega
        $
        \\[2ex]

        \VA, constant order $k$ (\Cref{ssec:choosing-extrapolation-weights})
        &
        $\displaystyle
        
            \sqrt{\frac dn}
            +
            \frac{d^{k+2}}{n^{k+1}}
        $
        (Theorem~\ref{thm:general-order-extrapolation})
        &
        $\displaystyle
        
            nd^{\omega-1}
            +d^\omega
        $

        \\[2ex]

        FID-RTD (Section~\ref{sec:optimal-estimation})
        &
        $\displaystyle
        
            \sqrt{\frac dn}
        $
        (Theorem~\ref{thm:rtd-estimation})
        &
        $\displaystyle n^{1 + \gamma}d^{\omega -1} + d^{\omega + \gamma}$
        \\
        \midrule
        Information-theoretic lower bound
        &
        $\displaystyle
        
            \sqrt{\frac dn}
        $
        (Proposition~\ref{prop:gaussian-w2-lower-bound})
        &
        N/A
        \\
        \bottomrule
    \end{tabular}
\end{table*}

\begin{figure}[t]
\centering
\includegraphics[width=0.8\textwidth]{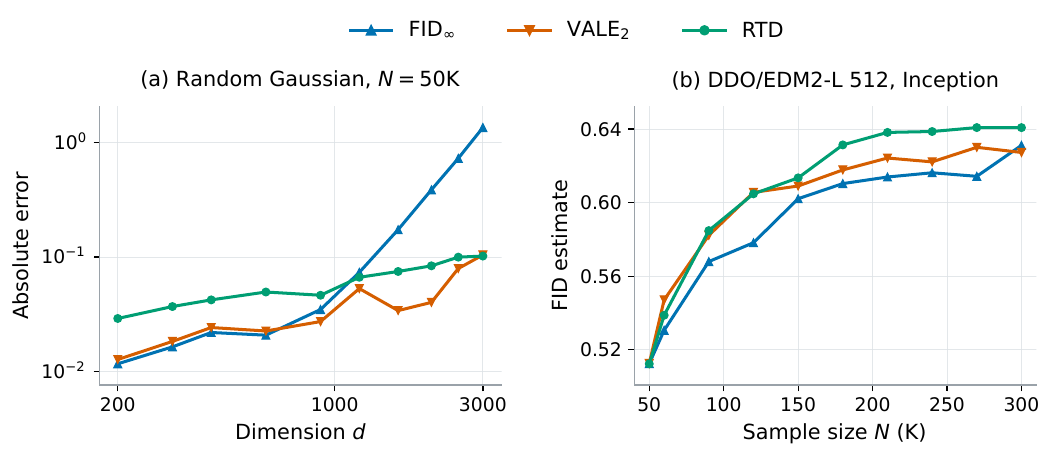}
\caption{Representative Gaussian and ImageNet comparisons.
(a) Haar log-uniform covariances at $N=50{,}000$; each point is the mean across ten covariance pairs of the median absolute error over five trials.
(b) DDO/EDM2-L 512 with Inception features; each point is the median of five estimates from the first $N$ development samples.}
\label{fig:intro-estimator-comparison}
\end{figure}

\textbf{Experimental evaluations.}
We complement our theoretical results with evaluations on both synthetic and real-world data to demonstrate that our new FID estimators substantially improve upon existing FID estimators such as the plug-in estimator and $\fidinf$ in both settings.

We first validate the correctness of our theoretical findings in a variety of synthetic experiments on Gaussian data (see~\Cref{sec:gaussian_exp} for more details).
Our key findings here are that the factor of $d^3$ in the statistical rate for $\mathrm{FID}_\infty$ can be witnessed in experimentally relevant settings, and that in such settings, RTD has substantially milder dimension dependence than $\mathrm{FID}_\infty$ (see~\Cref{fig:intro-estimator-comparison}(a)).
However, our alternative \VA~estimator at order $2$ performs comparably to RTD in both the Gaussian and real-data examples in~\Cref{fig:intro-estimator-comparison}, despite its theoretically worse convergence rate.

We next perform experiments evaluating our method on real generative models trained on ImageNet (Section~\ref{sec:imagenet_exp}).
We consider seven generator checkpoints, and three different embeddings, namely, Inception, DINOv2, and CLIP ViT-L/14.
Because we do not have ground-truth FID scores here, we compare the quality of our estimates against those of proxies computed with significantly more samples.
Namely, we evaluated our methods using $n \in \cbra{10\mathrm{K}, 20\mathrm{K}, 30\mathrm{K}, 40\mathrm{K}, 50\mathrm{K}}$ samples against proxies which used $n = 300\mathrm{K}$ samples.
One subtle question is what proxy to use to compute the FID using 300K samples.
Naively, it would be most natural to use the plug-in estimator to compute the proxy.
Interestingly, in some settings, the proxy computed by the plug-in estimator remained noticeably higher than the debiased estimates even at $n = 300,000$ samples --- indeed, this is exactly the problem this paper addresses!
To rectify this, we also compared our methods' FID estimates against proxies computed with $n = 300,000$ samples using both $\mathrm{FID}_\infty$ and our new FID-RTD estimator, which agree closely at large $n$ (see~\Cref{fig:imagenet-proxy-example}).

At a high level, our finding (see~\Cref{tab:imagenet-proxy-median-error}) is that at 50K samples, either \VA~at order $k=2$ or FID-RTD achieves the lowest mean error for each embedding and reference proxy.
This yields strong evidence for the real-world utility of our methods as alternatives to existing estimators for FID.
We give more detailed experimental results as well as additional ablations in~\Cref{sec:imagenet_exp}.
\subsection{Our techniques}
We now describe our new algorithms for Problem~\ref{prob:main} and our proof ideas in more detail.
In a slight abuse of notation, let $\FID (\calN(\vmu, \ma), \calN(\vnu, \mb))$ denote the quantity of interest, i.e., the expression in~\eqref{eq:frechet_dist}.
We define the squared \emph{Bures-Wasserstein} distance between positive semidefinite $\ma, \mb$ by \[\BW^2(\ma, \mb) \defeq \Tr(\ma) + \Tr(\mb) - 2\Tr((\ma^{1/2}\mb\ma^{1/2})^{1/2}),\] so that $\FID (\calN(\vmu, \ma), \calN(\vnu, \mb)) = \twonorm{\vmu-\vnu}^2 + \BW^2 (\ma, \mb).$ The difficulty of FID estimation comes from the $\BW^2(\ma, \mb)$ term, so for simplicity in this overview, we assume $\vmu = \vnu = \0_d$.

\textbf{Tight rates for the empirical plug-in estimator.} 
At a sample size of $n$, the empirical plug-in estimator applies the formula in~\eqref{eq:frechet_dist} to the empirical mean and covariance:
\begin{equation*}\hvnu_n \defeq \frac 1 n \sum_{i \in [n]} \vX_i,\quad \hmb_n \defeq \frac 1 {n} \sum_{i \in [n]} (\vX_i - \hvnu_n)(\vX_i - \hvnu_n)^\top. \end{equation*}
More precisely, the usual plug-in estimator outputs
\begin{equation*}
\widehat{\dd}_n \defeq \norm{\vmu - \hvnu_n}_2^2 + \BW^2\Par{\ma, \frac n {n - 1}\hmb_n}.
\end{equation*}
The slight adjustment to the empirical covariance is justified by classical statistical considerations (see e.g.~\cite{muirhead2009aspects}, Chapter 3), but the estimator is otherwise the ``naive'' estimator.

We analyze the plug-in estimator $\hdd_n$, its extension $\mathrm{FID}_\infty$, as well as our generalizations thereof, using a shared analytic framework.
The main difficulty in understanding the behavior of $\widehat{\dd}_n$ is in understanding the behavior of the squared Bures-Wasserstein distance between $\ma$ and our empirical covariance $\hmb_n$, specifically, dealing with the matrix square root therein.
We do so by appealing to the \emph{resolvent} representation of the matrix square root (Fact~\ref{fact:sqrt-resolvent}), which allows us to expand the expression for the matrix square root into an infinite series. Expressing $\hmb_n = \mb^{1/2}(\id_d + \mh) \mb^{1/2}$ as a perturbation of the unknown covariance $\mb$, where $\mh$ arises due to finite-sample error in the empirical covariance, \Cref{prop:trace-resolvent-expansion} decomposes the estimation error as
\begin{equation}\label{eq:bias_decomposition}\BW^2(\ma, \mb_{\mh}) - \BW^2(\ma, \mb) = \inprod{\mb - \mc}{\mh} + \sum_{k \ge 2} Q_{k, \mc}(\mh),\text{ where } \mc \defeq \Par{\mb^{1/2}\ma\mb^{1/2}}^{1/2}, \end{equation}
where $Q_{k, \mc}(\mh)$ captures the $k^{\text{th}}$-order effect of the resolvent expansion. 
We exactly characterize the lowest-order term $Q_{2, \mc}(\mh)$ in Lemma~\ref{lem:first_two_terms}, and then Proposition~\ref{prop:trace-resolvent-expansion} relates each higher-order term to $Q_{2, \mc}(\mh)$ directly. Our decomposition lets us leverage standard identities for Gaussian perturbations $\mh$ to bound $Q_{2, \mc}(\mh)$, and derive non-asymptotic bounds on the overall bias and variance.

Interestingly, Theorem~\ref{thm:first-order-plugin-bias-variance} diagnoses the finite-sample inaccuracy of $\hdd_n$ as primarily arising due to its bias. In particular, its variance is $\approx \frac d n$, which suffices to match the lower bound in Proposition~\ref{prop:gaussian-w2-lower-bound}, whereas its worst-case bias is $\gtrsim \frac{d^2}{n}$, precluding convergence unless $n \gtrsim d^2$.

\textbf{Tight rates for $\fidinf$.} $\mathrm{FID}_\infty$ is an elegant extension of the plug-in estimator $\hdd_n$, proposed by \cite{chong2019effectively}.
As detailed in~\Cref{sec:extrapolation}, the key observation is that $\widehat{\dd}_n$'s bias depends in a predictable way on $n$, the sample size. 
Specifically, by leveraging the expansion in~\eqref{eq:bias_decomposition}, one can show that the plug-in estimator's bias has the exact characterization
\begin{equation}\label{eq:fid_bias_intro}\E[\widehat{\dd}_n] - \FID (\calN(\vmu, \ma), \calN(\vnu, \mb)) = \frac{\va_1}{n} + O_d \left( \frac{1}{n^2} \right),\end{equation}
for some fixed $\va_1$.
Thus, a linear combination of independent instances of $\hdd_n$, computed at different sample sizes $n$, can exactly offset the low-degree term in \eqref{eq:fid_bias_intro}.
Perhaps the simplest variant of $\mathrm{FID}_\infty$ is to take $N$ samples, split them into subsets of size $\frac N 3$ and $\frac{2N}{3}$, and form
$
\mathrm{FID}_\infty \defeq 2 \widehat{\dd}_{\frac{2N}{3}} - \widehat{\dd}_{\frac N 3}$
as an overall estimate.
The coefficients and sample sizes are chosen to exactly remove the $\frac{\va_1}{n}$ term in the bias \eqref{eq:fid_bias_intro}, regardless of the value of $\va_1$, because $2 \cdot \frac{3}{2N} - \frac{3}{N} = 0$.

Proposition~\ref{prop:uniform-psd-expansion} gives a tighter characterization of $\hdd_n$'s bias that justifies \eqref{eq:fid_bias_intro}, as well as a fine-grained high-order extrapolation \eqref{eq:higher-order-expansion} to be discussed. We then repurpose our analysis of the plug-in estimator in Theorem~\ref{thm:first-order-plugin-bias-variance} to derive a tight $\frac{d^{3/2}}{\eps^{1/2}} + \frac{d}{\eps^2}$ sample complexity bound for $\fidinf$ (Corollary~\ref{cor:fidinf}).

\textbf{\VA: variance-aware high-order extrapolation.} 
The previous discussion motivates a natural generalization of $\mathrm{FID}_\infty$: cancel additional terms in the bias \eqref{eq:fid_bias_intro}, beyond the term linear in $\frac 1 n$.
Indeed, Proposition~\ref{prop:uniform-psd-expansion} uses our expansion \eqref{eq:bias_decomposition} to show that the plug-in estimator's bias satisfies
\begin{equation}
\label{eq:higher-order-expansion}
\E[\widehat{\dd}_n] \approx \FID (\calN(\vmu, \ma), \calN(\vnu, \mb)) + \frac{\va_1}{n} + \frac{\va_2}{n^2} + \ldots + \frac{\va_k}{n^k} + O_d\Par{\frac{1}{n^{k + 1}}},
\end{equation}
for any $k \geq 1$. Given such an expansion, it is natural to split the samples into $\ell \geq k + 1$ subsamples, and use a linear combination of these estimates to exactly cancel out all degree $\leq k$ terms in this approximation, yielding a better overall approximation to the true FID.

This approach yields a whole class of estimators.
Let $N$ denote the total number of samples in an extrapolation scheme, and let $k \geq 1$.
Then, for any integer $\ell \geq k + 1$, any sequence of $\ell$ sample sizes $\vn = (\vn_1, \ldots, \vn_\ell)$  summing to $N$, and deterministic weights $\vw \in \R^\ell$, we can define an estimator
\begin{equation}
\label{eq:extrapolation-estimator-intro}\hD\Par{\vn, \vw} \defeq \sum_{j \in [\ell]} \vw_j \widehat{\dd}_{\vn_j}, \end{equation}
where each $\widehat{\dd}_{\vn_j}$ is computed independently from $\vn_j$ samples. For the extrapolation \eqref{eq:extrapolation-estimator-intro} to succeed in canceling out the first $k$ terms in the bias expansion \eqref{eq:higher-order-expansion}, our framework requires
\begin{equation}
\label{eq:order-k-exact-weights}
 \vw \in   \calW_k (\vc)
    \defeq
    \left\{
        \vw\in\R^\ell:
        \sum_{j\in[\ell]}\vw_j=1,
        \;
        \sum_{j \in [\ell]}\frac{\vw_j}{\vc_j^i}=0
        \text{ for all } i \in [k]
    \right\} ,
\end{equation}
where $\vc \defeq \frac 1 N \vn$ are the relative sample sizes. In Lemma~\ref{lem:extrapolation-bias-cancellation}, we show that the estimator \eqref{eq:extrapolation-estimator-intro} applied with \emph{any} $\vw, \vc \in \R^\ell$ satisfying $\vw \in \calW_k(\vc)$ incurs a bias scaling as $O_d(\frac 1 {N^{k + 1}})$.

The main design choice left is then to choose the interpolation points $\vc$ (or equivalently $\vn$) and weights $\vw$.
For $\vc$, we consider three different choices of $\vc$ motivated by considerations from classical approximation theory, namely, uniformly spaced, inverse uniformly spaced, and spacing based on the \emph{Chebyshev-Lobatto} nodes (see Section~\ref{ssec:choosing-extrapolation-weights}) for more details.

In our experiments, the most important optimization empirically was to obtain good choices of $\vw$.
The choice of $\vw$ can be recast as a linear regression problem: given noisy estimates of~\eqref{eq:higher-order-expansion} at different values of $\vn_j$, the goal is to recover a good estimate of the true FID, the constant term in the expansion.
Our key observation is that the variance of each estimate depends on $\vn_j$, or equivalently on $\vc_j$, and thus one should choose weights $\vw$ which are a solution to a \emph{variance-aware} weighted regression, where the weights depend on $\vc_j$.
We call this method FID with Variance-Aware Linear Extrapolation (\VA); for a full description, see Section~\ref{ssec:choosing-extrapolation-weights}.
We note that the weights $\vw$ depend only on $\vc$, and thus can be precomputed independently of the data.

\textbf{FID-RTD: sample-optimal estimation via high-order debiasing.} While \VA~works well in most practical settings, it fails to match the lower bound (Proposition~\ref{prop:gaussian-w2-lower-bound}) at any fixed $k$, as $d \to \infty$. This motivates FID-RTD, our estimator with the strongest theoretical guarantees.

Our FID-RTD is based on an alternative strategy for debiasing $\hdd_n$: 
directly estimating each high-order term $Q_{k, \mc}$ in the expansion \eqref{eq:bias_decomposition} for an initial fixed perturbation $\mh$, using independent batches of samples. This is inspired by the literature on \emph{U-statistics}, which has developed tools for estimating parameters expressible as symmetric functions of independent random variables.

More formally, our FID-RTD estimator first takes a small set of $O(d)$ pilot samples to obtain $\tmb$, a rough spectral approximation to $\mb$, to center the expansion \eqref{eq:bias_decomposition}. It then 
truncates the expansion at degree $L = \Theta(\log \frac d \eps)$, incurring a negligible bias. The remaining task is to estimate each $Q_{k, \mc}(\mh)$ for $2 \le k \le L$, where $\mh = \tmb^{-1/2}\mb\tmb^{-1/2} - \id_d$ is the residual error of our pilot sample.\footnote{Formally, the value of $\mc$ changes slightly from \eqref{eq:bias_decomposition} when using this strategy, see \eqref{eq:optimal-relative-localization}.}

The key idea is to observe that $Q_{k, \mc}$ has an unbiased estimator.
This is because we can associate $Q_{k, \mc}$ with a symmetric $k$-linear form (order-$k$ tensor) in $\mh$, i.e., $Q_{k,\mc}(\mh) = \calQ_{k,\mc} [\mh, \ldots, \mh]$, where $\calQ_{k, \mc}: (\R^{d \times d})^k \to \R$.
Thus, we can estimate $Q_{k,\mc}(\mh)$ by breaking up our samples into $k$ disjoint batches, using each batch to form an independent unbiased estimator $\my_1, \ldots, \my_k$ of $\mh$, and approximating $Q_{k,\mc}(\mh) \approx \calQ_{k,\mc} [\my_1, \ldots, \my_k]$.
Combining across all levels $2 \le k \le L$ yields an estimator for the right-hand side of \eqref{eq:bias_decomposition}, and thus, an estimator for $d_{\mathrm{BW}}^2(\ma,\mb)$ and the FID.

To show the correctness of FID-RTD, the main technical work is in bounding its variance. Indeed, our estimator reuses the same batch of samples at every level $k$ to avoid an $L = \Omega(\log d)$-factor blowup in the sample complexity, which causes correlated estimates. 
In Section~\ref{ssec:optimal-direct-debiasing} we demonstrate these correlations can all be accounted for using judicious weighted applications of the Cauchy-Schwarz inequality, due to the exponential decay of our higher-order bounds, ultimately showing that FID-RTD achieves an optimal statistical rate up to constant factors.

The final difficulty we must address is the runtime of the FID-RTD estimator.
Indeed, even writing down the order-$k$ tensor $\calQ_{k, \mc}$ would take $d^{O(k)}$ time, which, for $k = \Theta (\log d)$, is super-polynomial. However, it turns out that $\calQ_{k, \mc}$ has a convenient structure, based on a recursive representation of the matrix square root derived by \cite{del2018taylor}. This structure admits a recurrence for evaluating $\calQ_{k, \mc}$ on any repeated input $\mh^{\otimes k}$ in time $O(k^2 d^\omega)$, as we show in Lemma~\ref{lem:implement_homogeneous}. By applying the \emph{polarization identity} of \cite{MazurO34, Thomas14}, we then apply Lemma~\ref{lem:implement_homogeneous} to evaluate $\calQ_{k, \mc}$ on mixed inputs at a $2^k \le 2^L$ blowup; see Algorithm~\ref{alg:evaluate-q}. By choosing the constant in $L = O(\log \frac d \eps)$ sufficiently small, FID-RTD attains the optimal sample complexity in time $O(d^{\omega+\gamma}/\eps^{2+\gamma})$ for any fixed $\gamma > 0$; see Section~\ref{ssec:optimal-efficient-evaluation}.

\subsection{Related work}

\textbf{$W_2$ estimation with provable guarantees.}
The statistical properties of empirical plug-in estimators for Gaussian $W_2$ and $W_2^2$ distances have been studied in several prior works. For example,
\cite{rippl2016limit} establish asymptotic limit laws for $W_2^2$, without finite-sample guarantees, and
\cite{zhu2020deconstructing} obtain an $O(\frac{d^2}{\eps^2})$ sample complexity for $W_2$ estimation, assuming that the covariance spectra are bounded away from zero. This requirement is avoided by
\cite{hu25a}, who give a sample complexity of $O(\frac{d^5}{\eps^4})$ for $W_2^2$ estimation.
Beyond the empirical plug-in estimator, alternative estimators aim to improve sample efficiency:
\cite{tiomoko2019estimation,tiomoko2019random} develop random matrix corrections for $W_2^2$ estimation, but establish only asymptotic guarantees.
For general smooth nonlinear functionals, extrapolation and Taylor debiasing methods have been analyzed by~\cite{koltchinskii2023estimation,koltchinskii2026functional}.
However, smoothness assumptions do not hold uniformly for $W_2^2$ estimation when singular covariances are allowed, and the resulting sample complexities incur a polynomial factor in the inverse-smallest eigenvalue, which we avoid.

\textbf{FID evaluation and debiasing.}
Since its introduction by~\cite{heusel2017gans}, FID has become a standard metric for evaluating generative models~\cite{peebles2023scalable,tian2024visual,jayasumana2024rethinking}. Despite its popularity, subsequent works identify limitations and propose improvements along two directions.
One concerns \emph{finite-sample error}, e.g., \cite{chong2019effectively} propose $\mathrm{FID}_{\infty}$, which extrapolates FID scores in inverse sample size using a Taylor expansion heuristic. Their experiments support the underlying assumption and demonstrate empirical improvements, but they do not establish a rigorous finite-sample guarantee of $\fidinf$'s improvement over the empirical plug-in estimator. 
Another direction questions \emph{the representational power} of Inception embeddings. Prior work explores alternative representations such as CLIP~\cite{kynk2023the} and DINOv2~\cite{stein2023exposing} to address limitations of Inception features in perceptual evaluation. \cite{jayasumana2024rethinking} empirically show that image distributions in Inception feature space are multimodal and violate the Gaussian assumption. They also expose FID's vulnerability to manipulation: deliberately constructed image distortions can improve FID scores while degrading image quality. To mitigate this issue, MMD-based alternatives that avoid the Gaussian assumption have been proposed, including KID~\cite{bińkowski2018demystifying} and CMMD~\cite{jayasumana2024rethinking}.
\section{Preliminaries}
\label{sec:prelim}

\subsection{Notation}\label{ssec:notation}

\textbf{General notation.} We use $\lesssim$, $\gtrsim$, and $\asymp$ to hide universal constants, and $\lesssim_\delta$, etc.\ to denote that the universal constant depends on a parameter $\delta$ (also treated as a constant). We define $[d] \defeq \{i \in \N: i \le d\}$. We denote vectors in either lowercase boldface or uppercase (italicized) boldface, and matrices in uppercase (upright) boldface. The identity matrix in $\R^d$ is denoted $\id_d$, and the all-zeroes $m \times n$ matrix is denoted $\0_{m \times n}$. We define the set of symmetric $d \times d$ matrices by $\Sym^{d \times d}$ with inner product $\inprod{\ma}{\mb} \defeq \Tr(\ma\mb)$, and the positive semidefinite (PSD) subset by $\PSD^{d \times d}$, equipped with the usual Loewner partial ordering, denoted by $\preceq$. We use $\Tr(\cdot)$, $\normop{\cdot}$, and $\normf{\cdot}$ to denote the trace, $\ell_2$ operator norm, and Frobenius norm of a matrix argument.

We use $\omega < 2.373$ to denote the exponent of $d \times d$ matrix multiplication \cite{AlmanDWXXZ25}. By standard reductions \cite{Strassen69, PanC99}, we also assume inversion and eigendecomposition take $O(d^\omega)$ time.

\textbf{Probability.} We let $\ind_{\calE}$ denote the $0$-$1$ indicator function of an event $\calE$. We use $\overset{d}{=}$ to denote equality in distribution. We let $\calP(\Omega)$ denote the set of probability measures over a sample space $\Omega$, which we identify with their density functions when applicable. We let $\calN(\vmu, \msig)$ be the multivariate Gaussian with mean $\vmu \in \R^d$ and covariance $\msig \in \PSD^{d \times d}$. We use $W_2(P, Q)$ to denote the $2$-Wasserstein distance between $(P, Q)$ in $\calP(\R^d)$ with finite second moments, denoted $\calP_2(\R^d)$. For an arbitrary $P \in \calP_2(\R^d)$ with mean $\vmu$ and covariance $\msig$, we let $P^\sfG \defeq \calN(\vmu, \msig)$ denote the moment-matched Gaussian.

\textbf{Properties of FID.} We review some important properties of FID.
First, it is symmetric, because
\begin{equation}\label{eq:gram_trace}
\Tr\Par{(\ma^{1/2}\mb\ma^{1/2})^{1/2}} = \Tr\Par{(\mb^{1/2}\ma\mb^{1/2})^{1/2}} \text{ for all } \ma, \mb \in \PSD^{d \times d},
\end{equation}
which follows because $\ma^{1/2}\mb\ma^{1/2}$ and $\mb^{1/2}\ma\mb^{1/2}$ have the same eigenvalues.

We recall a classical fact about $\BW$ (see Theorem 1.1, \cite{bhatia2019bures}): for $\ma, \mb \in \PSD^{d \times d}$ and any $r \ge d$,
\begin{equation}
\label{eq:bures-frobenius}
    d_{\mathrm{BW}}(\ma,\mb)
    =
    \min_{\substack{\mm\in\R^{d\times r}:
                     \mm\mm^\top=\ma}}
    \norm{\mm-\mn}_{\mathrm F}, \text{ for any } \mn \in \R^{d \times r},\; \mn\mn^\top=\mb.
\end{equation}

\subsection{Problem definition}\label{ssec:prob_def}
In this paper, we study finite-sample estimation of the FID against a reference distribution.

\begin{model}
\label{model:one-side}
We are given $\vmu \in \R^d$, $\ma \in \PSD^{d \times d}$ inducing $P \defeq \calN(\vmu, \ma)$, and observe i.i.d.\ samples $
    \cbra{\vX_i}_{i\in[n]}
    \simiid
    Q\defeq\calN(\vnu,\mb)$, where $\vnu \in \R^d$, $\mb \in \PSD^{d \times d}$ are unknown.
\end{model}

Model~\ref{model:one-side} is particularly common in practice, where the feature statistics of a public reference dataset are often precomputed once and reused across generator evaluations. In principle, one could also investigate FID under a two-sided model to compare two independently collected datasets without reference statistics. For simplicity, we focus on FID estimation under Model~\ref{model:one-side}.

Our paper works under the following convenient normalization.
\begin{assumption}
\label{assum:normalization}
Let $P=\calN(\vmu,\ma)$ and $Q=\calN(\vnu,\mb)$. We assume that
\begin{equation*}
    \vzero_{d \times d} \preceq\ma,\mb\preceq\id_d,
    \qquad
    \twonorm{\vmu-\vnu}\le1.
\end{equation*}
\end{assumption}

Assumption~\ref{assum:normalization} fixes the scale of the problem (so we may take the error parameter to be scale-invariant), and can be enforced by rescaling the samples, given bounds on $\max\{\normsop{\ma}, \normsop{\mb}\}$ and $\norm{\vmu - \vnu}_2$. 
The estimates in turn can be obtained with high probability given $O(d)$ pilot samples, using standard techniques which we defer to Chapter 4 of~\cite{Vershynin_2026}. 

We finally state the main estimation problem studied in this work. We fix the failure probability $\delta$ to a constant, which can be boosted using the median of $\log(\frac 1 \delta)$ independent repetitions.

\begin{problem}
\label{prob:FD_est}
Under Model~\ref{model:one-side},
suppose Assumption~\ref{assum:normalization} holds for $(P, Q)$. Given an accuracy parameter $\eps \in (0, 1)$, construct an estimator $\widehat \dd$ such that
\begin{equation}
\label{eq:fid_estimation_goal}
    \prob{\Abs{\widehat \dd- \FID(P,Q)}>\eps}
    \le \frac 1 3.
\end{equation}
\end{problem}

In Appendix~\ref{app:lb}, we prove the following information-theoretic lower bound for Problem~\ref{prob:FD_est}.

\begin{restatable}{proposition}{restatelb}
\label{prop:gaussian-w2-lower-bound}
For all $d \ge 1$ and $\eps \in (0, \frac 1 8)$, if $n < \frac{d}{128\eps^2}$, there is no estimator that solves Problem~\ref{prob:FD_est} under Model~\ref{model:one-side} and Assumption~\ref{assum:normalization}.
\end{restatable}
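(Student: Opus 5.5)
Proof plan: Le Cam two-point (or Fano/Assouad-style mixture) argument. We must exhibit two families of instances satisfying Assumption~\ref{assum:normalization} whose FID values to the known reference differ by more than $2\eps$, but whose $n$-sample distributions are statistically indistinguishable when $n < \frac{d}{128\eps^2}$. A single pair of Gaussians only yields a $\frac{1}{\eps^2}$ bound, since two Gaussians can be separated with $O(\frac{1}{\eps^2})$ samples. To obtain the factor $d$, we compare a point hypothesis against a mixture.

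\textbf{Mean construction.} The simplest route works through the mean term. Take $\ma = \mb = \frac12\id_d$ (any fixed $\mb$ makes the $\BW^2$ term constant), and let the reference mean be $\vmu = \vzero$.
\begin{itemize}
\item Under $H_0$, set $\vnu = \vzero$, so $\FID = 0$.
\item Under $H_1$, draw $\vnu = \frac{r}{\sqrt d}\boldsymbol{\sigma}$ with $\boldsymbol{\sigma}$ uniform on $\{\pm1\}^d$ and $r^2 = 4\eps$, so $\FID = \twonorm{\vnu}^2 = 4\eps > 2\eps$.
\end{itemize}
Both hypotheses satisfy $\twonorm{\vnu}\le 1$ for $\eps<\frac18$. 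Any estimator solving Problem~\ref{prob:FD_est} would yield a test with error at most $\frac13$ under each hypothesis. This forces $\TV(Q_0^{\otimes n}, \E_{\boldsymbol\sigma} Q_{\boldsymbol\sigma}^{\otimes n}) \ge \frac13$.

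\textbf{Bounding the divergence.} The key step is to control $\chi^2(\E_{\boldsymbol\sigma} Q_{\boldsymbol\sigma}^{\otimes n}\,\|\,Q_0^{\otimes n})$ via the Ingster second-moment method. For Gaussians with covariance $\frac12\id_d$,
\[
1+\chi^2 = \E_{\boldsymbol\sigma,\boldsymbol\sigma'} \exp\Par{2n\inprod{\vnu_{\boldsymbol\sigma}}{\vnu_{\boldsymbol\sigma'}}} = \E\exp\Par{\frac{2nr^2}{d}\sum_i \sigma_i\sigma_i'} = \cosh\Par{\frac{2nr^2}{d}}^d .
\]
Using $\cosh x \le e^{x^2/2}$, this is at most $\exp\Par{\frac{2n^2r^4}{d}} = \exp\Par{\frac{32 n^2\eps^2}{d}}$.

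This scaling, roughly $n^2\eps^2/d$, is the wrong one: it gives only $n \gtrsim \sqrt d/\eps$, not $d/\eps^2$. The reason is that mean-shift FID $=\twonorm{\vnu}^2$ is a quadratic functional, so the mean construction alone is insufficient to reach the claimed rate. The correct construction must perturb the covariance, since the $\sqrt{d/n}$ variance term arises there.

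\textbf{Covariance construction.} Keep a fixed $\mb$ under $H_0$ and use perturbations $\mb_{\boldsymbol\sigma}$ whose FID to the reference shifts linearly in a perturbation of total size $\asymp\eps$.
\begin{itemize}
\item Take $\ma = \id_d$ and $\mb_0 = \frac14\id_d$. Then $\BW^2(\id_d,\mb_0) = d\Par{1+\frac14-1} = \frac d4$, which is not $O(1)$.
\item Instead let $\ma$ and $\mb$ have small traces. Since $\BW^2$ is linear to first order with gradient $\id_d - \mc$ (cf.\ \eqref{eq:bias_decomposition}), use $\mb_{\boldsymbol\sigma} = \mb_0^{1/2}\Par{\id_d + \mathbf{D}}\mb_0^{1/2}$ and change FID by $\approx \inprod{\mb_0-\mc}{\mathbf{D}}$.
\end{itemize}
The signal that is cheap to detect is the same for all $\boldsymbol\sigma$, e.g.\ a common scaling. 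The hard direction must combine a \emph{fixed} shift of FID with indistinguishability. A workable route is to choose both hypotheses as mixtures with matched low-order moments: $H_0$ uses $\mb = \frac12\id_d + \frac{c}{\sqrt{nd}}\sum_i \sigma_i \ve_i\ve_i^\top$ with some prior, and $H_1$ uses a prior with a different spread. These differ in the second moment of the diagonal eigenvalues, which changes $\Tr\mb - 2\Tr(\mb^{1/2})$ by $\asymp d\cdot\frac{1}{nd}\cdot$const$\cdot$... .

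\textbf{Main obstacle and fallback.} The hardest step is arranging that the FID gap is $\asymp\sqrt{d/n}$ while the $\chi^2$ stays bounded. I would first try the simplest exploitable nonlinearity: make the reference $\ma$ rank-deficient (e.g.\ $\ma=\vzero$), so that FID $= \Tr\mb + \twonorm{\vnu}^2$. Two-point scaling $\mb = (1\pm\eps/d)\cdot\frac12\id_d$ changes the trace by $\asymp\eps$, with KL divergence $\asymp n d(\eps/d)^2 = n\eps^2/d$. Indistinguishability then holds exactly when $n \lesssim d/\eps^2$, which gives the claimed bound. One then tracks the constants through Pinsker to obtain the $128$.
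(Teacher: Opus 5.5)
Your last paragraph is a correct proof and is essentially the paper's argument. Both use Le Cam's two-point method between $\calN(\vzero_d,\frac12\id_d)$ and a slightly inflated isotropic Gaussian, with per-sample KL of order $\eps^2/d$, followed by tensorization and Pinsker. The one difference is your reference $\ma=\vzero_{d\times d}$, which Assumption~\ref{assum:normalization} permits. It makes $\FID(P,Q)=\Tr(\mb)$ exactly linear, so the separation is immediate. The paper instead keeps $\ma=\frac18\id_d$ and verifies $D_1-D_0>2\eps$ by a short square-root computation. Either choice works, and the KL bound is the same because it involves only $Q_0,Q_1$.

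Two things need fixing.

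First, the premise that opens your plan is false, and your own fallback refutes it. You claim a single pair of Gaussians can only give a $\frac{1}{\eps^2}$ bound, so that a mixture and an Ingster $\chi^2$ argument are required. But the gradient of $\mb\mapsto\Tr(\mb)$ is $\id_d$, whose Frobenius norm is $\sqrt d$. So a perturbation of Frobenius size about $\eps/\sqrt d$, costing KL about $\eps^2/d$, already moves the FID by order $\eps$. The mean construction (which you correctly show yields only $\sqrt d/\eps$) and the unfinished covariance-mixture paragraph are dead ends and should be dropped.

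Second, the constants do not close as written. Taking $\mb=(1\pm\frac{\eps}{d})\frac12\id_d$ gives traces differing by exactly $\eps$. Reducing Problem~\ref{prob:FD_est} to testing needs a gap strictly larger than $2\eps$. Instead take $Q_0=\calN(\vzero_d,\frac12\id_d)$ and $Q_1=\calN(\vzero_d,(\frac12+\tau)\id_d)$ with $\tau=\frac{4\eps}{d}$. Since $\eps<\frac18$, we have $\frac12+\tau\le 1$, so Assumption~\ref{assum:normalization} holds. The FID gap is $d\tau=4\eps$. The KL satisfies $\KL(Q_0\Vert Q_1)=\frac d2(\frac{1}{1+2\tau}-1+\log(1+2\tau))\le d\tau^2=\frac{16\eps^2}{d}$. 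Hence $\TV(Q_0^{\otimes n},Q_1^{\otimes n})\le\sqrt{8n\eps^2/d}<\frac14$ whenever $n<\frac{d}{128\eps^2}$. This contradicts Le Cam's bound: the two test errors, each at most $\frac13$, must sum to at least $1-\TV>\frac34$. This recovers exactly the constant $128$.
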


\subsection{Bounds on Wishart matrices}
\label{ssec:gaussian-wishart-prelim}
We briefly collect classical estimates on \emph{Wishart matrices} used throughout the
analysis. We defer a more detailed discussion to \cite{bishop2018introduction}. Recall that for $\msig \in \PSD^{d \times d}$ and $m \in \N$, the \emph{Wishart distribution} $\Wish(\msig, m)$ is a distribution over PSD matrices with mean $m\msig$.

Let $\mh_m$ denote a rescaled-and-centered Wishart matrix, i.e., so that $\mh_m + \id_d \sim \Wish(\frac 1 m\id_d, m)$:
\begin{equation}\label{eq:wishart_m}
    \mh_m
    \defeq
    \frac1m\sum_{i\in[m]}
    \paren{\vg_i\vg_i^\top-\id_d},
    \qquad
    \{\vg_i\}_{i \in [m]}\simiid\calN(\vzero,\id_d).
\end{equation}
\begin{fact}\label{fact:wishart_1}
For every fixed $\mk \in \Sym^{d \times d}$,
\begin{equation*}
    \bbE\norm{\mh_m}_{\mathrm F}^2
    =
    \frac{d(d+1)}m,
    \qquad
    \bbE\innp{\mk,\mh_m}^2
    =
    \frac{2}{m}\norm{\mk}_{\mathrm F}^2.
\end{equation*}
Moreover, for any $\delta \in (0, 1)$, with probability $\ge 1-\delta$,
\begin{equation*}
    \norm{\mh_m}_{\mathrm F}^2
    \lesssim_\delta\frac{d^2}{m},
    \qquad
    \Abs{\innp{\mk,\mh_m}}
    \lesssim_\delta
    \frac{\norm{\mk}_{\mathrm F}}{\sqrt m}.
\end{equation*}
\end{fact}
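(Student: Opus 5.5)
We prove Fact~\ref{fact:wishart_1} by direct moment calculations with Gaussian vectors, followed by Markov's inequality. Write $\mh_m = \frac1m\sum_{i\in[m]}\mz_i$ with $\mz_i \defeq \vg_i\vg_i^\top - \id_d$. These are i.i.d., mean zero, and symmetric. Hence for any fixed $\mk$, the scalars $\innp{\mk}{\mz_i}$ are i.i.d. with mean zero, so
\[
\bbE\innp{\mk,\mh_m}^2 = \frac1m\,\bbE\innp{\mk,\mz_1}^2 .
\]
Similarly,
\[
\bbE\normf{\mh_m}^2 = \frac1m\,\bbE\normf{\mz_1}^2,
\]
since the cross terms $\bbE\innp{\mz_i,\mz_j}$ vanish for $i\neq j$ by independence and centering.

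The single-sample moments follow from Gaussian fourth-moment (Isserlis) identities. First, $\innp{\mk}{\mz_1} = \vg^\top\mk\vg - \Tr(\mk)$. Diagonalizing $\mk = \muu\mlam\muu^\top$ and using rotation invariance of $\vg$, this equals $\sum_j \lambda_j(g_j^2-1)$. Its variance is $\sum_j 2\lambda_j^2 = 2\normf{\mk}^2$. Second,
\[
\normf{\mz_1}^2 = \normf{\vg\vg^\top}^2 - 2\Tr(\vg\vg^\top) + d = \norm{\vg}_2^4 - 2\norm{\vg}_2^2 + d.
\]
Using $\bbE\norm{\vg}_2^4 = d^2+2d$ and $\bbE\norm{\vg}_2^2 = d$ gives $d^2+2d-2d+d = d(d+1)$. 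Dividing by $m$ yields both exact identities.

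For the high-probability statements, apply Markov's inequality. Since $\normf{\mh_m}^2 \ge 0$ has mean $d(d+1)/m \le 2d^2/m$, with probability $\ge 1-\delta/2$ we have $\normf{\mh_m}^2 \le 4d^2/(\delta m)$. Applying Markov to $\innp{\mk,\mh_m}^2$ (equivalently, Chebyshev) gives $\Abs{\innp{\mk,\mh_m}} \le 2\normf{\mk}/\sqrt{\delta m}$ with probability $\ge 1-\delta/2$. A union bound then gives both bounds simultaneously with probability $\ge 1-\delta$, with constants depending only on $\delta$. If the statement is read as two separate claims, each holds individually as well.

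No step is a real obstacle. The only place needing care is the fourth-moment computation $\bbE\norm{\vg}_2^4 = d^2+2d$. This follows from $\norm{\vg}_2^2 \sim \chi^2_d$, whose variance is $2d$.
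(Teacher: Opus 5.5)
Your proof is correct. Three pieces check out:
\begin{itemize}
\item Independence and centering reduce both moments to a single sample.
\item $\E\normf{\vg\vg^\top-\id_d}^2=\E\norm{\vg}_2^4-2\E\norm{\vg}_2^2+d=d(d+1)$ via $\chi^2_d$ moments, and $\Var(\vg^\top\mk\vg)=2\normf{\mk}^2$ via rotation invariance, which uses that $\mk$ is symmetric as the statement assumes.
\item Your Markov and Chebyshev thresholds each give failure probability at most $\delta/2$, since $(d+1)/d\le 2$.
\end{itemize}

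The paper does not prove this fact at all. It lists it among ``classical estimates on Wishart matrices'' and defers to the literature. So your argument is a self-contained justification, not an alternative to a written proof.

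Markov and Chebyshev give constants scaling like $\delta^{-1}$. That is all $\lesssim_\delta$ requires. A $\log(1/\delta)$-type dependence would need sub-exponential concentration for Gaussian quadratic forms, such as Hanson--Wright, which is unnecessary here. The places where the paper invokes this fact, in the proof of Theorem~\ref{thm:first-order-plugin-bias-variance}, use only the two exact second-moment identities, which you verify directly.
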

\begin{fact}
\label{fact:wishart_2}
For every fixed $q\ge0$, if $m \gtrsim_q d$,
\begin{equation*}
    \bbE\left[
        \norm{\mh_m}_{\mathrm F}^2
        \norm{\mh_m}_{\mathrm{op}}^{2q}
    \right]
    \lesssim_q
    \frac{d^{q+2}}{m^{q+1}}.
\end{equation*}
\end{fact}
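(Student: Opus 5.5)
We would prove Fact~\ref{fact:wishart_2} by separating the Frobenius and operator-norm factors with Cauchy--Schwarz,
\[
\E\Brack{\normf{\mh_m}^2\normop{\mh_m}^{2q}} \le \Par{\E\normf{\mh_m}^4}^{1/2}\Par{\E\normop{\mh_m}^{4q}}^{1/2},
\]
and then bounding each factor separately. The target is $\E\normf{\mh_m}^4 \lesssim \frac{d^4}{m^2}$ and $\E\normop{\mh_m}^{4q}\lesssim_q (\frac dm)^{2q}$. Together these give $\frac{d^2}{m}\cdot(\frac dm)^{q} = \frac{d^{q+2}}{m^{q+1}}$, as claimed.

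\textbf{Frobenius fourth moment.} Write $\normf{\mh_m}^2=\sum_{a,b}(\mh_m)_{ab}^2$. Each entry $(\mh_m)_{ab}=\frac1m\sum_i(\vg_{ia}\vg_{ib}-\delta_{ab})$ is an average of $m$ i.i.d.\ centered variables with bounded moments. Hence $\E(\mh_m)_{ab}^4\lesssim \frac{1}{m^2}$ by Rosenthal's inequality (or a direct fourth-moment count). Minkowski's inequality in $L^2$ then gives
\[
\Par{\E\normf{\mh_m}^4}^{1/2}\le\sum_{a,b}\Par{\E(\mh_m)_{ab}^4}^{1/2}\lesssim \frac{d^2}{m}.
\]
This step is routine.

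\textbf{Operator-norm moments (the main step).} Let $\mg\in\R^{m\times d}$ have i.i.d.\ standard Gaussian entries, so that $\id_d+\mh_m=\frac1m\mg^\top\mg$. By Gordon's inequality and Gaussian concentration of the Lipschitz maps $\mg\mapsto s_{\max}(\mg)$ and $\mg\mapsto s_{\min}(\mg)$ (Chapter~4 of \cite{Vershynin_2026}), for all $t\ge0$,
\[
\sqrt m-\sqrt d-t\le s_{\min}(\mg)\le s_{\max}(\mg)\le \sqrt m+\sqrt d+t
\]
with probability at least $1-2e^{-t^2/2}$. On this event, the eigenvalues of $\frac1m\mg^\top\mg$ lie in $[(1-\eta)_+^2,(1+\eta)^2]$ with $\eta=\frac{\sqrt d+t}{\sqrt m}$. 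Therefore
\[
\normop{\mh_m}\le 2\eta+\eta^2\lesssim \sqrt{\tfrac dm}+\tfrac{t}{\sqrt m}+\tfrac{t^2}{m}.
\]

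We then integrate this tail bound, using $\E X^{p}=\int_0^\infty p s^{p-1}\Pr[X>s]\,\dd s$. The hypothesis $m\gtrsim_q d$ ensures $\sqrt{d/m}\le 1$, so $\E\normop{\mh_m}^{4q}$ is dominated by the typical scale $(d/m)^{2q}$. The tail contributions are $O_q(m^{-2q})$ from the $t/\sqrt m$ term and $O_q(m^{-4q})$ from the $t^2/m$ term, both of which are at most $(d/m)^{2q}$ since $d\ge1$. The only care needed here is handling the quadratic term $t^2/m$ for large $t$: the Gaussian tail $e^{-t^2/2}$ makes all its moments finite, and each contributes at most a constant depending on $q$. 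This completes the sketch.
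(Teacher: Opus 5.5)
Your proof is correct. The paper does not prove Fact~\ref{fact:wishart_2}; it quotes it as a classical Wishart estimate and defers to \cite{bishop2018introduction}. So there is no internal argument to compare against, and your proposal supplies a self-contained derivation.

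The Cauchy--Schwarz decoupling costs nothing in the rate, because each factor sits at its typical scale:
\begin{itemize}
\item For the Frobenius factor, $\E[(\mh_m)_{ab}^4]\lesssim m^{-2}$ together with Minkowski gives $(\E\normf{\mh_m}^4)^{1/2}\lesssim \frac{d^2}{m}$.
\item For the operator-norm factor, the Gaussian singular-value bounds give $\normop{\mh_m}\le 2\eta+\eta^2$.
\end{itemize}

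To make the tail-integration step airtight, package all the events into one random variable:
\[
T\defeq \max\Brace{0,\; s_{\max}(\mg)-\sqrt m-\sqrt d,\; \sqrt m-\sqrt d-s_{\min}(\mg)}.
\]
Then $\Pr[T>t]\le 2e^{-t^2/2}$. With $d\le m$, the bound holds almost surely in the form
\[
\normop{\mh_m}\le 4\sqrt{d/m}+2T/\sqrt m+2T^2/m.
\]
Now $(a+b+c)^{4q}\lesssim_q a^{4q}+b^{4q}+c^{4q}$ and $\E T^{p}\lesssim_p 1$ give $\E\normop{\mh_m}^{4q}\lesssim_q (d/m)^{2q}$ directly, with no layer-cake bookkeeping.

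Your route also shows slightly more than the statement asks. It only uses $m\ge d$, so every $q$-dependence lands in the implied constant (through the moments of $T$). The statement itself allows a $q$-dependent threshold $m\gtrsim_q d$.
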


\subsection{Multilinear forms}
\label{ssec:multilinear-hs}

In Section~\ref{sec:optimal-estimation}, we frequently manipulate multilinear forms (tensors), so we set up some notation here.

Let $\calH$ be a linear subspace of
$\R^{d\times d}$ equipped with the Frobenius inner product
$\inprod{\mm}{\mn}\defeq\Tr(\mm^\top\mn)$; our applications only use $\calH = \R^{d \times d}$ and $\Sym^{d \times d}$. For $j\ge1$, a $j$-linear form is an operator $\calA:\calH^j\to\R$ that is linear in
each argument separately. Given an orthonormal basis of $\calH$,
\[\{\me_r\}_{r \in [q]},\quad q \defeq \dim(\calH),\]
we define the \emph{Frobenius norm} of a $j$-multilinear form $\calA$ by
\begin{equation}
\label{eq:multilinear-hs-definition}
    \normf{\calA}^2
    \defeq
    \sum_{r_1,\ldots,r_j\in[q]}
    \Abs{\calA(\me_{r_1},\ldots,\me_{r_j})}^2.
\end{equation}
Identifying $\calA$ with its coefficient tensor in the basis $\{\me_r\}_{r \in [q]}$, \eqref{eq:multilinear-hs-definition} is simply the entrywise Euclidean norm of $\calA$. Note that \eqref{eq:multilinear-hs-definition} agrees with our definition on $\normf{\cdot}$ for matrices, which can be viewed as a $1$-form on $\calH$. For a symmetric $k$-linear form $\calA$, $j \in [k]$, and $\me \in \calH$, we denote the symmetric $j$-linear form obtained by fixing the last $k - j$ arguments to $\me$ by
\[\calA\Brack{\cdot^{\otimes j}, \me^{\otimes k - j}}.\]
Finally, we state one helper bound used in our analysis.

\begin{lemma}
\label{lem:multilinear-hs-second-moment}
Let $\mz_1,\ldots,\mz_j$ be independent random elements of $\calH$.
Suppose that for every $a\in[j]$,
\begin{equation}
\label{eq:multilinear-input-second-moment}
    \bbE\Abs{\inprod{\mm}{\mz_a}}^2
    \le v\normf{\mm}^2 \text{ for every deterministic }\mm\in\calH.
\end{equation}
Then every deterministic $j$-linear form $\calA:\calH^j\to\R$
satisfies
\begin{equation}
\label{eq:multilinear-hs-second-moment}
    \bbE\Abs{\calA(\mz_1,\ldots,\mz_j)}^2
    \le v^j\normf{\calA}^2.
\end{equation}
\end{lemma}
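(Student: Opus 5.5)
The plan is to induct on $j$, peeling off one independent argument at a time. The base case $j = 1$ is immediate. Any $1$-linear form on $\calH$ is represented by a unique $\mm_\calA \in \calH$, namely $\calA(\mz) = \inprod{\mm_\calA}{\mz}$ with $\mm_\calA = \sum_{r \in [q]} \calA(\me_r)\me_r$. Its Frobenius norm in the sense of \eqref{eq:multilinear-hs-definition} is exactly $\normf{\mm_\calA}$. Hypothesis \eqref{eq:multilinear-input-second-moment} then gives $\E\Abs{\calA(\mz_1)}^2 \le v\normf{\calA}^2$ directly.

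For the inductive step, suppose the claim holds for $(j-1)$-linear forms. Condition on $\mz_1,\ldots,\mz_{j-1}$. Since $\mz_j$ is independent of these, the map $\mz \mapsto \calA(\mz_1,\ldots,\mz_{j-1},\mz)$ is, conditionally, a deterministic linear form. Its representer is
\[
\mm(\mz_1,\ldots,\mz_{j-1}) \defeq \sum_{r\in[q]} \calA(\mz_1,\ldots,\mz_{j-1},\me_r)\,\me_r \in \calH .
\]
Applying \eqref{eq:multilinear-input-second-moment} conditionally, and using that the hypothesis holds for every deterministic $\mm$ (hence for this conditionally fixed one), gives
\[
\E\Brack{\Abs{\calA(\mz_1,\ldots,\mz_j)}^2 \,\middle|\, \mz_1,\ldots,\mz_{j-1}} \le v \sum_{r\in[q]} \Abs{\calA(\mz_1,\ldots,\mz_{j-1},\me_r)}^2 .
\]

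Next, take the outer expectation. For each fixed $r$, the map $\calA_r \defeq \calA(\cdot,\ldots,\cdot,\me_r)$ is a deterministic $(j-1)$-linear form, and the inductive hypothesis applies to it with the independent inputs $\mz_1,\ldots,\mz_{j-1}$. This yields $\E\Abs{\calA_r(\mz_1,\ldots,\mz_{j-1})}^2 \le v^{j-1}\normf{\calA_r}^2$. Summing over $r$ and using the identity
\[
\sum_{r} \normf{\calA_r}^2 = \sum_{r_1,\ldots,r_{j-1},r} \Abs{\calA(\me_{r_1},\ldots,\me_{r_{j-1}},\me_r)}^2 = \normf{\calA}^2,
\]
which holds by \eqref{eq:multilinear-hs-definition}, we obtain \eqref{eq:multilinear-hs-second-moment}.

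The only delicate point is the conditioning step. Independence is what lets us treat the conditional representer as deterministic, so that the hypothesis can be invoked on it; a Fubini argument makes this rigorous. Measurability and integrability are harmless, since everything is a finite sum of polynomial functions of the inputs. If the second moments happen to be infinite, the inequality holds trivially.
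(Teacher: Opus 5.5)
Your proposal is correct and follows the paper's proof. Both condition on all but one input and write the resulting linear form through its representer $\sum_{r} \calA(\ldots,\me_r)\me_r$. Both then apply the second-moment hypothesis via independence and iterate over the remaining slots, collecting $v^j$ times the coefficient sum that defines $\normf{\calA}^2$.

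The differences are cosmetic: you peel off the last slot and formalize the iteration as an induction, while the paper peels off the first slot and simply repeats. Your closing remark about infinite second moments is unnecessary, since the Tonelli/Fubini bound itself shows the left-hand side is finite.
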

\begin{proof}
Conditionally on the last $j - 1$ inputs $\mz_2,\ldots,\mz_j$, the resulting operator on $\mz_1$ is
\[\sum_{r \in [q]} \calA(\me_r,\mz_2,\ldots,\mz_j)\me_r.\]
Independence and~\eqref{eq:multilinear-input-second-moment} then give
\begin{equation*}
    \bbE\Abs{\calA(\mz_1,\ldots,\mz_j)}^2
    \le
    v\sum_{r\in[q]}
    \bbE\Abs{\calA(\me_r,\mz_2,\ldots,\mz_j)}^2.
\end{equation*}
Repeating this argument for the remaining slots gives $v^j$ times the
sum in~\eqref{eq:multilinear-hs-definition}.
\end{proof}

\section{Bias-Variance Decomposition of the Plug-In Estimator}
\label{sec:emp_est}

In this section, we give a general framework for analyzing estimators for Problem~\ref{prob:FD_est}, based on a high-order expansion of the Bures-Wasserstein distance (Section~\ref{ssec:relative-bures-series}). We then specialize this framework to the standard empirical plug-in estimator as a warmup application (Section~\ref{ssec:first-order-plugin-expansion}). 

\subsection{High-order expansion of $\BW$}
\label{ssec:relative-bures-series}

We develop a general high-order perturbation expansion of $\BW$. In this subsection, the expansion is stated for a deterministic perturbation, and in Section~\ref{ssec:first-order-plugin-expansion}, we specialize it to empirical covariance matrices. For $\ma, \mb \in \PSD^{d \times d}$ and a perturbation $\mh \in \Sym^{d \times d}$, define
\begin{equation}\label{eq:bh_def}\mb_{\mh} \defeq \mb^{1/2}\Par{\id_d + \mh} \mb^{1/2},\qquad \mc \defeq \Par{\mb^{1/2}\ma\mb^{1/2}}^{1/2}.\end{equation}
We begin by decomposing $\BW^2(\ma, \mb_{\mh})$ in a convenient way, centered at $\mb$.

\begin{lemma}\label{lem:first_order_expand}
Let $\ma, \mb \in \PSD^{d \times d}$ and $\mh \in \Sym^{d \times d}$ satisfy $\id_d + \mh \in \PSD^{d \times d}$. Then following the notation \eqref{eq:bh_def},
\[\BW^2\Par{\ma, \mb_{\mh}} = \BW^2\Par{\ma, \mb} + \inprod{\mb - \mc}{\mh} + \Phi_{\mc}(\mh), \]
where
\begin{equation}
\label{eq:nonlinear-covariance-term}\begin{aligned}
    \Phi_{\mc}(\mh)
    &\defeq \BW^2(\mc, \mc^{1/2}(\id_d + \mh)\mc^{1/2}) \\
    &= 
    \inprod{\mc}{\mh}
    -2\Par{
        \Tr\paren{
            \paren{\mc(\id_d+\mh)\mc}^{1/2}
        }
        -\Tr(\mc)}.
        \end{aligned}
\end{equation}
\end{lemma}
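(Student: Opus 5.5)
The plan is to expand each trace term in $\BW^2(\ma,\mb_{\mh})$ directly and compare it with $\BW^2(\ma,\mb)$. The only nonlinear term is the cross term, and we rewrite it using the trace identity \eqref{eq:gram_trace}.

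\textbf{Linear terms.} Since $\Tr(\mb_{\mh}) = \Tr(\mb) + \Tr(\mb^{1/2}\mh\mb^{1/2}) = \Tr(\mb) + \inprod{\mb}{\mh}$, we get
\[\BW^2(\ma,\mb_{\mh}) - \BW^2(\ma,\mb) = \inprod{\mb}{\mh} - 2\Par{\Tr\Par{(\mb_{\mh}^{1/2}\ma\mb_{\mh}^{1/2})^{1/2}} - \Tr(\mc)}.\]
Here we have also used \eqref{eq:gram_trace} to write $\Tr((\ma^{1/2}\mb\ma^{1/2})^{1/2}) = \Tr(\mc)$.

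\textbf{Cross term.} Set $\mm \defeq \ma^{1/2}\mb^{1/2}(\id_d+\mh)^{1/2}$, which is well defined because $\id_d+\mh \succeq \mzero$. Then $\ma^{1/2}\mb_{\mh}\ma^{1/2} = \mm\mm^\top$. The matrix $\mm^\top\mm = (\id_d+\mh)^{1/2}\mb^{1/2}\ma\mb^{1/2}(\id_d+\mh)^{1/2} = (\id_d+\mh)^{1/2}\mc^2(\id_d+\mh)^{1/2}$ has the same nonzero eigenvalues as $\mm\mm^\top$. Writing $\mn \defeq \mc(\id_d+\mh)^{1/2}$, this matrix equals $\mn^\top\mn$, whose nonzero eigenvalues coincide with those of $\mn\mn^\top = \mc(\id_d+\mh)\mc$. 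Taking square roots and summing eigenvalues gives
\[\Tr\Par{(\ma^{1/2}\mb_{\mh}\ma^{1/2})^{1/2}} = \Tr\Par{(\mc(\id_d+\mh)\mc)^{1/2}}.\]
By \eqref{eq:gram_trace}, the left side equals the cross term above.

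\textbf{Assembling and identifying $\Phi_{\mc}$.} Adding and subtracting $\inprod{\mc}{\mh}$ yields
\[\BW^2(\ma,\mb_{\mh}) = \BW^2(\ma,\mb) + \inprod{\mb-\mc}{\mh} + \inprod{\mc}{\mh} - 2\Par{\Tr\Par{(\mc(\id_d+\mh)\mc)^{1/2}} - \Tr(\mc)},\]
which is the second line of \eqref{eq:nonlinear-covariance-term}.

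It remains to check that this expression equals $\BW^2(\mc, \mc^{1/2}(\id_d+\mh)\mc^{1/2})$. The trace terms are $\Tr(\mc)$ and $\Tr(\mc) + \inprod{\mc}{\mh}$. The cross term is
\[\Tr\Par{(\mc^{1/2}\mc^{1/2}(\id_d+\mh)\mc^{1/2}\mc^{1/2})^{1/2}} = \Tr\Par{(\mc(\id_d+\mh)\mc)^{1/2}},\]
so the two expressions agree.

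\textbf{Main obstacle.} The only delicate point is the spectral step in the cross term when $\mb$ or $\ma$ is singular. There one must use the fact that $\mx\mx^\top$ and $\mx^\top\mx$ share nonzero eigenvalues with multiplicity, and that zero eigenvalues contribute nothing to the trace of the square root. This argument needs no invertibility.
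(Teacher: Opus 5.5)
Your proof is correct and follows essentially the same route as the paper: you expand the linear trace terms, then match the cross term by writing $\ma^{1/2}\mb_{\mh}\ma^{1/2} = \mx\mx^\top$ and $\mc(\id_d+\mh)\mc = \my\my^\top$ with $\mx^\top\mx = \my^\top\my = (\id_d+\mh)^{1/2}\mc^2(\id_d+\mh)^{1/2}$. You also verify the first equality in the definition of $\Phi_{\mc}$, which the paper leaves implicit, and your singularity concern is moot, since for square $\mx$ the matrices $\mx\mx^\top$ and $\mx^\top\mx$ share all eigenvalues with multiplicity, zeros included.
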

\begin{proof}
Directly by using the definition of $\BW$, and applying \eqref{eq:gram_trace},
\begin{align*}\BW^2\Par{\ma, \mb_{\mh}} - \BW^2\Par{\ma, \mb} &= \Tr\Par{\mb_{\mh} - \mb} - 2\Par{\Tr\Par{(\ma^{1/2}\mb_{\mh}\ma^{1/2})^{1/2}} - \Tr\Par{\mc}} \\
&= \inprod{\mb}{\mh} + 2\Tr(\mc) - 2 \Tr\Par{(\ma^{1/2}\mb_{\mh}\ma^{1/2})^{1/2}}.\end{align*}
Thus it is enough to show that
\[\Tr\Par{(\ma^{1/2}\mb_{\mh}\ma^{1/2})^{1/2}} = \Tr\paren{
            \paren{\mc(\id_d+\mh)\mc}^{1/2}
        }.\]
        This follows from $\mx^\top\mx = \my^\top\my = (\id_d + \mh)^{1/2}\mc^2(\id_d + \mh)^{1/2}$ for $(\mx, \my) \defeq (\ma^{1/2}\mb^{1/2}(\id_d + \mh)^{1/2}, \mc(\id_d + \mh)^{1/2})$, so $\mx\mx^\top = \ma^{1/2}\mb_{\mh}\ma^{1/2}$ and $\my\my^\top = \mc(\id_d + \mh)\mc$ have equal eigenvalues.
\end{proof}

Lemma~\ref{lem:first_order_expand} decomposes $\BW^2(\ma, \mb_{\mh}) - \BW^2(\ma, \mb)$ into a linear term in $\mh$, and a nonlinear quantity $\Phi_{\mc}(\mh)$. To handle the matrix square root in $\Phi_{\mc}(\mh)$, we recall its standard \emph{resolvent} representation.

\begin{fact}[Theorem 1.1, \cite{del2018taylor}]
    \label{fact:sqrt-resolvent}
    For any $\mm \in \PSD^{d \times d}$, we have 
\[
\label{eq:sqrt-resolvent}
    \mm^{1/2}
    =
    \frac1\pi\int_0^\infty
    \mm(t\id_d+\mm)^{-1}t^{-1/2}\mathrm dt.
\]
\end{fact}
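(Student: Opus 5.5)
This is a scalar identity lifted to PSD matrices through the spectral theorem. The plan is to prove the scalar identity for every eigenvalue $\lambda \ge 0$ and then diagonalize $\mm$.

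Write $\mm = \muu \mlam \muu^\top$ with $\muu$ orthogonal and $\mlam = \diag{\lambda_1, \ldots, \lambda_d}$, where $\lambda_i \ge 0$. For every $t > 0$ the matrix $t\id_d + \mm$ is invertible. We then have $\mm(t\id_d+\mm)^{-1} = \muu \, \diag{\frac{\lambda_i}{t+\lambda_i}} \muu^\top$. So the integrand is $\muu\,\diag{\frac{\lambda_i}{t+\lambda_i} t^{-1/2}}\muu^\top$, and the matrix integral acts entrywise in the eigenbasis. Every diagonal integrand is nonnegative and integrable, as checked below, so passing the integral inside the conjugation by $\muu$ is justified by linearity. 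It therefore suffices to show that for every $\lambda \ge 0$,
\[\frac1\pi\int_0^\infty \frac{\lambda}{t+\lambda}\, t^{-1/2}\,\dd t = \lambda^{1/2}.\]

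The case $\lambda = 0$ is immediate, since the integrand vanishes identically. For $\lambda > 0$, substitute $t = \lambda s^2$, so that $\dd t = 2\lambda s\,\dd s$ and $t^{-1/2} = \lambda^{-1/2}s^{-1}$. The integral becomes
\[\int_0^\infty \frac{\lambda}{\lambda(1+s^2)}\cdot \lambda^{-1/2}s^{-1}\cdot 2\lambda s\,\dd s = 2\lambda^{1/2}\int_0^\infty \frac{\dd s}{1+s^2} = \pi \lambda^{1/2}.\]
Dividing by $\pi$ gives $\lambda^{1/2}$.

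The same computation shows integrability. Near $t = 0$ the integrand is $O(t^{-1/2})$, and as $t \to \infty$ it is $O(\lambda t^{-3/2})$, so the integral converges absolutely. Reassembling the diagonal entries gives $\muu\,\diag{\lambda_i^{1/2}}\muu^\top = \mm^{1/2}$. The only point needing care is this convergence at both endpoints together with the zero eigenvalues; neither presents a real obstacle.
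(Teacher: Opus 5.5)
Your argument is correct. The eigendecomposition reduces the claim to the scalar identity $\frac1\pi\int_0^\infty \frac{\lambda}{t+\lambda}\,t^{-1/2}\,\dd t=\lambda^{1/2}$. The substitution $t=\lambda s^2$ evaluates it correctly, and the case $\lambda=0$ is trivial. The bounds $t^{-1/2}$ near $0$ and $\lambda t^{-3/2}$ at infinity justify moving the integral through the fixed conjugation by $\muu$.

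The paper does not prove this statement; it imports it as a black box from Theorem 1.1 of \cite{del2018taylor}. Your route is therefore a self-contained replacement for the citation, not a reconstruction of an argument in the paper. It is the natural argument, and it matches the eigenvalue-by-eigenvalue style the paper itself uses to evaluate resolvent integrals in Lemma~\ref{lem:first_two_terms}.

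One thing your version makes explicit that the citation leaves implicit: the identity holds for singular $\mm$. This matters because the paper applies it to $\mm=\mc(\id_d+\mh)\mc$, and $\mc$ can be singular, e.g.\ when $\ma$ or $\mb$ is rank-deficient.
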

To ease use of Fact~\ref{fact:sqrt-resolvent}, we define for all $t > 0$:
\begin{equation}\label{eq:rtdef}
    \mr_t
    \defeq
    (t\id_d+\mc^2)^{-1},
    \qquad
    \mt_t
    \defeq
    \mc\mr_t^{1/2}\mh\mr_t^{1/2}\mc.
\end{equation}
These definitions are chosen so that
\begin{equation}
\label{eq:resolvent-factorization}
    t\id_d+\mc(\id_d+\mh)\mc
    =
    \mr_t^{-1/2}(\id_d+\mt_t)\mr_t^{-1/2}.
\end{equation}

We are now ready to state the main export of this subsection.

\begin{proposition}
\label{prop:trace-resolvent-expansion}
Let $\ma, \mb \in \PSD^{d \times d}$ and $\mh \in \Sym^{d \times d}$. For every $k \in \N$, define
\begin{equation}\label{eq:order_k}Q_{k, \mc}(\mh) \defeq \frac{2(-1)^k}{\pi} \int_0^\infty \Tr\Par{\mr_t \mt_t^k}t^{1/2}\dd t,\end{equation}
following the notation \eqref{eq:bh_def}, \eqref{eq:rtdef}. Then for all $k \ge 2$,
\[|Q_{k, \mc}(\mh)| \le Q_{2, \mc}(\mh)\normsop{\mh}^{k - 2}.\]
Further, if $\normop{\mh} < 1$,
\begin{equation}
\label{eq:trace-resolvent-series}
\begin{aligned}
\BW^2\Par{\ma, \mb_{\mh}} - \BW^2\Par{\ma, \mb} = \inprod{\mb-\mc}{\mh} + \sum_{k \ge 2} Q_{k, \mc}(\mh),
\end{aligned}
\end{equation}
and
\[\Abs{R_{k, \mc}(\mh)} \le \frac{Q_{2, \mc}(\mh)\normop{\mh}^{k - 1}}{1 - \normop{\mh}},\text{ where } R_{k, \mc}(\mh) \defeq \sum_{k' > k} Q_{k', \mc}(\mh). \]
\end{proposition}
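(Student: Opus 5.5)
The plan is to reduce everything to the scalar-type bound $|\Tr(\mr_t\mt_t^k)| \le \normsop{\mh}^{k-2}\Tr(\mr_t\mt_t^2)$ for each $t>0$, and then integrate against $t^{1/2}$. First I will record two elementary facts. Since $\mr_t^{1/2}\mc^2\mr_t^{1/2} = \mc^2(t\id_d+\mc^2)^{-1} \preceq \id_d$, we have $\normsop{\mc\mr_t^{1/2}}\le 1$. Consequently $\normsop{\mt_t}\le\normsop{\mh}$, because $\mt_t = (\mc\mr_t^{1/2})\mh(\mc\mr_t^{1/2})^\top$ up to symmetry (all factors commute appropriately, as $\mr_t$ is a function of $\mc$).

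Second, $\mt_t$ is symmetric and $\mr_t\succeq\mzero$, so for $k\ge 2$ I write $\Tr(\mr_t\mt_t^k)=\Tr\big((\mt_t\mr_t\mt_t)\mt_t^{k-2}\big)$. Here $\mt_t\mr_t\mt_t\succeq\mzero$, hence
\[|\Tr(\mr_t\mt_t^k)|\le \normsop{\mt_t}^{k-2}\Tr(\mt_t\mr_t\mt_t)\le\normsop{\mh}^{k-2}\Tr(\mr_t\mt_t^2).\]
In particular $Q_{2,\mc}(\mh)=\frac{2}{\pi}\int_0^\infty\Tr(\mr_t\mt_t^2)t^{1/2}\dd t\ge0$, and integrating the pointwise bound gives the first claim. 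Convergence of the integrals should also be checked. As $t\to\infty$, $\mr_t\sim t^{-1}$ and $\mt_t = O(t^{-1})$, so the integrand of $Q_2$ is $O(t^{-5/2})$. As $t\to0$, $\mr_t^{1/2}\mc$ is bounded, and $\mr_t = (t+\mc^2)^{-1}$ is paired with $\mc$ factors: $\Tr(\mr_t\mt_t^2)=\Tr(\mr_t\mc\mr_t^{1/2}\mh\mr_t^{1/2}\mc^2\cdots)$. Using $\normsop{\mr_t^{1/2}\mc}\le1$, this is bounded by $t^{-1/2}\cdot O(1)$ on the range of $\mc$, and it vanishes on the kernel. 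This is integrable against $t^{1/2}$, and the $t\to0$ behavior is actually benign.

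For the series \eqref{eq:trace-resolvent-series}, I start from Lemma~\ref{lem:first_order_expand}, so it suffices to expand $\Phi_{\mc}(\mh)$. I apply Fact~\ref{fact:sqrt-resolvent} to $\mm=\mc(\id_d+\mh)\mc$ and to $\mc^2$, using $\mm(t\id_d+\mm)^{-1}=\id_d-t(t\id_d+\mm)^{-1}$. By \eqref{eq:resolvent-factorization}, $(t\id_d+\mm)^{-1}=\mr_t^{1/2}(\id_d+\mt_t)^{-1}\mr_t^{1/2}$. Because $\normsop{\mt_t}\le\normsop{\mh}<1$, the Neumann series $(\id_d+\mt_t)^{-1}=\sum_{k\ge0}(-\mt_t)^k$ converges uniformly in $t$. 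This yields
\[\Tr(\mm^{1/2})-\Tr(\mc)=-\frac1\pi\int_0^\infty\sum_{k\ge1}(-1)^k\Tr(\mr_t\mt_t^k)\,t^{1/2}\dd t.\]
The $k=1$ term is $\Tr(\mr_t\mc\mr_t^{1/2}\mh\mr_t^{1/2}\mc)=\Tr(\mc^2\mr_t^2\mh)$. Since $\frac1\pi\int_0^\infty \lambda^2(t+\lambda^2)^{-2}t^{1/2}\dd t=\lambda/2$, the $k=1$ contribution equals $\frac12\inprod{\mc}{\mh}$, and it cancels the linear term in $\Phi_{\mc}$. Multiplying the remaining terms by $-2$ gives exactly $\sum_{k\ge2}Q_{k,\mc}(\mh)$.

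The main obstacle is justifying the interchange of sum and integral. I would handle it with Fubini/dominated convergence via the pointwise bound: $\sum_{k\ge2}|\Tr(\mr_t\mt_t^k)|t^{1/2}\le(1-\normsop{\mh})^{-1}\Tr(\mr_t\mt_t^2)t^{1/2}$, which is integrable by the convergence discussion above. The $k=1$ term and the pieces subtracted via the identity $\mm(t+\mm)^{-1}$ need to be combined before splitting, so that each term is individually integrable. Finally, the tail bound follows from summing the first claim geometrically: $|R_{k,\mc}(\mh)|\le Q_{2,\mc}(\mh)\sum_{k'>k}\normsop{\mh}^{k'-2}=Q_{2,\mc}(\mh)\normsop{\mh}^{k-1}/(1-\normsop{\mh})$.
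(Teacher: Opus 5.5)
Your proposal is correct and follows essentially the same route as the paper: bound $\normsop{\mt_t}\le\normsop{\mh}$ via $\normsop{\mc\mr_t^{1/2}}\le 1$, integrate the pointwise bound $|\Tr(\mr_t\mt_t^k)|\le\normsop{\mh}^{k-2}\Tr(\mr_t\mt_t^2)$ against $t^{1/2}$, Neumann-expand $(\id_d+\mt_t)^{-1}$ after the resolvent factorization, cancel the $k=1$ term against $\inprod{\mc}{\mh}$, and sum the tail geometrically. Your extra checks fill in details the paper leaves implicit: the positive-semidefinite rewriting $\Tr((\mt_t\mr_t\mt_t)\mt_t^{k-2})$, integrability near $t=0$ and $t=\infty$, and dominated convergence to justify swapping the sum and the integral.
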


Before proving Proposition~\ref{prop:trace-resolvent-expansion}, we first characterize the first two terms of the series \eqref{eq:order_k}.

\begin{lemma}\label{lem:first_two_terms}
In the setting of Proposition~\ref{prop:trace-resolvent-expansion}, let $\mc$ have eigendecomposition $\muu\mlam\muu^\top$ where $\mlam = \diag{\vlam}$, and let $\tmh \defeq \muu^\top\mh\muu$. Then letting $\frac{ab}{a+b} \defeq 0$ if $a = b = 0$,
\begin{equation}\label{eq:second_order}
Q_{1, \mc}(\mh) = -\inprod{\mc}{\mh},\quad 
Q_{2, \mc}(\mh) = \frac 1 4\sum_{i \in [d]} \vlam_i \tmh_{ii}^2 + \sum_{1 \le i < j \le d} \frac{\vlam_i\vlam_j}{\vlam_i + \vlam_j}\tmh_{ij}^2.\end{equation}
Moreover, following the notation \eqref{eq:nonlinear-covariance-term},
\begin{equation}\label{eq:2c_bound}0 \le Q_{2, \mc}\Par{\mh} \le \frac{\normop{\mc}}{4}\normf{\mh}^2.\end{equation}
\end{lemma}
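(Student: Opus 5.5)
The plan is to diagonalize and reduce each $Q_{k,\mc}$ with $k\in\{1,2\}$ to scalar integrals. Write $\mc = \muu\mlam\muu^\top$. Then $\mr_t = \muu(t\id_d+\mlam^2)^{-1}\muu^\top$, and therefore
\[
\mt_t = \muu\,\mlam(t\id_d+\mlam^2)^{-1/2}\,\tmh\,(t\id_d+\mlam^2)^{-1/2}\mlam\,\muu^\top .
\]
Write $r_i(t) \defeq (t+\vlam_i^2)^{-1}$. By unitary invariance of the trace, the entries of the conjugated $\mt_t$ are $\vlam_i\vlam_j (r_ir_j)^{1/2}\tmh_{ij}$.

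For $k=1$, the integrand is
\[
\Tr(\mr_t\mt_t) = \sum_i \vlam_i^2 r_i(t)^2\tmh_{ii}.
\]
So $Q_{1,\mc}(\mh) = -\frac{2}{\pi}\sum_i \vlam_i^2\tmh_{ii}\int_0^\infty t^{1/2}(t+\vlam_i^2)^{-2}\dd t$. Substituting $t=\vlam_i^2 s$ gives $\int_0^\infty s^{1/2}(1+s)^{-2}\dd s = B(3/2,1/2)=\pi/2$, and the integral equals $\vlam_i^{-1}\pi/2$. Hence $Q_{1,\mc}(\mh) = -\sum_i\vlam_i\tmh_{ii} = -\inprod{\mc}{\mh}$. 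Terms with $\vlam_i=0$ vanish identically.

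For $k=2$, the integrand is
\[
\Tr(\mr_t\mt_t^2)=\sum_{i,j} r_i\,(\vlam_i\vlam_j)^2 r_i r_j\,\tmh_{ij}^2 = \sum_{i,j}\vlam_i^2\vlam_j^2 r_i^2 r_j\tmh_{ij}^2 .
\]
I would symmetrize in $(i,j)$ and compute the scalar integral
\[
I(a,b)\defeq\int_0^\infty \frac{t^{1/2}}{(t+a^2)^2(t+b^2)}\dd t .
\]
The cleanest route uses $\int_0^\infty t^{1/2}(t+a^2)^{-1}(t+b^2)^{-1}\dd t = \pi/(a+b)$, which follows from partial fractions and the principal-value identity $\int_0^\infty t^{-1/2}(t+c)^{-1}\dd t=\pi/\sqrt c$ after writing $t^{1/2}/(t+c)=t^{-1/2}-ct^{-1/2}/(t+c)$. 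Differentiating in $a^2$ then gives
\[
I(a,b)=\frac{\pi}{2a(a+b)^2}.
\]
The symmetrized coefficient becomes
\[
\frac{2}{\pi}\cdot\frac{a^2b^2\pi}{2}\Big(\frac1{a(a+b)^2}+\frac1{b(a+b)^2}\Big)=\frac{ab}{a+b}.
\]
Each off-diagonal pair $i<j$ appears twice in the ordered sum and once per symmetrized term. The diagonal term $i=j$ gives $\vlam_i^4 I(\vlam_i,\vlam_i)\cdot\frac{2}{\pi}=\vlam_i/4$. This yields \eqref{eq:second_order}.

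The main obstacle is bookkeeping this factor correctly, i.e.\ checking that off-diagonal pairs get exactly $\frac{ab}{a+b}$ and the diagonal gets $\frac14\vlam_i$, along with handling zero eigenvalues. When $\vlam_i=0$, the factor $\vlam_i^2$ kills the term before any integral is taken, which matches the convention $\frac{ab}{a+b}=0$. If one of $a,b$ is zero and the other is positive, the coefficient is also $0$. Integrability at $t\to 0$ holds whenever the prefactor is nonzero.

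Finally, \eqref{eq:2c_bound} follows by bounding the coefficients. Nonnegativity is immediate since all coefficients are nonnegative. For the upper bound, use $\frac{ab}{a+b}\le\frac{\max(a,b)}{2}\le\frac{\normop{\mc}}{2}$. This gives
\[
Q_{2,\mc}(\mh)\le\frac{\normop{\mc}}{4}\Big(\sum_i\tmh_{ii}^2+2\sum_{i<j}\tmh_{ij}^2\Big)=\frac{\normop{\mc}}{4}\normf{\tmh}^2=\frac{\normop{\mc}}4\normf{\mh}^2 .
\]
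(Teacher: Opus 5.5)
Your proposal is correct and matches the paper's proof. Both diagonalize $\mc$, expand $\Tr(\mr_t\mt_t)$ and $\Tr(\mr_t\mt_t^2)$ entrywise in the eigenbasis, evaluate the scalar integrals, and bound the resulting coefficients termwise; the one difference is that you derive the integral identities (via the Beta function and by differentiating $\int_0^\infty t^{1/2}(t+a^2)^{-1}(t+b^2)^{-1}\,\dd t = \pi/(a+b)$ in $a^2$), whereas the paper simply quotes them.
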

\begin{proof}
We first directly expand
\begin{equation}\label{eq:first_order_term}
\begin{aligned}\Tr\Par{\mr_t \mt_t} &= \inprod{\muu\Par{t\id_d + \mlam^2}^{-1}\muu^\top}{\muu\Par{\mlam\sqrt{\Par{t\id_d + \mlam^2}^{-1}}}\tmh\Par{\mlam\sqrt{\Par{t\id_d + \mlam^2}^{-1}}}\muu^\top} \\
&= \inprod{\mlam^2\Par{t\id_d + \mlam^2}^{-2}}{\tmh} = \sum_{i \in [d]} \tmh_{ii} \frac{\vlam_i^2}{(t + \vlam_i^2)^2}.
\end{aligned}
\end{equation}
Hence, exchanging the sum and integral, and using $\int_0^\infty \frac{t^{1/2}s^2}{(t + s^2)^2} \dd t = \frac{s\pi}{2}$, gives
\begin{align*}
Q_{1, \mc}(\mh) &= \frac{-2}{\pi} \int_0^\infty \Tr\Par{\mr_t \mt_t} t^{1/2}\dd t   \\
&= -\frac 2 \pi \sum_{i \in [d]} \tmh_{ii}\int_0^\infty \frac{t^{1/2}\vlam_i^2}{(t + \vlam_i^2)^2} \dd t = -\frac 2 \pi \sum_{i \in [d]} \tmh_{ii} \Par{\frac{\vlam_i\pi}{2}} = -\inprod{\mc}{\mh}.
\end{align*}
Next, a similar calculation to \eqref{eq:first_order_term} yields
\[\Tr\Par{\mr_t \mt_t^2} =  \sum_{(i, j) \in [d] \times [d]} \frac{\vlam_i^2 \vlam_j^2}{\Par{t + \vlam_i^2}^2\Par{t+\vlam_j^2}} \tmh_{ij}^2.  \]
Thus, applying the identities $\int_0^\infty \frac{s^4t^{1/2}}{(t + s^2)^3} \dd t = \frac{\pi s}{8}$ and $\int_0^\infty \frac{r^2s^2t^{1/2}}{(t + r^2)^2(t + s^2)} \dd t = \frac{\pi rs^2}{2(r + s)^2}$ gives 
\begin{align*}
Q_{2, \mc}(\mh) &= \frac 2 \pi \int_0^\infty \Tr(\mr_t\mt_t^2) t^{1/2}\dd t \\
&=  \frac 1 4 \sum_{i \in [d]} \vlam_i \tmh_{ii}^2 + \sum_{1 \le i < j \le d} \Par{\frac{\vlam_i\vlam_j^2 + \vlam_j\vlam_i^2}{(\vlam_i + \vlam_j)^2}  }\tmh_{ij}^2,
\end{align*}
proving \eqref{eq:second_order}. Finally, we have that $0 \le \vlam_i \le \normsop{\mc}$ and $\frac{\vlam_i\vlam_j}{\vlam_i + \vlam_j} \le \half\normsop{\mc}$. Applying these bounds termwise to \eqref{eq:second_order} and using $\normsf{\tmh}^2 = \normsf{\mh}^2$ gives the first claim in \eqref{eq:2c_bound}.
\end{proof}

We now complete our proof of Proposition~\ref{prop:trace-resolvent-expansion}.

\begin{proof}[Proof of Proposition~\ref{prop:trace-resolvent-expansion}]
Because $\normsop{\mc\mr_t^{1/2}} \le 1$,
\begin{equation}\label{eq:tt_bound}
\normop{\mt_t} \le \normop{\mh} \text{ for all } t \ge 0,
\end{equation}
so that for any $k \ge 2$,
\[\Abs{\Tr\Par{\mr_t \mt_t^k}} \le \normop{\mt_t}^{k - 2}\Tr\Par{\mr_t \mt_t^2 }\le \normop{\mh}^{k - 2} \Tr\Par{\mr_t\mt_t^2}.\]
Integrating the above display gives $|Q_{k, \mc}(\mh)| \le Q_{2, \mc}(\mh)\normsop{\mh}^{k - 2}$. Next, assume that $\normop{\mh} < 1$, so that \eqref{eq:tt_bound} also gives $\normsop{\mt_t} < 1$.
Then expanding $(\id_d + \mt_t)^{-1} = \id_d + \sum_{k \ge 1} (-1)^k \mt_t^k$,
\begin{align*}
\Tr\paren{
            \paren{\mc(\id_d+\mh)\mc}^{1/2}
        }
        -\Tr(\mc) &= \frac 1 \pi \sum_{k \ge 1} (-1)^{k + 1} \int_0^\infty \Tr(\mr_t \mt_t^k) t^{1/2} \dd t \\
        &= -\half\sum_{k \ge 1} Q_{k, \mc}(\mh).
\end{align*}
Because Lemma~\ref{lem:first_two_terms} shows $Q_{1, \mc}(\mh) = -\inprod{\mc}{\mh}$, $\Phi_{\mc}(\mh)$ defined in \eqref{eq:nonlinear-covariance-term} equals $\sum_{k \ge 2} Q_{k, \mc}(\mh)$. The claim \eqref{eq:trace-resolvent-series} now follows from Lemma~\ref{lem:first_order_expand}. Finally,
\[|R_{k, \mc}(\mh)| \le \sum_{k' > k} Q_{2, \mc}(\mh)\normsop{\mh}^{k' - 2} = \frac{Q_{2, \mc}(\mh)\normop{\mh}^{k - 1}}{1 - \normop{\mh}}.\]
\end{proof}

We conclude with a convenient bound on $\Phi_{\mc}(\mh)$ used in our later analysis. Importantly, this bound will hold for any $\mh$ with $\id_d + \mh \in \PSD^{d \times d}$ irrespective of an operator norm bound.

\begin{lemma}\label{lem:phi_bound}
Let $\mc \in \PSD^{d \times d}$ and $\mh + \id_d \in \PSD^{d \times d}$. Then following the notation \eqref{eq:nonlinear-covariance-term}, 
\[\Phi_{\mc}(\mh) \le \normop{\mc}\normf{\mh}^2. \]
\end{lemma}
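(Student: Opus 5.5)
We need $\Phi_{\mc}(\mh) \le \normop{\mc}\normf{\mh}^2$ whenever $\id_d+\mh \succeq \mzero$, with no operator norm restriction on $\mh$. The plan is to avoid the series expansion and use the variational formula \eqref{eq:bures-frobenius} directly. By definition,
\[\Phi_{\mc}(\mh) = \BW^2\Par{\mc, \mc^{1/2}(\id_d+\mh)\mc^{1/2}}.\]
So it suffices to find one factorization achieving the bound. Take $\mn \defeq \mc^{1/2}(\id_d+\mh)^{1/2}$, so $\mn\mn^\top = \mc^{1/2}(\id_d+\mh)\mc^{1/2}$, and take $\mm \defeq \mc^{1/2}$, so $\mm\mm^\top = \mc$. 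Then \eqref{eq:bures-frobenius} (with $r = d$) gives
\[\Phi_{\mc}(\mh) \le \normf{\mc^{1/2}\Par{(\id_d+\mh)^{1/2} - \id_d}}^2 \le \normop{\mc}\,\normf{(\id_d+\mh)^{1/2}-\id_d}^2.\]
The last step uses $\normf{\mx\my}\le \normop{\mx}\normf{\my}$ together with $\normop{\mc^{1/2}}^2 = \normop{\mc}$.

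It remains to show $\normf{(\id_d+\mh)^{1/2}-\id_d}\le \normf{\mh}$. Diagonalize $\mh$ with eigenvalues $h_i \ge -1$. The matrix $(\id_d+\mh)^{1/2}-\id_d$ is simultaneously diagonal with eigenvalues $\sqrt{1+h_i}-1$. The scalar inequality needed is then
\[\Abs{\sqrt{1+h}-1} = \frac{|h|}{\sqrt{1+h}+1} \le |h| \quad \text{for } h \ge -1.\]
Summing the squares over $i$ gives the claim. In fact this yields the stronger bound $\normop{\mc}\normf{\mh}^2$ with room to spare; a factor $\frac14$ would appear near $\mh=\mzero$, but it is not needed.

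The only potential subtlety is the applicability of \eqref{eq:bures-frobenius} as a \emph{minimum}. We only use the inequality direction, which holds for any pair of factorizations, so there is no real obstacle. We also need $\Phi_{\mc}(\mh) \ge 0$ only implicitly, since the statement is an upper bound. The argument is short, and the main point is simply recognizing that the variational characterization sidesteps the need for $\normop{\mh}<1$ that the resolvent series in Proposition~\ref{prop:trace-resolvent-expansion} requires.
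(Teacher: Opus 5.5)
Your proposal is correct and matches the paper's proof: both plug the factorizations $\mc^{1/2}$ and $\mc^{1/2}(\id_d+\mh)^{1/2}$ into the variational formula \eqref{eq:bures-frobenius} and pull out $\normop{\mc}$. Both then finish with the scalar bound $|\sqrt{1+x}-1|\le |x|$ for $x\ge -1$, applied to the eigenvalues of the symmetric matrix $\mh$.
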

\begin{proof}
Taking $\mm \gets \mc^{1/2}$ and $\mn \gets \mc^{1/2}(\id_d + \mh)^{1/2}$ in \eqref{eq:bures-frobenius} implies
\begin{align*}\Phi_{\mc}(\mh) &= \BW^2\Par{\mc, \mc^{1/2}(\id_d + \mh)\mc^{1/2}} \le \normf{\mc^{1/2} - \mc^{1/2}(\id_d + \mh)^{1/2}}^2 \\
&\le \normop{\mc}\normf{\id_d - \Par{\id_d + \mh}^{1/2}}^2 \le \normop{\mc}\normf{\mh}^2, \end{align*}
where the last line used that $\id_d - (\id_d + \mh)^{1/2} \in \Sym^{d \times d}$, and $| 1- \sqrt{1 + x}| \le |x|$ for all $x \ge -1$.
\end{proof}

\subsection{Analyzing the plug-in estimator}
\label{ssec:first-order-plugin-expansion}

We now specialize Proposition~\ref{prop:trace-resolvent-expansion} to the setting of the plug-in estimator defined in~\Cref{sec:intro}.
We recall the definition of the plug-in estimator here for convenience.
Let $\hvnu_n$ and $\hmb_n$ denote the empirical mean and covariance with $n$ samples:
\begin{equation}\label{eq:empirical}\hvnu_n \defeq \frac 1 n \sum_{i \in [n]} \vX_i,\quad \hmb_n \defeq \frac 1 {n} \sum_{i \in [n]} (\vX_i - \hvnu_n)(\vX_i - \hvnu_n)^\top. \end{equation}
As mentioned previously, in the setting of Model~\ref{model:one-side}, the plug-in estimator outputs
\begin{equation}\label{eq:plugin}
\widehat{\dd}_n \defeq \norm{\vmu - \hvnu_n}_2^2 + \BW^2\Par{\ma, \frac n {n - 1}\hmb_n}.
\end{equation}
The rescaling of $\hmb_n$ in \eqref{eq:plugin} is justified by the following classical fact.
\begin{fact}[\cite{muirhead2009aspects}, Chapter 3]\label{fact:rescaling}
In the setting of Model~\ref{model:one-side}, following the notation \eqref{eq:empirical}, we have $\hvnu_n \sim \calN(\nu, \frac 1 n \mb)$, $\hvnu_n$ and $\hmb_n$ are independent, and for $m \defeq n - 1$ and $\id_d + \mh_m \sim \Wish(\frac 1 m \id_d, m)$,
\[\frac{n}{n-1}\hmb_n \overset{d}{=} \mb^{1/2} \Par{\id_d + \mh_m}\mb^{1/2}.\]
\end{fact}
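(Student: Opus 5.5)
The statement is classical, and the plan is to reduce it to the Gaussian representation of the samples and then apply Cochran's theorem. First I would write $\vX_i = \vnu + \mb^{1/2}\vg_i$ with $\vg_i \simiid \calN(\vzero, \id_d)$. This is valid even when $\mb$ is singular, since $\mb^{1/2}\vg_i \sim \calN(\vzero, \mb)$. Then
\[\hvnu_n = \vnu + \mb^{1/2}\bar{\vg},\qquad \bar{\vg} \defeq \frac1n\sum_{i\in[n]}\vg_i \sim \calN\Par{\vzero, \tfrac1n\id_d},\]
so $\hvnu_n \sim \calN(\vnu, \frac1n\mb)$ follows immediately. Likewise,
\[\hmb_n = \mb^{1/2}\Par{\frac1n\sum_{i\in[n]}(\vg_i-\bar{\vg})(\vg_i-\bar{\vg})^\top}\mb^{1/2}.\]
It therefore suffices to treat the case $\mb = \id_d$ and push the result through the deterministic map $\mm \mapsto \mb^{1/2}\mm\mb^{1/2}$. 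Independence is preserved under measurable maps of $\bar{\vg}$ and of the centered scatter matrix, so it will transfer as well.

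For the standard case, I would stack the samples as rows of $\mg \in \R^{n\times d}$ and pick an orthogonal $\mathbf{O} \in \R^{n\times n}$ whose first row is $\frac{1}{\sqrt n}\mathbf{1}^\top$. Set $\mz \defeq \mathbf{O}\mg$. Each column of $\mg$ is $\calN(\vzero,\id_n)$, and the columns are independent. Orthogonal invariance then implies that $\mz$ again has i.i.d.\ $\calN(0,1)$ entries. Its first row is $\sqrt n\,\bar{\vg}^\top$. Moreover, $\sum_i(\vg_i-\bar{\vg})(\vg_i-\bar{\vg})^\top = \mg^\top\mg - n\bar{\vg}\bar{\vg}^\top = \mz^\top\mz - \vz_1\vz_1^\top = \sum_{i=2}^n \vz_i\vz_i^\top$, where $\vz_i^\top$ are the rows of $\mz$.

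This identity gives both claims at once. The mean $\bar{\vg}$ depends only on row $1$, while the scatter matrix depends only on rows $2,\ldots,n$, so the two are independent. Also, $\frac{n}{n-1}\cdot\frac1n\sum_{i=2}^n\vz_i\vz_i^\top = \frac1m\sum_{i\in[m]}\vz_{i+1}\vz_{i+1}^\top$ with $m = n-1$. By \eqref{eq:wishart_m}, this matrix equals $\id_d + \mh_m$ in distribution and has law $\Wish(\frac1m\id_d, m)$.

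There is no real obstacle. The only point needing care is the orthogonal-invariance step, namely that the rows of $\mathbf{O}\mg$ are i.i.d.\ standard Gaussian. This holds because $\mathrm{vec}(\mg) \sim \calN(\vzero, \id_{nd})$ and $\mathbf{O}\otimes\id_d$ is orthogonal. I would also remark that the singular-$\mb$ case needs no separate treatment, because the whole argument is carried out on $\vg_i$ before multiplying by $\mb^{1/2}$.
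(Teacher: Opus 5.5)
Your proof is correct. The paper gives no proof of its own and simply cites Muirhead, Chapter 3, where the argument is exactly this rotation by an orthogonal (Helmert-type) matrix with first row $\frac{1}{\sqrt n}\mathbf{1}^\top$, separating $\bar{\vg}$ from the centered scatter matrix. As a small bonus, your construction gives an explicit coupling in which $\hmb_n = \frac{n-1}{n}\mb^{1/2}(\id_d + \mh_m)\mb^{1/2}$ holds exactly, with $\mh_m$ independent of $\hvnu_n$; this is the form that later proofs, such as that of Theorem~\ref{thm:first-order-plugin-bias-variance}, actually use.
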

Recalling that $\mm \sim \Wish(\id_d, m)$ satisfies $\E[\mm] = m\id_d$, Fact~\ref{fact:rescaling} shows that $\frac{n}{n-1}$ is the correct scaling of the empirical covariance $\hmb_n$ for it to be unbiased for $\mb$. Further, after whitening by $\mb^{1/2}$, the random perturbation $\mh_m$ is exactly of the form considered by Section~\ref{ssec:gaussian-wishart-prelim}.

We can now determine the asymptotic bias and variance of the plug-in estimator \eqref{eq:plugin}.

\begin{theorem}
\label{thm:first-order-plugin-bias-variance}
Under Model~\ref{model:one-side} and Assumption~\ref{assum:normalization}, the plug-in estimator $\hdd_n$ in \eqref{eq:plugin} satisfies
\begin{equation}\label{eq:bias_variance_plugin}
    \Abs{\E [\hdd_n] - \FID(P, Q)}
    =
    O\Par{\frac{d^2}{n}},\quad \Var[\hdd_n] = O\Par{\frac{d}{n} + \frac{d^2}{n^2}}.
\end{equation}
Consequently, if $n \gtrsim \frac{d}{\eps^2} + \frac{d^2}{\eps}$, the plug-in estimator solves Problem~\ref{prob:FD_est}.
\end{theorem}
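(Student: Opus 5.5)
We split $\hdd_n$ into its mean and covariance parts and use the independence of $\hvnu_n$ and $\hmb_n$ from Fact~\ref{fact:rescaling}. For the mean part, write $\hvnu_n = \vnu + \vz$ with $\vz \sim \calN(\vzero, \frac 1 n \mb)$. Then
\[\norm{\vmu - \hvnu_n}_2^2 = \norm{\vmu-\vnu}_2^2 - 2\inprod{\vmu-\vnu}{\vz} + \norm{\vz}_2^2.\]
Its bias is $\E\norm{\vz}_2^2 = \frac 1 n \Tr(\mb) \le \frac d n$. Its variance is $O(\frac 1 n \norm{\vmu-\vnu}_2^2\normsop{\mb} + \frac 1 {n^2}\normf{\mb}^2) = O(\frac 1 n + \frac d {n^2})$, using Assumption~\ref{assum:normalization}. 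Both are dominated by the claimed rates.

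For the covariance part, Fact~\ref{fact:rescaling} lets us write $\frac{n}{n-1}\hmb_n \overset{d}{=} \mb_{\mh_m}$ with $m = n-1$. Lemma~\ref{lem:first_order_expand} then gives
\[\BW^2(\ma, \mb_{\mh_m}) - \BW^2(\ma,\mb) = \inprod{\mb-\mc}{\mh_m} + \Phi_{\mc}(\mh_m).\]
Here $\normsop{\mc} \le 1$ because $\mc^2 = \mb^{1/2}\ma\mb^{1/2} \preceq \id_d$. The linear term has mean zero. By Fact~\ref{fact:wishart_1}, its second moment is $\frac 2 m \normf{\mb-\mc}^2 \lesssim \frac d m$, since $\mb - \mc$ has operator norm at most $1$. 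For the nonlinear term, Lemma~\ref{lem:phi_bound} gives $0 \le \Phi_{\mc}(\mh_m) \le \normf{\mh_m}^2$; nonnegativity holds because $\Phi_{\mc}$ is itself a squared $\BW$ distance. Fact~\ref{fact:wishart_1} then gives $\E\Phi_{\mc}(\mh_m) \le \frac{d(d+1)}{m}$, which is the $O(\frac{d^2}{n})$ bias bound. This bound needs no operator-norm control on $\mh_m$, which matters when $n \lesssim d$.

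For the variance, $\Var[X+Y] \le 2\Var X + 2\Var Y$ reduces the problem to $\Var[\Phi_{\mc}(\mh_m)] \le \E\Phi_{\mc}(\mh_m)^2 \le \E\normf{\mh_m}^4$. This bound is too crude: $\E\normf{\mh_m}^4 \approx \frac{d^4}{m^2}$, whereas the target is $\frac{d^2}{n^2}$, so we must exploit the fact that $\Phi_{\mc}$ concentrates around its mean. This is the main obstacle.

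\begin{itemize}
\item \textbf{Small-sample regime, $n \lesssim d$.} The target $\frac d n + \frac{d^2}{n^2}$ is already $\gtrsim 1$ and $\hdd_n \ge 0$, so we would instead aim for a direct bound of order $\frac{d^2}{n^2}$. One route is to bound $\Phi_{\mc}(\mh_m)$ by $\normf{\mh_m}^2$ and apply Gaussian concentration of $\normf{\mh_m}^2$ about its mean.
\item \textbf{Main regime, $m \gtrsim d$.} We use the Efron--Stein inequality over the $m$ Gaussian samples. Changing one sample $\vg_i$ perturbs $\mh_m$ by $\Delta = \frac 1 m(\vg_i\vg_i^\top - \vg_i'\vg_i'^\top)$, a rank-two matrix with $\normf{\Delta} \lesssim \frac d m$. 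We control the change in $\Phi_{\mc}$ through its gradient, which has size $\lesssim \normf{\mh_m}$ in the relevant directions. The plan is to show each change has second moment $\lesssim \frac{1}{m^2}(\text{stuff of order } d^2/m)$. Summing over the $m$ samples then gives $\Var[\Phi_{\mc}] \lesssim \frac{d^2}{m^2}$.
\item \textbf{Fallback.} If controlling the gradient is awkward, we would use Proposition~\ref{prop:trace-resolvent-expansion} on the event $\normsop{\mh_m} \le \frac 1 2$. The plan is to bound $\Var[Q_{2,\mc}(\mh_m)]$ exactly, since $Q_{2,\mc}$ is an explicit quadratic form in Gaussian chaos and its variance is $O(\frac{d^2}{m^2})$ by Wick calculus. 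We would then bound the remainder using Fact~\ref{fact:wishart_2} with $q = 1$, giving $\E[\normf{\mh_m}^2\normsop{\mh_m}^2] \lesssim \frac{d^3}{m^2}$, which must be squared appropriately. The complementary event has exponentially small probability and is handled with Lemma~\ref{lem:phi_bound}.
\end{itemize}

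Finally, Chebyshev's inequality applied to the bias and variance bounds shows that $n \gtrsim \frac{d}{\eps^2} + \frac{d^2}{\eps}$ suffices for Problem~\ref{prob:FD_est}.
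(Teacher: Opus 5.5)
Your treatment of the mean term, the linear term $\inprod{\mb-\mc}{\mh_m}$, and the bias (via Lemma~\ref{lem:phi_bound} and Fact~\ref{fact:wishart_1}) matches the paper. The gap is the step you flag yourself as the main obstacle. None of your three routes establishes $\Var[\Phi_{\mc}(\mh_m)] \lesssim \frac{d^2}{m^2}$ for all $n$.

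\textbf{Small-sample route.} It uses only the one-sided domination $0 \le \Phi_{\mc}(\mh_m) \le \normf{\mh_m}^2$. This transfers no concentration to $\Phi_{\mc}$, and again gives only $\Var \le \E\normf{\mh_m}^4 \approx \frac{d^4}{m^2}$.

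\textbf{Fallback route.} You bound the variance of the remainder $\Phi_{\mc} - Q_{2,\mc}$ by its second moment. But that remainder carries the next-order bias, of size $\asymp \frac{d^3}{m^2}$ on the isotropic instance (cf.\ Lemma~\ref{lem:sharp-isotropic-coefficients}). So its second moment is $\gtrsim \frac{d^6}{m^4}$, which exceeds $\frac{d^2}{m^2}$ whenever $m \ll d^2$. That is precisely the regime where Corollary~\ref{cor:fidinf} and Theorem~\ref{thm:general-order-extrapolation} later rely on this variance bound.

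\textbf{Efron--Stein route.} This is only a heuristic. With the ingredients you state (a gradient of Frobenius size $\approx \normf{\mh_m} \approx d/\sqrt m$ and a perturbation with $\normf{\Delta} \lesssim d/m$), Cauchy--Schwarz gives per-sample changes of order $d^2/m^{3/2}$, hence only $d^4/m^2$ after summing. Reaching $\frac{d^2}{m^2}$ would require the rank-two structure of $\Delta$, near-independence of the gradient from the resampled vector, and uniform control of Taylor remainders when $\normop{\mh_m}$ is not small. You supply none of these, and the route does not cover $m \lesssim d$.

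The missing idea is a self-bounding Lipschitz argument.
\begin{enumerate}
\item Write $\mh_m \overset{d}{=} \frac1m\mg\mg^\top - \id_d$ with $\mg \in \R^{d\times m}$ standard Gaussian.
\item Let $\mn(\mg)$ be $\frac{1}{\sqrt m}\mc^{1/2}\mg$, padded with zero columns to width at least $d$, so that $\mn(\mg)\mn(\mg)^\top = \mc^{1/2}(\id_d+\mh_m)\mc^{1/2}$.
\item By \eqref{eq:bures-frobenius}, $\sqrt{\Phi_{\mc}} = \min_{\mm\mm^\top=\mc}\normf{\mm-\mn(\mg)}$. The triangle inequality together with $\normop{\mc}\le 1$ makes this $m^{-1/2}$-Lipschitz in $\mg$.
\item Hence $\normf{\nabla_{\mg}\Phi_{\mc}}^2 = 4\Phi_{\mc}\normf{\nabla_{\mg}\sqrt{\Phi_{\mc}}}^2 \le \frac4m\Phi_{\mc}$.
\item The Gaussian Poincar\'e inequality then gives $\Var[\Phi_{\mc}(\mh_m)] \le \frac4m\E[\Phi_{\mc}(\mh_m)] \le \frac{4d(d+1)}{m^2}$, reusing your bias bound.
\end{enumerate}
This holds for every $m$, with no regime split and no series expansion.
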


\begin{proof}
Throughout, let $m \defeq n - 1$, $\vv \defeq \vnu - \vmu$, and $\hvv_n \defeq \hvnu_n - \vmu$. By Lemma~\ref{lem:first_order_expand},
\[\hdd_n - \FID(P, Q) = 2\inprod{\vv}{\hvv_n - \vv} + \norm{\hvv_n - \vv}_2^2 + \inprod{\mb - \mc}{\mh_m} + \Phi_{\mc}(\mh_m). \]
Moreover, by Fact~\ref{fact:rescaling}, $\hvv_n \sim \calN(\vv, \frac 1 n \mb)$, so by taking expectations,
\begin{equation}\label{eq:decompose_diff}\E[\hdd_n] - \FID(P, Q) = \frac{\Tr(\mb)}{n} + \E[\Phi_{\mc}(\mh_m)] \le \frac d n + \E[\Phi_{\mc}(\mh_m)].\end{equation}
To bound the last term, Assumption~\ref{assum:normalization} implies $\normop{\mc} \le 1$, so Lemma~\ref{lem:phi_bound} and Fact~\ref{fact:wishart_1} now give
\begin{equation}\label{eq:phi_expect_bound}\E[\Phi_{\mc}(\mh_m)] \le \E \normf{\mh_m}^2 = \frac{d(d+1)}{m} = O\Par{\frac{d^2}{n}}.\end{equation}
We next bound the variance. By Fact~\ref{fact:rescaling}, $\hvv_n$ and $\mh_m$ are independent, so
\[\Var[\hdd_n] = \Var\Brack{2\inprod{\vv}{\hvv_n - \vv} + \norm{\hvv_n - \vv}_2^2} + \Var\Brack{\inprod{\mb - \mc}{\mh_m} + \Phi_{\mc}(\mh_m)}.\]
By standard Gaussian moment identities (e.g., \cite{muirhead2009aspects}, Chapter 1), we have
\[\Var\Brack{2\inprod{\vv}{\hvv_n - \vv} + \norm{\hvv_n - \vv}_2^2} = \frac{4\vv^\top \mb \vv}{n} + \frac{2\Tr(\mb^2)}{n^2} = O\Par{\frac{1}{n} + \frac d {n^2}}.\]
Moreover, using $(a + b)^2 \le 2a^2 + 2b^2$ and linearity of expectation gives
\begin{align*}\Var\Brack{\inprod{\mb - \mc}{\mh_m} + \Phi_{\mc}(\mh_m)} &\le 2\Var\Brack{\inprod{\mb - \mc}{\mh_m}} + 2\Var\Brack{\Phi_{\mc}(\mh_m)},
\end{align*}
and Fact~\ref{fact:wishart_1} shows
\[\Var\Brack{\inprod{\mb - \mc}{\mh_m}} = \frac 2 m \normf{\mb - \mc}^2 = O\Par{\frac d n}.\]
It remains to control $\Var[\Phi_{\mc}(\mh_m)]$. For $\mg \in \R^{d \times m}$ with i.i.d.\ entries $\sim \calN(0, 1)$, let
\[\mh(\mg) \defeq \frac 1 m \mg\mg^\top - \id_d,\quad \mn(\mg) \defeq \begin{pmatrix} \frac 1 {\sqrt m} \mc^{1/2}\mg & \0_{d \times \max(0, d-m)} \end{pmatrix}.\]
Then $\mh_m \overset{d}{=} \mh(\mg)$, and $\mn(\mg)\mn(\mg)^\top = \mc^{1/2}(\id_d + \mh(\mg))\mc^{1/2}$. Now by \eqref{eq:bures-frobenius},
\[\sqrt{\Phi_{\mc}(\mh(\mg))} = \BW(\mc, \mc^{1/2}(\id_d + \mh(\mg))\mc^{1/2})  = \min_{\mm \in \R^{d \times \max(m, d)}: \mm\mm^\top = \mc } \normf{\mm - \mn(\mg)}.\]
The triangle inequality then shows that for any $\mg, \mg' \in \R^{d \times m}$,
\[\Abs{\sqrt{\Phi_{\mc}(\mh(\mg))} - \sqrt{\Phi_{\mc}(\mh(\mg'))}} \le \normf{\mn(\mg) - \mn(\mg')} \le \frac 1 {\sqrt m}\normf{\mg - \mg'},\]
so $\mg \to \sqrt{\Phi_{\mc}(\mh(\mg))}$ has a Lipschitz constant bounded by $m^{-1/2}$. Thus,
\[\normf{\nabla_{\mg} \Phi_{\mc}(\mh(\mg))}^2 = 4\Phi_{\mc}(\mh_m)\normf{\nabla_{\mg} \sqrt{\Phi_{\mc}(\mh(\mg))}}^2 \le \frac 4 m \Phi_{\mc}(\mh(\mg)).\]
The Gaussian Poincar\'e inequality (Section 3.7, \cite{BLM13}) now yields
\begin{equation}\label{eq:apply_pi}\Var\Brack{\Phi_{\mc}(\mh(\mg))} \le \frac 4 m \E\Brack{ \Phi_{\mc}(\mh(\mg)) }. \end{equation}
Finally, applying $\mh_m \overset{d}{=}\mh(\mg)$ and our earlier bound \eqref{eq:phi_expect_bound} shows $\Var[\Phi_{\mc}(\mh_m)] = O(\frac{d^2}{n^2})$, concluding the proof of \eqref{eq:bias_variance_plugin}. The claim about Problem~\ref{prob:FD_est} follows from Chebyshev's inequality.\end{proof}

Theorem~\ref{thm:first-order-plugin-bias-variance} shows that when $d \gg \eps^{-1}$, the leading-order term in the sample complexity arises because of the \emph{bias} of the plug-in estimator $\hdd_n$, not its variance. In fact, \eqref{eq:phi_expect_bound} shows that the dominant bias term is the error in the nonlinear estimator $\Phi_{\mc}(\mh_m) \approx \Phi_{\mc}(\mh)$. In Section~\ref{sec:extrapolation}, we improve the sample complexity of FID estimation by using high-order corrections to debias the plug-in estimator.

\section{Estimation via Extrapolation}
\label{sec:extrapolation}

Theorem~\ref{thm:first-order-plugin-bias-variance} shows that the suboptimal sample complexity of the plug-in estimator arises from its large bias at sample sizes $n \ll d^2$. In this section, we study \emph{extrapolation} as a principled strategy for debiasing FID estimation, by generalizing 
the $\fidinf$ estimator of \cite{chong2019effectively} to higher orders. In Section~\ref{ssec:extrapolation}, we first give a generic framework for bounding the bias of estimators constructed via extrapolation. In Section~\ref{ssec:higher-order-extrapolation}, we specialize this framework to FID estimation, and in Section~\ref{ssec:choosing-extrapolation-weights}, we discuss strategies for choosing extrapolation weights in practice. Finally, we demonstrate sharpness of our bias bound from Section~\ref{ssec:higher-order-extrapolation} at every fixed order in Section~\ref{ssec:extrapolation-bias-lower-bound}.

\subsection{Extrapolation framework}\label{ssec:extrapolation}

In this subsection, we derive our generic framework for analyzing extrapolation methods. Let $D \in \R$ be an arbitrary scalar-valued parameter to be estimated, and let $\{\hD_n\}_{n \in \N}$ be a sequence of finite-sample estimators of $D$. 
Recall from~\Cref{sec:intro} that the premise of extrapolation is that, although each $\hD_n$ may be biased at a fixed $n$, if its bias can be approximated by a polynomial in $\frac 1 n$, i.e.,
\[\E[\hD_n] \approx D + \frac{\va_1}{n} + \frac{\va_2}{n^2} + \ldots + \frac{\va_k}{n^k},\]
then by evaluating $\hD_n$ at different sample sizes $n$ and taking an appropriate linear combination, we can cancel out low-order terms in $\frac 1 n$ and substantially improve the bias. 

We now formalize the finite-order approximation required by our framework.

\begin{assumption}
\label{assum:polynomial-expectation-expansion}
Let $\{\hD_n\}_{n \in \N}$ be a sequence of random variables. We assume that for every fixed $k \in \N$, there exist
$\va \in \R^k$, $A_{k+1}\ge0$, and
$n_0(k) \in \N$ independent of $n$, such that for every $n\ge n_0(k)$,
\begin{equation}
\label{eq:polynomial-expectation-assumption}
    \bbE[\hD_n]
    =
    D+\sum_{i \in [k]} \frac{\va_i}{n^i}+R_{k,n},
    \quad
    \Abs{R_{k,n}}
    \le\frac{A_{k+1}}{n^{k+1}}.
\end{equation}
\end{assumption}

In Section~\ref{ssec:higher-order-extrapolation}, we verify that Assumption~\ref{assum:polynomial-expectation-expansion} holds for each fixed order $k$, where we take $\hD_n \gets \hdd_n$ to be the plug-in estimator \eqref{eq:plugin}, and where $D \gets \FID(P, Q)$ under Model~\ref{model:one-side}. 

We now define our generic order-$k$ extrapolation method, which generalizes the setup described in~\Cref{sec:intro}. 
The method takes parameters $\ell \ge k + 1$, a sequence of sample sizes $\vn \in \N^{\ell}$, and deterministic weights $\vw \in \calW_k (\vc)$, where $\calW_k (\vc)$ is defined in~\Cref{eq:order-k-exact-weights}. 
Our estimator is simply
\begin{equation}
\label{eq:extrapolation-estimator}\hD\Par{\vn, \vw} \defeq \sum_{j \in [\ell]} \vw_j \hD_{n_j}, \end{equation}
where each $\hD_{n_j}$ is computed independently from $n_j$ samples.
As before, we let
\begin{equation}\label{eq:cdef}N \defeq \sum_{j \in [\ell]} \vn_j,\quad \vc_j \defeq \frac{\vn_j}{N} \text{ for all } j \in [\ell],\end{equation}

We now prove our main result on the bias of \eqref{eq:extrapolation-estimator} under the assumption $\vw \in \calW_k(\vc)$.

\begin{lemma}
\label{lem:extrapolation-bias-cancellation}
Fix $k \ge 1$, $\ell \ge k + 1$, $\vn \in \N^\ell$, and $\vw \in \R^\ell$, and define $\vc \in (0, 1)^\ell$ as in \eqref{eq:cdef}. Under Assumption~\ref{assum:polynomial-expectation-expansion} and following the notation \eqref{eq:extrapolation-estimator}, \eqref{eq:order-k-exact-weights}, if $\min_{j \in [\ell]}\vn_j \ge n_0(k)$ and $\vw \in \calW_k(\vc)$,
\begin{equation*}
    \Abs{\bbE\Brack{\hD(\vn, \vw)}-D}
    \le
    \frac{A_{k+1}}{N^{k+1}}
    \sum_{j\in[\ell]} \frac{\Abs{\vw_j}}{\vc_j^{k+1}}
    =
    O_{\vc,\vw,k}\paren{\frac{A_{k+1}}{N^{k+1}}}.
\end{equation*}
\end{lemma}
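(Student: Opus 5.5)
The proof is a direct computation using linearity of expectation together with the expansion in Assumption~\ref{assum:polynomial-expectation-expansion}. First, since $\min_{j \in [\ell]} \vn_j \ge n_0(k)$, I will apply \eqref{eq:polynomial-expectation-assumption} to each $\hD_{\vn_j}$ with the same fixed order $k$. The coefficients $\va \in \R^k$ and the constant $A_{k+1}$ do not depend on $n$, and this is what makes cancellation across different sample sizes possible. The linearity of \eqref{eq:extrapolation-estimator} then gives
\[
\E\Brack{\hD(\vn,\vw)} = \Par{\sum_{j\in[\ell]}\vw_j} D + \sum_{i\in[k]} \va_i \sum_{j\in[\ell]} \frac{\vw_j}{\vn_j^i} + \sum_{j\in[\ell]} \vw_j R_{k,\vn_j}.
\]
Independence of the $\hD_{\vn_j}$ is not needed for this step; it only matters for the variance.

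Next, I will substitute $\vn_j = N\vc_j$ from \eqref{eq:cdef}. This rewrites the middle term as $\sum_{i\in[k]} \va_i N^{-i}\sum_{j} \vw_j \vc_j^{-i}$. The constraints defining $\calW_k(\vc)$ in \eqref{eq:order-k-exact-weights} say exactly that $\sum_j \vw_j = 1$ and $\sum_j \vw_j\vc_j^{-i} = 0$ for each $i\in[k]$. So the first term equals $D$ and every term in the middle sum vanishes. We are left with
\[
\E\Brack{\hD(\vn,\vw)} - D = \sum_{j\in[\ell]} \vw_j R_{k,\vn_j}.
\]

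Finally, the triangle inequality and the remainder bound $|R_{k,\vn_j}| \le A_{k+1}\vn_j^{-(k+1)} = A_{k+1} N^{-(k+1)}\vc_j^{-(k+1)}$ give the stated inequality. The sum $\sum_j |\vw_j|\vc_j^{-(k+1)}$ depends only on $\vc$, $\vw$ and $k$, which yields the $O_{\vc,\vw,k}$ form.

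I do not expect a real obstacle here. The one point to state carefully is that the coefficient vector $\va$ is shared across all $j$ for the fixed order $k$, so the weighted sums cancel regardless of its unknown value.
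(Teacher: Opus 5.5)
Your proposal is correct and follows essentially the same route as the paper. The paper applies the order-$k$ expansion at each $\vn_j$, substitutes $\vn_j = N\vc_j$, and uses the constraints defining $\calW_k(\vc)$ to cancel both the $D$-offset and the first $k$ terms; it then bounds the remaining $\sum_j \vw_j R_{k,\vn_j}$ by the triangle inequality. Your remark that independence across levels is not needed for the bias, only for the variance, is also accurate.
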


\begin{proof}
Applying \eqref{eq:polynomial-expectation-assumption} at every $\vn_i$ and summing gives
\begin{equation*}
\begin{aligned}
    \bbE\Brack{\hD(\vn, \vw)} -D
    &=
    D\paren{\sum_{j \in [\ell]}\vw_j-1}
    +\sum_{i\in[k]} \frac{\va_i}{N^i}
        \sum_{j \in [\ell]} \frac{\vw_j}{\vc_j^i}
    +\sum_{j \in [\ell]} \vw_jR_{k,\vn_j} =\sum_{j\in[\ell]} \vw_j R_{k,\vn_j}.
\end{aligned}
\end{equation*}
The remainder bound in Assumption~\ref{assum:polynomial-expectation-expansion} and the triangle inequality prove the claim.
\end{proof}

The extrapolation estimator \eqref{eq:extrapolation-estimator} analyzed in this section is with respect to independent samples at each level. In Appendix~\ref{ssec:shared-pool-extrapolation}, we extend our analysis to subsampling from a shared pool, an alternative strategy used in our experiments (as well as the official implementation of \cite{chong2019effectively}).

\subsection{Verifying Assumption~\ref{assum:polynomial-expectation-expansion} for the plug-in estimator}
\label{ssec:higher-order-extrapolation}

We now specialize the machinery of Section~\ref{ssec:extrapolation} to the setting of the plug-in estimator. Throughout the rest of this section, we fix $D \gets \FID(P, Q)$ under Model~\ref{model:one-side}, and $\hD_n \gets \hdd_n$ as defined in \eqref{eq:plugin}.

The main result of this subsection is the following.

\begin{proposition}
\label{prop:uniform-psd-expansion}
The plug-in estimator $\hdd_n$ in \eqref{eq:plugin} satisfies Assumption~\ref{assum:polynomial-expectation-expansion} with
\begin{equation}
\label{eq:general-expectation-expansion}
\begin{aligned}
|\va_j| \le C_k d^{j + 1} \text{ for all } j \in [k],\quad A_{k + 1} = C_k d^{k + 2},\quad n_0(k) = C_k d,
\end{aligned}
\end{equation}
where $C_k > 0$ is a constant that depends only on $k$.
\end{proposition}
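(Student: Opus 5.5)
The plan is to reduce everything to the Wishart perturbation $\mh_m$ with $m = n-1$, expand $\Phi_{\mc}$ via Proposition~\ref{prop:trace-resolvent-expansion} to a fixed finite order, and show that the expectation of each retained term is \emph{exactly} a polynomial in $\frac1m$, with coefficients bounded by an interpolation argument. By \eqref{eq:decompose_diff},
\[\E[\hdd_n] - \FID(P,Q) = \frac{\Tr(\mb)}{n} + \E[\Phi_{\mc}(\mh_m)],\]
and $\Tr(\mb) \le d$ contributes at most $d \le d^2$ to $\va_1$. It therefore suffices to show that
\[\E[\Phi_{\mc}(\mh_m)] = \sum_{i\in[k]} \frac{\vb_i}{m^i} + O_k\Par{\frac{d^{k+2}}{m^{k+1}}}, \qquad |\vb_i| \le C_k d^{i+1}, \qquad m \ge C_k d.\]
Converting back to powers of $\frac1n$ is then routine. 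Expand $m^{-i} = n^{-i}(1-\frac1n)^{-i}$ as a binomial series and truncate at order $k$. The coefficients mix only $\vb_{i'}$ with $i' \le i$, and each carries a $k$-dependent constant, so the bound $d^{i+1}$ is preserved. The tail is $O_k(\sum_{i \le k} d^{i+1} n^{-k-1}) = O_k(d^{k+1}n^{-k-1})$, which is absorbed into $A_{k+1}$.

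\textbf{Truncation.} Set $K \defeq k+1$ and let $\calE \defeq \{\normop{\mh_m} < \frac12\}$. On $\calE$, Proposition~\ref{prop:trace-resolvent-expansion} gives
\[\Phi_{\mc}(\mh_m) = \sum_{j=2}^{K} Q_{j,\mc}(\mh_m) + R_{K,\mc}(\mh_m), \qquad |R_{K,\mc}| \le 2Q_{2,\mc}\normop{\mh_m}^{K-1} \le \tfrac12\normf{\mh_m}^2\normop{\mh_m}^{k},\]
where the last step uses \eqref{eq:2c_bound} and $\normop{\mc}\le 1$. Fact~\ref{fact:wishart_2} with $q = k/2$ bounds the expectation of the remainder by $d^{k+2}/m^{k+1}$.

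On $\calE^c$ the series representation is not available, so I bound each piece separately using $\ind_{\calE^c} \le 4^{q}\normop{\mh_m}^{2q}$ with $q = k$. For $\Phi_{\mc}$ itself, Lemma~\ref{lem:phi_bound} gives $\Phi_{\mc} \le \normf{\mh_m}^2$, and Fact~\ref{fact:wishart_2} then yields $\E[\Phi_{\mc}\ind_{\calE^c}] \lesssim_k d^{k+2}/m^{k+1}$. For each $j \le K$, Proposition~\ref{prop:trace-resolvent-expansion} gives $|Q_{j,\mc}(\mh_m)| \le \frac14\normf{\mh_m}^2\normop{\mh_m}^{j-2}$. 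The same Fact~\ref{fact:wishart_2} argument, with exponent $q' = j-2+2k$ in place of $2q$, bounds $\E[|Q_{j,\mc}|\ind_{\calE^c}]$ by an even smaller quantity. Together these show
\[\E[\Phi_{\mc}(\mh_m)] = \sum_{j=2}^{K}\E[Q_{j,\mc}(\mh_m)] + O_k\Par{\frac{d^{k+2}}{m^{k+1}}}.\]

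\textbf{Exact polynomial structure.} Since $Q_{j,\mc}$ is a homogeneous degree-$j$ polynomial in $\mh$, it is a fixed linear functional of $\mh^{\otimes j}$. Write $\mh_m = \frac1m\sum_{i\in[m]}\mz_i$ with $\mz_i$ i.i.d.\ and mean zero, and expand $\E[\mh_m^{\otimes j}]$ over index tuples $(i_1,\ldots,i_j)$, grouping by the induced set partition of $[j]$. Singleton blocks vanish because $\E[\mz_i] = 0$. A partition with $p$ blocks contributes $m^{-j}\, m(m-1)\cdots(m-p+1)$ times a fixed, $m$-independent moment tensor. Hence $\E[Q_{j,\mc}(\mh_m)] = p_j(\frac1m)$ for a polynomial $p_j$ of degree at most $j-1 \le k$ with no constant term. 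So no orders beyond $k$ arise from the terms $j \le K$.

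\textbf{Coefficient bound (main obstacle).} Controlling the moment tensors directly is messy, so instead I use size bounds and Lagrange interpolation. Fact~\ref{fact:wishart_2} with $q = \frac{j-2}{2}$ gives, for all $m \ge C_j d$,
\[|p_j(\tfrac1m)| \le \tfrac14\E\Brack{\normf{\mh_m}^2\normop{\mh_m}^{j-2}} \lesssim_j \frac{d^{j}}{m^{j-1}}.\]
Define the rescaled polynomial $r_j(y) \defeq p_j(y/d)$. Evaluating at $j$ distinct points $y_s = d/m_s$, where the integers $m_s \in [C_j d, 2C_j d]$ are chosen so that the $y_s$ are well separated in $[\frac1{2C_j}, \frac1{C_j}]$, gives $|r_j(y_s)| \lesssim_j d$. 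Lagrange interpolation on these $O_j(1)$-separated nodes bounds every coefficient of $r_j$ by $O_j(d)$. Rescaling back, the coefficient of $m^{-i}$ in $p_j$ is $O_j(d^{i+1})$. Summing over $2 \le j \le K$ gives $|\vb_i| \le C_k d^{i+1}$, with $n_0(k) = C_k d$.

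The step I expect to be most delicate is the separation of the interpolation nodes. Since $d$ may be small, I must check that integer choices of $m_s$ still produce $O_j(1)$-separated $y_s$. This is arranged by taking $C_j$ large and choosing $m_s$ geometrically spaced, which keeps the Lagrange basis coefficients bounded in terms of $j$ only.
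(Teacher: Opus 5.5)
Your truncation step has a genuine gap. With $K=k+1$, the tail on $\calE$ is bounded by $\frac12\normf{\mh_m}^2\normop{\mh_m}^{k}$. Fact~\ref{fact:wishart_2} with $q=k/2$ then gives $d^{q+2}/m^{q+1}=d^{k/2+2}/m^{k/2+1}$, not $d^{k+2}/m^{k+1}$; the two differ by the unbounded factor $(m/d)^{k/2}$. This is not just a lossy estimate: for $k\ge 2$ the identity $\E\Phi_{\mc}(\mh_m)=\sum_{j=2}^{k+1}\E Q_{j,\mc}(\mh_m)+O_k(d^{k+2}/m^{k+1})$ is false.

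Because $\normop{\mh_m}\approx\sqrt{d/m}$, each extra order of the expansion gains only $\sqrt{d/m}$ pointwise, and in expectation you need two orders per power of $1/m$. Your own partition argument shows where the missing terms live. At most $\lfloor j/2\rfloor$ blocks survive, so $\E Q_{j,\mc}(\mh_m)$ contains the powers $m^{-i}$ for $\lceil j/2\rceil\le i\le j-1$, and the lowest of these is generically of size $d(d/m)^{\lceil j/2\rceil}$. Hence the coefficient of $m^{-i}$ collects contributions from every $Q_{j,\mc}$ with $i+1\le j\le 2i$. Discarding $j\ge k+2$ therefore corrupts the coefficients of $m^{-i}$ for $\lceil k/2\rceil+1\le i\le k$.

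Concretely, take $k=2$ on the isotropic instance \eqref{eq:extrapolation-hard-family}. There $\mc=c\id_d$ with $c=\sqrt{a/2}$, and $Q_{4,\mc}(\mh)=\frac{5c}{64}\Tr(\mh^4)$. Since $\E\Tr(\mh_m^4)=\frac{2d^3}{m^2}(1+O(\frac1d))+O(\frac{d^4}{m^3})$, the tail you discard contains a $\Theta(d^3/m^2)$ term, which is not $O(d^4/m^3)$ once $m\gg d$.

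For $k=1$ the truncated identity happens to hold, but your argument cannot show it. Indeed $\E[\normf{\mh_m}^2\normop{\mh_m}]\asymp d^{5/2}/m^{3/2}$, and you need the exact structure, which gives $\E Q_{3,\mc}(\mh_m)=O(d^3/m^2)$.

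The same exponent slip recurs in your interpolation step. Fact~\ref{fact:wishart_2} with $q=\frac{j-2}{2}$ gives $d^{j/2+1}/m^{j/2}$, not $d^j/m^{j-1}$. There it is harmless, since you only evaluate at $m_s\asymp d$.

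The repair is to truncate at order $2k+1$, as in Lemma~\ref{lem:truncation_ols}. On $\calE$ the tail is then at most $\frac12\normf{\mh_m}^2\normop{\mh_m}^{2k}$, Fact~\ref{fact:wishart_2} with $q=k$ gives $O_k(d^{k+2}/m^{k+1})$, and your $\calE^c$ argument is unchanged. The retained terms $\E Q_{j,\mc}(\mh_m)$ with $k+2\le j\le 2k+1$ now carry powers up to $m^{-2k}$. Your interpolation already bounds the coefficient of $m^{-i}$ by $O_j(d^{i+1})$ for every $i$. Since $d^{i+1}/m^{i}\le d^{k+2}/m^{k+1}$ whenever $i\ge k+1$ and $m\ge d$, those powers fold into the remainder. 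With this change your proof is essentially the paper's: Lemmas~\ref{lem:wishart-collision-expansion} and~\ref{lem:truncation_ols}, with the coefficient of $m^{-i}$ equal to $\sum_{j=i+1}^{2i}b_{j,i}(\mc)$. The rest of your proposal, including the binomial conversion from $\frac1m$ to $\frac1n$, is fine.
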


Before proving Proposition~\ref{prop:uniform-psd-expansion}, we briefly use it to derive a sample complexity bound for (one instance of) the $\fidinf$ estimator \cite{chong2019effectively}, which corresponds to the $k = 1$ case of Lemma~\ref{lem:extrapolation-bias-cancellation}.

\begin{corollary}\label{cor:fidinf}
Under Model~\ref{model:one-side} and Assumption~\ref{assum:normalization}, taking $\ell = 2$ and
\[\vn = \Par{\frac N 3, \frac{2N}{3}},\quad \vw = \Par{-1, 2},\]
in \eqref{eq:extrapolation-estimator} yields $\hD(\vn, \vw)$ that solves Problem~\ref{prob:FD_est} if $N \gtrsim \frac d {\eps^2} + \frac{d^{3/2}}{\eps^{1/2}}$.
\end{corollary}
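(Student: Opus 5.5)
The plan is to bound the bias and the variance of $\hD(\vn,\vw)$ separately, then finish with Chebyshev's inequality. Here $\vc = (\frac13,\frac23)$ and $\vw=(-1,2)$. First I check that $\vw \in \calW_1(\vc)$: the weights sum to $2-1=1$, and
\[\frac{\vw_1}{\vc_1}+\frac{\vw_2}{\vc_2} = -3+3 = 0.\]

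\textbf{Bias.} Proposition~\ref{prop:uniform-psd-expansion} with $k=1$ gives $A_2 = C_1 d^3$ and $n_0(1)=C_1 d$. Lemma~\ref{lem:extrapolation-bias-cancellation} then applies as long as $\frac N3 \ge C_1 d$, and yields
\[\Abs{\E[\hD(\vn,\vw)] - \FID(P,Q)} \le \frac{C_1 d^3}{N^2}\Par{\frac{1}{(1/3)^2}+\frac{2}{(2/3)^2}} = O\Par{\frac{d^3}{N^2}}.\]
This is at most $\frac{\eps}{2}$ once $N \gtrsim \frac{d^{3/2}}{\eps^{1/2}}$. The side condition $N \gtrsim d$ is implied by $N \gtrsim \frac{d}{\eps^2}$ because $\eps<1$.

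\textbf{Variance.} The two plug-in estimates are computed on independent samples. Hence
\[\Var[\hD(\vn,\vw)] = \Var[\hdd_{N/3}] + 4\Var[\hdd_{2N/3}].\]
Theorem~\ref{thm:first-order-plugin-bias-variance} bounds each term by $O(\frac dN + \frac{d^2}{N^2})$. Chebyshev's inequality then shows that the deviation from the mean is at most $\frac{\eps}{2}$ with probability at least $\frac23$, provided $\frac{d}{N} + \frac{d^2}{N^2} \lesssim \eps^2$. The $\frac dN$ part needs $N \gtrsim \frac{d}{\eps^2}$. The $\frac{d^2}{N^2}$ part needs $N \gtrsim \frac d\eps$, which is dominated by $\frac{d}{\eps^2}$. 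Combining the bias and deviation bounds by the triangle inequality gives the claim.

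The only step requiring care is that Proposition~\ref{prop:uniform-psd-expansion} controls the remainder only for $n \ge n_0(1) = C_1 d$. I have to make sure both subsample sizes $\frac N3$ and $\frac{2N}{3}$ exceed this threshold, which the assumed lower bound on $N$ guarantees after adjusting constants. Rounding $\frac N3$ to an integer perturbs $\vc$ by $O(\frac1N)$. I would handle this either by assuming $3 \mid N$ or by discarding at most two samples. The rest is bookkeeping; the substantive work lies in the already-stated Proposition~\ref{prop:uniform-psd-expansion}.
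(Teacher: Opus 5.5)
Your proposal is correct and follows the paper's proof essentially verbatim. You verify $\vw \in \calW_1(\vc)$, combine Lemma~\ref{lem:extrapolation-bias-cancellation} with Proposition~\ref{prop:uniform-psd-expansion} at $k=1$ to get an $O(d^3/N^2)$ bias, use independence and Theorem~\ref{thm:first-order-plugin-bias-variance} to get an $O(\frac dN + \frac{d^2}{N^2})$ variance, and finish with Chebyshev's inequality; your extra checks (that $\frac N3 \ge n_0(1)$, and that $\frac N3$ is an integer) only make explicit details the paper leaves implicit.
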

\begin{proof}
This setting corresponds to $\vc_1 = \frac 1 3$, $\vc_2 = \frac 2 3$, so  $\vw \in \calW_1(\vc)$. Thus Lemma~\ref{lem:extrapolation-bias-cancellation} applies, so Proposition~\ref{prop:uniform-psd-expansion} shows that the bias of $\hD(\vn, \vw)$ is $O(\frac{d^3}{N^2})$. Moreover, Theorem~\ref{thm:first-order-plugin-bias-variance} and independence of $\hdd_{\vn_1}$ and $\hdd_{\vn_2}$ in \eqref{eq:extrapolation-estimator} shows that the variance of $\hD(\vn, \vw)$ is $O(\frac{d}{N} + \frac{d^2}{N^2})$. Combining with our lower bound on $N$, and applying Chebyshev's inequality, gives the claim.
\end{proof}

To improve upon the sample complexity of Corollary~\ref{cor:fidinf}, we verify that Assumption~\ref{assum:polynomial-expectation-expansion} holds for $k > 1$. In Lemma~\ref{lem:wishart-collision-expansion}, we first give an exact polynomial characterization of each homogeneous term $\E Q_{r, \mc}(\mh_m)$ \eqref{eq:order_k} arising in the decomposition \eqref{eq:trace-resolvent-series}. Notably, a uniform bound of the form
\[\Abs{\bbE Q_{r,\mc}(\mh_m)}
    \lesssim_r
    d\paren{\frac dm}^{\lceil \frac r 2\rceil},\]
    while true, is not sufficient to apply Assumption~\ref{assum:polynomial-expectation-expansion}, which requires separating out the dependences on each inverse polynomial in $m$, with bounds on the coefficients at each order.
\begin{lemma}
\label{lem:wishart-collision-expansion}
Under Assumption~\ref{assum:normalization}, and following the notation of \eqref{eq:order_k} and Fact~\ref{fact:rescaling},
for every fixed $r\ge2$, there exist coefficients
$b_{r,j}(\mc)$, independent of $m$, such that
\begin{equation}
\label{eq:wishart-homogeneous-expansion}
    \bbE Q_{r,\mc}(\mh_m)
    =
    \sum_{j=\lceil r/2\rceil}^{r-1}
    \frac{b_{r,j}(\mc)}{m^j},
    \quad
    \Abs{b_{r,j}(\mc)}
    \lesssim_r d^{j+1}.
\end{equation}
\end{lemma}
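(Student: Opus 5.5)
Since $Q_{r,\mc}(\mh)$ is a homogeneous degree-$r$ polynomial in the entries of $\mh$, the plan is to write it as a multilinear form applied to $\mh_m = \frac 1 m \sum_{i \in [m]} \mm_i$, where $\mm_i \defeq \vg_i\vg_i^\top - \id_d$ are i.i.d. and centered. Concretely, we would write $Q_{r,\mc}(\mh) = \calQ_{r,\mc}[\mh,\ldots,\mh]$ with
\[\calQ_{r,\mc}[\mh_1,\ldots,\mh_r] \defeq \frac{2(-1)^r}{\pi}\int_0^\infty \Tr\Par{\mr_t \prod_{a\in[r]} \mc\mr_t^{1/2}\mh_a\mr_t^{1/2}\mc}t^{1/2}\,\dd t,\]
and expand by multilinearity:
\[\E Q_{r,\mc}(\mh_m) = m^{-r}\sum_{i_1,\ldots,i_r\in[m]} \E\,\calQ_{r,\mc}[\mm_{i_1},\ldots,\mm_{i_r}].\]
Each index tuple induces a set partition $\pi$ of $[r]$, and the expectation depends only on $\pi$. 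Because $\E\mm_i = \mzero$, any partition with a singleton block contributes zero. So only partitions with every block of size $\ge 2$ survive, and these have $s$ blocks with $1 \le s \le \lfloor r/2\rfloor$.

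The number of tuples realizing a given $\pi$ with $s$ blocks is the falling factorial $m(m-1)\cdots(m-s+1)$, a polynomial in $m$ of degree $s$ with $r$-dependent coefficients. Hence
\[\E Q_{r,\mc}(\mh_m) = \sum_{\pi} \frac{(m)_{s(\pi)}}{m^r} E_\pi(\mc),\]
where $E_\pi(\mc)$ is independent of $m$. Expanding $(m)_s/m^r$ produces powers $m^{-j}$ with $r - s \le j \le r-1$. Since $s \le \lfloor r/2 \rfloor$, this gives $j \ge \lceil r/2\rceil$, which is exactly the claimed range. It then remains to show $|E_\pi(\mc)| \lesssim_r d^{\,j+1}$ for every $j$ at which $\pi$ contributes. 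The binding case is $j = r-s$, since the lower-order falling-factorial terms have larger $j$, so it suffices to prove
\[|E_\pi(\mc)| \lesssim_r d^{\,r-s+1}.\]

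\textbf{Main obstacle: bounding $E_\pi$.} For the bound on $E_\pi(\mc)$, we would use Wick/Isserlis calculus on the Gaussian vectors. Expanding each $\mm_i = \vg_i\vg_i^\top - \id_d$ and integrating out the Gaussians turns $E_\pi$ into a signed sum, with $r$-dependent coefficients, over pairings. Each term is a product of traces of words in the matrices $\mc\mr_t^{1/2}$ and $\mr_t$, integrated against $t^{1/2}$. Counting free trace cycles as in genus-expansion arguments, a word with $c$ cycles is bounded by $d^{c}$ times operator-norm factors. Because $\mc\mr_t^{1/2}$ has operator norm at most $1$ and $\normop{\mc} \le 1$, each closed cycle contributes at most $d$. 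The combinatorial claim we need is that a partition into $s$ blocks of sizes $\ge 2$ yields at most $r-s+1$ cycles; this is the standard Wishart moment count, matching $\E\Tr(\mh_m^r)$, which has leading order $d(d/m)^{\lceil r/2\rceil}$.

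The delicate point is the $t$-integral, because $\Tr(\mr_t \cdots)$ must remain integrable against $t^{1/2}$. We would handle this by keeping one factor $\mr_t\mc^2\mr_t$, whose trace integral against $t^{1/2}$ is $O(\Tr\mc)$ as in Lemma~\ref{lem:first_two_terms}, and bounding all other factors in operator norm. This sacrifices one factor of $d$ relative to the trace while preserving convergence. If the cycle count proves awkward, a fallback is to avoid Wick calculus: apply H\"older's inequality to bound each $|E_\pi|$ by products of moments $\E\normf{\mm}^2\normop{\mm}^{q}$, using the ideas behind Fact~\ref{fact:wishart_2} to recover the $d^{r-s+1}$ scaling.
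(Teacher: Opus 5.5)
Your polynomial-structure argument matches the paper's: the multilinear expansion over label tuples, dropping partitions with singleton blocks, and reading off the powers $m^{-j}$ with $\lceil r/2\rceil \le j \le r-1$ from $(m)_s/m^r$. Your reduction of the coefficient bound to $|E_\pi(\mc)| \lesssim_r d^{r-s+1}$ is also correct. You genuinely differ from the paper in how you prove that bound.

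Your Wick/genus route works, for three reasons:
\begin{itemize}
\item For a pairing compatible with $\pi$, every sample-label face stays inside one block, so there are at least $s$ such faces. Euler's formula ($\chi \le 2$, including the non-orientable pairings that real Gaussians produce) then leaves at most $r+1-s$ trace cycles.
\item Terms where some $\vg_a\vg_a^\top$ is replaced by $-\id$ are words with reduced block sizes $b_B' \le b_B$. They therefore have at most $1+\sum_{B:\,b_B'\ge 1}(b_B'-1) \le r-s+1$ cycles. You should state this explicitly.
\item The $t$-integral is harmless. The only interstitial matrix containing $\mr_t$ is $\md_t\mr_t\md_t = \mc^2\mr_t^2$, which lies in exactly one cycle, and $\int_0^\infty t^{1/2}\normop{\mc^2\mr_t^2}\,\dd t < \infty$.
\end{itemize}

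The paper never bounds individual $E_\pi$. It writes $\E Q_{r,\mc}(\mh_m) = d\,p(d/m)$ and notes that $z^{-\lceil r/2\rceil}p(z)$ has only $\lfloor r/2\rfloor$ unknown coefficients. It evaluates at the fixed points $m_i = i c_r d$, where Proposition~\ref{prop:trace-resolvent-expansion}, \eqref{eq:2c_bound}, and Fact~\ref{fact:wishart_2} give $|\E Q_{r,\mc}(\mh_{m_i})| \lesssim_r d$. It then inverts an $r$-dependent Vandermonde matrix. That argument is shorter, and applying Fact~\ref{fact:wishart_2} to the averaged $\mh_m$ absorbs all cross-block cancellations at once. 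Your route is self-contained, needing neither the $m \gtrsim d$ regime nor Wishart tail moments. It is also more informative: it shows which partitions drive each order and could in principle give the leading constants.

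One caution: your H\"older fallback would not work as stated. Bounding $|E_\pi|$ through norms of the individual $\mm_i$ throws away the independence between blocks. For $\pi = \{\{1,2\},\{3,4\}\}$ it gives roughly $\normf{\mm_1}\normop{\mm_1}\normf{\mm_2}\normop{\mm_2} \approx d^4$. The true size is of order $\E\big[\|\vg_1\|_2^2\|\vg_2\|_2^2(\vg_1^\top\vg_2)^2\big] \approx d^3 = d^{r-s+1}$. Fact~\ref{fact:wishart_2} only helps for the average $\mh_m$ with $m \gtrsim d$. That is exactly why the paper applies it to the full sum rather than partition by partition.
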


\begin{proof}
Throughout, let $\{\vg_a \in \R^d\}_{a \in [m]}$ have i.i.d.\ $\calN(0, 1)$ entries, and let
\begin{equation*}
    \mh_m
    =
    \frac1m\sum_{a \in [m]} \my_a,
    \quad
    \my_a
    \defeq
    \vg_a\vg_a^\top-\id_d.
\end{equation*}
Since $Q_{r,\mc}$ is homogeneous of degree $r$, expanding the $r$
copies of $\mh_m$ gives
\begin{equation}
\label{eq:homogeneous-sample-expansion}
    Q_{r,\mc}(\mh_m)
    =
    \frac1{m^r}
    \sum_{a_1,\ldots,a_r\in[m]}
    \mathcal Q_{r,\mc}
    (\my_{a_1},\ldots,\my_{a_r}),
\end{equation}
where $\mathcal Q_{r,\mc}$ is multilinear in its arguments.
Concretely, it is obtained from the integral \eqref{eq:order_k} defining $Q_{r,\mc}$
by replacing the $r$ copies of $\mh$ in $\mt_t^r$ by the corresponding
arguments $\my_{a_1}, \ldots, \my_{a_r}$.

Each tuple $(a_1,\ldots,a_r)$ of sample labels induces a partition
$\pi$ of $[r]$: positions $s,t\in[r]$ belong to the same block
precisely when $a_s=a_t$. For example, when $r=4$, a tuple
$(a,a,b,b)$ with $a\ne b$ induces
$\pi=\cbra{\cbra{1,2},\cbra{3,4}}$, whereas $(a,a,a,b)$ induces
$\cbra{\cbra{1,2,3},\cbra{4}}$.

If a sample label occurs only once,
conditioning on all other samples leaves an expression linear in
the corresponding $\my_a$. Independence and
$\bbE\my_a=\vzero$ make this conditional expectation zero, so only partitions with every block having size at least two can contribute.

Let $\mathcal P_r^{\ge2}$ be the collection of such partitions of $[r]$.
For $\pi\in\mathcal P_r^{\ge2}$, let $\mathcal C_{\pi,\mc}$ be the
expectation associated with any fixed assignment of distinct sample
labels to its blocks. For example,
\begin{equation*}
    \mathcal C_{\cbra{\cbra{1,2},\cbra{3,4}},\mc}
    \defeq
    \bbE\mathcal Q_{4,\mc}
    (\my_1,\my_1,\my_2,\my_2).
\end{equation*}
Since the $\my_a$ are i.i.d., these quantities depend only on $\pi$
and $\mc$, rather than the labels or $m$. If $\pi$ has $s=\Abs{\pi}$ blocks, there are exactly
\begin{equation*}
    (m)_s\defeq m(m-1)\cdots(m-s+1)
\end{equation*}
ways to assign $s$ distinct sample labels to its blocks. Hence
\begin{equation}
\label{eq:collision-partition-identity}
    \bbE Q_{r,\mc}(\mh_m)
    =
    \frac1{m^r}
    \sum_{\pi\in\mathcal P_r^{\ge2}}
    (m)_{\Abs{\pi}}\,\mathcal C_{\pi,\mc}.
\end{equation}

Every index appears at least twice, so
$\Abs{\pi}\le\lfloor \frac r 2\rfloor$. Moreover, $(m)_s$ is a degree-$s$
polynomial with no constant term. Thus $\frac{(m)_s}{m^r}$ contains only
powers $m^{-j}$ with $r-s\le j\le r-1$. Collecting equal powers
in~\eqref{eq:collision-partition-identity} gives the first claim in \eqref{eq:wishart-homogeneous-expansion}. 
To control the magnitude of the coefficients, define
\begin{equation}\label{eq:poly_def}
    p(z)
    \defeq 
    \sum_{j= \lceil r/2 \rceil}^{r-1}
    \frac{b_{r,j}(\mc)}{d^{j+1}}z^{j},
\end{equation}
Then $z^{-\lceil \frac r 2 \rceil} p(z)$ is a degree-$(L - 1)$ polynomial, where $L \defeq \lfloor \frac r 2 \rfloor$. By construction,
\begin{equation}
\label{eq:normalized-homogeneous-polynomial}
    \bbE Q_{r,\mc}(\mh_m)
    =
    dp\paren{\frac dm}.
\end{equation}
Thus, bounds on the coefficients of $p$ directly translate to bounds on the $b_{r, j}(\mc)$. We bound the coefficients of $p$ by evaluating it at $L$ fixed interpolation points $\{m_i\}_{ i\in [L]}$. Choose an integer $c_r$ large enough such that if $m \ge c_r d$, Fact~\ref{fact:wishart_2} applies at $q \gets 0$ and $q \gets r$. Then for each $i \in [L]$, we set
\[m_i \defeq ic_r d,\quad z_i \defeq \frac{d}{m_i} = \frac 1 {ic_r}. \]
Then we have for each $i \in [L]$,
\begin{align*}
\Abs{\E Q_{r, \mc}(\mh_{m_i})} &\le \frac 1 4 \E\Brack{\normf{\mh_{m_i}}^2\normop{\mh_{m_i}}^{r - 2}} \\
&\le \frac 1 4 \E \normf{\mh_{m_i}}^2 + \frac 1 4 \E\Brack{\normf{\mh_{m_i}}^2\normop{\mh_{m_i}}^{2r}} \\
&\lesssim_r \frac{d^2}{m_i} + \frac{d^{r + 2}}{m_i^{r + 1}} \lesssim_r d,
\end{align*}
where the first inequality used Proposition~\ref{prop:trace-resolvent-expansion}, \eqref{eq:2c_bound}, and Assumption~\ref{assum:normalization}, the second inequality used $x^{r - 2} \le 1 + x^{2r}$ for all $x \ge 0$, the third inequality applied Fact~\ref{fact:wishart_2} at $q \gets 0$ and $q \gets r$, and the last substituted in our choice of $m_i$. Combined with \eqref{eq:normalized-homogeneous-polynomial}, this gives
\[\Abs{z_i^{-\lceil \frac r 2 \rceil}p(z_i)} \lesssim_r 1,\text{ for all } i \in [L].\]
Each interpolation point $z_i$ is a fixed nonzero constant depending only on $r$, so inverting the Vandermonde matrix in the $\{z_i\}_{i \in [L]}$ to recover coefficients incurs a blowup that is only a function of $r$. After shifting by $z^{\lceil \frac r 2\rceil}$, we conclude that every coefficient of $p$ is $\lesssim_r 1$ in magnitude.
Finally, \eqref{eq:poly_def} shows that each $b_{r, j}(\mc)$ is $d^{j + 1}$ times larger than the degree-$j$ coefficient of $p$, yielding \eqref{eq:wishart-homogeneous-expansion}.
\end{proof}

We have thus shown that each $\E Q_{r, \mc}(\mh_m)$ has an exact polynomial expansion in $\frac 1 m$, with bounded coefficients. We next prove that truncating to only include finitely many orders $r$ suffices to approximate the entire nonlinear term $\Phi_{\mc}(\mh_m)$, defined in the decomposition \eqref{eq:nonlinear-covariance-term}.

\begin{lemma}\label{lem:truncation_ols}
Under Assumption~\ref{assum:normalization}, and following the notation of \eqref{eq:nonlinear-covariance-term}, \eqref{eq:order_k}, and Fact~\ref{fact:rescaling}, for every fixed $k \ge 1$, there is a constant $c_k > 0$ such that whenever $m \ge c_k d$,
\[ \E\Phi_{\mc}(\mh_m) = \sum_{r = 2}^{2k + 1} \E Q_{r, \mc}(\mh_m) + O_k\Par{\frac{d^{k + 2}}{m^{k + 1}}}.\]
\end{lemma}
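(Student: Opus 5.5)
Split on the event $\calE \defeq \{\normop{\mh_m} \le \half\}$. On $\calE$, Proposition~\ref{prop:trace-resolvent-expansion} applies, so
\[\Phi_{\mc}(\mh_m) = \sum_{r=2}^{2k+1} Q_{r,\mc}(\mh_m) + R_{2k+1,\mc}(\mh_m),\qquad \Abs{R_{2k+1,\mc}(\mh_m)} \le 2\,Q_{2,\mc}(\mh_m)\normop{\mh_m}^{2k}.\]
Using \eqref{eq:2c_bound} together with $\normop{\mc}\le 1$ (from Assumption~\ref{assum:normalization}), this gives $\Abs{R_{2k+1,\mc}} \le \half\normf{\mh_m}^2\normop{\mh_m}^{2k}$. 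Fact~\ref{fact:wishart_2} at $q\gets k$ then bounds its expectation by $O_k(d^{k+2}/m^{k+1})$, provided $m\ge c_k d$. The point of truncating at $2k+1$ rather than $k+1$ is exactly that the remainder carries $\normop{\mh_m}^{2k}$, which yields the power $m^{-(k+1)}$.

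Write $\E\Phi_{\mc}(\mh_m) - \sum_{r=2}^{2k+1}\E Q_{r,\mc}(\mh_m)$ as the sum of three pieces:
\[\E[R_{2k+1,\mc}\ind_\calE] + \E[\Phi_{\mc}(\mh_m)\ind_{\calE^c}] - \sum_{r=2}^{2k+1}\E[Q_{r,\mc}(\mh_m)\ind_{\calE^c}].\]
The first piece was handled above. For the other two, use that on $\calE^c$ we have $1 \le (2\normop{\mh_m})^{2k}$. For the second piece, Lemma~\ref{lem:phi_bound} gives $\Phi_{\mc}(\mh_m)\le \normf{\mh_m}^2$, valid without any operator norm restriction since $\id_d+\mh_m\succeq \mzero$. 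Hence
\[\E[\Phi_{\mc}\ind_{\calE^c}] \le 4^k\,\E\Brack{\normf{\mh_m}^2\normop{\mh_m}^{2k}} \lesssim_k \frac{d^{k+2}}{m^{k+1}}.\]
For the third piece, Proposition~\ref{prop:trace-resolvent-expansion} and \eqref{eq:2c_bound} give $\Abs{Q_{r,\mc}} \le \frac14\normf{\mh_m}^2\normop{\mh_m}^{r-2}$. On $\calE^c$ this is at most $\frac14 2^{2k}\normf{\mh_m}^2\normop{\mh_m}^{2k+r-2}$. Fact~\ref{fact:wishart_2}, applied at $q\gets k+(r-2)/2$ (a fixed, possibly half-integer, value; if only integer $q$ is allowed, bound via $x^{a}\le x^{2k}+x^{2k+2r}$ and $x^{2k}\le 1$-split), gives
\[\E\Brack{\normf{\mh_m}^2\normop{\mh_m}^{2k+r-2}} \lesssim_k \frac{d^{k+2}}{m^{k+1}}\Par{\frac dm}^{(r-2)/2} \le \frac{d^{k+2}}{m^{k+1}}\]
for $m\ge d$.

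Summing the finitely many ($2k$) terms gives the claim, with $c_k$ chosen large enough that Fact~\ref{fact:wishart_2} applies at all the needed $q\le 3k$. The main obstacle is only bookkeeping the tail event: the key observation is that Lemma~\ref{lem:phi_bound} controls $\Phi_{\mc}$ globally, so no convergence of the series is needed off $\calE$. Integrability of each $Q_{r,\mc}$ off $\calE$ is likewise ensured by the polynomial moment bounds.
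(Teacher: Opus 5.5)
Your proposal is correct and follows essentially the same route as the paper. Both split on $\{\normop{\mh_m}\le\half\}$, use Proposition~\ref{prop:trace-resolvent-expansion} with \eqref{eq:2c_bound} on that event, use Lemma~\ref{lem:phi_bound} and $|Q_{r,\mc}(\mh_m)|\le\frac14\normf{\mh_m}^2\normop{\mh_m}^{r-2}$ off it, and finish with Fact~\ref{fact:wishart_2}. The only cosmetic difference is that on $\calE^c$ the paper notes directly that $\normop{\mh_m}^{r-2}\le 2^{2k}\normop{\mh_m}^{2k}$ (since $r-2<2k$), so it needs Fact~\ref{fact:wishart_2} only at $q\gets k$; your extra factor $(2\normop{\mh_m})^{2k}$ raises the exponent and requires $q$ up to $2k-\half$, which is harmless.
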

\begin{proof}
We first claim that the following pointwise estimate holds for all realizations of $\mh_m$:
\begin{equation}\label{eq:residual_event}
\Abs{\Phi_{\mc}(\mh_m) - \sum_{r = 2}^{2k + 1} Q_{r, \mc}(\mh_m)} \lesssim_k \normf{\mh_m}^2 \normop{\mh_m}^{2k}.
\end{equation}
We split our proof based on $\calE \defeq \{\normsop{\mh_m} \le \half\}$. Under $\calE$, \eqref{eq:residual_event} follows from Proposition~\ref{prop:trace-resolvent-expansion}, \eqref{eq:2c_bound}, and Assumption~\ref{assum:normalization}. 
Under $\calE^c$, by Lemma~\ref{lem:phi_bound} and Assumption~\ref{assum:normalization}, we first have
\[0 \le \Phi_{\mc}(\mh_m) \le \normf{\mh_m}^2 \le 2^{2k} \normf{\mh_m}^2 \normop{\mh_m}^{2k}.\]
Moreover, for every $2 \le r \le 2k + 1$, Proposition~\ref{prop:trace-resolvent-expansion}, \eqref{eq:2c_bound}, and Assumption~\ref{assum:normalization} give
\[\Abs{Q_{r, \mc}(\mh_m)} \le \frac 1 4 \normf{\mh_m}^2\normop{\mh_m}^{r - 2} \le 2^{2k} \normf{\mh_m}^2 \normop{\mh_m}^{2k}.\]
Thus combining the above two displays with the triangle inequality shows \eqref{eq:residual_event}. Finally, taking expectations over \eqref{eq:residual_event} and applying Fact~\ref{fact:wishart_2}  with $q \gets k$ concludes the proof.
\end{proof}

We now put together the pieces to prove Proposition~\ref{prop:uniform-psd-expansion}.

\begin{proof}[Proof of Proposition~\ref{prop:uniform-psd-expansion}]
We follow the notation of Lemmas~\ref{lem:wishart-collision-expansion} and~\ref{lem:truncation_ols}. 
For each $j \in [k]$, define
\[\beta_j(\mc) \defeq \sum_{r = j+1}^{2j} b_{r, j}(\mc).\]
Then directly comparing with the statements of Lemmas~\ref{lem:wishart-collision-expansion} and~\ref{lem:truncation_ols} shows
\[\E \Phi_{\mc}(\mh_m) = \sum_{j \in [k]} \frac{\beta_j(\mc)}{m^j} + O_k\Par{\frac{d^{k + 2}}{m^{k + 1}}},\quad |\beta_j(\mc)| \lesssim_k d^{j + 1}.\]
Next, the decomposition \eqref{eq:decompose_diff} gives
\[\E\hdd_n = \FID(P, Q) + \frac{\Tr(\mb)}{n} + \E\Phi_{\mc}(\mh_m).\]
We are now essentially done with the proof, except that Assumption~\ref{assum:polynomial-expectation-expansion} requires a polynomial in $\frac 1 n$ rather than in $\frac 1 m$. To fix this, we use the negative-binomial expansion identity:
\[\frac 1 {m^j} = \frac 1 {(n - 1)^j} =  \sum_{i=j}^\infty \binom{i-1}{j-1} \frac 1 {n^i}= \sum_{i = j}^k \binom{i-1}{j-1} \frac 1 {n^i} + O_k\Par{\frac 1 {n^{k + 1}}},\]
so that Assumption~\ref{assum:polynomial-expectation-expansion} now holds with
\[\va_i \defeq \ind_{i = 1}\Tr(\mb) + \sum_{j \in [i]} \binom{i-1}{j-1} \beta_j(\mc),\quad A_{k + 1} = O(d^{k + 2}).\]
At this point substituting our bounds $\Tr(\mb) \le d$, $|\beta_j(\mc)| \lesssim_k d^{j + 1}$ gives the conclusion.
\end{proof}

Combining Proposition~\ref{prop:uniform-psd-expansion} with Lemma~\ref{lem:extrapolation-bias-cancellation} now gives our high-order generalization of Corollary~\ref{cor:fidinf}.

\begin{theorem}
\label{thm:general-order-extrapolation}
Under Model~\ref{model:one-side} and Assumption~\ref{assum:normalization}, let $k \in \N$ be fixed. Taking $\ell \ge k + 1$, $\vc \in (0, 1)^\ell$ with $\norm{\vc}_1 = 1$, $\vn \defeq N\vc$, and $\vw \in \calW_k(\vc)$ in \eqref{eq:extrapolation-estimator} yields $\hD(\vn, \vw)$ that solves Problem~\ref{prob:FD_est} if
\[N \gtrsim_{k, \vc, \vw} \frac{d}{\eps^2} + d\Par{\frac d \eps}^{\frac 1 {k + 1}}.\]
\end{theorem}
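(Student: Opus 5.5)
The proof mirrors Corollary~\ref{cor:fidinf}, with a bias bound from Lemma~\ref{lem:extrapolation-bias-cancellation} and a variance bound from Theorem~\ref{thm:first-order-plugin-bias-variance}, followed by Chebyshev's inequality.

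\textbf{Bias.} Proposition~\ref{prop:uniform-psd-expansion} verifies Assumption~\ref{assum:polynomial-expectation-expansion} for $\hD_n \gets \hdd_n$, with $A_{k+1} = C_k d^{k+2}$ and $n_0(k) = C_k d$. To meet the hypothesis $\min_j \vn_j \ge n_0(k)$, I will require $N \ge C_k d / \min_j \vc_j$. This is implied by the stated lower bound on $N$: since $\eps < 1$ we have $d(d/\eps)^{1/(k+1)} \ge d$, and the hidden constant may depend on $\vc$. Because $\vw \in \calW_k(\vc)$, Lemma~\ref{lem:extrapolation-bias-cancellation} then gives
\[\Abs{\E\hD(\vn,\vw) - \FID(P,Q)} \le \frac{C_k d^{k+2}}{N^{k+1}}\sum_{j\in[\ell]}\frac{|\vw_j|}{\vc_j^{k+1}} \lesssim_{k,\vc,\vw} \frac{d^{k+2}}{N^{k+1}}.\]
This is at most $\eps/2$ once $N \gtrsim_{k,\vc,\vw} (d^{k+2}/\eps)^{1/(k+1)} = d(d/\eps)^{1/(k+1)}$.

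\textbf{Variance.} The $\hdd_{\vn_j}$ are computed from disjoint samples, so they are independent and
\[\Var[\hD(\vn,\vw)] = \sum_j \vw_j^2 \Var[\hdd_{\vn_j}] \lesssim \sum_j \vw_j^2\Par{\frac{d}{\vc_j N} + \frac{d^2}{\vc_j^2N^2}},\]
by Theorem~\ref{thm:first-order-plugin-bias-variance}, whose bound holds for all $n$. The right-hand side is $\lesssim_{\vc,\vw} \frac dN + \frac{d^2}{N^2}$. Chebyshev's inequality then gives deviation at most $\eps/2$ from the mean with probability at least $2/3$, provided $\Var \le \eps^2/12$. This requires $N \gtrsim d/\eps^2$, together with $d^2/N^2 \lesssim \eps^2$, i.e.\ $N \gtrsim d/\eps$; the latter already follows from $N \gtrsim d/\eps^2$ since $\eps<1$.

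Combining the bias and variance bounds via the triangle inequality yields \eqref{eq:fid_estimation_goal}. A minor point is that $\vn = N\vc$ must consist of integers. I will assume $N$ is chosen so that this holds, or else round each $\vn_j$; rounding perturbs $\vc$, and hence $\calW_k(\vc)$, only negligibly, so I will take $\vc,\vw$ as given with $N\vc \in \N^\ell$. There is no real obstacle here: all of the substantive work is done by Proposition~\ref{prop:uniform-psd-expansion}, and the remaining care is in tracking the constant dependence on $(k,\vc,\vw)$ and confirming that the condition $\min_j \vn_j \ge n_0(k)$ is absorbed into the stated lower bound on $N$.
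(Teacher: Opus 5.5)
Your proposal is correct and takes essentially the same route as the paper, which just says the argument is analogous to Corollary~\ref{cor:fidinf}: bias from Lemma~\ref{lem:extrapolation-bias-cancellation} with Proposition~\ref{prop:uniform-psd-expansion} at order $k$, variance from Theorem~\ref{thm:first-order-plugin-bias-variance} plus independence of the levels, then Chebyshev's inequality. You also make explicit two details the paper leaves implicit: that $\min_j \vn_j \ge n_0(k)$ is absorbed into the lower bound on $N$, and that the $d^2/N^2$ variance term is dominated.
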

\begin{proof}
The proof is completely analogous to Corollary~\ref{cor:fidinf}, taking a general $k$ in Proposition~\ref{prop:uniform-psd-expansion}.
\end{proof}

\subsection{Choosing extrapolation weights}
\label{ssec:choosing-extrapolation-weights}

Theorem~\ref{thm:general-order-extrapolation}, our high-order extension to $\fidinf$ (Corollary~\ref{cor:fidinf}), requires a sample complexity whose dependence on $d$ and $\eps$ is fixed for any choice of the extrapolation parameters $\vc, \vw$. In this subsection, we describe and motivate several concrete choices of $\vc, \vw$ used in our empirical evaluations.

\textbf{Polynomial regression.} Our starting point is the relationship
\begin{equation}\label{eq:fid_regress}
\begin{aligned}
    \E \hdd_{c N} &\approx \FID(P, Q) + \frac{\va_1}{cN} + \frac{\va_2}{c^2 N^2} + \ldots + \frac{\va_k}{c^k N^k}  \\
    &= \vth_0 + \vth_1 x + \vth_2 x^2 + \ldots + \vth_k x^k,
\end{aligned}
\end{equation}
derived in Assumption~\ref{assum:polynomial-expectation-expansion}, where $x \defeq \frac 1 c$ and $\vth_i \defeq \frac{\va_i}{N^i}$ for $i \ge 1$. Our goal is to estimate the constant coefficient term $\vth_0 = \FID(P, Q)$. To do so, a natural strategy is to solve a regression problem in observed estimates $\hdd_{\vc_i N}$ at $\ell \ge k + 1$ choices of $\vc_i$. Let $\vy \in \R^\ell$, $\mx \in \R^{\ell \times k + 1}$ have
\[\vy_i \defeq \hdd_{\vc_i N},\quad \mx_{ij} \defeq \vc_i^{-(j-1)}, \quad j = 1,2,...,k+1.\]
Then the relationship \eqref{eq:fid_regress} can be described by $\vy \approx \mx \vth$. To solve for $\vth$, we use a weighted regression problem, induced by a positive diagonal matrix $\mr = \diag{\vr} \in \R^{\ell \times \ell}$:
\[\hvth \defeq \arg\min_{\vth \in \R^{k + 1}} \Par{\vy - \mx\vth}^\top\mr\Par{\vy - \mx\vth}.\]
In principle, any $\mr \succ \0_{\ell \times \ell}$ can be used, but we focus on diagonal $\mr$ due to the independence of the $\hdd_{\vc_i N}$ calls.
When $\vc$ has at least $k + 1$ distinct entries, $\mx$ has full column rank, so the problem above has the solution
\[
\hvth_0 = \ve_1^\top\Par{\mx^\top\mr\mx}^{-1}\mx^\top\mr \vy.
\]
Finally, our extrapolation estimator \eqref{eq:extrapolation-estimator} has the form $\hD(\vn, \vw) = \vw^\top \vy$. Thus, to have $\hD(\vn, \vw)$ match our derived expression for $\hvth_0$ we take
\begin{equation}\label{eq:regress_weights}
\vw = \mr\mx\Par{\mx^\top\mr\mx}^{-1}\ve_1.
\end{equation}
Note that \eqref{eq:regress_weights} requires specification of the coefficient vector $\vc \in \R^\ell$ inducing $\mx$, and $\mr$ inducing the regression problem.
We conclude by showing that for these choices of $\vc$, $\mr$, Theorem~\ref{thm:general-order-extrapolation} applies to \eqref{eq:regress_weights}.

\begin{lemma}
\label{lem:ols-va-exact-order}
Fix $(k, \ell)$ with $\ell \ge k + 1$, $\vc \in (0, 1)^\ell$ with at least $k + 1$ distinct entries, and
$\mr = \diag{\vr} \succ \0_{\ell \times \ell}$. The weights~\eqref{eq:regress_weights}
satisfy
\begin{equation*}
    \vw\in\calW_k(\vc)\setminus\calW_{k+1}(\vc).
\end{equation*}
\end{lemma}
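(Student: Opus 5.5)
The plan is to read the two membership conditions directly off the normal equations of the weighted regression. Membership in $\calW_k(\vc)$ should be immediate. Exclusion from $\calW_{k+1}(\vc)$ should follow from a positivity argument that exploits $\vc_j > 0$ and $\vr_j > 0$.

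\textbf{Step 1: $\vw \in \calW_k(\vc)$.} Because $\vc$ has at least $k+1$ distinct entries, the Vandermonde-type matrix $\mx$ has full column rank. Since $\mr \succ \0$, the matrix $\mx^\top \mr \mx$ is invertible, so \eqref{eq:regress_weights} is well defined. Then
\[\mx^\top \vw = \mx^\top\mr\mx\Par{\mx^\top\mr\mx}^{-1}\ve_1 = \ve_1.\]
The $(i+1)$-th row of this identity reads $\sum_{j\in[\ell]} \vw_j \vc_j^{-i} = \ind_{i=0}$ for $0 \le i \le k$. These are exactly the defining constraints of $\calW_k(\vc)$ in \eqref{eq:order-k-exact-weights}.

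\textbf{Step 2: $\vw \notin \calW_{k+1}(\vc)$.} Set $\vu \defeq (\mx^\top\mr\mx)^{-1}\ve_1 \in \R^{k+1}$ and $x_j \defeq \vc_j^{-1} > 0$. Define the polynomial $p(x) \defeq \sum_{i=1}^{k+1} \vu_i x^{i-1}$, which has degree at most $k$. Then $\vw = \mr\mx\vu$ gives $\vw_j = \vr_j\, p(x_j)$ for each $j \in [\ell]$.

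Suppose for contradiction that $\vw \in \calW_{k+1}(\vc)$. Then $\sum_j \vw_j x_j^{i} = 0$ for every $1 \le i \le k+1$. Now take $q(x) \defeq x\,p(x)$. This polynomial has degree at most $k+1$ and no constant term, so $\sum_j \vw_j q(x_j)$ is a linear combination of the vanishing sums $\sum_j \vw_j x_j^i$ with $1 \le i \le k+1$. Hence
\[0 = \sum_{j\in[\ell]} \vw_j\, x_j\, p(x_j) = \sum_{j \in [\ell]} \vr_j\, x_j\, p(x_j)^2 .\]
Every $\vr_j x_j$ is strictly positive, so $p(x_j) = 0$ for all $j$, and therefore $\vw = \0$. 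This contradicts $\sum_j \vw_j = 1$ from Step 1.

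\textbf{Main obstacle.} The only real content is choosing the test polynomial $q(x) = x\,p(x)$ in Step 2. Its degree bound of $k+1$ and its zero constant term are exactly what let the hypothetical $(k+1)$-th order constraints annihilate it. Meanwhile, positivity of $\vr$ and $\vc$ turns the resulting identity into a sum of nonnegative terms that must all vanish. Everything else is routine linear algebra.
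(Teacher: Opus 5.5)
Your proof is correct and follows essentially the paper's argument. The paper observes that $\vw \in \calW_{k+1}(\vc)$ would force $\mx^\top\mc^{-1}\mr\mx(\mx^\top\mr\mx)^{-1}\ve_1 = \0_{k+1}$ with $\mc = \diag{\vc}$. Your identity $\sum_j \vr_j x_j p(x_j)^2 = 0$ is exactly the quadratic form of that same matrix evaluated at $\mathbf{u} = (\mx^\top\mr\mx)^{-1}\ve_1$, so your positivity step makes explicit why that display is false, which the paper leaves unstated.
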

\begin{proof}
First, $\vw \in \calW_k(\vc)$ is equivalent to $\mx^\top \vw = \ve_1$, which follows from \eqref{eq:regress_weights}. Next, letting $\mc \defeq \diag{\vc}$, if $\vw \in \calW_{k + 1}(\vc)$ then $\mx^\top \mc^{-1}\vw = \0_{k + 1}$, which would imply the false statement
\[\Par{\mx^\top \mc^{-1}\mr\mx}\Par{\mx^\top \mr\mx}^{-1}\ve_1 = \0_{k + 1}.\]
\end{proof}

\textbf{Concrete choices of $\vc$, $\mr$.} We now specify several choices of $\vc, \mr$ used in our experiments. We first describe several natural grids for $\vc \in (0, 1)^\ell$, taking $\vc_1 \le \vc_2 \le \ldots \le \vc_\ell$ for simplicity. Fix a ratio $\rho > 1$ between the largest and smallest sample sizes, and let
\[\vt_i \defeq \frac{i - 1}{\ell - 1},\quad i \in [\ell],\]
and define three grids over $i \in [\ell]$:
\begin{equation}\label{eq:grids_def}\begin{aligned}
\vs_i^{\mathrm{unif}} \defeq 1 + (\rho - 1)\vt_i, \quad
\vs_i^{\mathrm{UI}}  \defeq \frac{1}{1 - (1 - \rho^{-1})\vt_i},\quad
\vs_i^{\mathrm{CL}} \defeq \frac{1}{1 - (1 - \rho^{-1})\frac{1-\cos(\pi\vt_i)}{2}}.
\end{aligned}
\end{equation}
The three choices in \eqref{eq:grids_def} correspond to taking uniformly spaced $\vs_i$, uniformly spaced $\vs_i^{-1}$, and $\vs_i^{-1}$ placed according to the \emph{Chebyshev-Lobatto} nodes.
For each $\vs_i^\star$ in \eqref{eq:grids_def} where $\star \in \{\mathrm{unif},\mathrm{UI},\mathrm{CL}\}$, we define corresponding sampling coefficients $\vc_i^\star$ by taking $\vc_i^\star = \frac{\vs_i^\star}{\sum_{j \in [\ell]} \vs_j^\star}$.

Regarding $\mr$, perhaps the simplest choice is to take $\mr = \id_\ell$, in which case we call
\begin{equation}\label{eq:wols}\vw^{\mathrm{OLS}} \defeq \mx\Par{\mx^\top\mx}^{-1}\ve_1\end{equation}
the \emph{polynomial OLS} weights corresponding to $\vc$. On the other hand, applying Theorem~\ref{thm:first-order-plugin-bias-variance} at each sample size $\vn_i \approx \vc_i N$ suggests that the variance of each estimate $\vy_i$ is proportional to $\frac 1 {\vc_i}$. This suggests reweighting the regression problem to favor lower-variance summands. We call
\begin{equation}\label{eq:wva}\vw^{\mathrm{VA}} \defeq \mc\mx\Par{\mx^\top\mc\mx}^{-1}\ve_1,\end{equation}
where $\mr = \mc \defeq \diag{\vc}$, the \emph{variance-aware} weights corresponding to $\vc$.

\subsection{Sharpness of the fixed-order bias}
\label{ssec:extrapolation-bias-lower-bound}

We conclude the section by showing that the bias bound in Theorem~\ref{thm:general-order-extrapolation} is sharp at every fixed order $k$. In particular, for a simple isotropic family, \emph{any} fixed extrapolation rule \eqref{eq:extrapolation-estimator} of exact order $k$ (i.e., $\vw \in \calW_k(\vc) \setminus \calW_{k + 1}(\vc)$) has bias $\gtrsim \frac{d^{k + 2}}{N^{k + 1}}$, matching the prediction of Theorem~\ref{thm:general-order-extrapolation}. By Lemma~\ref{lem:ols-va-exact-order}, this applies to both the OLS weights \eqref{eq:wols} and the variance-aware weights \eqref{eq:wva}.

We now describe our hard instance.
Fix a constant $D_0>0$ (our distance to be estimated), and consider two Gaussians with mean $\0_d$ and covariances
\begin{equation}
\label{eq:extrapolation-hard-family}
    \mb=\frac12\id_d,
    \quad
    \ma=a\id_d,
    \quad
    a\defeq
    \left(\sqrt{\frac12}+\sqrt{\frac{D_0}{d}}\right)^2.
\end{equation}
Then
$D=d_{\mathrm{BW}}^2(\ma,\mb)=D_0$, and $a\to \half$ as
$d\to\infty$. Both covariances satisfy
the normalization in Assumption~\ref{assum:normalization} for all sufficiently large $d$. In Appendix~\ref{app:isotropic-extrapolation-coefficients}, we prove the following technical lemma regarding the coefficients in Assumption~\ref{assum:polynomial-expectation-expansion} for the plug-in estimator.

\begin{restatable}{lemma}{SharpIsotropicCoefficients}
\label{lem:sharp-isotropic-coefficients}
Fix $k\ge1$ and $D_0 > 0$. In the setting of Model~\ref{model:one-side} induced by~\eqref{eq:extrapolation-hard-family}, let
$\va$ be the coefficient vector in
Proposition~\ref{prop:uniform-psd-expansion} at order $k$.
For all sufficiently large $d$,
\[
    \Abs{\va_j}
    \asymp_{k,D_0}
    d^{j+1},
    \text{ for all } 
    j\in[k].
\]
\end{restatable}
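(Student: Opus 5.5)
The plan is to exploit isotropy to reduce everything to scalar eigenvalue statistics of a Wishart matrix. Here $\mc = (\mb^{1/2}\ma\mb^{1/2})^{1/2} = \sqrt{a/2}\,\id_d =: \lambda\id_d$ with $\lambda \to \half$. Then $\mr_t = (t+\lambda^2)^{-1}\id_d$ and $\mt_t = \frac{\lambda^2}{t+\lambda^2}\mh$, so
\[
Q_{r,\mc}(\mh) = \frac{2(-1)^r}{\pi}\Tr(\mh^r)\int_0^\infty \frac{\lambda^{2r}t^{1/2}}{(t+\lambda^2)^{r+1}}\dd t = \gamma_r \lambda \Tr(\mh^r).
\]
Here $\gamma_r$ is an explicit nonzero constant, namely the coefficient of $x^r$ in $-2\sqrt{1+x}$ for $r \ge 2$, with sign $(-1)^r$. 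Equivalently and more directly, $\Phi_{\mc}(\mh) = \lambda\Tr\bigl(\id_d + \tfrac12\mh - (\id_d+\mh)^{1/2}\bigr)\cdot 2$. Therefore $\E\Phi_{\mc}(\mh_m) = 2\lambda\,\E\sum_i f(\mu_i)$, where $f(x) = 1 + x/2 - \sqrt{1+x}$ and the $\mu_i$ are the eigenvalues of $\mh_m$.

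The coefficients $\beta_j(\mc)$ from the proof of Proposition~\ref{prop:uniform-psd-expansion} are then $\beta_j = \lambda\sum_{r=j+1}^{2j}\gamma_r\,[m^{-j}]\E\Tr(\mh_m^r)$. Here $[m^{-j}]$ denotes the coefficient in the exact polynomial expansion of the Wishart moments, $\E\Tr((\mw/m - \id)^r) = \sum_j c_{r,j}(d)m^{-j}$. These moments are classical: by the genus expansion, $c_{r,j}(d)$ is a polynomial in $d$ of degree exactly $j+1$ for $\lceil r/2\rceil \le j \le r-1$. I would extract the leading coefficient $\kappa_{r,j}$ of $d^{j+1}$ in $c_{r,j}$. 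This is a planar-type (free Poisson) count obtained by taking $d,m\to\infty$ with $d/m=z$ fixed: $m^{-j}d^{j+1}$ is the term $d\cdot z^{j}$ in $\E\Tr(\mh^r)\approx d\int x^r\,\dd\nu_z(x)$, where $\nu_z$ is the centered Marchenko--Pastur law with ratio $z$ (shifted by $-1$). Lower-order contributions in $d$ to $c_{r,j}$ are $O(d^{j})$, which is negligible.

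Thus the leading behavior is $\beta_j = \lambda d^{j+1}\kappa_j + O(d^j)$, where $\kappa_j \defeq [z^j]\int g(x)\,\dd\nu_z(x)$ for the truncated series of $2f$. Since $\Tr(\mb)=d/2$ contributes only $O(d)$ to $\va_1$, and $\va_i = \sum_{j\le i}\binom{i-1}{j-1}\beta_j + O(d)$ with the $j=i$ term dominant ($d^{i+1}$ versus $d^{j+1}$), it suffices to show $\kappa_j \ne 0$ for every $j$. Note that $\lambda \ge \half$ for large $d$, which gives the $D_0$-dependent constants.

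The main obstacle is this nonvanishing. I would compute $F(z) \defeq \int (1 + x/2 - \sqrt{1+x})\,\dd\nu_z(x)$ as a formal power series in $z$. This equals $\int_{\mathrm{MP}_z}(y/2 + 1/2 - \sqrt{y})\,\dd y = 1 - \int\sqrt{y}\,\dd\mathrm{MP}_z(y)$, since MP has mean 1. The relevant series is then $\kappa_j = -2[z^j]\int \sqrt{y}\,\dd\mathrm{MP}_z$. I would try to show that all coefficients of $\int\sqrt y\,\dd\mathrm{MP}_z$ are strictly negative for $j \ge 1$, which is plausible by concavity and the structure of free cumulants (all free cumulants of MP equal $z^{r-1}$). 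A route is the subordination/moment formula $\int y^{s}\,\dd\mathrm{MP}_z = \sum_{k}\frac{1}{k+1}\binom{s}{k}\binom{s-1}{k}z^{k}$, valid analytically in $s$ via Narayana polynomials. At $s=\frac12$ the coefficients are $\frac{1}{k+1}\binom{1/2}{k}\binom{-1/2}{k}$. For $k \ge 1$, $\binom{1/2}{k}$ has sign $(-1)^{k-1}$ and $\binom{-1/2}{k}$ has sign $(-1)^k$, so the product is strictly negative. This gives $\kappa_j > 0$, with the exact value $\kappa_j = -\frac{2}{j+1}\binom{1/2}{j}\binom{-1/2}{j}$. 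I would justify the fractional-moment formula and the interchange with the finite-$d$ expansion by matching the exact coefficients $c_{r,j}$ at leading order in $d$, and then conclude $|\va_j|\asymp d^{j+1}$.
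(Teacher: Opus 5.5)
Your proof is correct. It arrives at the paper's constants: your $\kappa_j=-\frac{2}{j+1}\binom{1/2}{j}\binom{-1/2}{j}$ is exactly the $\alpha_j$ of Lemma~\ref{lem:mp-square-root-moment}, so both arguments give $\va_j/d^{j+1}\to\frac12\alpha_j>0$. The route, however, differs.

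\textbf{The paper's route.} The paper never passes to trace moments. It writes $\E[\hdd_n]-D_0$ exactly in terms of $\E\Tr\mw_m^{1/2}$ and cites the Marchenko--Pastur limit of $\frac1d\E\Tr\mw_m^{1/2}$. It then matches the resulting limit $\frac12\phi(\gamma)$ against the uniform expansion of Proposition~\ref{prop:uniform-psd-expansion}. This is done along convergent subsequences of the bounded vectors $(\va_j/d^{j+1})_{j\in[k]}$, using uniqueness of Taylor coefficients in $\gamma$.

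\textbf{Your route.} You use isotropy to write $Q_{r,\mc}(\mh)=\lambda\gamma_r\Tr(\mh^r)$. This turns each $\beta_j$ into a finite combination of the exact coefficients $c_{r,j}(d)$ of $\E\Tr\mh_m^r$. You then read off the top-degree coefficient in $d$ as a planar count and sum the series via the fractional MP moment formula.

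\textbf{Trade-offs.} Your version is more quantitative: it gives $\va_j/d^{j+1}=\lambda\alpha_j+O_k(1/d)$ with no compactness step. It pays for this with more combinatorial input. The paper's version is softer and needs only one limit theorem for a bounded spectral functional.

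\textbf{Places to tighten.}
\begin{itemize}
\item You do not need the real-Wishart genus expansion, which involves non-orientable maps, for the degree bound. By Wick's theorem $c_{r,j}$ is a polynomial in $d$. In the isotropic case $b_{r,j}=\lambda\gamma_r c_{r,j}$, so Lemma~\ref{lem:wishart-collision-expansion} already gives $|c_{r,j}(d)|\lesssim_r d^{j+1}$, hence degree at most $j+1$.
\item The top coefficient then follows from the classical moment-method convergence $\frac1d\E\Tr\mh_m^r\to\int x^r\,\dd\nu_z(x)$, which holds for real Gaussians.
\item Agreement with Narayana polynomials at integer $s$ does not by itself determine the value at $s=\frac12$. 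You would need a growth argument such as Carlson's theorem. More simply, the identity $\int\sqrt{y}\,\dd\mathrm{MP}_z(y)=1-\frac12\phi(z)$ is exactly the content of the lemma the paper cites, so you can take it from there.
\item Exchanging $\sum_r$ with $[z^j]$ is routine but should be stated. For small $z$ the support of $\nu_z$ lies well inside $(-1,1)$, and the noncrossing-partition counts are at most $4^r$. This gives absolute convergence for $|z|<\frac1{16}$.
\end{itemize}
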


Using Lemma~\ref{lem:sharp-isotropic-coefficients}, we now prove our lower bound against extrapolation strategies. Note that Proposition~\ref{prop:extrapolation-bias-lower-bound} applies to the plug-in estimator \eqref{eq:plugin}, an order-$0$ extrapolation, as $1 \in \calW_0(c)$ for any $c$.

\begin{proposition}
\label{prop:extrapolation-bias-lower-bound}
Fix $k\ge 0$, $\ell \ge k + 1$, $D_0>0$, $\vc \in (0, 1]^\ell$, and
$\vw\in\calW_k(\vc)\setminus\calW_{k+1}(\vc)$.
In the setting of Model~\ref{model:one-side} induced by~\eqref{eq:extrapolation-hard-family}, there exist
constants $c,C>0$, depending only on $(k,\vc,\vw,D_0)$, such that,
for all sufficiently large $d$ and $N\ge Cd$, following the notation \eqref{eq:extrapolation-estimator},
\begin{equation*}
    c\frac{d^{k+2}}{N^{k+1}}
    \le
    \Abs{\bbE[\hD(\vn, \vw)]-D_0}
    \le
    C\frac{d^{k+2}}{N^{k+1}}.
\end{equation*}
\end{proposition}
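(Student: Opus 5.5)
The plan is to expand the bias of $\hD(\vn,\vw)$ to order $k+1$ and read off the leading surviving term. First I would apply Proposition~\ref{prop:uniform-psd-expansion} at order $k+1$ rather than $k$. For all $\vn_j = \vc_j N \ge n_0(k+1) = C_{k+1} d$, which holds once $N \ge Cd$ with $C \ge C_{k+1}/\min_j \vc_j$, this gives
\[\E[\hdd_{\vn_j}] = D_0 + \sum_{i\in[k+1]} \frac{\va_i}{\vn_j^i} + R_{k+1,\vn_j},\qquad |R_{k+1,\vn_j}| \le \frac{C_{k+1}d^{k+3}}{\vn_j^{k+2}}.\]
Summing with weights $\vw$ as in the proof of Lemma~\ref{lem:extrapolation-bias-cancellation}, the conditions $\vw\in\calW_k(\vc)$ kill the constant and all orders $i\le k$. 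This leaves
\[\E[\hD(\vn,\vw)] - D_0 = \frac{\va_{k+1}}{N^{k+1}}\,S + \sum_{j} \vw_j R_{k+1,\vn_j},\qquad S \defeq \sum_{j\in[\ell]}\frac{\vw_j}{\vc_j^{k+1}}.\]

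Next, $\vw\notin\calW_{k+1}(\vc)$ together with $\vw\in\calW_k(\vc)$ means precisely that $S\neq 0$. Since $S$ depends only on $(k,\vc,\vw)$, we have $|S|\asymp_{k,\vc,\vw} 1$. Lemma~\ref{lem:sharp-isotropic-coefficients}, applied at order $k+1$, gives $|\va_{k+1}|\asymp_{k,D_0} d^{k+2}$ for large $d$. There is one subtlety: the vector $\va$ at order $k+1$ must agree in its first $k+1$ coordinates with the one used at any lower order. This holds because the coefficients in the proof of Proposition~\ref{prop:uniform-psd-expansion} are defined order-independently, via $\va_i = \ind_{i=1}\Tr(\mb)+\sum_{j\le i}\binom{i-1}{j-1}\beta_j(\mc)$ with $\beta_j$ built from $b_{r,j}$ for $r\le 2j$. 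So the main term has magnitude $\asymp |S|\,d^{k+2}/N^{k+1}$.

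The remainder is at most $\sum_j |\vw_j|\vc_j^{-(k+2)}\,C_{k+1} d^{k+3}/N^{k+2} = O_{k,\vc,\vw}\big(\tfrac{d}{N}\cdot \tfrac{d^{k+2}}{N^{k+1}}\big)$. Choosing $C$ large enough, depending on $(k,\vc,\vw,D_0)$, makes $d/N \le 1/C$ small enough that the remainder is at most half the main term. This yields both the lower bound with $c = \tfrac12|S|\cdot\min$-constant and the upper bound. For $k=0$ the same argument runs with $\calW_0(\vc)=\{\sum\vw_j=1\}$, using $\va_1$.

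I expect no real obstacle beyond bookkeeping, since the lower bound on $|\va_{k+1}|$ is supplied by Lemma~\ref{lem:sharp-isotropic-coefficients}. The point that needs care is the order-independence of the coefficients $\va_i$ noted above, together with ensuring $n_0(k+1)$ is absorbed into the condition $N\ge Cd$.
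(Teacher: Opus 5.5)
Your proposal is correct and follows the paper's proof essentially verbatim. You expand at order $k+1$, use $\vw\in\calW_k(\vc)\setminus\calW_{k+1}(\vc)$ to isolate a nonvanishing multiple of $\va_{k+1}$, invoke Lemma~\ref{lem:sharp-isotropic-coefficients}, and absorb the $O(d^{k+3}/N^{k+2})$ remainder via $N\ge Cd$. Your order-independence remark is harmless but not actually needed: you apply both Proposition~\ref{prop:uniform-psd-expansion} and Lemma~\ref{lem:sharp-isotropic-coefficients} at the same order $k+1$, so the same coefficient vector appears in both.
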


\begin{proof}
Applying Proposition~\ref{prop:uniform-psd-expansion} at order $k+1$, the first $k$ terms cancel, so
\begin{equation*}
    \bbE[\hD(\vn, \vw)]-D_0
    =
    \frac{\va_{k+1}}{N^{k+1}}
    \sum_{i\in[\ell]}\frac{\vw_i}{\vc_i^{k+1}}
    +
    O_{k,\vc,\vw}\paren{\frac{d^{k+3}}{N^{k+2}}}.
\end{equation*}
Because $\vw \not\in \calW_{k +1}(\vc)$, the coefficient of $\va_{k + 1}$ in the above expression does not vanish.
The claim follows by plugging in the estimate on $\va_{k + 1}$ from Lemma~\ref{lem:sharp-isotropic-coefficients} and using our lower bound on $N$.
\end{proof}
\section{Relative Taylor Debiasing}
\label{sec:optimal-estimation}

The sample complexity of Theorem~\ref{thm:general-order-extrapolation} approaches the lower bound in Proposition~\ref{prop:gaussian-w2-lower-bound} as k increases, but remains suboptimal at every fixed order.  
In this section, we develop \emph{relative Taylor debiasing} (RTD), a new estimator for solving Problem~\ref{prob:FD_est} whose sample complexity matches the lower bound up to a constant factor.

At a high level, our strategy is based on the fact that the degree-$k$ correction term $Q_{k, \mc}(\mh)$ in \eqref{eq:order_k} can be identified with a symmetric multilinear form $\calQ_{k, \mc}[\mh^{\otimes k}]$. In our applications, we take $\mh = \mh_m = \frac 1 m \sum_{a \in [m]} \my_a$ to be a (whitened) sample covariance, and hence
\[Q_{k, \mc}(\mh_m) = \frac 1 {m^k} \sum_{a_1, \ldots, a_k \in [m]} \calQ_{k, \mc}[\my_{a_1},\ldots, \my_{a_k}].\]
In Lemma~\ref{lem:wishart-collision-expansion}, we tracked repeated indices amongst the $\{a_i\}_{i \in [k]}$, and applied Wishart moment bounds to bound surviving terms. Our RTD estimator instead removes terms with repeated indices directly, by estimating each multilinear slot using independent group averages.  

In Section~\ref{ssec:optimal-direct-debiasing}, we construct the covariance estimator and bound its bias and variance. In Section~\ref{ssec:optimal-efficient-evaluation}, we give its evaluation algorithm and prove that RTD solves Problem~\ref{prob:FD_est} using $n \asymp \frac d {\eps^2}$ samples.

We require a standard helper fact.

\begin{fact}[Chapter 4, \cite{Vershynin_2026}]\label{fact:pilot}
Let $\{\vX_i\}_{i \in [n]} \simiid\calN(\vmu,\msig)$ and let
$\widehat\msig$ be the empirical covariance. For every fixed
$\rho\in(0,1)$, if $n\gtrsim_\rho d$, then with probability at least
$0.99$,
\[
    (1-\rho)\widehat\msig
    \preceq
    \msig
    \preceq
    (1+\rho)\widehat\msig.\]
\end{fact}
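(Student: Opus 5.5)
The plan is to whiten the samples, which reduces the claim to an operator-norm bound for a centered Wishart matrix, and then to invoke the standard nonasymptotic bound on the extreme singular values of a Gaussian matrix.

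\textbf{Whitening.} Write $\vX_i = \vmu + \msig^{1/2}\vg_i$ with $\vg_i \simiid \calN(\vzero, \id_d)$. The empirical covariance then factors as $\widehat\msig = \msig^{1/2}\mw\msig^{1/2}$, where
\[\mw \defeq \frac 1 n \sum_{i \in [n]} (\vg_i - \bar\vg)(\vg_i - \bar\vg)^\top, \qquad \bar\vg \defeq \frac 1 n \sum_{i \in [n]} \vg_i.\]
By Fact~\ref{fact:rescaling} applied with $\mb \gets \id_d$, and with $m \defeq n-1$, we have $\mw \overset{d}{=} \frac{m}{n}(\id_d + \mh_m)$, where $\mh_m$ is as in \eqref{eq:wishart_m}. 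If the paper's $\widehat\msig$ uses the $\frac{1}{n-1}$ normalization instead, the scalar prefactor is simply $1$; either way it lies in $[1 - \frac 1 n, 1]$.

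\textbf{Reduction to an operator-norm bound.} Suppose $\normop{\mw - \id_d} \le \rho'$. Then $(1-\rho')\id_d \preceq \mw \preceq (1+\rho')\id_d$. Conjugating by $\msig^{1/2}$ preserves the Loewner order, so
\[(1-\rho')\msig \preceq \widehat\msig \preceq (1+\rho')\msig.\]
This holds even when $\msig$ is singular: both sides vanish on $\ker\msig$, so no invertibility of $\msig$ is needed. Rearranging gives
\[\frac{1}{1+\rho'}\widehat\msig \preceq \msig \preceq \frac{1}{1-\rho'}\widehat\msig.\]
It then suffices to choose $\rho' = \rho'(\rho) > 0$ small enough that $\frac{1}{1-\rho'} \le 1+\rho$ and $\frac{1}{1+\rho'} \ge 1-\rho$; for instance $\rho' = \frac{\rho}{2}$ works.

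\textbf{Operator-norm concentration.} Let $\mg \in \R^{d\times m}$ have i.i.d.\ $\calN(0,1)$ entries, so that $\mh_m \overset{d}{=} \frac 1 m \mg\mg^\top - \id_d$. The standard Gaussian singular value bound (Chapter 4 of \cite{Vershynin_2026}) states that with probability at least $1 - 2e^{-t^2/2}$, all singular values of $\mg$ lie in $[\sqrt m - \sqrt d - t, \sqrt m + \sqrt d + t]$. On this event,
\[\normop{\mh_m} \le 3\max(\delta, \delta^2), \qquad \delta \defeq \frac{\sqrt d + t}{\sqrt m}.\]
We take $t$ to be a constant with $2e^{-t^2/2} \le 0.01$, and then $n \ge C_\rho d$ for a large enough $C_\rho$ that $\delta \le \frac{\rho'}{10}$ and $\frac 1 n \le \frac{\rho'}{10}$. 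The triangle inequality then gives
\[\normop{\mw - \id_d} \le \frac{m}{n}\normop{\mh_m} + \Par{1 - \frac m n} \le \rho'.\]
This concludes the argument.

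The only delicate points are the mean-centering, which is absorbed exactly by Fact~\ref{fact:rescaling}, and the singular $\msig$ case, which is handled by the conjugation argument above. The rest is the cited singular value estimate.
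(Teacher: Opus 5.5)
Your proof is correct. It is the standard argument that the paper leaves to its citation, since the paper gives no proof of this fact. You whiten by $\msig^{1/2}$, which handles singular $\msig$ because congruence preserves the Loewner order. You absorb the mean-centering via Fact~\ref{fact:rescaling}. You then apply the two-sided Gaussian singular value bound to the isotropic Wishart matrix, and this yields the relative two-sided Loewner bound rather than only an additive $\normop{\widehat\msig-\msig}$ bound. One small point on the citation: if the version of the singular value bound you cite is the sub-Gaussian form, with an absolute constant in front of $\sqrt d + t$, rather than the sharp Gaussian (Gordon-type) form, your argument still goes through unchanged after enlarging $C_\rho$.
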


Our RTD strategy leverages Fact~\ref{fact:pilot} by using $O(d)$ pilot samples to first construct a constant-factor approximation to $\mb$ in Model~\ref{model:one-side}, used to center our Taylor expansion-based estimator.

\subsection{Estimating $\BW$ via high-order debiasing}
\label{ssec:optimal-direct-debiasing}

Recall from Theorem~\ref{thm:first-order-plugin-bias-variance} that among the two components of $\FID$, i.e., the mean-based term $\norm{\vmu - \vnu}_2^2$ and covariance-based term $\BW^2(\ma, \mb)$, the plug-in estimator for the mean-based term already achieves the optimal $\approx \frac d {\eps^2}$ sample complexity for Problem~\ref{prob:FD_est}. Therefore, in this subsection, we focus on developing the covariance-based component of our RTD estimator.

To begin, throughout the section we let $\tmb \in \PSD^{d \times d}$ be a precomputed matrix satisfying
\begin{equation}\label{eq:tmbd_def}(1 - \rho) \tmb \preceq \mb \preceq (1 + \rho)\tmb,\end{equation}
for a small universal constant $\rho \in (0, \half)$ to be chosen later. In Theorem~\ref{thm:rtd-estimation}, we take $\tmb$ to be the output of Fact~\ref{fact:pilot} from $O(d)$ pilot samples, which incurs a small constant failure probability.

We next introduce some notation used in this section: let
\begin{equation}
\label{eq:optimal-relative-localization}
    \me
    \defeq
    \widetilde\mb^{-1/2}
    (\mb-\widetilde\mb)
    \widetilde\mb^{-1/2},
    \quad
    \widetilde\mc
    \defeq
    \paren{
        \widetilde\mb^{1/2}
        \ma
        \widetilde\mb^{1/2}
    }^{1/2}.
\end{equation}
When $\mb$ is singular, we overload notation and let $\me$ be defined only on the span of $\mb$.
Then under \eqref{eq:tmbd_def}, 
$\norm{\me}_{\mathrm{op}}\le\rho$ and
$\norms{\widetilde\mb}_{\mathrm{op}},
\norms{\widetilde\mc}_{\mathrm{op}}\le 2$. Thus Proposition~\ref{prop:trace-resolvent-expansion} with $\mh \gets \me$, $\mb \gets \tmb$ gives
\begin{equation}
\label{eq:optimal-local-bures-series}
    d_{\mathrm{BW}}^2(\ma,\mb)
    =
    d_{\mathrm{BW}}^2(\ma,\widetilde\mb)
    +
    \inprod{\widetilde\mb-\widetilde\mc}{\me}
    +
    \sum_{k \ge 2}
    Q_{k,\widetilde\mc}(\me).
\end{equation}
Note that the constant term $\BW^2(\ma, \tmb)$ is explicit. 
Our estimator for \eqref{eq:optimal-local-bures-series} uses independent groups of samples to estimate the next $L$ terms of the series \eqref{eq:optimal-local-bures-series}, truncating the rest.

We now state our estimator in more detail. Assume that we have samples $\{\vX_a\}_{a \in [m]} \sim \calN(\0_d, \mb)$,\footnote{Given two samples from $\calN(\vnu, \mb)$ as in Model~\ref{model:one-side}, their difference is $\sim \calN(\0_d, 2\mb)$, and hence downscaling their difference by $\sqrt{2}$ gives a fresh sample from $\calN(\0_d, \mb)$. We omit this distinction for brevity in this subsection.} and define the whitened sample covariance matrices
\begin{equation}\label{eq:y_def}\my_a \defeq \tmb^{-1/2} \vX_a\vX_a^\top \tmb^{-1/2} - \id_d,\quad a \in [m].\end{equation}
Thus, the $\my_a$ are independent and satisfy
$\bbE\my_a=\me$. 

For each degree $2 \le k \le L$, we produce an unbiased estimate of $Q_{k, \tmc}(\me)$ as follows. We first partition $[m]$ into $k$ disjoint groups
$G_{k,1},\ldots,G_{k,k}$ satisfying $|G_{k,i}|\asymp \frac m k$ for all $i \in [k]$, and set
\begin{equation}
\label{eq:cross-fit-group-means}
    \bmy_{k,i}
    \defeq
    \frac1{|G_{k,i}|}
    \sum_{j\in G_{k,i}}\my_j,
    \text{ for all } i\in[k].
\end{equation}
Note that different degrees $k$ reuse the same samples $\{\my_a\}_{a \in [m]}$, but the group means $\{\bmy_{k, i}\}_{i \in [k]}$ are independent. Our variance bound requires handling dependences between degrees.

Finally, we are ready to define the covariance component of our RTD estimator: letting
\begin{equation}
\label{eq:cross-fit-q}
    \widehat Q_k
    \defeq
    \mathcal Q_{k,\widetilde\mc}
    \Brack{
        \bmy_{k,1},
        \ldots,
        \bmy_{k,k}
    },
\end{equation}
where $\calQ_{k, \tmc}$ is the multilinear form corresponding to $Q_{k, \tmc}$ from \eqref{eq:order_k},
we let
\begin{equation}
\label{eq:cross-fit-bures-estimator}
    \widehat D_{\mathrm{BW},L}
    \defeq
    d_{\mathrm{BW}}^2(\ma,\widetilde\mb)
    +
    \frac1m
    \sum_{a \in [m]}
    \inprod{\widetilde\mb-\widetilde\mc}{\my_a}
    +
    \sum_{k=2}^L
    \widehat Q_k.
\end{equation}
Our final RTD estimator simply uses the plug-in estimator $\norm{\vmu - \hvnu_n}_2^2$ from Theorem~\ref{thm:first-order-plugin-bias-variance}, and adds the estimator \eqref{eq:cross-fit-bures-estimator}. By independence of group means and multilinearity,
\begin{equation}\label{eq:expect_dbwl}\E\Brack{ \widehat D_{\mathrm{BW},L}} = \BW^2(\ma, \tmb) + \inprod{\tmb - \tmc}{\me} + \sum_{k = 2}^L Q_{k, \tmc}(\me).\end{equation}
Comparing with \eqref{eq:optimal-local-bures-series}, our estimator has two sources of error: a bias due to the truncated tail terms $k \ge L + 1$, and the variance of $\hD_{\mathrm{BW}, L}$ itself.
Our main technical result bounds these error sources.

\begin{proposition}
\label{prop:cross-fit-statistical}
Let $\tmb$ satisfy \eqref{eq:tmbd_def}, and assume $\{\my_a\}_{a \in [m]}$ follow \eqref{eq:y_def}. If $m \gtrsim d + L$, then
\[
    \Abs{
        \bbE[
            \widehat D_{\mathrm{BW},L}
        ]
        -
        d_{\mathrm{BW}}^2(\ma,\mb)
    }
    \lesssim
    d\rho^{L+1},\quad
    \Var\left[
        \widehat D_{\mathrm{BW},L}
    \right]
    \lesssim
    \frac dm.
\]
\end{proposition}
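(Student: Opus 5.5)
The plan treats the bias and the variance separately. The bias is nearly immediate from Proposition~\ref{prop:trace-resolvent-expansion}. The variance needs a multilinear expansion of each $\widehat Q_k$ around $\me$, followed by Lemma~\ref{lem:multilinear-hs-second-moment}.

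\textbf{Bias.} Comparing \eqref{eq:expect_dbwl} with \eqref{eq:optimal-local-bures-series}, the bias is exactly $-R_{L,\tmc}(\me)$. By Proposition~\ref{prop:trace-resolvent-expansion} and \eqref{eq:2c_bound}, with $\normsop{\me}\le\rho<\half$ and $\normsop{\tmc}\le 2$,
\[|R_{L,\tmc}(\me)|\le \frac{Q_{2,\tmc}(\me)\rho^{L-1}}{1-\rho}\le \normf{\me}^2\rho^{L-1}\le d\rho^{L+1},\]
using $\normf{\me}^2\le d\normsop{\me}^2$.

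\textbf{Setup for the variance.} The samples are $\vX_a=\mb^{1/2}\vg_a$, so $\my_a=(\id_d+\me)^{1/2}\vg_a\vg_a^\top(\id_d+\me)^{1/2}-\id_d$ in distribution (on the span of $\mb$). Write $\mz_a\defeq\my_a-\me$. For deterministic $\mm$, Fact~\ref{fact:wishart_1} gives
\[\E\inprod{\mm}{\mz_a}^2=2\normf{(\id_d+\me)^{1/2}\mm(\id_d+\me)^{1/2}}^2\le 2(1+\rho)^2\normf{\mm}^2.\]
Hence each centered group mean $\bmy_{k,i}-\me$ satisfies \eqref{eq:multilinear-input-second-moment} with $v_k\lesssim k/m$.

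\textbf{The linear term.} It has variance $\frac 2m\normf{(\id_d+\me)^{1/2}(\tmb-\tmc)(\id_d+\me)^{1/2}}^2\lesssim \frac dm$.

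\textbf{The degree-$k$ terms.} Substitute $\bmy_{k,i}=\me+\mz_{k,i}$ and expand by multilinearity:
\[\widehat Q_k-Q_{k,\tmc}(\me)=\sum_{\emptyset\ne S\subseteq[k]}\calQ_{k,\tmc}\Brack{\mz_{k,S},\me^{\otimes k-|S|}}.\]
The summands are centered. Summands for distinct $S$ are uncorrelated, because whenever $S\ne S'$ some independent centered input appears in only one of them. So by Lemma~\ref{lem:multilinear-hs-second-moment},
\[\Var[\widehat Q_k]\le\sum_{j=1}^k\binom kj v_k^j\,\normf{\calQ_{k,\tmc}\Brack{\cdot^{\otimes j},\me^{\otimes k-j}}}^2.\]

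\textbf{Main obstacle: Frobenius norms of the contracted forms.} I expect the hard step to be showing
\[\normf{\calQ_{k,\tmc}\Brack{\cdot^{\otimes j},\me^{\otimes k-j}}}^2\lesssim C^k d^{\max(j,1)}\rho^{2(k-j)}\]
for a universal constant $C$. I would attack this from the resolvent formula \eqref{eq:order_k}.
\begin{itemize}
\item Fixing a slot and summing $|\cdot|^2$ over an orthonormal basis $\{\me_r\}$ replaces that slot by a Frobenius norm of the remaining product of $\mc\mr_t^{1/2}(\cdot)\mr_t^{1/2}\mc$ factors. Each free slot therefore contributes one Frobenius-type factor, costing at most $d$.
\item Each fixed $\me$ factor contributes at most $\rho$ in operator norm, since $\normsop{\tmc\mr_t^{1/2}}\le1$.
\item The $t$-integral converges as in Lemma~\ref{lem:first_two_terms}, with the $t^{1/2}$ weight absorbed by the two resolvent factors $\mr_t$.
\item For $j=1$, the linear functional should have squared Frobenius norm $\lesssim k^2\normf{\me}^2\rho^{2(k-2)}\lesssim k^2 d\rho^{2k-2}$.
\end{itemize}
With $v_k\lesssim k/m$ and $m\gtrsim d+L$, so that $dv_k\lesssim k$ and $k/m\lesssim1$, I expect:
\begin{itemize}
\item the $j=1$ term to give $\lesssim\frac dm k^3\rho^{2k-2}$;
\item each $j\ge2$ term to give $\lesssim C^k k^j (d/m)^j\rho^{2(k-j)}\lesssim\frac dm (C'k)^{k}\rho^{2(k-j)}\cdot\frac dm$.
\end{itemize}
The factorial-type growth in $k$ is the worry here. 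I would try to avoid it by bounding $\binom kj v_k^j\le (ek v_k/j)^j$ and using $v_k\le k/m\le 1$ together with $d v_k/j\lesssim k/j$. I would then accept at worst $C^k$ growth and choose $\rho$ a sufficiently small constant so that $\rho^{2(k-j)}$ kills it, giving $\Var[\widehat Q_k]\lesssim\frac dm\,(C\rho)^{k-2}$.

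\textbf{Summing over degrees.} The $\widehat Q_k$ reuse samples and so are correlated. I would apply weighted Cauchy--Schwarz with weights $w_k=(C\rho)^{-k/2}$:
\[\Var\Big[\sum_k X_k\Big]\le\Big(\sum_k w_k^{-1}\Big)\sum_k w_k\Var X_k,\]
applied to the linear term and the $\widehat Q_k$. Because $\sum_k (C\rho)^{k/2}=O(1)$ and $\sum_k (C\rho)^{k/2-2}=O(1)$ for small $\rho$, this yields $\Var[\widehat D_{\mathrm{BW},L}]\lesssim\frac dm$.
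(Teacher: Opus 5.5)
Your bias bound, the linear term, the check of \eqref{eq:multilinear-input-second-moment} with $v_k\lesssim k/m$, and the weighted Cauchy--Schwarz step across degrees are all fine. Your observation that summands for distinct subsets $S$ are uncorrelated is even a small sharpening of the paper's triangle-inequality step. The gap is in the per-degree bound. You are missing the ingredient the paper isolates in Lemma~\ref{lem:symmetrized-trace-word-hs}: the factorial gain from full symmetrization of the multilinear form.

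The Frobenius bound you target, $\normf{\calQ_{k,\tmc}[\cdot^{\otimes j},\me^{\otimes k-j}]}^2\lesssim C^k d^j\rho^{2(k-j)}$, is true. Each unsymmetrized trace word has squared Frobenius norm at most $d^j$ times the product of squared operator norms, and averaging over permutations only helps. But this bound is too weak. Consider the fully free term $S=[k]$. It contains no copy of $\me$, hence no power of $\rho$, and your bounds give
\[\E\Brack{\calQ_{k,\tmc}[\mz_{k,1},\ldots,\mz_{k,k}]^2}\le v_k^k\normf{\calQ_{k,\tmc}}^2\lesssim\Par{\frac{Ckd}{m}}^k.\]
Under the hypothesis $m\gtrsim d+L$ (for example $m\asymp d$ when $d\ge L$), the ratio $kd/m$ is of order $k$, not small, for $k$ up to $L=\Theta(\log(d/\eps))$. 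So this term grows like $k^k$, and at $k=L$ it is superpolynomial in $d/\eps$.

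Your own bookkeeping shows this: $\binom kj v_k^j d^j\le(ek\,dv_k/j)^j$ still leaves $(ek)^k$ at $j=k$, and your display already contains $(C'k)^k$. That is not ``$C^k$ growth.'' Choosing $\rho$ small cannot absorb it, since $\rho^{2(k-j)}=1$ at $j=k$, and it is too weak whenever $j$ is a constant fraction of $k$. The factor $k^j$ comes from each group having only $\asymp m/k$ samples. Cancelling it requires the sharper estimate of Lemma~\ref{lem:optimal-partial-contraction},
\[\normf{\calQ_{k,\tmc}[\cdot^{\otimes j},\me^{\otimes k-j}]}^2\lesssim\rho^{2(k-j)}\frac{(d)_j}{(j-1)!}.\]
The paper proves this by Gaussian probing of the symmetrized trace word. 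The cross terms between permutations $\pi,\sigma$ are bounded by $d^{|\cyc(c_\pi^{-1}c_\sigma)|}$, so only pairs with the same cyclic order of the free slots reach the full $d^j$. The identity $\sum_{\pi\in S_j}d^{|\cyc(\pi)|}=(d)_j$ then yields the $1/(j-1)!$ gain.

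With this estimate, summing $\binom kj k^j\rho^{2(k-j)}/(j-1)!$ over $k\ge j$ is only exponential in $j$. The bound $(d)_j\le d(d+L)^{j-1}$ then turns what remains into a geometric series in $(d+L)/m$. As written, your argument proves $\Var[\widehat D_{\mathrm{BW},L}]\lesssim d/m$ only under the stronger assumption $m\gtrsim dL$. That would cost a $\log(d/\eps)$ factor in Theorem~\ref{thm:rtd-estimation}, which is precisely the loss the paper's remark says the factorial saving avoids.
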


The bias bound in Proposition~\ref{prop:cross-fit-statistical} is a direct corollary of Proposition~\ref{prop:trace-resolvent-expansion}, so we focus on proving the variance bound. After centering each group mean $\bmy_{k, i}$ around $\me$, the expansion of $\hQ_k - Q_{k, \tmc}(\me) = \calQ_{k, \tmc}[\bmy_{k, 1}, \ldots, \bmy_{k, k}] - \calQ_{k, \tmc}[\me, \ldots, \me]$ consists of terms of the form
\begin{equation}\label{eq:mixed_tensor}\calQ_{k, \tmc}[\cdot^{\otimes j}, \me^{\otimes k - j}]\end{equation}
applied to independent mean-zero matrices. To apply Lemma~\ref{lem:multilinear-hs-second-moment} to these tensors, we require understanding their Frobenius norms, so we provide such a bound in Lemmas~\ref{lem:symmetrized-trace-word-hs} and~\ref{lem:optimal-partial-contraction}.

\begin{lemma}
\label{lem:symmetrized-trace-word-hs}
Let $\mb_1,\ldots,\mb_j\in\R^{d\times d}$, and define a symmetric $j$-linear form $\calM: (\R^{d \times d})^j \to \R$ by
\begin{equation}
\label{eq:symmetrized-trace-word}
    \calM(\mx_1,\ldots,\mx_j)
    \defeq
    \frac1{j!}
    \sum_{\pi\in S_j}
    \Tr\paren{
       \mx_{\pi(1)}
        \mb_1\cdots
        \mx_{\pi(j)}\mb_j
    },
\end{equation}
where $S_j$ is the set of permutations $\pi: [j] \to [j]$.
Then letting $(d)_j \defeq d(d+1)\ldots(d + j - 1)$,
\begin{equation}
\label{eq:symmetrized-trace-word-hs}
    \normf{\calM}^2
    \le
    \frac{(d)_j}{(j-1)!}
    \prod_{\ell \in [j]}
    \norm{\mb_\ell}_{\mathrm{op}}^2.
\end{equation}
\end{lemma}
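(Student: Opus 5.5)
The plan is to expand $\calM$ into its unsymmetrized pieces and reduce the Frobenius norm to a sum over pairs of permutations. Each pair contributes a power of $d$ equal to a cycle count. The generating identity $\sum_{\tau\in S_j} d^{\mathrm{cyc}(\tau)} = d(d+1)\cdots(d+j-1) = (d)_j$ then produces the rising factorial in \eqref{eq:symmetrized-trace-word-hs}.

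\textbf{Setup.} For $\pi \in S_j$ write
\[\calT_\pi(\mx_1,\ldots,\mx_j) \defeq \Tr\Par{\mx_{\pi(1)}\mb_1\cdots\mx_{\pi(j)}\mb_j},\]
so that $\calM = \frac{1}{j!}\sum_{\pi} \calT_\pi$ and
\[\normf{\calM}^2 = \frac{1}{(j!)^2}\sum_{\pi,\sigma\in S_j}\inprod{\calT_\pi}{\calT_\sigma},\]
where the inner product is the entrywise one underlying \eqref{eq:multilinear-hs-definition}. I will use the matrix-unit orthonormal basis $\{\me_{ab}\}$ of $\R^{d\times d}$. In this basis, $\calT_{\mathrm{id}}(\me_{a_1b_1},\ldots,\me_{a_jb_j}) = \prod_{\ell}(\mb_\ell)_{b_\ell a_{\ell+1}}$, with indices read cyclically. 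As a sanity check, the diagonal term is $\prod_\ell \normf{\mb_\ell}^2 \le d^j\prod_\ell\normop{\mb_\ell}^2$. This shows that the naive triangle inequality only gives $d^j$, and the $(j-1)!$ gain must come from off-diagonal cancellation of dimension factors.

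\textbf{Key step: bounding each pair.} For fixed $(\pi,\sigma)$, summing $\calT_\pi(\me_{r_1},\ldots)\calT_\sigma(\me_{r_1},\ldots)$ over all basis indices $a_1,b_1,\ldots,a_j,b_j$ is a tensor-network contraction. Each matrix $\mb_\ell$ appears once from $\calT_\pi$ and each $\mb_\ell^\top$ once from $\calT_\sigma$. Every index $a_i,b_i$ appears exactly twice and glues the two words together. The contraction therefore factors as a product of traces of closed words in the $\mb_\ell,\mb_\ell^\top$. The number of such closed loops equals $\mathrm{cyc}(\tau_{\pi,\sigma})$ for a permutation $\tau_{\pi,\sigma}$, explicitly a composition of $\pi^{-1}\sigma$ with the cyclic shift $\ell\mapsto\ell+1$, coming from the wiring $b_\ell\to a_{\ell+1}$. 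Using $\abs{\Tr(\mw)}\le d\normop{\mw}$ on each loop and submultiplicativity, I get
\[\Abs{\inprod{\calT_\pi}{\calT_\sigma}} \le d^{\mathrm{cyc}(\tau_{\pi,\sigma})}\prod_\ell\normop{\mb_\ell}^2.\]

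\textbf{Summing.} For fixed $\pi$, the map $\sigma\mapsto\tau_{\pi,\sigma}$ is a bijection of $S_j$, being a composition with fixed permutations. Hence
\[\sum_\sigma \Abs{\inprod{\calT_\pi}{\calT_\sigma}} \le (d)_j\prod_\ell\normop{\mb_\ell}^2.\]
Summing over $\pi$ gives $\normf{\calM}^2 \le \frac{(d)_j}{j!}\prod_\ell\normop{\mb_\ell}^2$, which is at least as strong as the claimed $\frac{(d)_j}{(j-1)!}\prod_\ell\normop{\mb_\ell}^2$. The stated bound leaves a factor $j$ of slack. This slack should absorb any imprecision in the bijection claim, for example if the loop structure is only determined up to cyclic rotation of the word $\mx\mb_1\cdots\mx\mb_j$, which makes the map at most $j$-to-one.

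\textbf{Main obstacle.} The main obstacle is the bookkeeping in the key step. I need to identify the cycle structure of the contraction exactly and verify that it is a permutation depending bijectively (or at most $j$-to-one) on $\sigma$. I would first verify this for $j=1,2$ by hand: for $j=1$ the bound is $\normf{\mb_1}^2\le d\normop{\mb_1}^2$. Then I would do the general case by tracking where each index $b_\ell$ is sent through the $\calT_\pi$ word and back through the $\calT_\sigma$ word.
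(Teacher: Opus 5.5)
Your outline is the paper's: expand $\normf{\calM}^2$ over pairs $(\pi,\sigma)$, contract indices into loops counted by a permutation, bound each loop by $d$ times operator norms, and finish with $\sum_{\tau\in S_j}d^{|\cyc(\tau)|}=(d)_j$. But the step carrying all the combinatorial content is wrong. The map $\sigma\mapsto\tau_{\pi,\sigma}$ is not a bijection of $S_j$, and the bound $\normf{\calM}^2\le\frac{(d)_j}{j!}\prod_\ell\normop{\mb_\ell}^2$ you derive from it is false. Take $j=2$ and $\mb_1=\mb_2=\id_d$. Then $\calM(\mx_1,\mx_2)=\Tr(\mx_1\mx_2)$, and $\normf{\calM}^2=\sum_{a,b,c,e}(\delta_{bc}\delta_{ae})^2=d^2$, which exceeds $\frac{d(d+1)}{2}$ for every $d\ge 2$. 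In this example every pair $(\pi,\sigma)$ produces two loops, so for fixed $\pi$ the map $\sigma\mapsto\tau$ is constant. The factor $j$ in the statement is therefore not slack; it is used up exactly.

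Here is the correct bookkeeping. Let $s(\ell)\defeq\ell+1$ (indices mod $j$) and $c_\pi\defeq\pi s\pi^{-1}$. This is the $j$-cycle sending $r$ to the index of the $\mx$ that follows $\mx_r$ in the word. Then $\mathcal{T}_\pi(\me_{a_1b_1},\ldots,\me_{a_jb_j})=\prod_r(\mb_{\pi^{-1}(r)})_{b_r a_{c_\pi(r)}}$. Summing out the $b_r$ and substituting $x_r=a_{c_\pi(r)}$ gives
\[
\inprod{\mathcal{T}_\pi}{\mathcal{T}_\sigma}=\sum_{x\in[d]^j}\prod_r\big(\mb_{\pi^{-1}(r)}^\top\mb_{\sigma^{-1}(r)}\big)_{x_r x_{\rho(r)}},\qquad \rho=c_\pi^{-1}c_\sigma .
\]
Up to conjugation, $\rho=s^{-1}\alpha s\alpha^{-1}$ with $\alpha=\pi^{-1}\sigma$. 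So $\alpha$ (hence $\sigma$) enters twice, and $\rho$ is not a composition of $\sigma$ with fixed permutations. Since $c_{\sigma s^k}=c_\sigma$ and the centralizer of $s$ is $\langle s\rangle$, the map $\sigma\mapsto\rho$ is exactly $j$-to-one onto $c_\pi^{-1}C_j$. Here $C_j$ is the set of $j$-cycles, of size $(j-1)!$. Hence $\sum_\sigma d^{|\cyc(\rho)|}\le j\,(d)_j$, and summing over $\pi$ gives $\frac{j\cdot j!\,(d)_j}{(j!)^2}=\frac{(d)_j}{(j-1)!}$, with nothing to spare.

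This is precisely the paper's proof. There it is phrased as: each $j$-cycle arises from $j$ permutations, giving $j^2\sum_{c,c'\in C_j}d^{|\cyc(c^{-1}c')|}$, and each inner sum over $c'$ is at most $(d)_j$. Your ``at most $j$-to-one'' fallback is thus the true fact and the heart of the proof. It must be established as above, not invoked as a contingency, and your primary claim (the bijection and the $\frac{(d)_j}{j!}$ bound) should be discarded.
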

\begin{proof}
We set up two pieces of helper notation. Let $C_j \subseteq S_j$ be the set of $j$-\emph{cycles}, i.e., permutations consisting of a single cycle. Also, for a permutation $\pi \in S_j$, define $c_\pi \in S_j$ by $c_\pi(i) = \pi(\pi^{-1}(i) + 1)$ where indices are modulo $j$, i.e., the element of $\pi$ after $i$. For example, if $\pi = \{3, 2, 4, 1\}$, $c_\pi(4) = 1$.

Next, let $\mg_1,\ldots,\mg_j$ be independent $d\times d$ matrices with
i.i.d.\ entries $\sim \calN(0, 1)$, so that
\begin{equation}
\label{eq:gaussian-probing-hs}
    \normf{\calM}^2
    =
    \bbE\Abs{\calM(\mg_1,\ldots,\mg_j)}^2.
\end{equation}
Also, for each permutation $\pi\in S_j$, write
\begin{equation*}
    \calM_\pi
    \defeq
    \Tr\paren{\mg_{\pi(1)}
        \mb_1\cdots
        \mg_{\pi(j)}\mb_j
    }.
\end{equation*}
Thus, expanding \eqref{eq:gaussian-probing-hs} gives
\begin{equation}\label{eq:cross_terms}
\normf{\calM}^2 = \frac 1 {(j!)^2} \sum_{\pi, \sigma} \E\Brack{\calM_\pi\calM_\sigma}.
\end{equation}
We next bound $\E[\calM_\pi \calM_\sigma]$ for fixed $\pi, \sigma \in S_j$. Expanding traces entrywise,
\[\calM_\pi = \sum_{(a, b) \in [d]^j \times [d]^j} \prod_{r \in [j]} [\mg_r]_{a_r b_r} [\mb_{\pi^{-1}(r)}]_{b_r a_{c_\pi(r)}}.\]
Thus, when expanding $\calM_\pi \calM_\sigma$ and taking expectations, all terms vanish except those which share all indices in $a, b$ due to the independent Gaussian entries. Therefore,
\begin{align*}
\E\Brack{\calM_\pi\calM_\sigma} = \sum_{a \in [d]^j} \prod_{r \in [j]} \Brack{\mb_{\pi^{-1}(r)}^\top \mb_{\sigma^{-1}(r)}}_{a_{c_\pi(r)}a_{c_\sigma(r)}} ,
\end{align*}
where we collapsed the $b$ indices. Next, let $\mh_r \defeq \mb^\top_{\pi^{-1}(r)}\mb_{\sigma^{-1}(r)}$ and $\rho \defeq c_\pi^{-1}c_\sigma$, and re-index $x_r \defeq a_{c_\pi(r)}$. The above display then becomes
\[\E\Brack{\calM_\pi\calM_\sigma} = \sum_{x \in [d]^j} \prod_{r \in [j]} [\mh_r]_{x_rx_{\rho(r)}}.\]
This sum factors based on the cycle structure of $\rho$. Indeed, if $\rho = c_\tau$ is exactly a $j$-cycle, then the above display is $\Tr(\mh_{\tau(1)}\mh_{\tau(2)}\ldots\mh_{\tau(j)})$, and otherwise the terms factor independently. Thus,
\[\E\Brack{\calM_\pi\calM_\sigma} = \prod_{(r_1, r_2, \ldots, r_\ell) \text{ a cycle in } \rho} \Tr\Par{\mh_{r_1}\ldots\mh_{r_\ell}}.\]
Now using $|\Tr(\mh_{r_1}\ldots\mh_{r_\ell})| \le d\prod_{i \in [\ell]}\normsop{\mh_{r_i}}$ on each term above, we have
\[\Abs{\E[\calM_\pi\calM_\sigma]} \le d^{|\cyc(\rho)| } \prod_{r \in [j]} \normop{\mh_r} \le d^{|\cyc(\rho)| } \prod_{r \in [j]} \normop{\mb_r}^2, \]
where $\cyc(\rho)$ is the set of cycles that partition $\rho$. Plugging this back into \eqref{eq:cross_terms} then gives
\begin{align*}\normf{\calM}^2 &\le \frac {1} {(j!)^2} \prod_{r \in [j]} \normop{\mb_r}^2 \sum_{\pi, \sigma \in S_j} d^{|\cyc(c_\pi^{-1}c_\sigma)|} \\
&= \frac {1} {((j - 1)!)^2} \prod_{r \in [j]} \normop{\mb_r}^2 \sum_{c, c' \in C_j} d^{|\cyc(c^{-1}c')|} \\
&\le \frac{1}{(j - 1)!} \prod_{r \in [j]} \normop{\mb_r}^2 \sum_{\pi \in S_j} d^{|\cyc(\pi)|}.\end{align*}
The second line used that each cycle $c_\pi$ corresponds to $j$ distinct permutations $\pi$ (creating a factor of $j^2$), and the third line used that for each $c \in C_j$, we have $\sum_{c' \in C_j} d^{|\cyc(c^{-1}c')|} \le \sum_{\pi \in S_j} d^{|\cyc(\pi)|}$, and there are $(j - 1)!$ choices of the cycle $c$. The conclusion follows from the standard identity
\[\sum_{\pi \in S_j} d^{|\cyc(\pi)|} = (d)_j.\]
\end{proof}

We next apply Lemma~\ref{lem:symmetrized-trace-word-hs} to the specific setting of the tensors \eqref{eq:mixed_tensor}.

\begin{lemma}
\label{lem:optimal-partial-contraction}
Under the event \eqref{eq:tmbd_def}, define $\me, \tmc$ as in \eqref{eq:optimal-relative-localization}, let $k \ge 2$, and let $j \in [k]$. Then letting $\calQ_{k, \tmc}$ be the symmetric $k$-linear form associated with \eqref{eq:order_k},
\[
    \normf{
        \mathcal Q_{k,\widetilde\mc}
        [\,\cdot^{\otimes j},\me^{\otimes k-j}]
    }
    \lesssim
    \rho^{k-j}
    \paren{
        \frac{(d)_j}{(j-1)!}
    }^{1/2}.
\]
\end{lemma}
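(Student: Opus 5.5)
We will write $\calQ_{k,\tmc}$ as an integral of symmetrized trace words and then apply Lemma~\ref{lem:symmetrized-trace-word-hs} pointwise in $t$. By \eqref{eq:order_k} and \eqref{eq:rtdef}, with $\mr_t = (t\id_d+\tmc^2)^{-1}$ and $\mt_t = \tmc\mr_t^{1/2}\mh\mr_t^{1/2}\tmc$, we have
\[\Tr(\mr_t\mt_t^k) = \Tr\Par{\mr_t\tmc\mr_t^{1/2}\mh\mr_t^{1/2}\tmc^2\mr_t^{1/2}\mh\cdots\mh\mr_t^{1/2}\tmc}.\]
Cycling the trace, this becomes $\Tr(\mh\mb_{1,t}\mh\mb_{2,t}\cdots\mh\mb_{k,t})$. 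Here $\mb_{i,t} = \mr_t^{1/2}\tmc^2\mr_t^{1/2}$ for $i<k$, and $\mb_{k,t} = \mr_t^{1/2}\tmc\mr_t\tmc\mr_t^{1/2}$. Every factor commutes with $\tmc$, so these simplify to $\mb_{i,t} = \tmc^2\mr_t$ for $i<k$ and $\mb_{k,t}=\tmc^2\mr_t^2$. The symmetric multilinear form is then
\[\calQ_{k,\tmc}[\mx_1,\ldots,\mx_k] = \frac{2(-1)^k}{\pi}\int_0^\infty \frac1{k!}\sum_{\pi\in S_k}\Tr\Par{\mx_{\pi(1)}\mb_{1,t}\cdots\mx_{\pi(k)}\mb_{k,t}}t^{1/2}\dd t.\]

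Next, fix the last $k-j$ slots to $\me$. The symmetrization over $S_k$ restricted to $j$ free slots yields an average, over placements of the $j$ free slots among the $k$ positions, of symmetrized $j$-linear trace words of the form \eqref{eq:symmetrized-trace-word}. In each such word, a maximal run of consecutive fixed positions together with the adjacent $\mb$'s is merged into a single matrix $\mb'_\ell$, a product of $\mb_{i,t}$'s and copies of $\me$. Using $\normsop{\me}\le\rho$, $\normsop{\tmc^2\mr_t}\le 1$, and $\normsop{\tmc^2\mr_t^2}\le \normsop{\tmc}^2/(t+\lambda)^2$ with the worst-case eigenvalue handled as below, each merged word has $\prod_\ell\normsop{\mb'_\ell}\le\rho^{k-j}\,\phi(t)$. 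Here $\phi(t)$ is the operator norm of the single factor containing $\mr_t^2\tmc^2$, which is at most $\sup_\lambda \lambda^2/(t+\lambda^2)^2\cdot$ a constant. Lemma~\ref{lem:symmetrized-trace-word-hs} then bounds each word's Frobenius norm by $\rho^{k-j}\phi(t)\sqrt{(d)_j/(j-1)!}$. The triangle inequality for $\normf{\cdot}$ over the convex combination of placements and over the integral gives
\[\normf{\calQ_{k,\tmc}[\cdot^{\otimes j},\me^{\otimes k-j}]}\lesssim \rho^{k-j}\sqrt{(d)_j/(j-1)!}\int_0^\infty \phi(t)t^{1/2}\dd t.\]

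The main obstacle is that the $t$-integral converges only if the operator-norm bound keeps enough decay. A crude bound such as $\phi(t)\le\normsop{\mr_t^2\tmc^2}\le \min(1/t, \|\tmc\|^2/t^2)$ gives $\int_0^\infty t^{1/2}\min(t^{-1},4t^{-2})\,\dd t$, which diverges at $0$ like $\int t^{-1/2}$. That divergence is actually integrable, so the integral is finite. The large-$t$ part is $\int^\infty t^{-3/2}<\infty$, and the small-$t$ part is $\int_0 t^{-1/2}<\infty$, so the total is $O(1)$ given $\normsop{\tmc}\le2$.

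The step that needs care is keeping both factors of $\mr_t$ in one merged block, so that the operator-norm bound is $\le \min(t^{-1},\normsop{\tmc}^2t^{-2})$ rather than splitting them and losing decay. It also needs $\normsop{\tmc^2\mr_t}\le1$ for all other blocks; this holds because all these matrices commute with $\tmc$. If instead the trace word must be arranged so that the $\mr_t^2$ factor lands on a block containing fixed $\me$ slots, we use submultiplicativity, and the bound is unchanged.
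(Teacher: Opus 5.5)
Your proposal is correct and follows essentially the same route as the paper. Both arguments write $\calQ_{k,\tmc}$ as a $t$-integral of symmetrized trace words with separators $\md_t^2=\tmc^2\mr_t$ and a single factor $\tmc^2\mr_t^2$, average over placements of the $j$ free slots, apply Lemma~\ref{lem:symmetrized-trace-word-hs} pointwise in $t$ with operator-norm product at most $\rho^{k-j}\normop{\mr_t^2\tmc^2}$, and integrate $t^{1/2}\normop{\mr_t^2\tmc^2}\le t^{1/2}\min\{t^{-1},4t^{-2}\}$ to a universal constant. Calling the small-$t$ behavior a ``divergence'' is only loose wording: the integrand behaves like $t^{-1/2}$ near $0$, which is integrable, exactly as you conclude.
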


\begin{proof}
For all $t>0$, we define for convenience
\begin{equation*}
    \mr_t\defeq(t\id_d+\widetilde\mc^2)^{-1},
    \quad
    \md_t\defeq\widetilde\mc\mr_t^{1/2}.
\end{equation*}
These matrices are all functions of $\tmc$, so they all commute. Our starting point is manipulating the expression for $\calQ_{k, \tmc}$ into a form to which Lemma~\ref{lem:symmetrized-trace-word-hs} applies. First, from \eqref{eq:order_k} we have
\begin{equation*}
    Q_{k,\widetilde\mc}(\mx)
    =
    \frac{2(-1)^k}{\pi}
    \int_0^\infty
    t^{1/2}
    \Tr\paren{
        \mr_t
        \paren{\md_t\mx\md_t}^k
    }\mathrm dt.
\end{equation*}
The expression above is obtained by setting $\mx_1, \ldots, \mx_k$ all equal in the following $k$-linear form:
\[(\mx_1, \ldots, \mx_k) \to \frac{2(-1)^k}{\pi}
    \int_0^\infty
    t^{1/2}
    \Tr\paren{
        \mx_1 \md_t^2 \mx_2 \ldots \md_t^2 \mx_k
    \md_t^2 \mr_t}\mathrm dt.\]
Therefore, to symmetrize the $k$-linear form, we explicitly have
\begin{equation}
\label{eq:polarized-Q-resolvent}
\begin{aligned}
    \mathcal Q_{k,\widetilde\mc}[\mx_1,\ldots,\mx_k] =
    \frac{2(-1)^k}{\pi k!}
    \sum_{\pi \in S_k}
    \int_0^\infty
    t^{1/2}
    \Tr\paren{
        \mx_{\pi(1)} \md_t^2 \mx_{\pi(2)} \ldots \md_t^2 \mx_{\pi(k)}
    \md_t^2 \mr_t
    }\mathrm dt.
\end{aligned}
\end{equation}
The expression \eqref{eq:polarized-Q-resolvent} is almost compatible with Lemma~\ref{lem:symmetrized-trace-word-hs}, except the free positions can change depending on which $\mx_r$ copies are fixed to $\me$. For each $t > 0$ and possible set of free positions $P \in \binom{[k]}{j}$, we define a symmetric $j$-multilinear form
\[\calM_{t, P}[\mx_1, \ldots, \mx_j] = \frac 1 {j!} \sum_{\pi \in S_j} \Tr\Par{\mx_{\pi(1)} \mb_{1, t, P} \mx_{\pi(2)} \mb_{2, t, P} \ldots \mx_{\pi(j)} \mb_{j, t, P}},\]
such that the $\mb_{r, t, P}$ consist of intermediate copies of $\md_t$, $\mr_t$, and $\me$ from the integrand \eqref{eq:polarized-Q-resolvent}, depending on the positions $P$. The only important property of these matrices is that
\begin{equation}\label{eq:opnorm_bound}\prod_{r \in [j]} \normop{\mb_{r, t, P}} \le \rho^{k - j}\normop{\mr_t\md_t^2},\end{equation}
because there are $k - j$ copies of $\me$ with $\normsop{\me} \le \rho$, and $\normsop{\md_t} \le 1$ so we eliminate $2(k - 1)$ copies of it. We next regroup the permutation sum in \eqref{eq:polarized-Q-resolvent} to apply Lemma~\ref{lem:symmetrized-trace-word-hs}:
\[\mathcal Q_{k,\widetilde\mc}[\mx_1,\ldots,\mx_j, \me^{\otimes k - j}] =
    \frac{2(-1)^k}{\pi}
    \int_0^\infty
    t^{1/2} \frac 1 {\binom k j} \sum_{P \in \binom{[k]}{j}} \calM_{t, P}[\mx_1, \ldots, \mx_j]\mathrm dt.\]
    To see where the coefficient of $\binom{k}{j}^{-1}$ comes from, observe that 
    \[\frac{(k - j)!}{k!} = \frac 1 {j!\binom k j},\]
    where the factor of $(k - j)!$ arises because this is the number of different labelings of the positions occupied by the $\me$, corresponding to the same multilinear form.
    Another way to see this is that the total mass on multilinear forms is $1$ in both expressions. Finally, Lemma~\ref{lem:symmetrized-trace-word-hs} and \eqref{eq:opnorm_bound} yield
    \begin{equation}\label{eq:last_integral}\normf{\mathcal Q_{k,\widetilde\mc}[\cdot^{\otimes j}, \me^{\otimes k - j}]} \le \rho^{k - j}\Par{\frac{(d)_j}{(j - 1)!}}^{1/2} \int_0^\infty t^{1/2} \normop{\mr_t\md_t^2}\dd t.\end{equation}
    Observe that since $\tmc \in \PSD^{d \times d}$ and $\normsop{\tmc} \in [0, 2]$, all eigenvalues of $\tmc$ are in $[0, 2]$, so
    \[t^{1/2}\normsop{\mr_t \md_t^2} = t^{1/2} \normop{\mr_t^2 \tmc^2} \le \sup_{s \in [0, 2]}  \frac{s^2t^{1/2}}{(t + s^2)^2} \le \min\Brace{\frac 1 {4\sqrt{t}}, \frac 4 {t^{1.5}}},\]
    where we used $(t + s^2)^2 \ge \max\{4ts^2, t^2\}$.
    The conclusion follows because the integral in \eqref{eq:last_integral} is bounded by a universal constant after substituting the above display.
\end{proof}

\begin{remark}
The three factors in Lemma~\ref{lem:optimal-partial-contraction} have separate
origins. The factor $\rho^{k-j}$ comes from the $k-j$ copies
of $\me$, the $(d)_j^{1/2}$
is the cost of the $j$ free matrix slots from Lemma~\ref{lem:symmetrized-trace-word-hs}, and
$(j-1)!^{-1/2}$ is the gain from full symmetrization of the multilinear forms. This additional factorial
saving ends up saving a factor of $L \approx \log d$ from the sample complexity of RTD.
\end{remark}

\begin{proof}[Proof of Proposition~\ref{prop:cross-fit-statistical}]
By comparing \eqref{eq:optimal-local-bures-series} with \eqref{eq:expect_dbwl}, the bias magnitude is
\begin{equation*}
    \Abs{
        \sum_{k>L}
        Q_{k,\widetilde\mc}(\me)
    } \le \frac{Q_{2, \tmc}(\me)\normop{\me}^{L - 1}}{1 -\normop{\me}}
    \lesssim
    \norm{\me}_{\mathrm F}^2
    \frac{\normop{\me}^{L - 1}}{1-\rho}
    \lesssim
    d\rho^{L+1},
\end{equation*}
where we used Proposition~\ref{prop:trace-resolvent-expansion}, \eqref{eq:2c_bound}, and $\normf{\me}^2 \le d\normop{\me}^2 \le d\rho^2$ in the three inequalities. 

The rest of the proof bounds the variance. Write
\[\bmy_{k, a} = \me + \mxi_{k, a},\]
where each $\mxi_{k, a}$ is the average of $|G_{k,a}| = \Theta(\frac m k)$ i.i.d.\ draws of $\my_a - \me$, equal in distribution to
\[\vz\vz^\top - \Par{\id_d + \me},\quad \vz \sim \calN(\0_d, \id_d + \me).\]
Therefore, for every fixed $\mm \in \Sym^{d \times d}$, standard Gaussian moment calculations show
\begin{align*}\E\Brack{\inprod{\mm}{\mxi_{k, a}}^2 } &= \frac 1 {|G_{k, a}|} \E_{\vz \sim \calN(\0_d, \id_d + \me)}\Brack{\inprod{\mm}{\vz\vz^\top - (\id_d + \me)}^2} \\
&= \frac{2}{|G_{k, a}|} \Tr\Par{\mm\Par{\id_d + \me}\mm\Par{\id_d + \me}} \lesssim \frac{k}{m} \normf{\mm}^2.
\end{align*}
This verifies Lemma~\ref{lem:multilinear-hs-second-moment} with $v=\frac {Ck} m$ for a universal constant $C$.
Next, expanding~\eqref{eq:cross-fit-q} multilinearly,
\begin{equation}\label{eq:variance_term}
    \widehat Q_k-Q_{k,\widetilde\mc}(\me)
    =
    \sum_{\varnothing\ne J\subseteq[k]}
    \mathcal Q_{k,\widetilde\mc}
    [
        \mxi_{k,a}:a\in J;
        \me:a\notin J
    ].
\end{equation}
In other words, there is a term for each nonempty subset $J \subseteq [k]$, corresponding to free indices where $\calQ_{k, \tmc}$ acts on the independent matrices $\{\mxi_{k, a}\}_{a \in J}$. Lemmas~\ref{lem:multilinear-hs-second-moment} and~\ref{lem:optimal-partial-contraction} then imply
\begin{equation}\label{eq:one_subset_bound}
\begin{aligned}\Par{\E\Brack{\mathcal Q_{k,\widetilde\mc}
    \Brack{
        \mxi_{k,a}:a\in J;
        \me:a\notin J
    }^2} }^{1/2} &\lesssim \Par{\frac{Ck}{m}}^{j/2} \normf{\calQ_{k, \tmc}[\cdot^{\otimes j}, \me^{\otimes k - j}]} \\
    &\lesssim \Par{\frac{Ck}{m}}^{j/2} \rho^{k-j}\Par{\frac{(d)_j}{(j-1)!}}^{1/2}.
    \end{aligned}
    \end{equation}
We next combine these terms over all $\emptyset \neq J \subseteq [k]$. We repeatedly use the inequality
\begin{equation}\label{eq:cs_var}\E\Brack{\Par{\sum_r U_r}^2} = \sum_{r,s} \E[U_rU_s]\le \sum_{r, s} \sqrt{\E[U_r^2]\E[U_s^2]}= \Par{\sum_r \sqrt{\E[U_r^2]}}^2\end{equation}
for any collection of (possibly dependent) random variables $\{U_r\}_{r}$. First, at a fixed $k$,
\begin{equation}\label{eq:fixed_k}
\E\Brack{\Par{\hQ_k - Q_{k, \tmc}(\me)}^2}^{1/2} \lesssim \sum_{j \in [k]} \binom{k}{j}  \Par{\frac{Ck}{m}}^{j/2} \rho^{k-j}\Par{\frac{(d)_j}{(j-1)!}}^{1/2}
\end{equation}
which follows from \eqref{eq:one_subset_bound} and \eqref{eq:cs_var} applied to each subset $\emptyset \neq J \subseteq [k]$, using that there are $\binom{k}{j}$ such subsets with $|J| = j$. Next, applying \eqref{eq:cs_var} once again across different $k$,
\begin{align*}
\Var\Brack{\sum_{k = 2}^L \hQ_k}^{1/2} &= \E\Brack{\Par{\sum_{k = 2}^L\hQ_k - Q_{k, \tmc}(\me)}^2}^{1/2} \\
&\lesssim \sum_{j \in [L]} \Par{\frac{C}{m}}^{j/2} \Par{\frac{(d)_j}{(j-1)!}}^{1/2} \Par{\sum_{k = \max\{2, j\}}^L \binom{k}{j} k^{j/2} \rho^{k-j}}.
\end{align*}
Next, an elementary calculation shows that if $\rho \le \frac 1 {16}$, then 
\[\sum_{k = \max\{2, j\}}^L \binom{k}{j} k^{j/2} \rho^{k-j} \le \sum_{k = \max\{2, j\}}^\infty \binom{k}{j} k^{j/2} \rho^{k-j} \le 2^j \sqrt{j!}\]
so combining the above two displays and $(d)_j \le d(d + L)^{j - 1}$,
\begin{align*}\Var\Brack{\sum_{k = 2}^L \hQ_k}^{1/2} &\lesssim \sum_{j \in [L]} \Par{\frac {4C} m}^{j/2}\Par{\frac{(d)_j}{(j-1)!}}^{1/2} \sqrt{j!} \\
&\lesssim \sqrt{\frac d m} \sum_{j \in [L]} \sqrt{j} \Par{\frac{4C(d + L)}{m}}^{(j-1)/2}.
\end{align*}
Finally, taking $m \gtrsim d + L$ sufficiently large gives
\[\sum_{j \in [L]} \sqrt{j} \Par{\frac{4C(d + L)}{m}}^{(j-1)/2} \lesssim 1,\]
so we have shown that $\Var[\sum_{k = 2}^L \hQ_k] \lesssim \frac d m$. The conclusion follows from
\[\Var\Brack{\frac 1 m \sum_{a \in [m]} \inprod{\tmb - \tmc}{\my_a}} \lesssim \frac 1 m \normf{\tmb - \tmc}^2 \lesssim \frac d m,\]
and combining the linear and high-order terms in \eqref{eq:cross-fit-bures-estimator} with $\Var[X + Y] \le 2\Var[X] + 2\Var[Y]$.
\end{proof}

\subsection{Efficient implementation}
\label{ssec:optimal-efficient-evaluation}

We now discuss the efficient computation of the estimates
\[\hQ_k = \mathcal Q_{k,\mc}
    \paren{
        \bmy_{k,1},
        \ldots,
        \bmy_{k,k}
    },\quad 2 \le k \le L,\]
required by our estimator $\hD_{\mathrm{BW}, L}$ in \eqref{eq:cross-fit-bures-estimator}. Lemma~\ref{lem:implement_homogeneous} first shows how to compute one homogeneous term $\calQ_{k, \mc}[\my^{\otimes k}]$ via an explicit recursion from \cite{del2018taylor}, and then Lemma~\ref{lem:implement_polarization} shows how to compute mixed products through $\calQ_{k, \mc}$ via polarization, at a $2^k$ overhead over Lemma~\ref{lem:implement_homogeneous}. 

We give a brief introduction to the strategy in Lemma~\ref{lem:implement_homogeneous}, following Theorem 1.1, \cite{del2018taylor}. Our strategy is based on using \emph{Sylvester equations}, a classical tool from control theory, to recursively compute coefficients of the Taylor expansion of the matrix square root. To see the relevance to our setting, recall from Lemma~\ref{lem:first_order_expand} and Proposition~\ref{prop:trace-resolvent-expansion} that $Q_{k, \mc}(\mh)$ is the homogeneous degree-$k$ term in the Taylor expansion of $\Phi_{\mc}(\mh)$, defined as
\[\Phi_{\mc}(\mh) = \inprod{\mc}{\mh} + 2\Tr(\mc) - 2\Tr\Par{\Par{\mc\Par{\id_d + \mh}\mc}^{1/2}}.\]
The constant and linear terms in the Taylor expansion of $\Phi_{\mc}$ vanish, which we show in the proof of Lemma~\ref{lem:implement_homogeneous}. 
To extract the degree-$k \ge 2$ term, we set $\mh = z\mv$ for a fixed $\mv \in \Sym^{d \times d}$, so 
\[\Phi_{\mc}(z\mv) = \sum_{k \ge 2} z^k Q_{k, \mc}(\mv). \]
Our task thus reduces to understanding the expansion of
\begin{equation}\label{eq:xv_notation}
\begin{aligned}
\mx(z, \mv) \defeq \Par{\mc (\id_d + z\mv)\mc}^{1/2} = \mc + \sum_{k \ge 1} \mx_k(\mv) z^k,
\end{aligned}
\end{equation}
where $\mx_k(\mv)$
is the $k^{\text{th}}$ Taylor coefficient of $\mx(z, \mv)$ in $z$, holding $\mv$ as fixed. Fortunately, a computationally efficient algorithm for evaluating $\mx_k(\mv)$ was provided by \cite{del2018taylor}, based on recursively matching coefficients of $\mx(z, \mv)^2 = \mc (\id_d + z\mv)\mc$. In the scalar case, this recursion is simply
\begin{align*}c\sqrt{1 + zv} = c + \sum_{k \ge 1} x_k(v) z^k &\iff c^2(1 + zv) = \Par{c + \sum_{k \ge 1} x_k(v) z^k}^2 \\
&\iff 2cx_k(v) = -\sum_{r \in [k-1]} x_r(v) x_{k-r}(v),\text{ for all } k \ge 2.
\end{align*}

\begin{lemma}\label{lem:implement_homogeneous}
Let $k \ge 2$, $\mc \in \PSD^{d \times d}$, $\mv \in \Sym^{d \times d}$, define $Q_{k,\mc}(\mv)$ as in \eqref{eq:order_k}, and define $\mx_k(\mv)$ as in \eqref{eq:xv_notation}. Then, $Q_{k, \mc}(\mv) = -2\Tr(\mx_k(\mv))$, and can be computed in time $O(k^2 d^\omega)$.
\end{lemma}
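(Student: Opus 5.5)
The proof has two parts: an identity identifying $Q_{k,\mc}(\mv)$ with a Taylor coefficient of $\mx(z,\mv)$, and an algorithm for computing that coefficient. For the identity, I would combine Lemma~\ref{lem:first_order_expand} and Proposition~\ref{prop:trace-resolvent-expansion} with the homogeneity $Q_{k,\mc}(z\mv)=z^kQ_{k,\mc}(\mv)$, which is immediate from \eqref{eq:order_k} since $\mt_t$ is linear in $\mh$. For $|z|<1/\normop{\mv}$ the proof of Proposition~\ref{prop:trace-resolvent-expansion} gives
\[\Tr(\mx(z,\mv))-\Tr(\mc)=-\half\sum_{k\ge1} z^k Q_{k,\mc}(\mv),\]
with the series converging absolutely. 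On a neighborhood of $z=0$ both sides are analytic in $z$. For the left side, write $\mx(z,\mv)=\frac1\pi\int_0^\infty \mc(\id_d+z\mv)\mc\,(t\id_d+\mc(\id_d+z\mv)\mc)^{-1}t^{-1/2}\dd t$; the integrand is analytic in $z$, and the dominated bounds from the resolvent factorization \eqref{eq:resolvent-factorization} make the integral analytic. This also shows $\mx(z,\mv)=\mc+\sum_k \mx_k(\mv)z^k$ is a genuine power series. Taking traces and matching the coefficient of $z^k$ gives $\Tr(\mx_k(\mv))=-\half Q_{k,\mc}(\mv)$, i.e., $Q_{k,\mc}(\mv)=-2\Tr(\mx_k(\mv))$. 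The $k=1$ case, together with Lemma~\ref{lem:first_two_terms}, also confirms that the linear term of $\Phi_\mc$ cancels.

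For the algorithm, I would match coefficients in $\mx(z,\mv)^2=\mc^2+z\,\mc\mv\mc$. This gives $\mc\mx_1+\mx_1\mc=\mc\mv\mc$, and for $k\ge2$
\[\mc\mx_k+\mx_k\mc=-\sum_{r\in[k-1]}\mx_r\mx_{k-r}.\]
Each step is a Sylvester equation with coefficient $\mc$. Diagonalize $\mc=\muu\mlam\muu^\top$ once, in $O(d^\omega)$ time. In the eigenbasis the equation reads $(\vlam_i+\vlam_j)[\widetilde{\mx}_k]_{ij}=[\widetilde{\mathbf{S}}_k]_{ij}$, where $\widetilde{\mathbf{S}}_k$ is the rotated right-hand side, so it can be solved entrywise in $O(d^2)$ time. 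Forming the right-hand side at step $k$ takes $k-1$ products, or about $k/2$ using symmetry. Working throughout in the rotated basis, so that only the initial and final change of basis involve $\muu$, the total cost is $O(\sum_{k'\le k} k' d^\omega)=O(k^2d^\omega)$, and the final trace is free.

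The main obstacle is that $\mc$ may be singular, so $\vlam_i+\vlam_j=0$ is possible and the Sylvester equation is degenerate. I would handle this by showing that the analytic coefficients $\mx_k$ vanish on the corresponding entries. When $\vlam_i=\vlam_j=0$, $\mx(z,\mv)$ is supported on $\range(\mc)$, because $\mc(\id_d+z\mv)\mc$ is PSD and supported there for small real $z$. Hence the ``$0/0$'' entries can be set to $0$, consistent with the convention $\frac{ab}{a+b}=0$ in Lemma~\ref{lem:first_two_terms}. Equivalently, one can restrict the whole computation to $\range(\mc)$, where the Sylvester operator is invertible and the recursion has a unique solution. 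Once this is in place, the uniqueness of the recursion's solution and the matching of Taylor coefficients give the lemma.
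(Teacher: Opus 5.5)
Your proposal is correct and follows essentially the same route as the paper. You obtain $Q_{k,\mc}(\mv)=-2\Tr(\mx_k(\mv))$ by matching $z^k$ coefficients against the resolvent series from the proof of Proposition~\ref{prop:trace-resolvent-expansion}, then compute $\mx_k$ through the Sylvester recursion, solved entrywise in the eigenbasis of $\mc$, for $O(k^2 d^\omega)$ total time. The one difference is that you spell out analyticity (via the resolvent integral) and the singular case (via the support of $\mx(z,\mv)$ on $\range(\mc)$), whereas the paper proves the positive definite case and then restricts to the span of $\mc$.
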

\begin{proof}
First, suppose that $\mc$ is positive definite. Then the expansion \eqref{eq:xv_notation} is justified because the square root is analytic on regions bounded away from $0$. Squaring both sides of \eqref{eq:xv_notation} gives
\[\mc(\id_d + z\mv)\mc = \Par{\mc + \sum_{k \ge 1} \mx_k(\mv)z^k}^2,\]
and comparing the nonzero coefficients in $z$ implies the following recursion holds:
\begin{equation}\label{eq:sylvester}
\begin{aligned}\mc\mx_1(\mv) + \mx_1(\mv)\mc &= \mc\mv\mc, \\
\mc\mx_r(\mv) + \mx_r(\mv)\mc &= -\sum_{s \in[r-1]} \mx_s(\mv)\mx_{r-s}(\mv),\text{ for all } 2 \le r \le k.
\end{aligned}
\end{equation}
This recursion is also exactly the specialization of Theorem 1.1, \cite{del2018taylor} to $\ma \gets \mc^2$ and $\mh \gets \mc\mv\mc$. Next, by writing $\Phi_{\mc}(z\mv)$ from Lemma~\ref{lem:first_order_expand} in two different ways,
\begin{align*}
\Phi_{\mc}(z\mv) = z\inprod{\mc}{\mv} - 2\Par{\Tr(\mx(z, \mv)) - \Tr(\mc)} = \sum_{k \ge 2} z^k Q_{k, \mc}(\mv),
\end{align*}
it immediately follows from our definition of $\mx_k(\mv)$ in \eqref{eq:xv_notation} that $Q_{k, \mc}(\mv) = -2\Tr(\mx_k(\mv))$. To see that the other terms cancel, the constant term is $-2\Tr(\mc) + 2\Tr(\mc) = 0$, and the linear term is
\begin{equation}\label{eq:linear_cancel}z\inprod{\mc}{\mv} - 2z\Tr(\mx_1(\mv)) = z\Par{\Tr(\mc\mv) - 2\Tr(\mx_1(\mv))}.\end{equation}
Now if $\mc$ has eigendecomposition $\muu\mlam\muu^\top$ where $\mlam = \diag{\vlam}$, let $\tmv \defeq \muu^\top \mv\muu$ and $\tmx_1(\mv) \defeq \muu^\top\mx_1(\mv)\muu$. Then the diagonal of the first line of \eqref{eq:sylvester}, after changing basis, reads
\begin{equation}\label{eq:sylvester_basis}\mlam \tmx_1(\mv) + \tmx_1(\mv)\mlam = \mlam \tmv\mlam \implies 2\vlam_j [\tmx_1(\mv)]_{jj} =\vlam_j^2 \tmv_{jj} \text{ for all } j \in [d].\end{equation}
Dividing by $\vlam_j$ and summing over all $j \in [d]$ shows $2\Tr(\tmx_1(\mv)) = \Tr(\mlam \tmv)$, and then changing basis (which does not affect the trace) shows that \eqref{eq:linear_cancel} indeed vanishes. We note at this point that the entire proof extends straightforwardly to singular $\mc$, by restricting to its span.

Finally, we discuss efficient computation. We first spend $O(d^\omega)$ time forming the eigendecomposition of $\mc$. As justified in \eqref{eq:sylvester_basis}, each Sylvester equation $\mc\mx + \mx\mc = \mr$ is solved entrywise by
\[\tmx_{ij} = \frac{\tmr_{ij}}{\vlam_i + \vlam_j},\]
after changing basis to $\tmx = \muu^\top \mx\muu$, $\tmr = \muu^\top \mr\muu$, where the entry is $0$ if $\vlam_i = \vlam_j = 0$. Thus, we work in the $\muu$ basis and apply the above recursion $k$ times in total to \eqref{eq:sylvester}. Forming each of the matrices $\mr$ on the right-hand side of \eqref{eq:sylvester} takes at most $O(kd^\omega)$ time, giving the claim.
\end{proof}

We now use the multilinearity of $\calQ$ for our extension to general mixed inputs, using Lemma~\ref{lem:implement_homogeneous}. Pseudocode for the algorithm in Lemma~\ref{lem:implement_polarization} is provided in Algorithm~\ref{alg:evaluate-q} for convenience, where we define the helper operation, for an input $\mr \in \Sym^{d \times d}$ and diagonal $\mlam = \diag{\vlam} \in \PSD^{d \times d}$,
\[\big[\mathsf{Sylv}_{\mlam}(\mr)\big]_{ij}
    \leftarrow
    \begin{cases}
        \dfrac{\mr_{ij}}{\vlam_i+\vlam_j},
        & \vlam_i+\vlam_j>0,\\[6pt]
        0,
        & \vlam_i=\vlam_j=0,
    \end{cases}
    \text{ for all } i, j\in[d].\]

\begin{lemma}\label{lem:implement_polarization}
Let $k \ge 2$, $\mc \in \PSD^{d \times d}$, and let $\calQ_{k, \mc}$ be the symmetric $k$-linear form such that $\calQ_{k, \mc}[\mv^{\otimes k}] = Q_{k, \mc}(\mv)$ for all $\mv \in \Sym^{d \times d}$, following \eqref{eq:order_k}. Then, 
\[\calQ_{k, \mc}\Brack{\mv_1, \ldots, \mv_k}\]
can be computed in time $O(2^k k^2 d^\omega)$, for any $\{\mv_i \in \Sym^{d \times d}\}_{i \in [k]}$.
\end{lemma}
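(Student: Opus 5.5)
The plan is to reduce the mixed evaluation to $2^k$ homogeneous evaluations via the polarization identity, and then to call Lemma~\ref{lem:implement_homogeneous} on each of them. For a symmetric $k$-linear form $\calQ$, the polarization identity of \cite{MazurO34, Thomas14} reads
\[
\calQ[\mv_1,\ldots,\mv_k] = \frac{1}{2^k k!}\sum_{\vs \in \{\pm 1\}^k} \Par{\prod_{i\in[k]} \vs_i}\, \calQ\Brack{\Par{\textstyle\sum_{i\in[k]} \vs_i\mv_i}^{\otimes k}}.
\]
I will first verify this identity directly. Expand each homogeneous term $\calQ[(\sum_i \vs_i \mv_i)^{\otimes k}]$ multilinearly into a sum over words $(i_1,\ldots,i_k)\in[k]^k$; each word contributes $\prod_r \vs_{i_r}\,\calQ[\mv_{i_1},\ldots,\mv_{i_k}]$. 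After multiplying by $\prod_i \vs_i$ and summing over $\vs\in\{\pm1\}^k$, the sign sum $\sum_{\vs}\prod_i \vs_i^{1+\#\{r: i_r = i\}}$ vanishes unless every index $i$ appears an odd number of times. Since there are only $k$ slots, this forces each index to appear exactly once, so only the $k!$ permutations survive. Each surviving term contributes $2^k$, and by symmetry of $\calQ$ each equals $\calQ[\mv_1,\ldots,\mv_k]$. This yields the normalization $2^k k!$.

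I also need to confirm that the polarized form agrees with $Q_{k,\mc}$ on the diagonal. By definition $\calQ_{k,\mc}$ is the symmetrization \eqref{eq:polarized-Q-resolvent} of the integral in \eqref{eq:order_k}, so $\calQ_{k,\mc}[\mv^{\otimes k}] = Q_{k,\mc}(\mv)$. Moreover, each combination $\sum_i \vs_i\mv_i$ is symmetric, so Lemma~\ref{lem:implement_homogeneous} applies to it.

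The algorithm follows from this. Compute the eigendecomposition of $\mc$ once in $O(d^\omega)$ time and reuse it throughout. For each of the $2^k$ sign vectors, form $\mv_\vs \defeq \sum_i \vs_i\mv_i$ in $O(kd^2)$ time, which is at most $O(k^2 d^\omega)$. Then run the Sylvester recursion \eqref{eq:sylvester} in the $\muu$ basis using $\mathsf{Sylv}_{\mlam}$; this costs $O(k^2d^\omega)$ and returns $Q_{k,\mc}(\mv_\vs) = -2\Tr(\mx_k(\mv_\vs))$. Finally accumulate the signed sum and divide by $2^k k!$. The total cost is $O(2^k k^2 d^\omega)$.

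The only delicate point is the sign-cancellation bookkeeping in the identity, including checking that the normalization is exactly $2^k k!$. A secondary point is the singular-$\mc$ case, which is handled by the restriction argument in Lemma~\ref{lem:implement_homogeneous} together with the convention that $\mathsf{Sylv}$ returns $0$ on entries with $\vlam_i=\vlam_j=0$.
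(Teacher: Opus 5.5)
Your proposal is correct and follows the paper's proof: polarization reduces the mixed evaluation to $2^k$ homogeneous evaluations of $Q_{k,\mc}$, and each is handled by Lemma~\ref{lem:implement_homogeneous} in $O(k^2 d^\omega)$ time. The only difference is cosmetic. You use the $\pm 1$ sign-vector form of the identity with normalization $\frac{1}{2^k k!}$, and your verification of it is correct, whereas the paper uses the inclusion--exclusion form $\frac{1}{k!}\sum_{S \subseteq [k]} (-1)^{k-|S|} Q_{k,\mc}\big(\sum_{i \in S} \mv_i\big)$; both cost $2^k$ homogeneous evaluations.
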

\begin{proof}
By multilinearity of $\calQ_{k, \mc}$, the following \emph{polarization identity} holds:
\[\calQ_{k, \mc}\Brack{\mv_1, \ldots, \mv_k} = \frac 1 {k!} \sum_{S \subseteq [k]} (-1)^{k - |S|} Q_{k, \mc}\Par{\sum_{i \in S} \mv_i}. \]
This identity, a consequence of the principle of inclusion-exclusion, is Eq.\ (7) in \cite{Thomas14}, and is attributed therein to \cite{MazurO34}.
We then use Lemma~\ref{lem:implement_homogeneous} to evaluate each of the $2^k$ terms.
\end{proof}

\begin{algorithm}[t]
\DontPrintSemicolon
\caption{$\mathsf{EvaluateQ}(\mc,\mv_1,\ldots,\mv_k)$}
\label{alg:evaluate-q}
\KwIn{$k \geq 2$, $\mc \in \Sym_{\succeq \mzero}^{d\times d}$,
and $\{\mv_i \in \Sym^{d\times d}\}_{i\in[k]}$}
\KwOut{$\calQ_{k,\mc}[\mv_1,\ldots,\mv_k]$}

$(\muu,\mlam) \leftarrow$ eigendecomposition of $\mc$,
where $\mlam=\diag{\vlam}$\;

$\tmv_i \leftarrow \muu^\top \mv_i \muu$ for all $i\in[k]$\;

$q \leftarrow 0$\;

\For{$S\subseteq[k]$}{
    $\mv_S \leftarrow \sum_{i\in S}\tmv_i$\;

    $\mx_1 \leftarrow
    \mathsf{Sylv}_{\mlam}\!\left(\mlam\mv_S\mlam\right)$\;

    \For{$r=2,\ldots,k$}{
        $\mr_r \leftarrow
        -\sum_{s\in[r-1]}\mx_s\mx_{r-s}$\;

        $\mx_r \leftarrow \mathsf{Sylv}_{\mlam}(\mr_r)$\;
    }

    $q \leftarrow
    q + (-1)^{k-|S| + 1}\big(2\Tr(\mx_k)\big)$\;
}

\Return{$\frac q {k!}$}\;
\end{algorithm}

We are now ready to give our main result on sample-optimal FID estimation. Explicitly, the RTD estimator first uses $n_0 = O(d)$ pilot samples to form an empirical covariance $\tmb$, and then outputs
\begin{equation}\label{eq:rtd_def}
\norm{\vmu - \hvnu_{n - n_0}}_2^2 + \hD_{\mathrm{BW}, L},
\end{equation}
where $\hvnu_{n - n_0}$ is the empirical mean \eqref{eq:empirical}, $\hD_{\mathrm{BW}, L}$ is the estimator \eqref{eq:cross-fit-bures-estimator}, and both $\hvnu_{n - n_0}$ and $\hD_{\mathrm{BW}, L}$ are computed using a shared pool of samples, after excluding the pilot samples. Consistently with how $\hD_{\mathrm{BW}, L}$ assumes samples from $\calN(\0_d, \mb)$ (which we can simulate using two draws from $\calN(\vnu, \mb)$, see discussion before \eqref{eq:y_def}), we set $m \defeq \frac{n - n_0}{2}$ in our analysis of $\hD_{\mathrm{BW}, L}$.

\begin{theorem}\label{thm:rtd-estimation}
Under Model~\ref{model:one-side} and Assumption~\ref{assum:normalization}, the estimator \eqref{eq:rtd_def} with
$L =  \lceil \frac{\log (d/\eps)}{\log (1/\rho)} \rceil + C$
for a sufficiently small $\rho > 0$ and sufficiently large $C \in \N$ (independent of $d$, $\eps$), solves Problem~\ref{prob:FD_est} if $n \gtrsim \frac d {\eps^2}$. Moreover, if $n = \Theta(\frac{d}{\eps^2})$, the estimator can be computed in time 
\[O\Par{\frac{d^{\omega + \gamma}}{\eps^{2 + \gamma}}},\]
where $\gamma > 0$ depends only on $\rho$, and can be taken to be arbitrarily small.
\end{theorem}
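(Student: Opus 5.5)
The proof has two parts: statistical correctness and runtime.

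\textbf{Statistical correctness.} First, spend $n_0 = C_\rho d$ pilot samples. Convert pairs of pilot samples into draws from $\calN(\0_d,\mb)$ and form their empirical covariance $\tmb$. By Fact~\ref{fact:pilot}, with probability at least $0.99$ the condition \eqref{eq:tmbd_def} holds for the chosen constant $\rho$. We fix $\rho \le \frac 1 {16}$ so that the variance argument of Proposition~\ref{prop:cross-fit-statistical} applies. Condition on $\tmb$ and on this event; the remaining $n - n_0$ samples are independent of $\tmb$. Use them to form the empirical mean $\hvnu_{n-n_0}$, and form $m = \frac{n-n_0}{2}$ centered differences scaled by $2^{-1/2}$, which are draws from $\calN(\0_d,\mb)$ feeding \eqref{eq:y_def}.

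The total error splits into a mean part and a covariance part, controlled separately.
\begin{itemize}
\item \emph{Mean part.} By the calculation in Theorem~\ref{thm:first-order-plugin-bias-variance}, $\norm{\vmu - \hvnu_{n-n_0}}_2^2$ has bias $\Tr(\mb)/(n-n_0) \le d/(n-n_0)$ and variance $O(\frac1n + \frac d{n^2})$. Since $n \gtrsim d/\eps^2$, the bias is at most $\eps^3 \lesssim \eps$, which is harmless; one could also subtract $\Tr(\tmb)/(n-n_0)$.
\item \emph{Covariance part.} Apply Proposition~\ref{prop:cross-fit-statistical}, which requires $m \gtrsim d + L$. This holds because $L = O(\log(d/\eps)) \lesssim d/\eps^2$. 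The bias is $\lesssim d\rho^{L+1} \le d\rho^{C}\cdot \eps/d = \rho^C \eps$, since $\rho^{L} \le \eps/d$ by the choice of $L$. Taking $C$ large makes this at most $\eps/10$. The variance is $\lesssim d/m \lesssim d/n$.
\end{itemize}
Combine the two variances with $\Var[X+Y]\le 2\Var X+2\Var Y$; the mean and $\hD_{\mathrm{BW},L}$ share samples, but this inequality does not need independence. The total variance is $O(d/n) \le c\,\eps^2$ when $n \gtrsim d/\eps^2$ with a large enough constant. Chebyshev's inequality then gives failure probability at most $0.3$ conditional on the pilot event. A union bound with the $0.01$ pilot failure gives total failure at most $\frac13$.

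\textbf{Runtime.} The costs are as follows.
\begin{itemize}
\item Forming $\tmb$, $\tmb^{-1/2}$, $\tmc$ and $\BW^2(\ma,\tmb)$ takes $O(n_0 d^{\omega-1} + d^\omega)$ time via eigendecompositions.
\item Each group mean $\bmy_{k,i}$ is $\tmb^{-1/2}(\text{group scatter})\tmb^{-1/2} - \id_d$. For each $k$, computing all $k$ group scatter matrices takes $O(md^{\omega-1})$ time by rectangular multiplication, so summed over $k\le L$ this is $O(Lmd^{\omega-1})$. Alternatively, one can reuse a fixed fine partition.
\item The linear term costs $O(md^{\omega-1})$.
\item Each $\hQ_k$ costs $O(2^k k^2 d^\omega)$ by Lemma~\ref{lem:implement_polarization}, so the total over $k\le L$ is $O(2^L L^2 d^\omega)$.
\end{itemize}

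The main obstacle is controlling the factor $2^L$. Since $2^L = 2^C (d/\eps)^{\log 2/\log(1/\rho)}$, we set $\gamma' \defeq \log 2/\log(1/\rho)$, which tends to $0$ as $\rho \to 0$. This is where "sufficiently small $\rho$" enters. Absorbing the polylogarithmic factors $L^2$ and $L$ into a slightly larger exponent, and using $n = \Theta(d/\eps^2)$, the total time is
\[
O\Par{\frac{d^{\omega}}{\eps^{2}}\Par{\frac d\eps}^{\gamma'}\mathrm{polylog} + \frac{d^{\omega+\gamma'}}{\eps^{\gamma'}}} \le O\Par{\frac{d^{\omega+\gamma}}{\eps^{2+\gamma}}}
\]
for any $\gamma > \gamma'$. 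Here the $md^{\omega-1}$ terms contribute $d^\omega/\eps^2$ up to logarithmic factors.

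Two points need checking. First, shrinking $\rho$ only increases the pilot size $C_\rho d$, which stays $O(d) \le n$, and only improves the variance constant in Proposition~\ref{prop:cross-fit-statistical}, which needs $\rho\le\frac1{16}$. Second, the $O(d^\omega)$ eigendecomposition assumption covers the matrix square roots needed above.
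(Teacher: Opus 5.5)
Your proposal is correct and follows essentially the same route as the paper's proof. Both use the pilot covariance from Fact~\ref{fact:pilot}, Proposition~\ref{prop:cross-fit-statistical} (with $\rho\le\frac1{16}$, $m\gtrsim d+L$, and the choice of $L$ to make the truncation bias a small multiple of $\eps$), Chebyshev plus a union bound, and a runtime dominated by the $2^L=(d/\eps)^{O(1/\log(1/\rho))}$ polarization overhead from Lemma~\ref{lem:implement_polarization}. The differences are cosmetic: you apply Chebyshev once to the combined variance rather than separately to the mean and covariance parts, and the mean-part bias $d/(n-n_0)$ is $O(\eps^2)$ rather than $\eps^3$, which is still harmless.
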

\begin{proof}
We begin by discussing correctness. The proof of Theorem~\ref{thm:first-order-plugin-bias-variance} shows that with probability $\ge \frac 7 8$, the deviation of $\norm{\vmu - \hvnu_{n - n_0}}_2^2$ from $\norm{\vmu - \vnu}_2^2$ is $\le \frac \eps 3$. Next, 
with probability $\ge 0.99$, Fact~\ref{fact:pilot} implies \eqref{eq:tmbd_def} holds with $n_0 \gtrsim_\rho d$. Under this event, taking $L$ as stated bounds the bias of $\hD_{\mathrm{BW}, L}$ with respect to $\BW^2(\ma, \mb)$ by $\frac \eps 3$, using Proposition~\ref{prop:cross-fit-statistical}. Finally, since $m = \frac{n - n_0}{2} = \Omega(n)$, Chebyshev's inequality and Proposition~\ref{prop:cross-fit-statistical} imply that $\hD_{\mathrm{BW}, L}$ deviates from its mean by $\le \frac \eps 3$ with probability $\ge \frac 7 8$. Taking a union bound over all three events proves that \eqref{eq:rtd_def} solves Problem~\ref{prob:FD_est}.

Next, for the runtime, the empirical mean $\hvnu_{n - n_0}$ takes $O(nd)$ time to compute, and the pilot covariance $\tmb$ can be formed in time $O(d^\omega)$ because $n_0 = O(d)$. We can also whiten all of the samples by $\tmb^{-1/2}$ in time $O(nd^{\omega - 1})$, and therefore form all of the group means $\bmy_{k, i}$ in \eqref{eq:cross-fit-group-means}, for a fixed $k$, in time $O(nd^{\omega - 1} + kd^\omega)$. The cost of forming all group means across all $k$ is thus at most $O(nd^{\omega - 1}L^2)$. All of these steps fit within the stated runtime budget.

It remains to bound the cost of forming all of the $\hQ_k$ used to compute $\hD_{\mathrm{BW}, L}$. Given the group means, the dominant cost comes from summing Lemma~\ref{lem:implement_polarization} across all $2 \le k \le L$, i.e.,
\[\sum_{k = 2}^L O\Par{2^k k^2 d^\omega} = O\Par{2^L d^\omega \log^2\Par{\frac d \eps}}.\]
Choosing a sufficiently small constant $\rho$ then leads to the runtime claim.
\end{proof}

\section{Gaussian Controlled Experiments}
\label{sec:gaussian_exp}
Previous theoretical analysis has established tight error characterizations for all methods considered. We now use Gaussian experiments to validate the tightness of our analysis and the lower-bound constructions. We first verify the hard isotropic case proposed in Section~\ref{ssec:extrapolation-bias-lower-bound}, then show that the predicted dimension dependence appears more broadly across a range of Gaussian families.

\textbf{Experimental setup.} We use Model~\ref{model:one-side} with known covariance $\ma$ and known zero means $\vmu = \vnu = \vzero_d$. We write $d$ for the dimension and $N$ for the total number of sampled observations. We denote the estimators by FID-OLS$_k$, FID-VALE$_k$, and FID-RTD for FID estimation, where $k$ is the extrapolation order. We omit the FID- prefix for other settings and in comparisons spanning multiple embeddings.
All extrapolation methods in Gaussian experiments follow the regression schedule in~\cite{chong2019effectively}, using 15 sample counts uniformly spaced over $[5000,N]$ up to integer rounding; RTD uses $m_0=0.8N$ pilot observations and $m=N-m_0$ correction observations. We also provide a detailed setup of these methods to Appendix~\ref{app:gaussian_exp_extra}.

\textbf{The hard isotropic instance.} We use the same scaled identity covariance family as in the bias lower-bound construction~\eqref{eq:extrapolation-hard-family}. This tests whether the residual bias predicted by the theory remains visible at finite sample sizes, even in this simple setting. In Table~\ref{tab:hard-isotropic}, increasing the extrapolation order reduces the bias, but the errors of OLS$_1$, OLS$_2$, and OLS$_3$ remain much larger than their trial standard deviations. RTD's mean error is comparable to its trial standard deviation. Thus, the residual extrapolation bias identified by our bounds is already visible for isotropic Gaussians.

\begin{table}[!ht]
\centering
\small
\caption{Isotropic experiment with $D_0=5$, $d=5000$, and $N=100{,}000$. Mean and standard deviation are reported across 50 trial sample trajectories.}
\label{tab:hard-isotropic}
\begin{tabular}{@{}lrrrrr@{}}
\toprule
 & Empirical & OLS$_1$ & OLS$_2$ & OLS$_3$ & RTD \\
\midrule
Mean estimate & 37.8610 & -9.1300 & 6.9197 & 4.7519 & 4.9033 \\
Signed error  & 32.8610 & -14.1300 & 1.9197 & -0.2481 & -0.0967 \\
Standard Deviation  & 0.0139 & 0.0316 & 0.0290 & 0.0447 & 0.1164 \\
\bottomrule
\end{tabular}
\end{table}
\textbf{Random Gaussian families.} We next test the predicted dimension dependence on random covariance pairs. Figure~\ref{fig:controlled-random-error} reports the error of 5 different estimators on a family of random Gaussian covariance pairs at four fixed sample budgets. At each fixed $N$, the empirical estimator's error grows approximately as $d^2$, while OLS$_1$ exhibits approximately $d^3$ growth at larger dimensions, matching the theoretically predicted bias scalings. RTD follows a trend close to $d^{1/2}$, as predicted by its optimal error bound in Section~\ref{sec:optimal-estimation}. OLS$_2$ and OLS$_3$ also show trends close to $d^{1/2}$ over the tested grid because their bias decays with higher powers of $N^{-1}$ and is not obvious over the tested range. Additional results for other random families, as well as rank-deficient and ill-conditioned cases, appear in Appendix~\ref{app:gaussian_exp_extra} and show broadly consistent qualitative trends.
\begin{figure}[!ht]
    \centering
    \includegraphics[width=\textwidth]{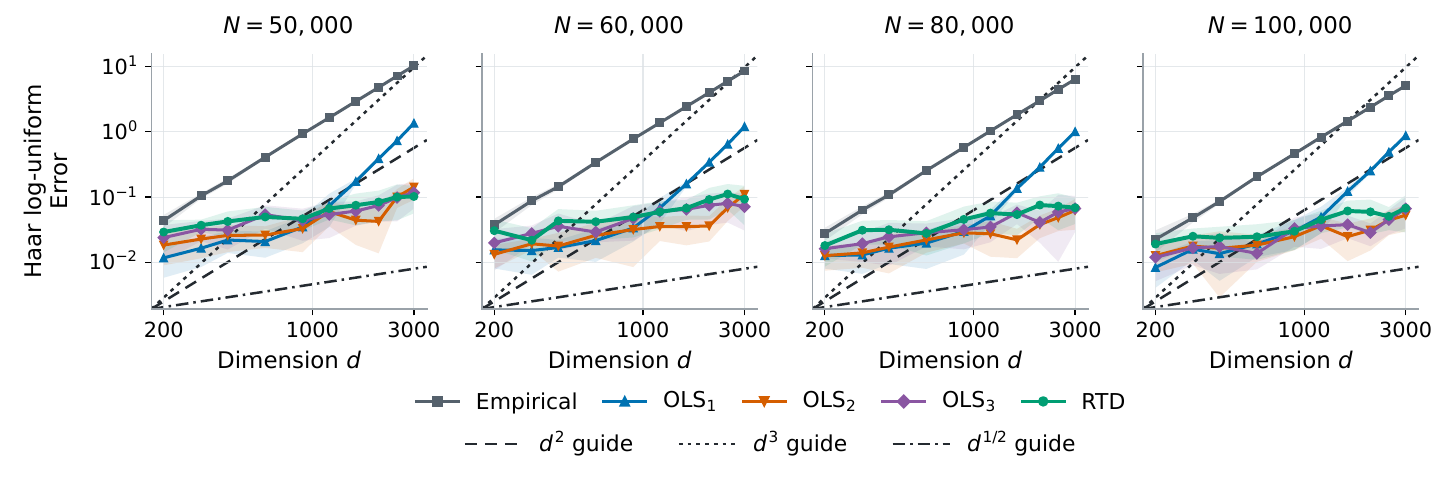}
    \caption{Estimation error versus dimension for Haar log-uniform covariances. At each $(d,N)$, we compute the median absolute error over five trials for each of ten independent population pairs. Curves show the mean of these ten medians, with shaded bands indicating $\pm1$ sample SD.}
    \label{fig:controlled-random-error}
\end{figure}

\section{ImageNet Experiments}
\label{sec:imagenet_exp}
In this section, we evaluate the practical accuracy and sample efficiency of RTD and extrapolation methods on ImageNet.
Extrapolations involve several tunable hyperparameters, so we begin with hyperparameter selection on the development pool. We then compare RTD and optimized VALE against the empirical estimator and FID$_\infty$ under standard 50K-sample setup. Finally, we reduce the sample budget to assess which estimators maintain robust performance with fewer samples.

\textbf{Experimental setup.} We evaluate FID estimation on ImageNet using seven generator checkpoints spanning GAN, diffusion, and autoregressive models~\cite{sauer2022stylegan,karras2024edm,zheng25ddo,tian2024visual}, with Inception-V3, DINOv2, and CLIP embeddings~\cite{heusel2017gans,stein2023exposing,jayasumana2024rethinking}. For each checkpoint, 550K pictures are generated in total: the first 300K form a development pool for proxy construction and ablation, and the remaining 250K is evenly split into five held-out blocks. Details are provided in Appendix~\ref{app:imagenet-setup}.

\textbf{Reference proxies.} Since population distances are unknown, we evaluate sensitivity to three reference proxies computed on the full 300K development pool: the empirical plug-in estimate $Q_c^{\mathrm{plug}}$, FID$_\infty$ $Q_c^\infty$, and RTD $Q_c$, for each checkpoint--embedding pair $c$. The latter two are medians of five randomized estimates on the same pool. Figure~\ref{fig:imagenet-proxy-example} compares StyleGAN-XL-256 across Inception-V3, DINOv2, and CLIP over 30K--300K samples. The empirical estimates remain noticeably higher, whereas RTD and FID$_\infty$ broadly agree across the tested sample budgets. Additional results for the remaining checkpoints and all three embeddings appear in Appendix~\ref{app:imagenet-reference-sensitivity}.

\newcommand{\imagenetProxySwatch}[2]{%
    \begingroup
    \setbox0=\hbox{\includegraphics{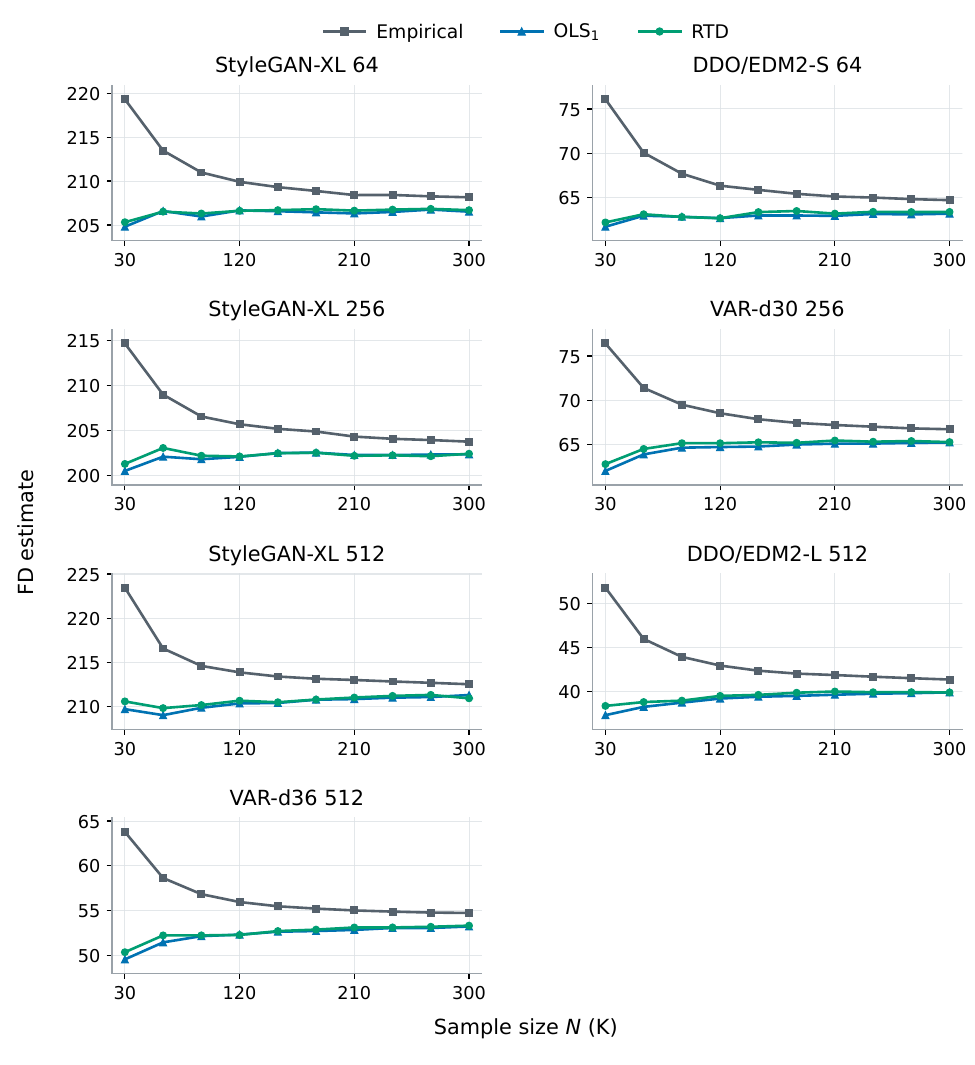}}%
    \edef\imagenetProxySample{\noexpand\includegraphics[width=1.8em,trim=\the\dimexpr\wd0*#1/1000\relax\space \the\dimexpr\ht0*960/1000\relax\space \the\dimexpr\wd0*#2/1000\relax\space \the\dimexpr\ht0*20/1000\relax,clip]{figures/imagenet_proxy_consistency_dinov2.pdf}}%
    \raisebox{-0.2ex}{\imagenetProxySample}%
    \endgroup
}
\newcommand{\imagenetProxyLegend}{%
    \begingroup\footnotesize\sffamily
    \imagenetProxySwatch{325}{620}\,Empirical\hspace{1.4em}%
    \imagenetProxySwatch{510}{435}\,FID$_\infty$\hspace{1.4em}%
    \imagenetProxySwatch{650}{295}\,RTD%
    \endgroup
}
\begin{figure}[!ht]
\centering
\begingroup
\imagenetProxyLegend\par\vspace{2pt}
\begin{minipage}[t]{0.32\textwidth}
\centering
{\small Inception-V3\par}
\setbox0=\hbox{\includegraphics{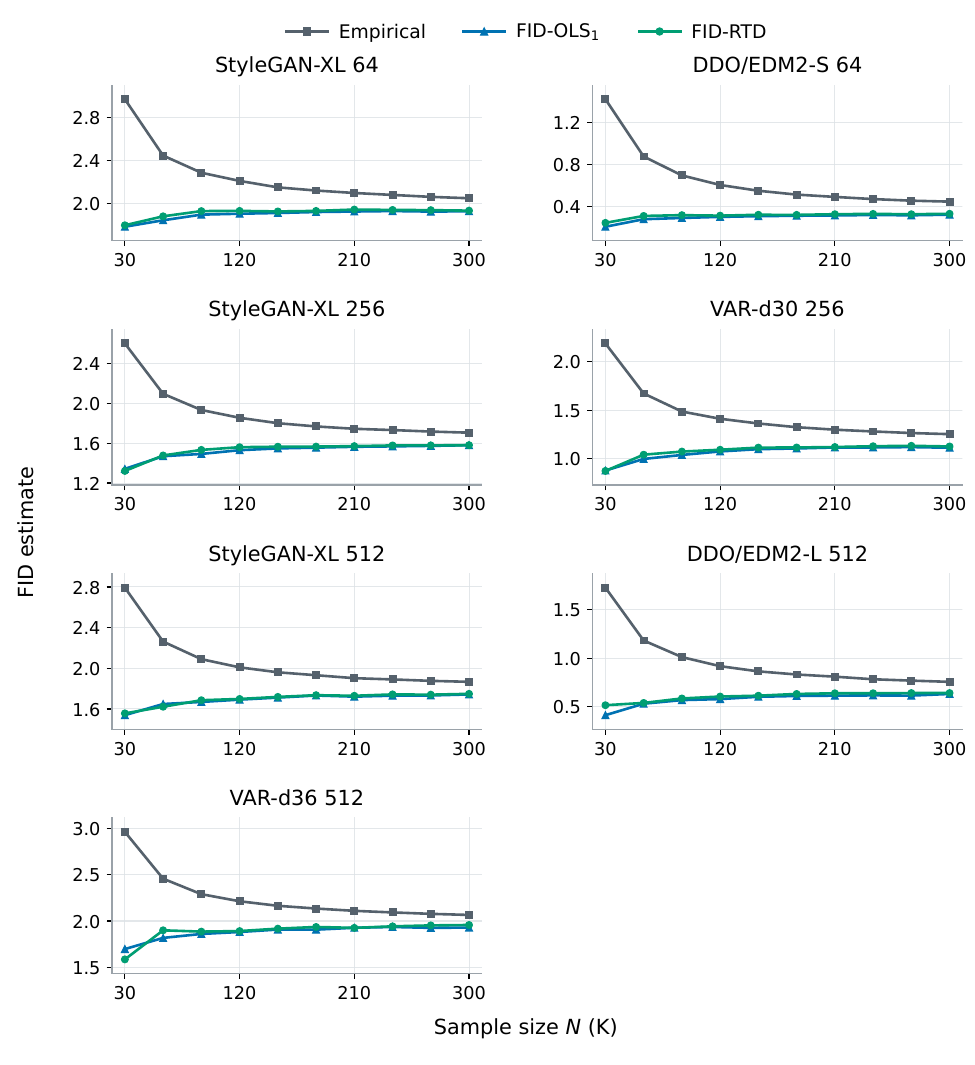}}
\edef\proxyexamplegraphic{\noexpand\includegraphics[width=\noexpand\linewidth,trim=\the\dimexpr\wd0*5/100\relax\space \the\dimexpr\ht0*51/100\relax\space \the\dimexpr\wd0*49/100\relax\space \the\dimexpr\ht0*31/100\relax,clip]{figures/imagenet_proxy_consistency_inception.pdf}}
\proxyexamplegraphic
\end{minipage}\hfill
\begin{minipage}[t]{0.32\textwidth}
\centering
{\small DINOv2\par}
\setbox0=\hbox{\includegraphics{figures/imagenet_proxy_consistency_dinov2.pdf}}
\edef\proxyexamplegraphic{\noexpand\includegraphics[width=\noexpand\linewidth,trim=\the\dimexpr\wd0*5/100\relax\space \the\dimexpr\ht0*51/100\relax\space \the\dimexpr\wd0*49/100\relax\space \the\dimexpr\ht0*31/100\relax,clip]{figures/imagenet_proxy_consistency_dinov2.pdf}}
\proxyexamplegraphic
\end{minipage}\hfill
\begin{minipage}[t]{0.32\textwidth}
\centering
{\small CLIP\par}
\setbox0=\hbox{\includegraphics{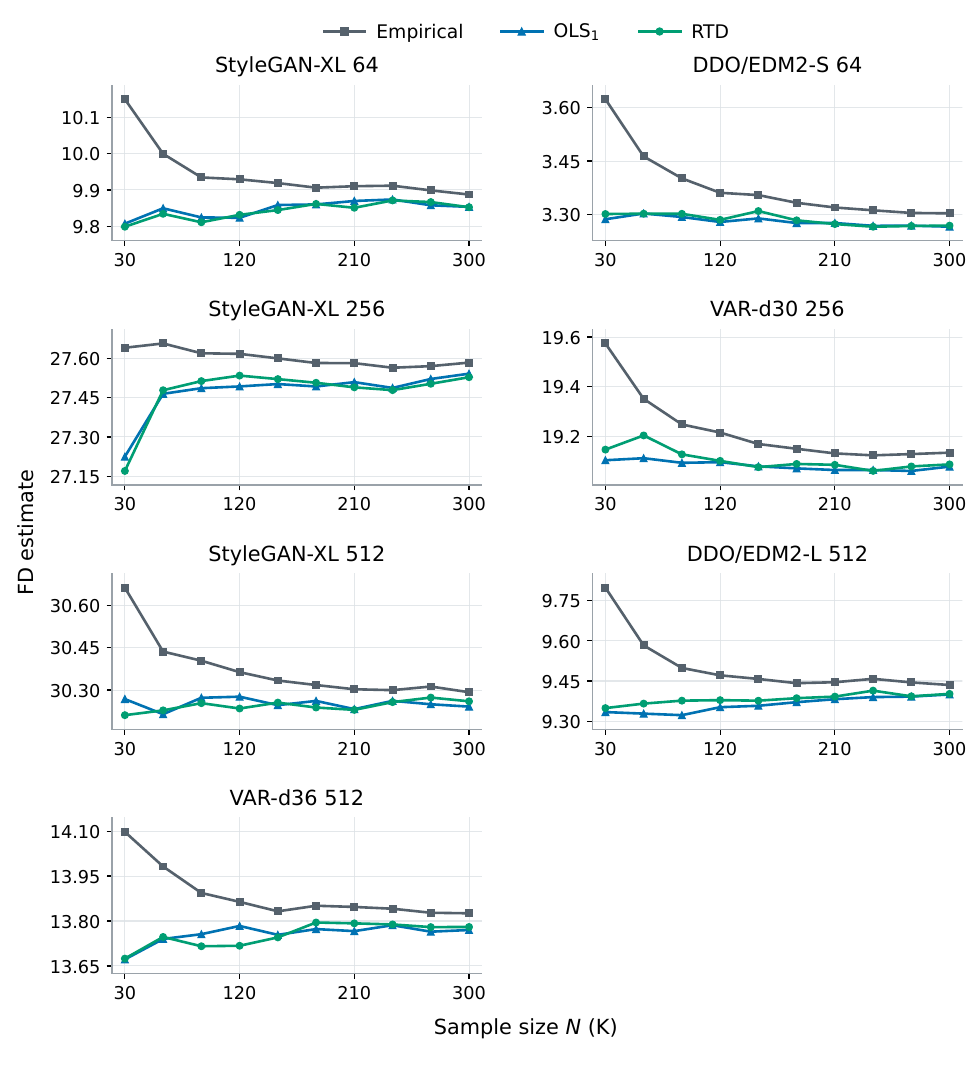}}
\edef\proxyexamplegraphic{\noexpand\includegraphics[width=\noexpand\linewidth,trim=\the\dimexpr\wd0*5/100\relax\space \the\dimexpr\ht0*51/100\relax\space \the\dimexpr\wd0*49/100\relax\space \the\dimexpr\ht0*31/100\relax,clip]{figures/imagenet_proxy_consistency_clip.pdf}}
\proxyexamplegraphic
\end{minipage}%
\endgroup
\caption{Sample-budget dependence for StyleGAN-XL at $256\times256$ resolution with Inception-V3 (left), DINOv2 (middle), and CLIP (right). Horizontal axes show $N$ (thousands); vertical axes show FID estimates. FID$_\infty$ and RTD report medians of five estimates.}
\label{fig:imagenet-proxy-example}
\end{figure}

\textbf{Utility.} Table~\ref{tab:imagenet-proxy-median-error} reports held-out errors against all three proxies to assess sensitivity to this choice. 
Because $Q_c^\infty$ and $Q_c$ closely agree in all settings we considered, the ablation studies and reduced-budget experiments use the RTD proxy $Q_c$. At the standard 50K budget, RTD achieves the lowest mean error on Inception across all three proxies. Optimized low-order VALE is particularly favorable for DINOv2 and CLIP, with VALE$_2$ leading on DINOv2 across all proxies. This pattern is consistent with the bias--variance tradeoff: in lower dimensions, low-order extrapolation can control bias while keeping variance small. The VALE configurations are selected on the development pool (Table~\ref{tab:imagenet-optimal-schedule}).

\textbf{Ablation of extrapolation setup.} Table~\ref{tab:imagenet-ablation-summary} quantifies the gains from optimizing extrapolation on 50K development-pool trials. Averaged over orders $k=1,2,3$, variance-aware weighting reduces the mean RMSE of OLS by $17.0$--$26.1\%$ across the three embeddings; optimizing the regression-point schedule and count yields a further $5.6$--$11.7\%$ reduction. Detailed comparisons, selected configurations, and the complete regression grid appear in Appendix~\ref{app:imagenet-ablation}.

\textbf{Computation.} Table~\ref{tab:imagenet-computation-time} reports estimator runtimes for held-out 50K trials in our computational environment (Appendix~\ref{ssec:computational-environment}). All reported mean runtimes are $\le 2$ seconds. RTD has the highest runtime for Inception and DINOv2, while the cost of VALE$_k$ depends on its selected regression grid. Notably, these estimation costs are negligible compared with the cost of sample generation.

\textbf{Reducing the sample budget.} We next test whether these accuracy gains translate into lower sample-generation costs, varying $N$ from 10K to 50K with $n_{\min} = 5$K for extrapolations. On Inception, FID-VALE$_2$ achieves the lowest mean proxy error among the evaluated estimators at 10K (Table~\ref{tab:imagenet-fd-sample-budget-heldout}). Its error of $0.1368$ is comparable to FID$_\infty$ at 50K ($0.1352$), but it only uses one fifth as many generated samples. RTD performs best at 50K, but its error increases sharply at 10K. Results for DINOv2 and CLIP appear in Appendix~\ref{app:imagenet-sample-budget}.

\textbf{Takeaways.} We briefly conclude with some recommendations for practitioners. Our experiments confirmed that the plug-in estimator exhibits substantial finite-sample error. At the standard 50K sample budget, based on our findings, our recommendation is to use RTD for more accurate evaluation. In sample-constrained settings, we find VALE$_2$ with 10K samples to be a robust choice.
\begin{table}[!ht]
\centering
\footnotesize
\setlength{\tabcolsep}{4pt}
\caption{Held-out utility and sensitivity to the proxy at $N=50\mathrm{K}$. For each checkpoint and proxy, we take the median absolute error over five held-out trials; entries report means $\pm$ sample SD.}
\label{tab:imagenet-proxy-median-error}
\label{tab:imagenet-methods-comparison}
\begin{tabular}{@{}llccc@{}}
\toprule
Embedding & Estimator & \shortstack{Empirical proxy\\$Q_c^{\mathrm{plug}}$} & \shortstack{FID$_\infty$ proxy\\$Q_c^\infty$} & \shortstack{RTD proxy\\$Q_c$} \\
\midrule
Inception & Empirical & $0.5108 \pm 0.0264$ & $0.6389 \pm 0.0267$ & $0.6287 \pm 0.0268$ \\
 & FID$_\infty$ & $0.2531 \pm 0.0456$ & $0.1250 \pm 0.0423$ & $0.1352 \pm 0.0471$ \\
 & VALE$_1$ & $0.2410 \pm 0.0406$ & $0.1129 \pm 0.0369$ & $0.1231 \pm 0.0412$ \\
 & VALE$_2$ & $0.2224 \pm 0.0372$ & $0.0944 \pm 0.0347$ & $0.1046 \pm 0.0356$ \\
 & VALE$_3$ & $0.2328 \pm 0.0383$ & $0.1048 \pm 0.0345$ & $0.1150 \pm 0.0376$ \\
 & RTD & $\mathbf{0.1972 \pm 0.0422}$ & $\mathbf{0.0692 \pm 0.0418}$ & $\mathbf{0.0794 \pm 0.0416}$ \\
\midrule
DINOv2 & Empirical & $5.5063 \pm 0.2975$ & $6.9704 \pm 0.3749$ & $6.9433 \pm 0.2378$ \\
 & FID$_\infty$ & $3.0186 \pm 0.3095$ & $1.5545 \pm 0.3459$ & $1.5816 \pm 0.3008$ \\
 & VALE$_1$ & $2.9861 \pm 0.3162$ & $1.5220 \pm 0.3335$ & $1.5492 \pm 0.2806$ \\
 & VALE$_2$ & $\mathbf{2.6939 \pm 0.4000}$ & $\mathbf{1.2409 \pm 0.3962}$ & $\mathbf{1.2570 \pm 0.3987}$ \\
 & VALE$_3$ & $2.8119 \pm 0.5786$ & $1.4883 \pm 0.6377$ & $1.4678 \pm 0.5907$ \\
 & RTD & $2.7139 \pm 0.4797$ & $1.3454 \pm 0.4310$ & $1.3573 \pm 0.3539$ \\
\midrule
CLIP & Empirical & $0.1851 \pm 0.0636$ & $0.2296 \pm 0.0656$ & $0.2255 \pm 0.0623$ \\
 & FID$_\infty$ & $0.0838 \pm 0.0523$ & $0.0645 \pm 0.0423$ & $0.0652 \pm 0.0416$ \\
 & VALE$_1$ & $0.0858 \pm 0.0408$ & $0.0633 \pm 0.0394$ & $0.0651 \pm 0.0384$ \\
 & VALE$_2$ & $\mathbf{0.0787 \pm 0.0565}$ & $0.0640 \pm 0.0255$ & $0.0657 \pm 0.0336$ \\
 & VALE$_3$ & $0.1004 \pm 0.0378$ & $0.0803 \pm 0.0425$ & $0.0735 \pm 0.0367$ \\
 & RTD & $0.0831 \pm 0.0463$ & $\mathbf{0.0527 \pm 0.0303}$ & $\mathbf{0.0553 \pm 0.0304}$ \\
\bottomrule
\end{tabular}
\end{table}
\begin{table}[!ht]
\centering
\footnotesize
\setlength{\tabcolsep}{4pt}
\caption{Successive ablations on 50K development-pool trials. RMSE is averaged over seven checkpoints and orders $k=1,2,3$; parentheses show reductions from preceding row.}
\label{tab:imagenet-ablation-summary}
\begin{tabular}{@{}lccc@{}}
\toprule
Configuration & Inception & DINOv2 & CLIP \\
\midrule
FID$_\infty$, OLS$_2$, OLS$_3$ & $0.0468$ & $1.0702$ & $0.1121$ \\
$+$ VALE weighting & $0.0346\;(\downarrow26.1\%)$ & $0.8696\;(\downarrow18.7\%)$ & $0.0930\;(\downarrow17.0\%)$ \\
$+$ Regression strategy & $\mathbf{0.0308}\;(\downarrow11.2\%)$ & $\mathbf{0.7679}\;(\downarrow11.7\%)$ & $\mathbf{0.0878}\;(\downarrow5.6\%)$ \\
\bottomrule
\end{tabular}
\end{table}

\begin{table}[!ht]
\centering
\footnotesize
\setlength{\tabcolsep}{2.5pt}
\caption{Estimator runtime (ms) for 50K trials, measured by mean $\pm$ sample SD over all 35 runs.}
\label{tab:imagenet-computation-time}
\begin{tabular}{@{}lrrrrrr@{}}
\toprule
Embedding & Empirical & FID$_\infty$ & VALE$_1$ & VALE$_2$ & VALE$_3$ & RTD \\
\midrule
\shortstack[l]{Inception}
& $64.9 \pm 0.2$
& $915.3 \pm 3.0$
& $123.1 \pm 0.3$
& $890.2 \pm 2.1$
& $1181.1 \pm 3.6$
& $1976.7 \pm 5.3$ \\

\shortstack[l]{DINOv2}
& $25.9 \pm 0.1$
& $368.8 \pm 1.1$
& $50.3 \pm 0.3$
& $358.9 \pm 1.1$
& $478.0 \pm 1.5$
& $559.3 \pm 2.0$ \\

\shortstack[l]{CLIP}
& $19.2 \pm 0.1$
& $272.6 \pm 1.8$
& $262.8 \pm 2.4$
& $354.9 \pm 7.7$
& $362.3 \pm 2.4$
& $217.7 \pm 12.2$ \\
\bottomrule
\end{tabular}
\end{table}
\begin{table}[!ht]
\centering
\footnotesize
\setlength{\tabcolsep}{2pt}
\caption{Sample-budget sensitivity for Inception. Entries report the mean $\pm$ sample SD across seven checkpoints of the median absolute error over five held-out trials against the RTD proxy.}
\label{tab:imagenet-fd-sample-budget-heldout}
\begin{tabular}{@{}rrrrrrr@{}}
\toprule
$N$ & Empirical & FID$_\infty$ & VALE$_1$ & VALE$_2$ & VALE$_3$ & RTD \\
\midrule
10K & $3.588\pm0.134$ & $0.211\pm0.058$ & $0.223\pm0.073$ & $\mathbf{0.137}\pm0.053$ & $1.014\pm0.579$ & $101.592\pm23.510$ \\
20K & $1.724\pm0.060$ & $0.159\pm0.054$ & $0.153\pm0.048$ & $\mathbf{0.107}\pm0.053$ & $0.172\pm0.067$ & $0.490\pm0.119$ \\
30K & $1.107\pm0.032$ & $0.144\pm0.030$ & $0.137\pm0.028$ & $0.108\pm0.050$ & $\mathbf{0.100}\pm0.046$ & $0.149\pm0.059$ \\
40K & $0.804\pm0.026$ & $0.141\pm0.033$ & $0.130\pm0.033$ & $\mathbf{0.112}\pm0.040$ & $0.123\pm0.038$ & $0.118\pm0.044$ \\
50K & $0.629\pm0.027$ & $0.135\pm0.047$ & $0.123\pm0.041$ & $0.105\pm0.036$ & $0.115\pm0.038$ & $\mathbf{0.079}\pm0.042$ \\
\bottomrule
\end{tabular}
\end{table}

\section*{Acknowledgements}
We thank the NSF AI Institute for Foundations of Machine Learning (IFML) for their support of this project, and the Texas Advanced Computing Center (TACC) for providing computing resources.

\section*{AI Disclosure}
\phantomsection
\addcontentsline{toc}{section}{AI Disclosure}

The authors initially formulated the FID estimation problem and derived a $O(\frac{d^2}{\eps^2})$ sample complexity bound for the empirical plug-in estimator. We then consulted GPT 5.6 Sol Pro on whether the upper bound could be improved, and through interactive conversations, discovered the RTD strategy in Section~\ref{sec:optimal-estimation} with LLM assistance. Motivated by  prior work on $\mathrm{FID}_\infty$ \cite{chong2019effectively}, the authors in parallel proposed the extrapolation framework in Section~\ref{sec:extrapolation} (along with empirical improvements, e.g., the variance-aware choice of regression weights). The authors finally developed a unified analysis establishing tight bounds for all estimators studied. The manuscript was written solely by the authors, who take full responsibility for the organization and presentation of all results.

\bibliography{refs}
\bibliographystyle{alpha}

\newpage
\appendix
\section{Information-Theoretic Lower Bound}\label{app:lb}

In this section, we prove our information-theoretic lower bound on FID estimation. Our proof is based on Le Cam's two-point method, and we recall a standard statement for convenience.

\begin{lemma}[Theorem 2.2(i), \cite{Tsybakov09}]\label{lem:lecam}
Let $P_0, P_1$ be probability distributions on the same measurable space $\Omega$, and let $\phi: \Omega \to \{0, 1\}$ be a (possibly randomized) function. Then,
\begin{equation}\label{eq:lecam}\Pr_{\omega \sim P_0}\Brack{\phi(\omega) = 1} + \Pr_{\omega \sim P_1}\Brack{\phi(\omega) = 0} \ge 1 - \TV\Par{P_0, P_1}.\end{equation}
\end{lemma}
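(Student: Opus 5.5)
We reduce the randomized test to a $[0,1]$-valued function and then bound a difference of expectations by the total variation distance, using densities. Since $\phi$ may be randomized, model it as $\phi(\omega) = \ind_{\{U \le \psi(\omega)\}}$, where $U$ is uniform on $[0,1]$ and independent of $\omega$, and $\psi: \Omega \to [0,1]$ is measurable and gives the conditional probability that $\phi$ outputs $1$. (More generally, condition on the internal randomness and average; the argument is identical.) Then the left-hand side of \eqref{eq:lecam} equals
\[
\E_{P_0}[\psi] + \E_{P_1}[1 - \psi] = 1 - \Par{\E_{P_1}[\psi] - \E_{P_0}[\psi]}.
\]
It therefore suffices to show $\E_{P_1}[f] - \E_{P_0}[f] \le \TV(P_0, P_1)$ for every measurable $f: \Omega \to [0,1]$.

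To prove this, introduce the dominating measure $\mu \defeq P_0 + P_1$, and let $p_0, p_1$ be the Radon--Nikodym densities of $P_0, P_1$ with respect to $\mu$. Then
\[
\E_{P_1}[f] - \E_{P_0}[f] = \int f\,(p_1 - p_0)\,\dd\mu \le \int (p_1 - p_0)_+\,\dd\mu,
\]
because $0 \le f \le 1$. The inequality is attained by $f = \ind_{\{p_1 > p_0\}}$.

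Finally, identify $\int (p_1 - p_0)_+\,\dd\mu$ with $\TV(P_0, P_1) = \sup_{A} |P_1(A) - P_0(A)|$. Since $\int (p_1 - p_0)\,\dd\mu = 0$, we have
\[
\int (p_1 - p_0)_+\,\dd\mu = \half \int |p_1 - p_0|\,\dd\mu .
\]
This quantity equals the supremum over events, with the set $A = \{p_1 > p_0\}$ achieving it. Combining the three steps gives \eqref{eq:lecam}.

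The only mildly delicate point is handling randomized tests rigorously, which the reduction to $\psi$ (equivalently, a conditioning argument) resolves. No step is a genuine obstacle.
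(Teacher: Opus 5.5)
Your argument is correct. The paper gives no proof of this lemma and simply cites Tsybakov; the standard proof behind that citation is the same density computation as yours. For a test with $A=\{\phi=1\}$, it bounds
\[
P_0(A)+P_1(A^c)\ge\int\min(p_0,p_1)\,\dd\mu = 1-\TV(P_0,P_1),
\]
which is your inequality $\int f(p_1-p_0)\,\dd\mu\le\int(p_1-p_0)_+\,\dd\mu$ rewritten via $\min(p_0,p_1)=p_1-(p_1-p_0)_+$.

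Your reduction of a randomized $\phi$ to the $[0,1]$-valued $\psi(\omega)=\Pr[\phi(\omega)=1\mid\omega]$ extends that deterministic-test argument to the randomized tests the statement allows. Your convention $\TV(P_0,P_1)=\sup_A|P_1(A)-P_0(A)|=\half\int|p_1-p_0|\,\dd\mu$ matches the paper's later use of Pinsker's inequality in the form $\TV\le\sqrt{\frac12\KL}$.
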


\restatelb*
\begin{proof}
Let $\tau = \frac{4\eps}{d}$, and define
\begin{equation*}
    P \defeq \calN\paren{\vzero_d, \frac{1}{8}\id_d}, \quad Q_0 \defeq \calN\paren{\vzero_d, \frac{1}{2}\id_d}, \quad Q_1 \defeq \calN\paren{\vzero_d,\paren{\frac{1}{2} + \tau}\id_d}.
\end{equation*}
Since $\eps \le \frac 1 8$, we have $0<\tau\le \half$, so $(P, Q_0)$ and $(P, Q_1)$ both satisfy Assumption~\ref{assum:normalization}. Also,
\begin{equation*}
    D_0\defeq \FID(P,Q_0)
    =
    d\paren{\sqrt{\frac18}-\sqrt{\frac12}}^2
    =
    \frac d8, 
    \quad
    D_1\defeq \FID(P, Q_1) = d\paren{\sqrt{\frac18}-\sqrt{\frac12+\tau}}^2,
\end{equation*}
so
\begin{align*}
    D_1 - D_0
    &=
    4\eps
    -\frac{d}{\sqrt2}
    \paren{
        \sqrt{\frac12+\tau}
        -\sqrt{\frac12}
    }=
    4\eps\left(
        1-
        \frac{1}{
            \sqrt{1+2\tau}+1
        }
    \right)
    > 2\eps,
\end{align*}
where the strict inequality follows from $\tau>0$. Therefore, an algorithm that solves Problem~\ref{prob:FD_est} under Model~\ref{model:one-side} can distinguish between draws from $Q_0^{\otimes n}$ and $Q_1^{\otimes n}$ with probability $\ge \frac 2 3$, i.e., it provides a test $\phi$ such that when $P_0 \gets Q_0^{\otimes n}$ and $P_1 \gets Q_1^{\otimes n}$, the left-hand side of \eqref{eq:lecam} is at most $\frac 1 3 + \frac 1 3$.

On the other hand, the KL divergence between the two Gaussian distributions is
\begin{align*}
    \KL(Q_0\Vert Q_1)
    &=
    \frac d2\left(
        \frac{1}{1 + 2\tau}
        -1
        +\log(1 + 2\tau)
    \right) \le \frac d4\paren{\frac{8\eps}{d}}^2 = \frac{16\eps^2}{d},
\end{align*}
where we used 
$\frac{1}{1+x}-1+\log(1+x)\le \frac{x^2}{2}$.
Therefore, by tensorization and Pinsker's inequality,
\begin{equation*}
    \TV(Q_0^{\otimes n},Q_1^{\otimes n})
    \le
    \sqrt{
        \frac12
        \KL(Q_0^{\otimes n}\Vert Q_1^{\otimes n})
    }
    \le
    \sqrt{
        \frac{8n\eps^2}{d}
    }.
\end{equation*}
Thus, if $n\le \frac{d}{128\eps^2}$, then $\TV(Q_0^{\otimes n},Q_1^{\otimes n}) \le \frac{1}{4}$, so Lemma~\ref{lem:lecam} gives $\frac 2 3 \ge 1 - \frac 1 4$, a contradiction.
\end{proof}
\section{Additional Results for Extrapolation}

\subsection{Subsampling from a shared pool}
\label{ssec:shared-pool-extrapolation}

Theorem~\ref{thm:general-order-extrapolation} (and in general, our estimator construction \eqref{eq:extrapolation-estimator}) assumes that different extrapolation levels are
computed from independent samples. The original FID$_\infty$ procedure in \cite{chong2019effectively}
instead generates a single pool of activations and obtains
estimates by repeatedly subsampling this common pool. In this section, we extend our analysis to the shared pool setting.

We follow the
shuffle-and-prefix rule in Algorithm~1 of \cite{chong2019effectively}:
reshuffle the pool independently at each level and take the required number
of observations.\footnote{Section~4.3 of \cite{chong2019effectively} uses the
phrase ``with replacement,'' but Algorithm~1 and the
\href{https://github.com/mchong6/FID_IS_infinity/blob/master/score_infinity.py}{official implementation}
shuffle the pool and take a prefix, which samples without replacement within
each level.}
Our analysis assumes an i.i.d.\ Gaussian pool and does not cover the QMC
sampling variant. We now show that shared-pool subsampling leaves the
marginal bias and variance at each level unchanged, and only introduces
cross-level dependence through the overlap between the subsamples.

Let $\{\vX_j\}_{j \in [N]}$ be the shared sample pool. For each level $i$, draw a
uniform subset $I_i\subseteq[N]$ of size $\vn_i=\vc_iN$, independent of the sample pool and independently
across levels. Thus sampling is
without replacement within each level, and subsets from different levels
may overlap. Note that unlike in Section~\ref{sec:extrapolation}, the $\vc_i$ need not sum to $1$, and hence the $\vn_i$ may not sum to $N$.

Write $\calI \defeq (I_1,\ldots,I_\ell)$, and let $\hat D_{I_i}$
denote the empirical Fr\'echet estimator computed from the samples indexed
by $I_i$. We first give a bound on each estimator used in the extrapolation.

\begin{proposition}
\label{prop:shared-pool-marginal}
For every level $i \in [\ell]$, conditional on the subset $I_i$, we have for some $|\va_1| \lesssim d^2$,
\[
    \bbE[\hat D_{I_i}\mid I_i]-D
    =
    \frac{\va_1}{\vn_i}
    +
    O\paren{\frac{d^3}{\vn_i^2}},
    \quad
    \operatorname{Var}\Brack{\hat D_{I_i}\mid I_i}
    \lesssim
    \frac{d}{\vn_i}
    +
    \frac{d^2}{\vn_i^2}.
\]
Moreover, the latter variance bound admits the following refinement.
Writing $\vX_a=\vnu+\mb^{1/2}\vg_a$ with
$\vg_a\simiid\calN(\vzero,\id_d)$ and $\mk\defeq\mb-\mc$, define
\[
    \psi(\vg)
    \defeq
    2(\vnu - \vmu)^\top\mb^{1/2}\vg
    +
    \vg^\top\mk\vg
    -
    \Tr(\mk).
\]
Then, conditional on $I_i$,
\begin{equation}
\label{eq:shared-pool-influence-expansion}
    \hat D_{I_i}
    -
    \bbE[\hat D_{I_i}\mid I_i]
    =
    \frac1{\vn_i}
    \sum_{a\in I_i}\psi(\vg_a)
    +
    \mathcal R_i,
    \qquad
    \bbE[\mathcal R_i^2\mid I_i]
    \lesssim
    \frac{d^2}{\vn_i^2},
\end{equation}

\end{proposition}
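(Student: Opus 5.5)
The key observation is that, conditional on $I_i$ (which is independent of the pool), the samples $\{\vX_a\}_{a\in I_i}$ are $\vn_i$ i.i.d.\ draws from $\calN(\vnu,\mb)$. Hence $\hat D_{I_i}\mid I_i \overset{d}{=} \hdd_{\vn_i}$, the plug-in estimator \eqref{eq:plugin} at sample size $\vn_i$. The bias statement is then Proposition~\ref{prop:uniform-psd-expansion} at order $k=1$: $|\va_1|\le C_1 d^2$ and remainder $C_1 d^3/\vn_i^2$ for $\vn_i \ge C_1 d$. For $\vn_i < C_1 d$ the claim also holds, because Theorem~\ref{thm:first-order-plugin-bias-variance} bounds the bias by $O(d^2/\vn_i)$, which is $O(d^3/\vn_i^2)$ in that range; the same is true of $\va_1/\vn_i$. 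The variance bound is exactly Theorem~\ref{thm:first-order-plugin-bias-variance}.

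For the refinement \eqref{eq:shared-pool-influence-expansion}, I would reuse the decomposition in the proof of Theorem~\ref{thm:first-order-plugin-bias-variance}. Write $\vv=\vnu-\vmu$ and $m=\vn_i-1$, and set $\bar\vg = \frac{1}{\vn_i}\sum_{a\in I_i}\vg_a$. Rather than the distributional identity of Fact~\ref{fact:rescaling}, I will work pathwise:
\[
\mh \defeq \frac{1}{m}\sum_{a\in I_i}(\vg_a-\bar\vg)(\vg_a-\bar\vg)^\top - \id_d,
\qquad
\tfrac{\vn_i}{\vn_i-1}\hmb = \mb^{1/2}(\id_d+\mh)\mb^{1/2}.
\]
Lemma~\ref{lem:first_order_expand} then gives pathwise
\[
\hat D_{I_i}-D = 2\vv^\top\mb^{1/2}\bar\vg + \normsop{\mb^{1/2}\bar\vg}^2 + \inprod{\mk}{\mh} + \Phi_{\mc}(\mh).
\]
Now expand the linear term:
\[
\inprod{\mk}{\mh} = \frac{1}{m}\sum_{a\in I_i}\Par{\vg_a^\top\mk\vg_a - \Tr(\mk)} - \frac{\vn_i}{m}\,\bar\vg^\top\mk\bar\vg.
\]
Combining with the mean term and subtracting expectations, the leading part is $\frac{1}{\vn_i}\sum_{a\in I_i}\psi(\vg_a)$. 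The remainder $\mathcal R_i$ collects four pieces:
\begin{itemize}
\item the normalization mismatch $(\frac1m-\frac1{\vn_i})\sum_a(\vg_a^\top\mk\vg_a-\Tr\mk)$, with second moment $\lesssim \normsf{\mk}^2/\vn_i^3 \lesssim d/\vn_i^3$;
\item the centered term $\normsop{\mb^{1/2}\bar\vg}^2-\E[\cdot]$, with variance $2\Tr(\mb^2)/\vn_i^2\lesssim d/\vn_i^2$;
\item the centered quadratic $\frac{\vn_i}{m}\bar\vg^\top\mk\bar\vg$, with variance $\lesssim \normsf{\mk}^2/\vn_i^2\lesssim d/\vn_i^2$ (using $\normsop{\mk}\le 2$);
\item the centered nonlinear term $\Phi_{\mc}(\mh)-\E\Phi_{\mc}(\mh)$.
\end{itemize}
Combining these with $(a+b+c+d)^2\le 4(a^2+b^2+c^2+d^2)$ leaves only the nonlinear term to control.

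That last piece is the main step, and the Poincaré argument from the proof of Theorem~\ref{thm:first-order-plugin-bias-variance} handles it directly. Since $\mh \overset{d}{=} \mh_m$, inequality \eqref{eq:apply_pi} together with \eqref{eq:phi_expect_bound} gives $\Var[\Phi_{\mc}(\mh)] \lesssim \frac1m\cdot\frac{d^2}{m} \lesssim d^2/\vn_i^2$. This requires nothing beyond Fact~\ref{fact:rescaling}, which identifies $\mh$ with a rescaled Wishart independent of $\bar\vg$.

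The obstacle is bookkeeping only: check that the centered-and-rescaled quadratic pieces really are $O(d/\vn_i^2)$ in second moment, and that $d\le d^2$ so everything sits under $d^2/\vn_i^2$. As a consistency check, $\Var[\frac1{\vn_i}\sum_a\psi(\vg_a)] = (4\vv^\top\mb\vv+2\normsf{\mk}^2)/\vn_i\lesssim d/\vn_i$, which recovers the first variance bound.
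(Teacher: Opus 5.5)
Your proposal is correct and matches the paper's proof: condition on $I_i$ to reduce to the i.i.d.\ plug-in estimator, then split the centered fluctuation into the mean term, the linear covariance term (your normalization-mismatch and $\bar\vg$-quadratic pieces together are, after centering, the paper's $\rho_i$), and the nonlinear term, which is controlled by the Poincar\'e bound \eqref{eq:apply_pi}. Your displayed identity for $\inprod{\mk}{\mh}$ is missing the deterministic constant $\frac1m\Tr(\mk)$, but this is harmless because every piece is centered.
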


\begin{proof}
Since $I_i$ is sampled independently of the common pool, conditional on $I_i$, the collection $\{X_a\}_{a\in I_i}$ consists of $\vn_i$ i.i.d.\ draws. The marginal bias and variance bounds therefore follow directly from Proposition~\ref{prop:uniform-psd-expansion} and Theorem~\ref{thm:first-order-plugin-bias-variance}. It remains only to establish the refined bound \eqref{eq:shared-pool-influence-expansion}.

As in the proof of
Theorem~\ref{thm:first-order-plugin-bias-variance}, the centered
fluctuation of $\hat D_{I_i}$ has three components: the sample-mean
fluctuation, the linear covariance fluctuation, and the nonlinear Bures remainder. We consider the three terms separately. Let $\bar\vg_i\defeq \frac 1 {\vn_i}\sum_{a\in I_i}\vg_a$. The centered sample-mean contribution is
\[
    \frac1{\vn_i}
    \sum_{a\in I_i}
    2(\vnu - \vmu)^\top\mb^{1/2}\vg_a
    +
    r_i^{\mathrm{mean}},
    \quad
    r_i^{\mathrm{mean}}
    \defeq
    \twonorm{\mb^{1/2}\bar\vg_i}^2
    -
    \frac{\Tr(\mb)}{\vn_i}.
\]
Since $\bar\vg_i\sim\calN(\vzero_d,\frac 1 {\vn_i}\id_d)$, Gaussian moment identities give $\bbE[(r_i^{\mathrm{mean}})^2\mid I_i] =\frac 2 {\vn_i^2} \Tr(\mb^2) \lesssim \frac d{\vn_i^2}$. 

For the covariance  contribution, let $\widehat \mw_i$ be the unbiased sample covariance of $\{\vg_a\}_{a\in I_i}$ and write $\mh_i\defeq\widehat \mw_i-\id_d$. Comparing $\widehat \mw_i$ with the raw
second moment gives
\[
    \inprod{\mk}{\mh_i}
    =
    \frac1{\vn_i}
    \sum_{a\in I_i}
    \paren{
        \vg_a^\top\mk\vg_a-\Tr(\mk)
    }
    +
    \rho_i,
\]
where
\[
    \rho_i
    \defeq
    \frac1{\vn_i(\vn_i-1)}
    \sum_{a\in I_i}
    \vg_a^\top\mk\vg_a
    -
    \frac{\vn_i}{\vn_i-1}
    \bar\vg_i^\top\mk\bar\vg_i.
\]
As in the proof of Theorem~\ref{thm:first-order-plugin-bias-variance}, independence of the sample mean and sample covariance yields $\bbE[\rho_i\mid I_i]=0$ and $\bbE[\rho_i^2\mid I_i] \lesssim\frac 1 {\vn_i^2}\norm{\mk}_{\mathrm F}^2 \lesssim \frac d {\vn_i^2}$. Finally, define the centered nonlinear remainder by
\[
    r_i^{\mathrm{nl}}
    \defeq
    \Phi_{\mc}(\mh_i)
    -
    \bbE[\Phi_{\mc}(\mh_i)\mid I_i].
\]
The same argument as in the proof of \eqref{eq:apply_pi} gives
$\bbE[(r_i^{\mathrm{nl}})^2\mid I_i]\lesssim \frac{d^2}{\vn_i^2}$.

Collecting the three contributions then gives
\eqref{eq:shared-pool-influence-expansion} with
$\mathcal R_i
\defeq
r_i^{\mathrm{mean}}+\rho_i+r_i^{\mathrm{nl}}$, and
$\bbE[\mathcal R_i^2\mid I_i]\lesssim \frac{d^2}{n_i^2}$.
\end{proof}

Proposition~\ref{prop:shared-pool-marginal} shows that the potential interaction across levels mainly depends on the term $\psi(\vg_a)$. Computing the cross-level covariance therefore reduces to counting the collisions $I_i\cap I_j$.
\begin{proposition}
\label{prop:shared-pool-covariance}
Conditional on $\calI$, letting $V \defeq \Var[\psi(\vg)]$, we have
\begin{equation}
\label{eq:shared-pool-level-covariance}
    \operatorname{Cov}
    \left(
        \hat D_{I_i},
        \hat D_{I_j}
        \mid\calI
    \right)
    =
    V
    \frac{|I_i\cap I_j|}{\vn_i \vn_j}
    +
    O_{\mathbf c}
    \left(
        \frac{d^{3/2}}{N^{3/2}}
        +
        \frac{d^2}{N^2}
    \right).
\end{equation}

For the same subsampling scheme and any fixed 
$\vw\in\R^\ell$ satisfying $\sum_{i\in [\ell]}\vw_i=1$, we have
\begin{equation}
\label{eq:shared-pool-weighted-variance}
    \operatorname{Var}
    \left[
        \sum_{i\in [\ell]}
        \vw_i\hat D_{I_i}
        \;\middle|\;\calI
    \right]
    =
    \Gamma_{\mathrm{pool}}(\mathbf w,\calI)
    \frac{V}{N}
    +
    O_{\mathbf c,\mathbf w}
    \left(
        \frac{d^{3/2}}{N^{3/2}}
        +
        \frac{d^2}{N^2}
    \right),
\end{equation}
where 
\begin{equation}
\label{eq:expected-shared-pool-factor}
    \Gamma_{\mathrm{pool}}(\mathbf w,\calI)
    \defeq
    N\sum_{i,j \in [\ell]}
    \vw_i\vw_j
    \frac{|I_i\cap I_j|}{n_i n_j}, 
    \quad 
    \bbE_{\calI}
    \Gamma_{\mathrm{pool}}(\mathbf w,\calI)
    =
    1+
    \sum_{i\in[\ell]}
    \vw_i^2
    \left(
        \frac1{c_i}-1
    \right).
\end{equation}
The unconditional variance is given by
\eqref{eq:shared-pool-weighted-variance} with
$\Gamma_{\mathrm{pool}}(\mathbf w,\calI)$ replaced by its expectation.
\end{proposition}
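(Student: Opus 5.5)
We prove \eqref{eq:shared-pool-level-covariance} by substituting the influence expansion \eqref{eq:shared-pool-influence-expansion} from Proposition~\ref{prop:shared-pool-marginal} into both estimators. Conditional on $\calI$, write $\hat D_{I_i}-\bbE[\hat D_{I_i}\mid I_i] = S_i + \mathcal R_i$, where $S_i \defeq \frac 1 {\vn_i}\sum_{a\in I_i}\psi(\vg_a)$. Since $\calI$ is independent of the pool, conditioning on $\calI$ leaves the $\vg_a$ i.i.d. Hence $\bbE[\mathcal R_i^2\mid\calI]=\bbE[\mathcal R_i^2\mid I_i]$, and the bound from Proposition~\ref{prop:shared-pool-marginal} still applies.

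Expanding bilinearly,
\[\operatorname{Cov}(\hat D_{I_i},\hat D_{I_j}\mid\calI) = \bbE[S_iS_j\mid\calI] + \bbE[S_i\mathcal R_j\mid\calI] + \bbE[\mathcal R_iS_j\mid\calI] + \bbE[\mathcal R_i\mathcal R_j\mid\calI].\]

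\textbf{Main term.} The $\psi(\vg_a)$ are i.i.d., mean zero (since $\bbE[\vg^\top\mk\vg]=\Tr(\mk)$), and have variance $V$. Only the diagonal pairs $a=b\in I_i\cap I_j$ survive, giving exactly $V\frac{|I_i\cap I_j|}{\vn_i\vn_j}$.

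\textbf{Cross terms.} Use Cauchy--Schwarz. We have $\bbE[S_i^2\mid\calI]=V/\vn_i$. Moreover, $V\lesssim \vv^\top\mb\vv+\normf{\mb^{1/2}\mk\mb^{1/2}}^2\lesssim 1+d$ by Assumption~\ref{assum:normalization}, since $\normsop{\mk}\lesssim 1$. Combining with $\bbE[\mathcal R_j^2\mid\calI]\lesssim d^2/\vn_j^2$ gives $|\bbE[S_i\mathcal R_j\mid\calI]|\lesssim \sqrt{d/\vn_i}\cdot d/\vn_j$. This is only $O(d^{3/2}/N^{3/2})$ up to $\vc$-dependent constants, which matches the stated error. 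The $\mathcal R\mathcal R$ term is $\lesssim d^2/(\vn_i\vn_j)=O_{\vc}(d^2/N^2)$. Since $\vn_i=\vc_iN$, all constants depend only on $\vc$.

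Then \eqref{eq:shared-pool-weighted-variance} follows by summing $\vw_i\vw_j\operatorname{Cov}(\cdot,\cdot)$ over $i,j$. The main terms assemble to $\frac{V}{N}\Gamma_{\mathrm{pool}}(\vw,\calI)$, and the $\ell^2$ error terms contribute $O_{\vc,\vw}(\cdot)$.

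For \eqref{eq:expected-shared-pool-factor}, compute $\bbE|I_i\cap I_j|$ for independent uniform subsets.
\begin{itemize}
\item For $i\ne j$, each index lies in both with probability $\vc_i\vc_j$, so $\bbE|I_i\cap I_j|=N\vc_i\vc_j$ and $N\frac{\bbE|I_i\cap I_j|}{\vn_i\vn_j}=1$.
\item For $i=j$, $|I_i|=\vn_i$, so the term is $N/\vn_i=1/\vc_i$.
\end{itemize}
Hence $\bbE\Gamma_{\mathrm{pool}}=\sum_{i\ne j}\vw_i\vw_j+\sum_i\vw_i^2/\vc_i=(\sum_i\vw_i)^2+\sum_i\vw_i^2(\frac1{\vc_i}-1)$, which equals the claimed expression using $\sum_i\vw_i=1$.

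For the unconditional variance, apply the law of total variance over $\calI$:
\[\Var\Big[\sum_i\vw_i\hat D_{I_i}\Big] = \bbE_\calI\Big[\Var\Big[\sum_i\vw_i\hat D_{I_i}\,\Big|\,\calI\Big]\Big] + \Var_\calI\Big[\bbE\Big[\sum_i\vw_i\hat D_{I_i}\,\Big|\,\calI\Big]\Big].\]
The second term vanishes, because $\bbE[\hat D_{I_i}\mid\calI]$ depends only on $\vn_i$ and not on which subset was drawn. The first term is the average of \eqref{eq:shared-pool-weighted-variance}, which replaces $\Gamma_{\mathrm{pool}}$ by its expectation, as claimed.

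The main obstacle is the cross term. Its error is governed by $\bbE[\mathcal R_i^2]$, and it requires the bound $V\lesssim d$, which must be checked against the normalization, together with the careful justification that conditioning on the full $\calI$ rather than on $I_i$ alone preserves the marginal bounds.
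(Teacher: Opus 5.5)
Your proof is correct and follows essentially the same route as the paper. Both arguments use the influence expansion from Proposition~\ref{prop:shared-pool-marginal}, the exact diagonal-collision computation $V\frac{|I_i\cap I_j|}{\vn_i\vn_j}$ for the leading term, Cauchy--Schwarz for the cross and remainder terms, the overlap expectations $\bbE|I_i\cap I_j|=\frac{\vn_i\vn_j}{N}$, and the law of total variance.

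One harmless slip: with $\vv=\vnu-\vmu$, the variance is $V=4\vv^\top\mb\vv+2\normf{\mk}^2$, not a bound involving $\normf{\mb^{1/2}\mk\mb^{1/2}}^2$ (that intermediate inequality can actually fail). Since $\normop{\mk}\le 1$, this still gives $V\lesssim d$, which is all you use, and which the paper uses implicitly when it asserts $\bbE[L_i^2]\lesssim_{\mathbf c}\frac dN$.
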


\begin{proof}
Until we average over $\calI$ below, all covariances and second moments
are conditional on $\calI$. The marginal bounds remain valid under
this conditioning because the subsets are independent of the observations.
Write the leading term in
\eqref{eq:shared-pool-influence-expansion} as
$L_i\defeq \frac 1 {\vn_i}\sum_{a\in I_i}\psi(\vg_a)$. Conditional on $\calI$,
independence of the observations in the common pool gives
\[
    \operatorname{Cov}(L_i,L_j)
    =
    \frac1{\vn_i \vn_j}
    \sum_{a\in I_i}
    \sum_{b\in I_j}
    \operatorname{Cov}
    \left(
        \psi(\vg_a),\psi(\vg_b)
    \right)
    =
    V\frac{|I_i\cap I_j|}{\vn_i \vn_j}.
\]
Indeed, every term with $a\neq b$ vanishes, while each shared observation
contributes one copy of $V$.

By Proposition~\ref{prop:shared-pool-marginal},
$\bbE[L_i^2]\lesssim_{\mathbf c} \frac dN$ and
$\bbE[\mathcal R_i^2]\lesssim_{\mathbf c}\frac {d^2}{N^2}$. Hence the 
Cauchy--Schwarz inequality gives
\[|\operatorname{Cov}(L_i,\mathcal R_j)|
\lesssim_{\mathbf c}\frac{d^{3/2}}{N^{3/2}},\]
with the same bound for
$\operatorname{Cov}(\mathcal R_i,L_j)$, while
$|\operatorname{Cov}(\mathcal R_i,\mathcal R_j)|
\lesssim_{\mathbf c} \frac{d^2}{N^2}$. This proves
\eqref{eq:shared-pool-level-covariance}.

For the weighted estimator, summing
\eqref{eq:shared-pool-level-covariance} against $\vw_i\vw_j$ gives
\eqref{eq:shared-pool-weighted-variance}. Equivalently, its leading
fluctuation can be written as
\[
    \sum_{i \in [\ell]} \vw_iL_i
    =
    \sum_{a\in[N]} q_a\psi(\vg_a),
    \quad
    q_a
    \defeq
    \sum_{i:a\in I_i}\frac{\vw_i}{\vn_i},
\]
so that
$N\sum_aq_a^2=\Gamma_{\mathrm{pool}}(\mathbf w,\calI)$. Independently sampled subsets satisfy $\bbE|I_i\cap I_j|=\frac{\vn_i\vn_j}{N}$ for $i\neq j$, whereas $|I_i\cap I_i|=\vn_i$. Therefore
\begin{align*}
    \bbE\Gamma_{\mathrm{pool}}
    &=
    \sum_{i \in [\ell]}\frac{\vw_i^2}{\vc_i}
    +
    \sum_{i\neq j}\vw_i\vw_j =
    \sum_i\frac{\vw_i^2}{\vc_i}
    +
    1-\sum_{i \in [\ell]} \vw_i^2,
\end{align*}
where the last equality uses $\sum_{i \in [\ell]}\vw_i=1$, proving \eqref{eq:expected-shared-pool-factor}.
Finally, $\bbE[\hat D_{I_i}\mid\calI]=\bbE[\hat D_{\vn_i}]$ depends only
on the fixed sample size $\vn_i$, not on the selected indices. The law of
total variance therefore gives the stated unconditional variance by
averaging \eqref{eq:shared-pool-weighted-variance} over $\calI$.
\end{proof}
Table~\ref{tab:shared-pool-gamma} reports this factor for extrapolation orders $k\in\{1,2\}$. All schedules span $5$K--$50$K samples. For each order, we compare ordinary least squares (OLS) with variance-aware regression (VA), and use either $J=8$, or $J=15$ points across different point selection strategies.

\begin{table}[H]
\centering
\small
\caption{Expected common-pool variance factor
$\bbE\Gamma_{\mathrm{pool}}$ for different extrapolation methods.}
\label{tab:shared-pool-gamma}
\begin{tabular}{@{}llrrrr@{}}
\toprule
Family
& Order
& Points $J$
& \shortstack[r]{Uniform\\in $n$}
& \shortstack[r]{Uniform\\in $\frac 1n$}
& \shortstack[r]{Chebyshev\\in $\frac 1n$} \\
\midrule

OLS & $k=1$ & 8
    & 1.507 & 2.148 & 1.681 \\
OLS & $k=1$ & 15
    & \textbf{1.342}$^\dagger$ & 1.743 & 1.407 \\
\cmidrule(lr){1-6}
VA & $k=1$ & 8
    & 1.218 & 1.231 & 1.190 \\
VA & $k=1$ & 15
    & 1.141 & 1.225 & 1.123 \\
\cmidrule(lr){1-6}
VA & $k=2$ & 8
    & 1.778 & 1.567 & 1.784 \\
VA & $k=2$ & 15
    & 1.470 & 1.543 & 1.472 \\
\bottomrule
\end{tabular}
\vspace{1mm}
\parbox{0.92\linewidth}{\footnotesize
$^\dagger$ Original FID$_\infty$: first-order OLS using $15$ sample sizes
uniformly spaced in $n$ between $5$K and $50$K.}
\end{table}
\subsection{Proof of Lemma~\ref{lem:sharp-isotropic-coefficients}}
\label{app:isotropic-extrapolation-coefficients}

We give a short proof of Lemma~\ref{lem:sharp-isotropic-coefficients}, our coefficient lower bound from Section~\ref{ssec:extrapolation-bias-lower-bound}, which applies to the instance \eqref{eq:extrapolation-hard-family} of zero-mean Gaussians with isotropic covariances, reproduced here for convenience:
\[\mb=\frac12\id_d,
    \quad
    \ma=a\id_d,
    \quad
    a\defeq
    \left(\sqrt{\frac12}+\sqrt{\frac{D_0}{d}}\right)^2.\]
We use the following standard consequence of the Marchenko--Pastur law
\cite{marchenko1967distribution,bai2010spectral}.
\begin{lemma}[Appendix B, \cite{montanaro2007distinguishability}]
\label{lem:mp-square-root-moment}
Let 
$\mw_m\sim\Wish(\frac 1 m \id_d,m)$, so that $\bbE\mw_m=\id_d$.
If $d,m\to\infty$ with $\frac d m \to\gamma\in[0,1)$, then
\[
    \frac1d\bbE\Tr\mw_m^{1/2}
    \to
    1-\frac12\phi(\gamma),
\]
where
\begin{equation}
\label{eq:mp-curvature-expansion}
    \phi(\gamma)
    \defeq
    \sum_{j\ge1}\alpha_j\gamma^j,
    \quad
    \alpha_j
    \defeq
    \frac{
        \binom{2j-2}{j-1}
        \binom{2j}{j}
    }{
        2^{4j-2}j(j+1)
    }
    >0.
\end{equation}
\end{lemma}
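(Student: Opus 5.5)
The plan is to prove the statement in two stages: first reduce the expected normalized trace to an integral against the Marchenko--Pastur law, then evaluate that integral as a power series in $\gamma$ and match coefficients with $\alpha_j$.

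\textbf{Reduction to the Marchenko--Pastur law.} Let $\mu_\gamma$ be the Marchenko--Pastur law with ratio $\gamma\in[0,1)$, supported on $[(1-\sqrt\gamma)^2,(1+\sqrt\gamma)^2]$. For $d/m\to\gamma$, the empirical spectral distribution of $\mw_m$ converges almost surely and weakly to $\mu_\gamma$.
\begin{itemize}
\item The function $x\mapsto\sqrt x$ is continuous on $[0,\infty)$ but unbounded, so weak convergence alone does not suffice and I will add a uniform-integrability argument.
\item By Jensen, $\bigl(\frac1d\Tr\mw_m^{1/2}\bigr)^2\le\frac1d\Tr\mw_m$.
\item Moreover, $\E\bigl(\frac1d\Tr\mw_m\bigr)^2=1+O(\frac1{dm})$ by Fact~\ref{fact:wishart_1}.
\item Hence $\frac1d\Tr\mw_m^{1/2}$ is bounded in $L^2$ and therefore uniformly integrable.
\item A truncation argument, cutting $\sqrt x$ at a large level $M$ and controlling the tail by $\frac1d\Tr\mw_m\ind_{\lambda>M}$, upgrades almost-sure convergence of the truncated functional to convergence of expectations.
\end{itemize}
This gives $\frac1d\E\Tr\mw_m^{1/2}\to\int\sqrt x\,\dd\mu_\gamma(x)$. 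It remains to show $\int\sqrt x\,\dd\mu_\gamma=1-\frac12\phi(\gamma)$.

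\textbf{Evaluating the integral.} Set $z\defeq\sqrt\gamma<1$ and parametrize the support by
\[x=1+\gamma+2z\cos\theta=\Abs{1+ze^{i\theta}}^2,\qquad \theta\in[0,\pi].\]
Under this substitution the Marchenko--Pastur density becomes $\dd\mu_\gamma=\frac{2\sin^2\theta}{\pi x}\,\dd\theta$. Therefore
\[\int\sqrt x\,\dd\mu_\gamma=\frac2\pi\int_0^\pi\sin^2\theta\,(1+ze^{i\theta})^{-1/2}(1+ze^{-i\theta})^{-1/2}\,\dd\theta.\]
Because $|z|<1$, both binomial series converge uniformly in $\theta$, so I can integrate term by term. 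Writing $\sin^2\theta=\frac12-\frac14(e^{2i\theta}+e^{-2i\theta})$ and using orthogonality of the exponentials, only terms with index difference $0$ or $\pm2$ survive. This yields a power series in $z^2=\gamma$ whose coefficients are finite sums of products of the coefficients $\binom{-1/2}{r}=(-1)^r\binom{2r}{r}4^{-r}$. Explicitly,
\[\sum_r c_r^2-\tfrac12\sum_r c_rc_{r+2},\qquad c_r\defeq\binom{2r}{r}4^{-r}(-1)^r,\]
collected by total degree.

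\textbf{Main obstacle: the coefficient identity.} The hard step is the combinatorial identity showing that the degree-$j$ coefficient equals $-\frac12\alpha_j$ for $j\ge1$, with constant term $1$. I would first sanity-check the low orders.
\begin{itemize}
\item The constant term is $\frac2\pi\cdot\frac\pi2=1$.
\item At order $\gamma$, the coefficient should be $-\frac18=-\frac12\alpha_1$. This agrees with the second-order Taylor heuristic $\sqrt x\approx1+\frac{x-1}2-\frac{(x-1)^2}8$ together with $\int(x-1)^2\,\dd\mu_\gamma=\gamma$.
\end{itemize}
For the general identity, I would recognize the sums $\sum_r c_rc_{j-r}$ as hypergeometric or Legendre-type expressions, or apply a WZ/Zeilberger certificate. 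Alternatively, I would derive a first-order ODE in $\gamma$ satisfied by the closed-form elliptic-integral expression for $\int\sqrt x\,\dd\mu_\gamma$, and check that the series $1-\frac12\sum_j\alpha_j\gamma^j$ satisfies the same ODE and initial condition. Positivity of each $\alpha_j$ is then immediate from its closed form.
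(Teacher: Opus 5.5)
The paper gives no proof of this lemma; it imports it from Montanaro. Your argument is therefore a self-contained alternative, and its overall structure is sound.

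\textbf{Analytic step.} This part is correct. The bound $(\frac1d\Tr\mw_m^{1/2})^2\le\frac1d\Tr\mw_m$, together with $\bbE(\frac1d\Tr\mw_m)^2=1+\frac{2}{dm}$, gives uniform integrability. The bound $\sqrt\lambda\,\ind_{\lambda>M}\le\lambda/\sqrt M$ handles the unbounded test function. So weak convergence of the spectral distribution does yield $\frac1d\bbE\Tr\mw_m^{1/2}\to\int\sqrt x\,\dd\mu_\gamma$.

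\textbf{Substitution.} This is also correct: $x=|1+ze^{i\theta}|^2$ with $\dd\mu_\gamma=\frac{2\sin^2\theta}{\pi x}\dd\theta$, and this measure integrates to $1$, which is a good check.

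\textbf{Comparison with the cited form.} Equivalently, $1-\frac12\phi(\gamma)=\sum_{j\ge0}\frac{1}{j+1}\binom{1/2}{j}\binom{-1/2}{j}\gamma^j={}_2F_1(-\frac12,\frac12;2;\gamma)$. This is the $s=\frac12$ case of the hypergeometric formula for $\int x^s\,\dd\mu_\gamma$ that extends the Narayana moments. Your Fourier route avoids needing that formula, and it makes explicit the limit interchange that the citation leaves implicit.

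\textbf{The final step is wrong in one place and unfinished in another.}

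\emph{The constant in your coefficient formula is off by a factor of $2$.} Each of the index differences $k=2$ and $k=-2$ contributes $\frac2\pi\int_0^\pi\sin^2\theta\,e^{ik\theta}\,\dd\theta=-\frac12$. Together they give $-\sum_r c_rc_{r+2}\gamma^{r+1}$, not $-\frac12\sum_r c_rc_{r+2}\gamma^{r+1}$. Also, on $[0,\pi]$ the odd differences do not vanish individually by orthogonality. They cancel in symmetric pairs, or you can extend to $[0,2\pi]$ using evenness of the real integrand. With your constant, the $\gamma$-coefficient would be $c_1^2-\frac12c_0c_2=\frac1{16}$. That contradicts the $-\frac18$ you got from the Taylor heuristic, so no WZ certificate could prove the identity in the form you stated.

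\emph{Once corrected, the ``main obstacle'' disappears.} Collecting by degree, the coefficient of $\gamma^j$ for $j\ge1$ is not a convolution $\sum_r c_rc_{j-r}$. It is just $c_j^2-c_{j-1}c_{j+1}$, coming from $(r,s)=(j,j)$ and $(j\pm1,j\mp1)$. Use the ratio identities $\binom{2j}{j}=\frac{2(2j-1)}{j}\binom{2j-2}{j-1}$ and $\binom{2j+2}{j+1}=\frac{2(2j+1)}{j+1}\binom{2j}{j}$. Then
\[
c_j^2-c_{j-1}c_{j+1}=\frac{\binom{2j}{j}\binom{2j-2}{j-1}}{16^j}\Par{\frac{2(2j-1)}{j}-\frac{2(2j+1)}{j+1}}=-\frac{2\binom{2j-2}{j-1}\binom{2j}{j}}{2^{4j}j(j+1)}=-\frac12\alpha_j,
\]
and the constant term is $c_0^2=1$. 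No hypergeometric recognition, WZ machinery, or elliptic-integral ODE is needed. With these two fixes your proof is complete.
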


\SharpIsotropicCoefficients*

\begin{proof}
Throughout let $m \defeq n-1$, let $\{\vX_i\}_{i \in [n]} \sim \calN(\0_d, \mb)$ as in Model~\ref{model:one-side}, and let
\[\hvnu_n \defeq \frac 1 n \sum_{i \in [n]} \vX_i,\quad \ms_m \defeq \frac 1 {m} \sum_{i \in [n]}(\vX_i - \hvnu_n)(\vX_i - \hvnu_n)^\top\]
denote the empirical mean and scaled covariance. As shown in \eqref{eq:plugin} and Fact~\ref{fact:rescaling}, in the setting of \eqref{eq:extrapolation-hard-family}, $\ms_m \overset{\dd}{=} \half \mw_m$ where $\mw_m \sim \Wish(\frac 1 m\id_d, m)$, and the plug-in estimator is
\[\hdd_n = \norm{\hvnu_n}_2^2 + \BW^2\Par{a\id_d, \ms_m},\quad \BW^2\Par{a\id_d, \half \mw_m} = ad + \Tr(\ms_m) - 2\sqrt{a}\Tr(\ms_m^{1/2}). \]
Since $D_0 = \BW^2(a\id_d, \half \id_d)$, $\E\Tr(\ms_m) = \frac d 2$, and $\E[\norms{\hvnu_n}_2^2] = \frac d {2n}$, we can directly compute
\[
\E\Brack{\hdd_n} - D_0 = \frac{d}{2n} + \sqrt{2a}d\Par{1 - \frac 1 d \E \Tr \mw_m^{1/2}}.
\]
Then by plugging in Lemma~\ref{lem:mp-square-root-moment} and taking $d,n \to \infty$ with $\frac d n \to \gamma$,
\[\frac1 d \Par{\E\Brack{\hdd_n} - D_0 } \to \half \phi(\gamma), \]
where $\phi(\gamma) = \sum_{j \ge 1}\alpha_j \gamma^j$ is defined in \eqref{eq:mp-curvature-expansion}. Here we used that $a \to \half$ as $d \to \infty$.

On the other hand, whenever $n \gtrsim_k d$, Proposition~\ref{prop:uniform-psd-expansion} gives
\[\frac 1d \Par{\E\Brack{\hdd_n} - D_0}  = \sum_{j \in [k]} A_j(d)\Par{\frac d n}^j + O_k\Par{\Par{\frac d n}^{k + 1}}, \]
for coefficients $\{A_j \defeq \frac{\va_j}{d^{j + 1}}\}_{j \in [k]}$ with $|A_j| \lesssim_k 1$. We write $A_j(d)$ as a function of $d$ to disambiguate, because Proposition~\ref{prop:uniform-psd-expansion} applies to a different distribution at each $d$ (as $\ma$ is a function of $d$).

It remains to identify the coefficients. The vectors $(A_1(d),\ldots,A_k(d))$ are uniformly bounded. Consider any convergent subsequence with limit $(A_1^\star,\ldots,A_k^\star)$. For any sufficiently small fixed $\gamma > 0$, take $n = \lceil d/\gamma\rceil$ and let $d \to \infty$ along this subsequence. The preceding limit and expansion give
\[\Abs{\sum_{j \in [k]} A_j^\star \gamma^j - \half \phi(\gamma)} \lesssim_k \gamma^{k + 1}.\]
Since this holds for every sufficiently small $\gamma > 0$, the Taylor expansion of $\phi$ yields $A_j^\star = \half\alpha_j$ for all $j \in [k]$. Thus all subsequential limits coincide, so $A_j(d) \to \half\alpha_j > 0$. Recalling $\va_j = d^{j + 1}A_j(d)$ proves the claim.
\end{proof}


\section{Additional Details and Results for Gaussian Experiments}
\label{app:gaussian_exp_extra}
This appendix provides implementation details, covariance constructions, and additional results for the Gaussian experiments in Section~\ref{sec:gaussian_exp}. All methods share the same $N$-sample pool within each population pair and trial. Figures~\ref{fig:controlled-spiked-bulk-error}--\ref{fig:stress-ill-conditioned} use the same sampling and aggregation conventions as Figure~\ref{fig:controlled-random-error}.

\paragraph{Estimator implementation.}
OLS$_k$, with $k\in\{1,2,3\}$, fits a degree-$k$ polynomial in $\frac 1n$ to the same $J=15$ plug-in scores and reports the intercept at $\frac 1n =0$; $k=1$ corresponds to $\mathrm{FID}_{\infty}$~\cite{chong2019effectively}. The sample counts are uniformly spaced in the window $[n_{\min}, N]$ up to integer rounding, where we use $n_{\min}=5000$:
\[
    n_j=n_{\min}+\left\lfloor\frac{(j-1)(N-n_{\min})}{14}\right\rfloor,
    \qquad j=1,\ldots,15.
\]

For RTD, we choose the truncation order from both $d$ and $m$ using
\begin{equation}
\label{eq:experimental-truncation}
    \eps_{\mathrm{design}}=\min\{1,\sqrt{d/m}\},
    \qquad
    L=\max\left\{2,
        \left\lceil\log_4(d/\eps_{\mathrm{design}})\right\rceil
    \right\}.
\end{equation}

\paragraph{Covariance families.}
We consider five Gaussian covariance families with different spectra and relative eigenspace geometries. For the first three families, the two eigenbases are drawn independently from the Haar measure on the orthogonal group \cite{Mezz06}. For the last two, we randomize the relative rotation angles within two-dimensional planes while keeping the population distance fixed at $D=5$. The five constructions are as follows.
\begin{itemize}
\item \emph{Haar log-uniform} (Figure~\ref{fig:controlled-random-error}).
For each covariance, the eigenvalues are sampled independently according to
\[
\log \lambda_i \sim \operatorname{Unif}(\log 0.05,\log 1).
\]

\item \emph{Spiked-bulk} (Figure~\ref{fig:controlled-spiked-bulk-error}).
For each covariance, $10\%$ of the eigenvalues are sampled uniformly from $[0.7,1]$, while the remaining $90\%$ are sampled uniformly from $[0.1,0.3]$.

\item \emph{Rotated Toeplitz} (Figure~\ref{fig:controlled-toeplitz-error}).
The two covariances have the spectra of the Toeplitz matrices
\[
[0.4^{|i-j|}]_{i,j=1}^{d}
\qquad\text{and}\qquad
[0.7^{|i-j|}]_{i,j=1}^{d},
\]
respectively, with each spectrum rescaled so that its largest eigenvalue is one.

\item \emph{Rank-deficient} (Figure~\ref{fig:stress-rank}).
For $q\in\{0.25,0.50,0.75\}$ and $r=qd$, both covariances are rank-$r$ orthogonal projectors: each has $r$ eigenvalues equal to one and $d-r$ eigenvalues equal to zero. Their supports differ by principal angles $\theta_i$ in $s=\min(r,d-r)$ two-dimensional planes, chosen so that
\begin{equation*}
2\sum_{i=1}^{s}(1-\cos\theta_i)=5.
\end{equation*}

\item \emph{Ill-conditioned} (Figure~\ref{fig:stress-ill-conditioned}).
For $\kappa\in\{10^2,10^4,10^6\}$, let $\lambda=\kappa^{-1}$. For even $d$, both covariances have $\frac d2$ eigenvalues equal to one and $\frac d2$ eigenvalues equal to $\lambda$. The covariance $\ma$ consists of repeated blocks $\operatorname{diag}(1,\lambda)$, while $\mb$ is obtained by rotating each block through an angle $\theta_i$, chosen so that
\begin{equation*}
2\sum_{i=1}^{d/2}\left[
(1+\lambda)
-\sqrt{(1+\lambda)^2-(1-\lambda)^2\sin^2\theta_i}
\right]=5.
\end{equation*}
These pairs are full rank and noncommuting, with condition number exactly $\kappa$.
\end{itemize}

\newcommand{\gaussianFamilyCrop}[2]{%
    \begingroup
    \setbox0=\hbox{\includegraphics{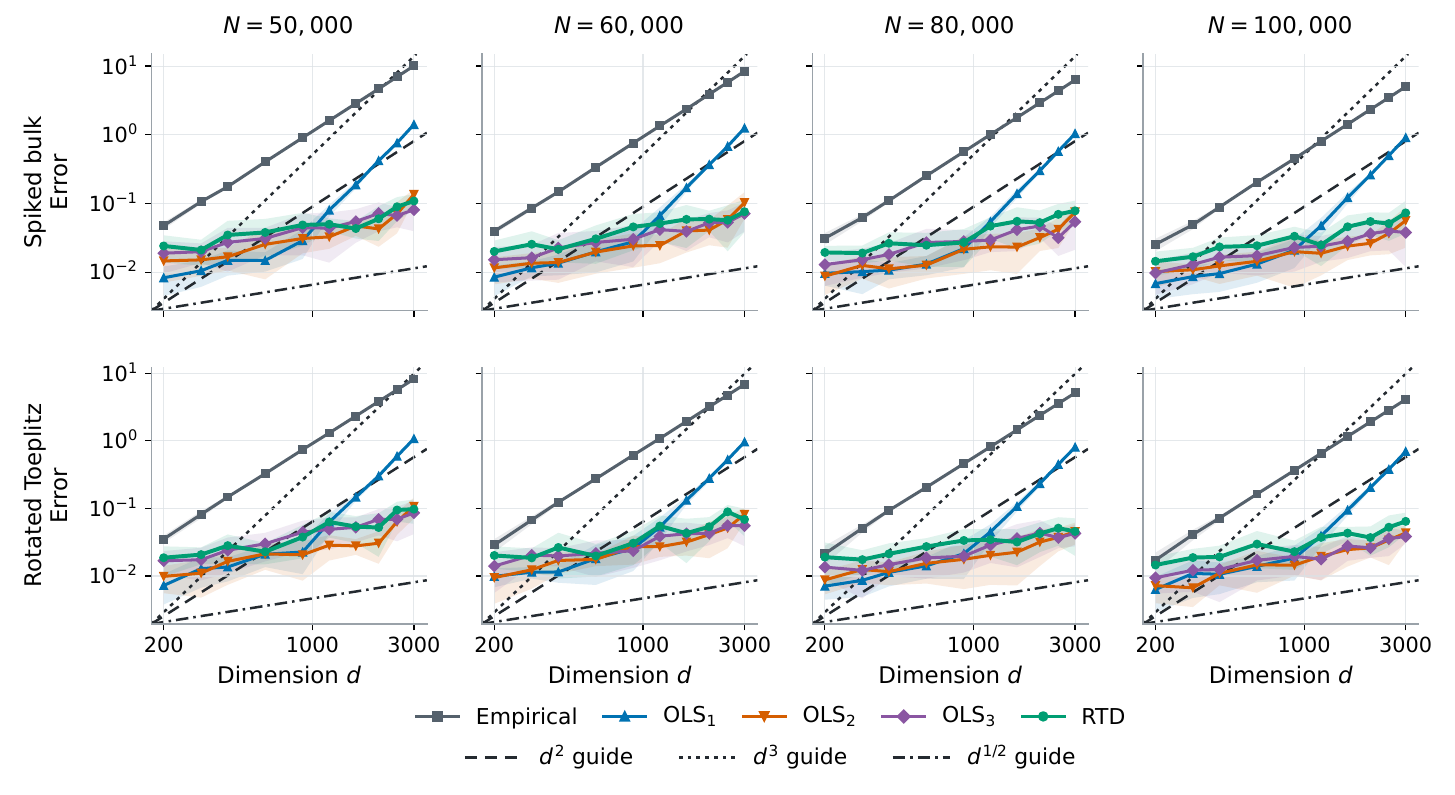}}%
    \edef\gaussianFamilyGraphic{\noexpand\includegraphics[width=\noexpand\textwidth,trim=0pt \the\dimexpr\ht0*#1/1000\relax\space 0pt \the\dimexpr\ht0*#2/1000\relax,clip]{figures/gaussian_dimension_other_families.pdf}}%
    \gaussianFamilyGraphic
    \endgroup
}

\begin{figure}[!ht]
    \centering
    \gaussianFamilyCrop{585}{0}\par\nointerlineskip
    \gaussianFamilyCrop{130}{793}\par\nointerlineskip
    \gaussianFamilyCrop{20}{875}
    \caption{Estimation error versus dimension for the spiked-bulk family at four fixed sample budgets.}
    \label{fig:controlled-spiked-bulk-error}
    \label{fig:controlled-other-random-error}
\end{figure}

\begin{figure}[!ht]
    \centering
    \gaussianFamilyCrop{945}{0}\par\nointerlineskip
    \gaussianFamilyCrop{130}{445}\par\nointerlineskip
    \gaussianFamilyCrop{20}{875}
    \caption{Estimation error versus dimension for the rotated Toeplitz family at fixed sample budgets.}
    \label{fig:controlled-toeplitz-error}
\end{figure}

\begin{figure*}[!ht]
    \centering
    \includegraphics[width=\textwidth]{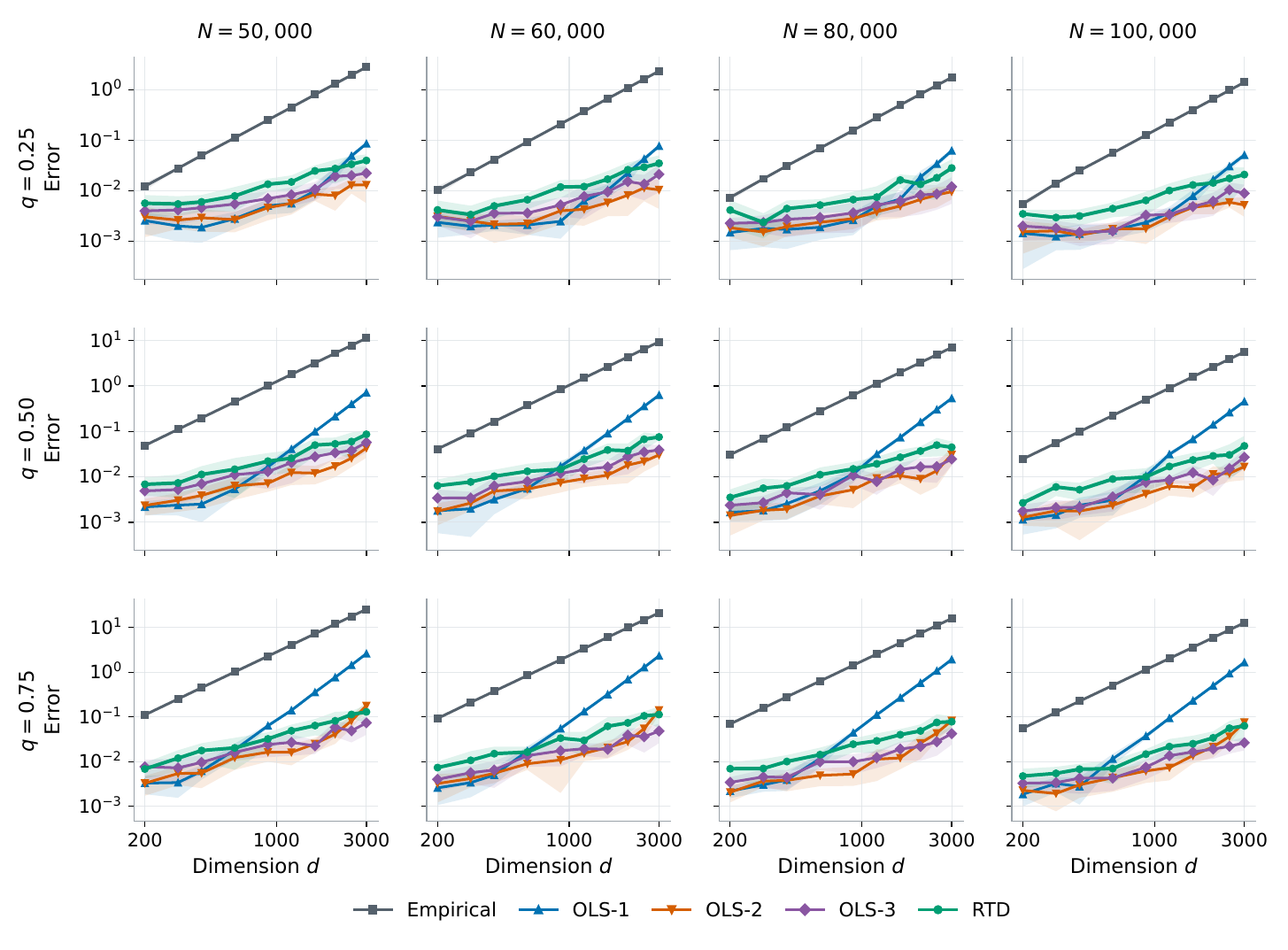}
    \caption{Estimation error for singular covariances with $D=5$.}
    \label{fig:stress-rank}
\end{figure*}

\begin{figure*}[!ht]
    \centering
    \includegraphics[width=\textwidth]{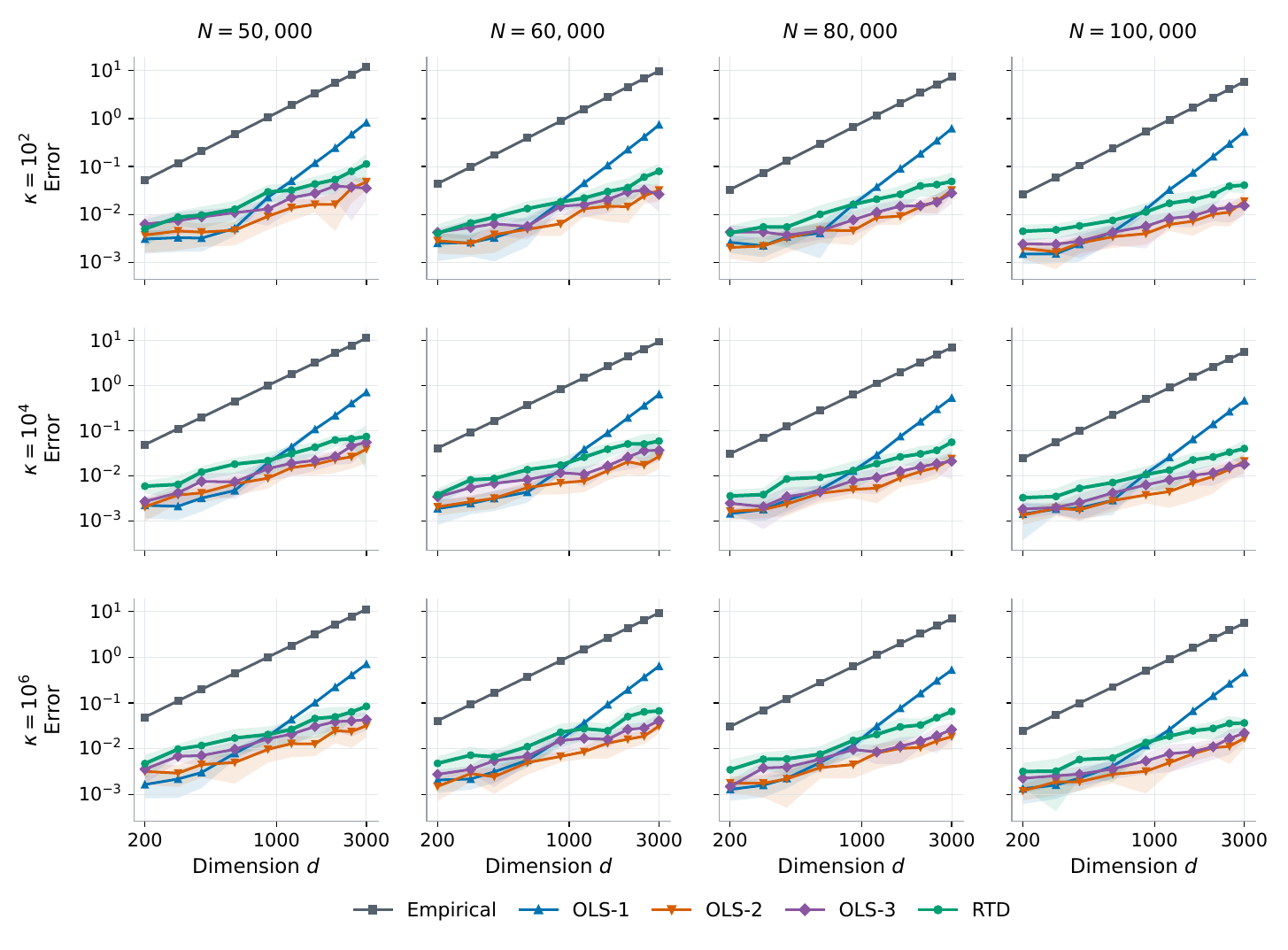}
    \caption{Estimation error for ill-conditioned covariances with $D=5$.}
    \label{fig:stress-ill-conditioned}
\end{figure*}

\section{Implementation and Reference Checks for the Experiments}
\label{app:experimental-details}

\subsection{Computational Environment}\label{ssec:computational-environment}
The experiments use GPU compute nodes equipped with NVIDIA GH200 Grace Hopper superchips and 72-core Arm Grace CPUs. The covariance and Taylor-coefficient matrix computations use CuPy float64; the small extrapolation regressions use NumPy float64.

\subsection{ImageNet experimental setup}
\label{app:imagenet-setup}

\paragraph{Model and embedding configurations.}
Table~\ref{tab:imagenet-model-detail} lists the seven generator checkpoints, their native image resolutions, and their full-300K RTD reference proxies. The checkpoints span GAN, diffusion, and autoregressive models. For each checkpoint, we use Inception-V3 ($d=2{,}048$), DINOv2 ViT-L/14 ($d=1{,}024$), and CLIP ViT-L/14 ($d=768$). Inception-V3 provides the representation used in classical FID, while DINOv2 and CLIP provide alternative feature embeddings.

\begin{table}[h]
\centering
\small
\caption{Model checkpoints and full-300K RTD proxy values.}
\label{tab:imagenet-model-detail}
\setlength{\tabcolsep}{4pt}
\begin{tabular}{@{}>{\raggedright\arraybackslash}p{0.38\linewidth}rrrrr@{}}
\toprule
Model & Resolution & \shortstack[r]{Inception\\ $Q_c$} & \shortstack[r]{DINOv2\\ $Q_c$} & \shortstack[r]{CLIP\\ $Q_c$} & \shortstack[r]{Published\\FID} \\
\midrule
StyleGAN-XL~\cite{sauer2022stylegan} & 64 & 1.936 & 206.696 & 9.852 & 1.51 \\
DDO/EDM2-S~\cite{zheng25ddo} & 64 & 0.333 & 63.356 & 3.270 & 0.97 \\
StyleGAN-XL~\cite{sauer2022stylegan} & 256 & 1.581 & 202.413 & 27.528 & 2.30 \\
VAR-d30~\cite{tian2024visual} & 256 & 1.127 & 65.289 & 19.089 & 1.73 \\
StyleGAN-XL~\cite{sauer2022stylegan} & 512 & 1.748 & 210.961 & 30.260 & 2.41 \\
DDO/EDM2-L~\cite{zheng25ddo} & 512 & 0.641 & 39.898 & 9.402 & 1.21 \\
VAR-d36~\cite{tian2024visual} & 512 & 1.957 & 53.301 & 13.780 & 2.63 \\
\bottomrule
\end{tabular}
\end{table}
\paragraph{Details on estimators.}
In the ImageNet experiments, we conduct some preprocessing procedure for raw feature embedding: given generated features $\vX_1,\ldots,\vX_N$, we compute the full-sample mean $\bar{\vX}_N$ and the Helmert contrasts
\[
    \bar{\vX}_N=\frac1N\sum_{j=1}^N\vX_j,
    \qquad
    \vz_i=\sqrt{\frac{i}{i+1}}
    \left(\frac1i\sum_{j=1}^i\vX_j-\vX_{i+1}\right),
    \quad i=1,\ldots,N-1.
\]
Under the Gaussian model, the contrasts are independent $\calN(\0_d,\mb)$ observations and are independent of $\bar{\vX}_N$. We use the first $m_0=\lfloor0.6N\rfloor$ contrasts to form the pilot covariance
\(
    \tmb=\frac1{m_0}\sum_{i=1}^{m_0}\vz_i\vz_i^\top,
\)
and the remaining $m=N-1-m_0$ contrasts to compute the covariance estimate $\widehat D_{\mathrm{BW},L}$ in \eqref{eq:cross-fit-bures-estimator}, with $L$ chosen by \eqref{eq:experimental-truncation} using this value of $m$.

\paragraph{Reported statistics.}
For the ablation study, we draw $T=10$ random 50K-sample trials for each (model, embedding) pair and compute the absolute center error $|\overline D_c-Q_c|$, sample SD, and RMSE, where $\overline D_c$ is the mean estimate over the ten trials. For each embedding, we report the mean and sample SD of each statistic across the seven generators. The trial SD measures variability across sampling trials, whereas the SD accompanying each aggregate statistic measures variability across generators. The held-out error statistic is defined below.

\paragraph{Held-out comparison.}
We evaluate Empirical, FID$_\infty$, VALE$_1$, VALE$_2$, VALE$_3$, and RTD on the five held-out 50K blocks per checkpoint. The VALE$_k$ configurations are selected on the development pool as specified in Table~\ref{tab:imagenet-optimal-schedule}. FID$_\infty$ follows~\cite{chong2019effectively} without quasi-Monte Carlo, using first-order OLS on 15 points uniformly spaced in $n$ from 5K to 50K. For a reference proxy $R_c\in\{Q_c^{\mathrm{plug}},Q_c^\infty,Q_c\}$, we compute $e_c(R_c)=\operatorname{median}_{t=1,\ldots,5}|\widehat D_{c,t}-R_c|$ and report its mean $\pm$ sample SD across the seven checkpoints within each embedding. Table~\ref{tab:imagenet-proxy-median-error} reports all three proxy choices; the reduced-budget experiments in Tables~\ref{tab:imagenet-fd-sample-budget-heldout} and~\ref{tab:imagenet-sample-budget-additional} use $R_c=Q_c$.

\subsection{Consistency and sensitivity of the ImageNet reference proxy}
\label{app:imagenet-reference-sensitivity}

This subsection provides the complete results addressing the reference-proxy sensitivity discussed in Section~\ref{sec:imagenet_exp}. Figures~\ref{fig:imagenet-proxy-consistency-inception}--\ref{fig:imagenet-proxy-consistency-clip} supplement Figure~\ref{fig:imagenet-proxy-example} with results for the remaining six generator configurations across all three embeddings.

Table~\ref{tab:imagenet-proxy-agreement} quantifies the change in reference-proxy values when replacing RTD with another estimator, reporting mean absolute differences across seven generator configurations. At 300K, FID$_\infty$ and RTD give similar proxy values, whereas the empirical proxy differs substantially. However, we believe this larger discrepancy is because of the substantial finite-sample bias of the empirical estimator.

\begin{table}[!ht]
\centering
\small
\caption{Mean absolute differences from RTD at the same sample budget across seven generators.}
\label{tab:imagenet-proxy-agreement}
\begin{tabular}{@{}lcrrr@{}}
\toprule
Estimator & $N$ & Inception & DINOv2 & CLIP \\
\midrule
FID$_\infty$ & 30K & 0.0439 & 0.7538 & 0.0278 \\
FID$_\infty$ & 300K & 0.0102 & 0.1334 & 0.0082 \\
Empirical & 300K & 0.1178 & 1.4370 & 0.0405 \\
\bottomrule
\end{tabular}
\end{table}

\begin{figure}[!htbp]
\centering
\includegraphics[width=\textwidth]{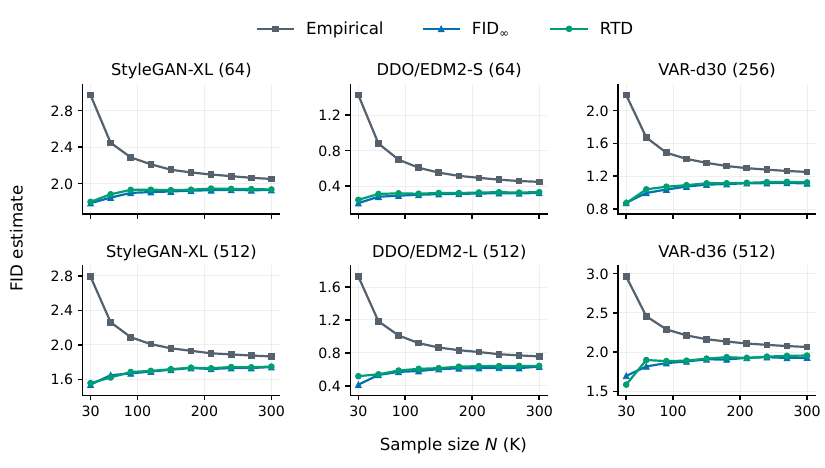}
\caption{Estimates across sample budgets for the six additional generators using Inception.}
\label{fig:imagenet-proxy-consistency-inception}
\end{figure}

\begin{figure}[!htbp]
\centering
\includegraphics[width=\textwidth]{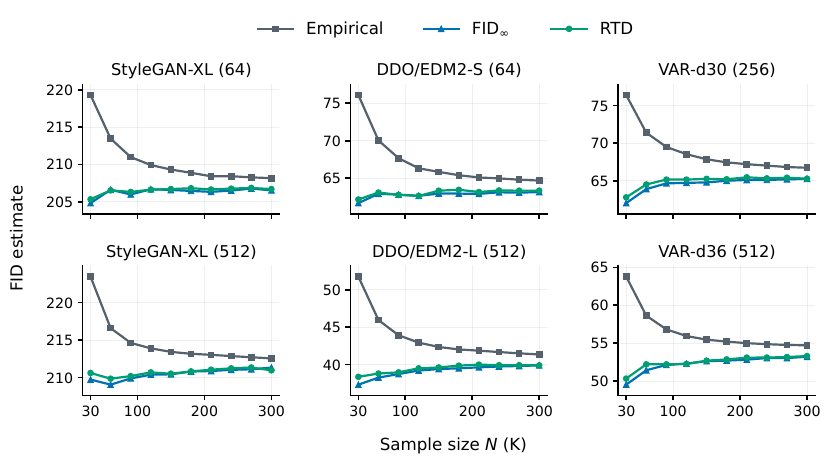}
\caption{Estimates across sample budgets for the six additional generators using DINOv2.}
\label{fig:imagenet-proxy-consistency-dinov2}
\end{figure}

\begin{figure}[!htbp]
\centering
\includegraphics[width=\textwidth]{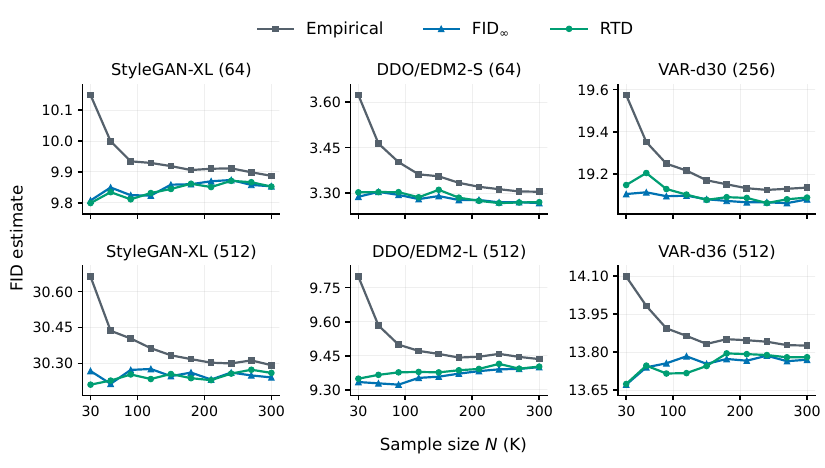}
\caption{Estimates across sample budgets for the six additional generators using CLIP.}
\label{fig:imagenet-proxy-consistency-clip}
\end{figure}

\subsection{Additional extrapolation ablations}
\label{app:imagenet-ablation}
In this subsection, we give a full grid record of the ablation result in Table~\ref{tab:imagenet-ablation-summary}. In particular, we examine the effect of polynomial order, variance-aware weighting, and regression-point assignment.

\paragraph{Polynomial order \(k\).}
We first compare OLS$_k$ at orders $k=1,\ldots,6$, using 15 points uniformly spaced in $n$ from 5K to 50K. Figure~\ref{fig:imagenet-order-sweep} summarizes the results. SD increases with order in all three embeddings, while center error can decrease at low orders but increases at high orders. Beyond $k=4$, SD and RMSE increase sharply. We therefore restrict subsequent comparisons to $k=1,\ldots,4$.

\begin{figure}[h]
\centering
\includegraphics[width=\textwidth]{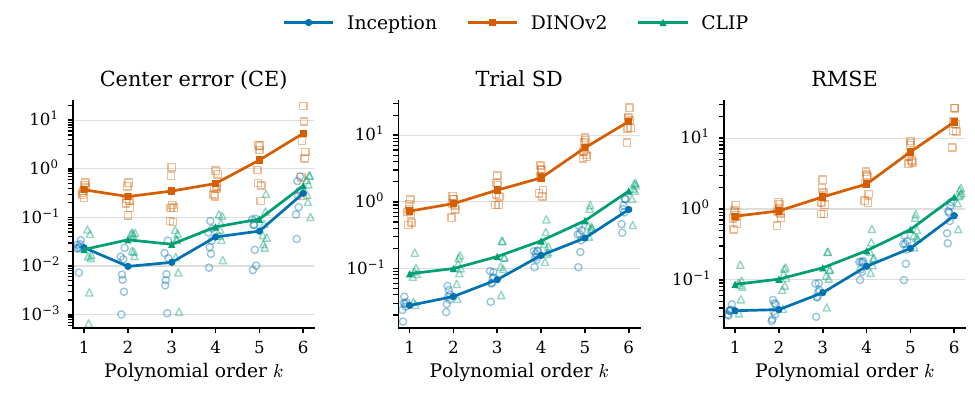}
\caption{OLS$_k$ order sweep. Hollow markers show the case-level statistics for seven generators; solid lines show their means. Markers are slightly offset horizontally for visibility.}
\label{fig:imagenet-order-sweep}
\end{figure}

\textbf{Variance-aware regression.}
We next compare OLS$_k$ and VALE$_k$ at matched orders $k=1,\ldots,4$ on the same 15-point uniform-in-$n$ grid. Table~\ref{tab:imagenet-VA-OLS} shows that VALE$_k$ reduces both mean trial SD and mean RMSE at all four orders in each embedding, and has lower RMSE than OLS$_k$ in almost all individual cases.
We therefore use VALE$_k$ for the selected extrapolation rules in subsequent comparisons, retaining FID$_\infty$ as the baseline.
\begin{table}[!htbp]
\centering
\footnotesize
\caption{Matched OLS$_k$--VALE$_k$ comparison. RMSE drop is the mean casewise percentage reduction; win rate counts cases with lower RMSE.}
\label{tab:imagenet-VA-OLS}
\setlength{\tabcolsep}{2pt}
\begin{tabular}{@{}lcrrrrrr@{}}
\toprule
& & \multicolumn{2}{c}{SD} & \multicolumn{2}{c}{RMSE} & & \\
\cmidrule(lr){3-4}\cmidrule(lr){5-6}
Embedding & $k$ & OLS$_k$ & VALE$_k$ & OLS$_k$ & VALE$_k$ & \shortstack[r]{RMSE drop\\(\%)} & \shortstack[r]{Win\\rate} \\
\midrule
Inception & 1 & $0.0277\pm0.0068$ & $0.0233\pm0.0065$ & $0.0366\pm0.0044$ & $0.0289\pm0.0048$ & 21.21 & 7/7 \\
 & 2 & $0.0378\pm0.0109$ & $0.0289\pm0.0092$ & $0.0380\pm0.0101$ & $0.0291\pm0.0081$ & 23.48 & 7/7 \\
 & 3 & $0.0678\pm0.0202$ & $0.0469\pm0.0115$ & $0.0659\pm0.0206$ & $0.0458\pm0.0115$ & 28.62 & 7/7 \\
 & 4 & $0.1569\pm0.0301$ & $0.1224\pm0.0227$ & $0.1553\pm0.0328$ & $0.1212\pm0.0240$ & 21.71 & 7/7 \\
\midrule
DINOv2 & 1 & $0.7256\pm0.2274$ & $0.6771\pm0.1986$ & $0.7901\pm0.2031$ & $0.6994\pm0.1968$ & 11.34 & 6/7 \\
 & 2 & $0.9416\pm0.2215$ & $0.7628\pm0.1987$ & $0.9390\pm0.2329$ & $0.7707\pm0.2157$ & 18.29 & 7/7 \\
 & 3 & $1.5010\pm0.5945$ & $1.1414\pm0.4880$ & $1.4816\pm0.6439$ & $1.1388\pm0.5456$ & 23.87 & 7/7 \\
 & 4 & $2.2905\pm0.9358$ & $1.7048\pm0.7726$ & $2.2480\pm0.8685$ & $1.6848\pm0.7001$ & 25.59 & 7/7 \\
\midrule
CLIP & 1 & $0.0830\pm0.0485$ & $0.0757\pm0.0425$ & $0.0863\pm0.0409$ & $0.0790\pm0.0363$ & 8.24 & 6/7 \\
 & 2 & $0.0995\pm0.0440$ & $0.0805\pm0.0361$ & $0.1021\pm0.0399$ & $0.0838\pm0.0323$ & 17.64 & 7/7 \\
 & 3 & $0.1507\pm0.0821$ & $0.1163\pm0.0656$ & $0.1477\pm0.0760$ & $0.1161\pm0.0611$ & 21.61 & 7/7 \\
 & 4 & $0.2590\pm0.1455$ & $0.2006\pm0.1146$ & $0.2575\pm0.1355$ & $0.1989\pm0.1074$ & 23.11 & 7/7 \\
\bottomrule
\end{tabular}
\end{table}

\paragraph{Regression-point assignment.}
For VALE$_k$ at orders $k=1,\ldots,4$, we compare uniform spacing in $n$, uniform spacing in $\frac 1n$, and Chebyshev--Lobatto spacing in $\frac 1n$, with point counts $\ell\in\{k+1,10,15,20\}$ and endpoints fixed at 5K and 50K. Table~\ref{tab:imagenet-optimal-schedule} summarizes the configurations with lowest mean RMSE for each embedding and each order $k=1,2,3$ used in the held-out comparisons; Tables~\ref{tab:imagenet-va-full-grid}--\ref{tab:imagenet-va-full-grid-clip} report the complete results. Duplicated numbers may appear because the assignments share the same schedule under that case. 
\subsection{Additional sample-budget experiments}
\label{app:imagenet-sample-budget}

Under the same setup as the Inception experiment in Table~\ref{tab:imagenet-fd-sample-budget-heldout}, Table~\ref{tab:imagenet-sample-budget-additional} reports the results for DINOv2 and CLIP embeddings. For DINOv2, RTD has the lowest mean error at 20K--40K, but its error increases sharply at 10K. The same phenomenon also appears for Inception embeddings. This can be explained by the sensitivity to the pilot estimate: when $m_0 \to d$, the pilot may fail to stabilize high-order expansion terms. For the lower-dimensional CLIP embedding, VALE$_1$ or FID$_\infty$ has the lowest mean error at 10K--30K, while RTD performs best at 40K and 50K.
\begin{table}[!ht]
\centering
\small
\caption{Selected VALE$_k$ configurations.}
\label{tab:imagenet-optimal-schedule}
\begin{tabular}{@{}lclrr@{}}
\toprule
Embedding & Order $k$ & Schedule & Points $\ell$ & RMSE \\
\midrule
Inception & 1 & Endpoints & 2 & $0.0275\pm0.0045$ \\
 & 2 & Chebyshev in $\frac 1n$ & 15 & $0.0254\pm0.0079$ \\
 & 3 & Chebyshev in $\frac 1n$ & 20 & $0.0393\pm0.0111$ \\
\midrule
DINOv2 & 1 & Endpoints & 2 & $0.6675\pm0.1947$ \\
 & 2 & Chebyshev in $\frac 1n$ & 15 & $0.6871\pm0.1925$ \\
 & 3 & Chebyshev in $\frac 1n$ & 20 & $0.9491\pm0.3099$ \\
\midrule
CLIP & 1 & Uniform in $\frac 1n$ & 15 & $0.0742\pm0.0360$ \\
 & 2 & Uniform in $\frac 1n$ & 20 & $0.0791\pm0.0350$ \\
 & 3 & Uniform in $n$ & 20 & $0.1101\pm0.0376$ \\
\bottomrule
\end{tabular}
\end{table}

\begin{table}[h]
\centering
\small
\caption{FID-VALE$_k$ regression-point ablation for Inception.}
\label{tab:imagenet-va-full-grid}
\begin{tabular}{@{}ccrrr@{}}
\toprule
Order $k$ & Points $\ell$ & Uniform $n$ & Uniform $\frac 1n$ & Chebyshev $\frac 1n$ \\
\midrule
1 & 2 & $\mathbf{0.0275\pm0.0045}$ & $\mathbf{0.0275\pm0.0045}$ & $\mathbf{0.0275\pm0.0045}$ \\
1 & 10 & $0.0291\pm0.0060$ & $0.0324\pm0.0049$ & $0.0317\pm0.0058$ \\
1 & 15 & $0.0289\pm0.0048$ & $0.0336\pm0.0078$ & $0.0327\pm0.0042$ \\
1 & 20 & $0.0279\pm0.0057$ & $0.0340\pm0.0065$ & $0.0309\pm0.0035$ \\
\addlinespace[3pt]
2 & 3 & $0.0415\pm0.0171$ & $0.0371\pm0.0083$ & $0.0371\pm0.0083$ \\
2 & 10 & $0.0324\pm0.0115$ & $0.0307\pm0.0090$ & $0.0323\pm0.0115$ \\
2 & 15 & $0.0291\pm0.0081$ & $0.0310\pm0.0091$ & $\mathbf{0.0254\pm0.0079}$ \\
2 & 20 & $0.0280\pm0.0088$ & $0.0289\pm0.0076$ & $0.0287\pm0.0095$ \\
\addlinespace[3pt]
3 & 4 & $0.1149\pm0.0361$ & $0.0893\pm0.0241$ & $0.0697\pm0.0217$ \\
3 & 10 & $0.0573\pm0.0108$ & $0.0519\pm0.0115$ & $0.0553\pm0.0161$ \\
3 & 15 & $0.0458\pm0.0115$ & $0.0421\pm0.0113$ & $0.0511\pm0.0163$ \\
3 & 20 & $0.0399\pm0.0150$ & $0.0476\pm0.0127$ & $\mathbf{0.0393\pm0.0111}$ \\
\addlinespace[3pt]
4 & 5 & $0.3399\pm0.0834$ & $0.2297\pm0.0678$ & $0.1106\pm0.0582$ \\
4 & 10 & $0.1246\pm0.0231$ & $0.1113\pm0.0269$ & $0.0966\pm0.0267$ \\
4 & 15 & $0.1212\pm0.0240$ & $0.0771\pm0.0234$ & $0.0999\pm0.0310$ \\
4 & 20 & $\mathbf{0.0645\pm0.0202}$ & $0.0694\pm0.0195$ & $0.0707\pm0.0240$ \\
\bottomrule
\end{tabular}
\end{table}

\begin{table}[h]
\centering
\small
\caption{VALE$_k$ regression-point ablation for DINOv2.}
\label{tab:imagenet-va-full-grid-dinov2}
\begin{tabular}{@{}ccrrr@{}}
\toprule
Order $k$ & Points $\ell$ & Uniform $n$ & Uniform $\frac 1n$ & Chebyshev $\frac 1n$ \\
\midrule
1 & 2 & $\mathbf{0.6675\pm0.1947}$ & $\mathbf{0.6675\pm0.1947}$ & $\mathbf{0.6675\pm0.1947}$ \\
1 & 10 & $0.7062\pm0.2072$ & $0.7179\pm0.2516$ & $0.7403\pm0.2734$ \\
1 & 15 & $0.6994\pm0.1968$ & $0.7498\pm0.1934$ & $0.7276\pm0.2032$ \\
1 & 20 & $0.7186\pm0.2074$ & $0.7277\pm0.1825$ & $0.6941\pm0.1803$ \\
\addlinespace[3pt]
2 & 3 & $0.9551\pm0.4135$ & $0.9665\pm0.2765$ & $0.9665\pm0.2765$ \\
2 & 10 & $0.8848\pm0.1940$ & $0.8011\pm0.2110$ & $0.8931\pm0.2122$ \\
2 & 15 & $0.7707\pm0.2157$ & $0.8699\pm0.2057$ & $\mathbf{0.6871\pm0.1925}$ \\
2 & 20 & $0.8050\pm0.2533$ & $0.7990\pm0.3580$ & $0.6903\pm0.2252$ \\
\addlinespace[3pt]
3 & 4 & $2.3145\pm0.5352$ & $1.8733\pm0.5067$ & $1.3758\pm0.4335$ \\
3 & 10 & $1.0432\pm0.2702$ & $1.1186\pm0.4467$ & $1.1959\pm0.5001$ \\
3 & 15 & $1.1388\pm0.5456$ & $1.0401\pm0.3823$ & $0.9846\pm0.2740$ \\
3 & 20 & $0.9920\pm0.4023$ & $0.9774\pm0.4040$ & $\mathbf{0.9491\pm0.3099}$ \\
\addlinespace[3pt]
4 & 5 & $8.8701\pm1.6289$ & $4.5986\pm1.9862$ & $2.4009\pm0.8418$ \\
4 & 10 & $2.3577\pm0.5683$ & $1.9596\pm0.3583$ & $2.2620\pm0.6918$ \\
4 & 15 & $1.6848\pm0.7001$ & $1.6664\pm0.5619$ & $1.7821\pm0.6044$ \\
4 & 20 & $1.7618\pm0.4114$ & $\mathbf{1.3593\pm0.2658}$ & $1.7541\pm0.5591$ \\
\bottomrule
\end{tabular}
\end{table}
\clearpage
\begin{table}[h]
\centering
\small
\caption{VALE$_k$ regression-point ablation for CLIP.}
\label{tab:imagenet-va-full-grid-clip}
\begin{tabular}{@{}ccrrr@{}}
\toprule
Order $k$ & Points $\ell$ & Uniform $n$ & Uniform $\frac 1 n$ & Chebyshev $\frac 1 n$ \\
\midrule
1 & 2 & $0.0772\pm0.0333$ & $0.0772\pm0.0333$ & $0.0772\pm0.0333$ \\
1 & 10 & $0.0805\pm0.0309$ & $0.0785\pm0.0382$ & $0.0798\pm0.0333$ \\
1 & 15 & $0.0790\pm0.0363$ & $\mathbf{0.0742\pm0.0360}$ & $0.0786\pm0.0318$ \\
1 & 20 & $0.0790\pm0.0362$ & $0.0797\pm0.0373$ & $0.0780\pm0.0349$ \\
\addlinespace[3pt]
2 & 3 & $0.1129\pm0.0500$ & $0.0983\pm0.0361$ & $0.0983\pm0.0361$ \\
2 & 10 & $0.1025\pm0.0457$ & $0.0927\pm0.0329$ & $0.0918\pm0.0306$ \\
2 & 15 & $0.0838\pm0.0323$ & $0.0936\pm0.0426$ & $0.0829\pm0.0268$ \\
2 & 20 & $0.0857\pm0.0325$ & $\mathbf{0.0791\pm0.0350}$ & $0.0817\pm0.0367$ \\
\addlinespace[3pt]
3 & 4 & $0.3055\pm0.1352$ & $0.2116\pm0.0934$ & $0.1450\pm0.0703$ \\
3 & 10 & $0.1244\pm0.0826$ & $0.1147\pm0.0402$ & $0.1207\pm0.0593$ \\
3 & 15 & $0.1161\pm0.0611$ & $0.1169\pm0.0435$ & $0.1154\pm0.0460$ \\
3 & 20 & $\mathbf{0.1101\pm0.0376}$ & $0.1171\pm0.0477$ & $0.1125\pm0.0603$ \\
\addlinespace[3pt]
4 & 5 & $1.0286\pm0.5131$ & $0.4177\pm0.1928$ & $0.2422\pm0.0591$ \\
4 & 10 & $0.3041\pm0.1450$ & $0.2143\pm0.1208$ & $0.2046\pm0.1186$ \\
4 & 15 & $0.1989\pm0.1074$ & $0.1918\pm0.1220$ & $0.1886\pm0.0864$ \\
4 & 20 & $0.2069\pm0.1021$ & $\mathbf{0.1744\pm0.1053}$ & $0.1841\pm0.1130$ \\
\bottomrule
\end{tabular}
\end{table}

\begin{table}[!ht]
\centering
\footnotesize
\setlength{\tabcolsep}{2pt}
\caption{Sample-budget sensitivity for DINOv2 and CLIP, using the same error statistic and RTD reference proxy as Table~\ref{tab:imagenet-fd-sample-budget-heldout}. Bold entries have the lowest mean within each embedding and budget.}
\label{tab:imagenet-sample-budget-additional}
\begin{tabular}{@{}rrrrrrr@{}}
\toprule
$N$ & Empirical & FID$_\infty$ & VALE$_1$ & VALE$_2$ & VALE$_3$ & RTD \\
\midrule
\multicolumn{7}{l}{\textbf{DINOv2}} \\
10K & $40.557\pm1.051$ & $2.722\pm1.084$ & $\mathbf{2.273}\pm1.066$ & $4.705\pm1.863$ & $20.916\pm11.920$ & $20.571\pm4.880$ \\
20K & $19.545\pm0.540$ & $1.675\pm0.469$ & $1.482\pm0.475$ & $1.872\pm1.087$ & $3.934\pm1.491$ & $\mathbf{1.125}\pm0.397$ \\
30K & $12.574\pm0.245$ & $1.625\pm0.605$ & $1.581\pm0.503$ & $1.425\pm0.513$ & $1.761\pm0.995$ & $\mathbf{1.259}\pm0.399$ \\
40K & $9.067\pm0.219$ & $1.727\pm0.187$ & $1.661\pm0.245$ & $1.332\pm0.445$ & $1.345\pm0.322$ & $\mathbf{1.295}\pm0.315$ \\
50K & $6.943\pm0.238$ & $1.582\pm0.301$ & $1.549\pm0.281$ & $\mathbf{1.257}\pm0.399$ & $1.468\pm0.591$ & $1.357\pm0.354$ \\
\midrule
\multicolumn{7}{l}{\textbf{CLIP}} \\
10K & $1.214\pm0.217$ & $0.128\pm0.093$ & $\mathbf{0.100}\pm0.093$ & $0.243\pm0.141$ & $1.526\pm0.747$ & $0.128\pm0.114$ \\
20K & $0.612\pm0.115$ & $0.084\pm0.061$ & $\mathbf{0.063}\pm0.028$ & $0.156\pm0.103$ & $0.373\pm0.255$ & $0.102\pm0.064$ \\
30K & $0.398\pm0.089$ & $\mathbf{0.067}\pm0.058$ & $0.071\pm0.036$ & $0.115\pm0.080$ & $0.136\pm0.042$ & $0.096\pm0.052$ \\
40K & $0.286\pm0.083$ & $0.069\pm0.041$ & $0.066\pm0.032$ & $0.094\pm0.066$ & $0.097\pm0.063$ & $\mathbf{0.060}\pm0.017$ \\
50K & $0.226\pm0.062$ & $0.065\pm0.042$ & $0.065\pm0.038$ & $0.066\pm0.034$ & $0.073\pm0.037$ & $\mathbf{0.055}\pm0.030$ \\
\bottomrule
\end{tabular}
\end{table}

\end{document}